\newcommand{\cG}{\mathcal{G}}
\newcommand{\cS}{\mathcal{S}}
\newcommand{\cA}{\mathcal{A}}
\newcommand{\cR}{\mathcal{R}}
\newcommand{\cM}{\mathcal{M}}
\newcommand{\cC}{\mathcal{C}}
\newcommand{\cF}{\mathcal{F}}
\newcommand{\cT}{\mathcal{T}}
\newcommand{\cB}{\mathcal{B}}
\newcommand{\cD}{\mathcal{D}}
\newcommand{\cZ}{\mathcal{Z}}
\newcommand{\cV}{\mathcal{V}}
\newcommand{\cE}{\mathcal{E}}
\newcommand{\cN}{\mathcal{N}}
\newcommand{\EE}{\mathbb{E}}
\newcommand{\ds}{\mathrm{dim}}
\newcommand{\wh}[1]{\widehat{#1}}
\newcommand{\poly}{\mathrm{poly}}
\newcommand{\pairs}{\cZ}
\newcommand{\funclass}{\cF}
\newcommand{\cover}{\cC}
\newcommand{\sen}{\mathsf{sensitivity}}
\newcommand{\coversize}{\cN}
\newcommand{\states}{\cS}
\newcommand{\actions}{\cA}
\newtheorem{assum}{Assumption}
\newtheorem{thm}{Theorem}
\newtheorem{lem}{Lemma}
\newtheorem{prop}{Proposition}
\newtheorem{defn}{Definition}
\newtheorem{remark}{Remark}
\renewcommand{\epsilon}{\varepsilon}
\renewcommand{\tilde}{\widetilde}
\DeclareMathOperator*{\argmin}{arg\,min}
\DeclareMathOperator*{\argmax}{arg\,max}
\title{Online Sub-Sampling for Reinforcement Learning with General Function Approximation\thanks{Authors ordered alphabetically.}}
\author{
Dingwen Kong\\
Peking University\\
\texttt{dingwenk@pku.edu.cn}\\
\and Ruslan Salakhutdinov\\
Carnegie Mellon University\\
\texttt{rsalakhu@cs.cmu.edu}
\and Ruosong Wang\\
Carnegie Mellon University\\
\texttt{ruosongw@andrew.cmu.edu}\\
\and Lin F. Yang\\
University of California, Los Angles\\
\texttt{linyang@ee.ucla.edu}
}
\begin{document}

\maketitle

\begin{abstract}
Most of the existing works for reinforcement learning (RL) with general function approximation (FA) focus on understanding the statistical complexity or regret bounds. However, the computation complexity of such approaches is far from being understood -- indeed, a simple optimization problem over the function class might be as well intractable.
In this paper, we tackle this problem by establishing an efficient online sub-sampling framework that measures the information gain of data points collected by an RL algorithm and uses the measurement to guide exploration.
For a value-based method with complexity-bounded function class, we show that the policy only needs to be updated for $\propto\operatorname{poly}\log(K)$ times for running the RL algorithm for $K$ episodes while still achieving a small near-optimal regret bound. In contrast to existing approaches that update the policy for at least $\Omega(K)$ times, our approach drastically reduces the number of optimization calls in solving for a policy. When applied to settings in \cite{wang2020reinforcement} or \cite{jin2021bellman}, we improve the overall time complexity by at least a factor of $K$.
Finally, we show the generality of our online sub-sampling technique by applying it to the reward-free RL setting and multi-agent RL setting.

\end{abstract}
\section{Introduction}
Function approximation (FA) is one of the key techniques in scaling up reinforcement learning (RL) in real-world applications. FA methods have achieved phenomenal empirical success in games or control tasks~ \citep{mnih2013playing,mnih2015human,silver2017mastering,vinyals2019grandmaster,akkaya2019solving}, where in these applications, RL agents learn to control complex systems by approximating value functions using deep neural networks.
In contrast, the theoretical understanding of RL with general FA is still rudimentary, e.g., reasonable theoretical understandings have only been established in the tabular setting or the linear setting~(c.f., \cite{azar2017minimax,jin2018q,yang2020reinforcement,jin2020provably, zanette2020learning}, and references therein). Designing RL algorithms with general FA with provable efficiency becomes increasingly important as it helps understanding the limits of existing algorithms while inspiring designing better practical algorithms.

Recently, there is a line of works targeting the sample efficiency of FA methods, including \citet{wang2020reinforcement, jin2021bellman, ayoub2020model, feng2021provably}. 
These papers assume the value, policy, or transition models of an RL environment can be approximated by a function from a function class. 
For intance, \citet{wang2020reinforcement} assumes that the function class captures the \emph{value closedness}, which requires that the \emph{Bellman projection} (defined formally in Section~\ref{sec:basics}) of any value function lies in the target function class. 
\citet{wang2020reinforcement}~establish a provably correct algorithm that achieves a regret bound of $\widetilde{O}(\poly(dH)\sqrt{K})$\footnote{Throughout the paper, we use $\widetilde{O}(\cdot)$ to suppress logarithm factors. }, where  $H$ is the planning horizon,  $K$ is the total number of episodes, and $d$ depends on the eluder dimension~\citep{russo2013eluder} and log-covering numbers of the function class.
Here the regret measures the difference between the expected rewards collected by the optimal policy and that of the RL algorithm.
Whereas \citet{jin2021bellman} study a more general setting that the function class captures only \emph{value completeness}, which requires the Bellman projection of a value function \emph{in the function class} is still in the class. The algorithm in \cite{jin2021bellman} achieves a similar regret bound but requires solving a number of possibly computationally intractable optimization problems.  

Compared to the regret bound or sample complexity of FA methods, the computation complexity of these methods is much less understood. 
For instance, for the simple regression problem, which is a core step in many RL algorithms, finding a solution in a general function class might be intractable. 
Indeed, the computation complexity is heavily determined by the structure of the function class. To still measure a meaningful complexity, many papers consider a type of oracle complexity -- the number of calls of certain optimization oracle on the function class. 
For instance, the time complexity in \cite{foster2020beyond,wang2020reinforcement} measures the number of calls to a regression oracle, which solves a regression problem with the function class. The time complexity in \cite{jin2021bellman} measures the number of oracle calls to a nested optimization problem. Such an \emph{oracle complexity} allows us to focus on the complexity introduced by the RL algorithm rather than the optimization structure of the function class and is the focus of this paper. Nevertheless, existing results with FA take at least $\poly(K)$ number of optimization oracle calls regardless of the assumptions on the function class.  We therefore ask:
\emph{
is there a generic approach to design a time-efficient RL algorithm with the least  number of optimization oracle calls?
}

In this paper, we develop a novel online sub-sampling technique that drastically reduces the amount of computation of RL with FA, while achieving small regret bounds, hence being both \emph{sample and time efficient}.
The core idea is to establish a novel notion called \emph{online sensitivity score} applied to each data point (i.e., a state-action pair) collected by an RL algorithm. This score measures the \emph{information gain} of a new data point with respect to the function class. 
Using this score as a sampling probability,  we maintain a small subsampled dataset. For  each new data point arriving, the algorithm can quickly compute the score by solving an optimization problem with respect to the small subsampled dataset.
We show that for many function classes, the number of data points in the subsampled dataset is only proprotional to $\poly\log(K)$, for $K$ episodes of learning.
This sub-sampling technique enables a \emph{low-switching-cost} RL framework that
updates  the  learned policy only when the sub-sampling buffer has changed, 
reducing the expensive computation in obtaining a new policy.
We show that under both the closedness assumption in \cite{wang2020reinforcement} or the completeness assumption in \cite{jin2021bellman}, our framework reduces the computation complexity to $\poly(dH\log(K))$ optimization oracle calls, while preserving the regret bounds. The sub-sampling procedure, on the other hand, only requires to solve a optimization problem over a $\tilde{O}(d^2)$-sized dataset per timestep.

{We remark that besides saving computation, the low-switching-cost property itself is also desired in many real-world RL applications. For example, in recommendation systems~\citep{afsar2021reinforcement} and healthcare~\citep{yu2020mopo}, each new recommendation/new medical treatment must pass several internal tests to ensure safety before being deployed~\citep{huang2022towards}, which can be time-costly and requires additional human efforts. Furthermore, the low-switching-cost property enables concurrent RL~\citep{guo2015concurrent}, where several agents interact with the same MDP in parallel. See Section~\ref{sec:closerelated} for a brief discussion on related works on RL with low switching costs.}

{Finally, we apply our online sub-sampling technique to the reward-free setting, where the exploration does not require the guidance of a reward, and the multi-agent RL setting, where the agent plays against an adversarial opponent.}
We summarize our contributions below.
\vspace{-2mm}
\begin{itemize}
    \itemsep0em 
    \item We propose a generic subsampling framework to measure the information gain of the data points collected by an RL algorithm wrt. the target function class. This low-switching-cost framework allows an RL algorithm to update policy for number of times logarithmically depending on the number of episodes played by the algorithm, while still having a small regret. Thus the framework serves as a general method for accelerating existing algorithms for RL with general FA. 
    \item To give rigorous and meaningful results on computation complexity, we consider the oracle complexity: the number of calls of certain optimization oracle on the function class. As two representative examples, we apply the framework in the closedness setting (\cite{wang2020reinforcement}) and completeness setting (\cite{jin2021bellman}). We establish near-optimal-regret value-based algorithms with exponentially smaller oracle complexity.
    \item {We show the generality of our online sub-sampling technique by applying it to reward-free RL and multi-agent RL, which may be of independent interests.}
\end{itemize}

\subsection{Closely Related Work}
\label{sec:closerelated}
{Here we briefly review some closely related works. More related works are deferred to the appendix.

\paragraph{RL with Low Switching Cost.}  \citet{bai2019provably}~is the first work that studies switching cost in tabular RL. Former works ~\citep{abbasi2011improved,gao2021provably,wang2021provably} also design algorithms with $\poly\log(K)$-type switching cost in linear stochastic bandit or linear MDP settings. However, all their works rely on tracking the determinant of the feature covariance matrix and switching the policy (or updating the model parameter) when the determinant doubles. However, their method can not work in the general FA setting since a \emph{feature extractor} may not be available. Our work is the \emph{first} to design a provably low-switching-cost RL algorithm in the general FA setting.}

\paragraph{RL with General Function Approximation.} Besides \citet{wang2020reinforcement} and \citet{jin2021bellman}, 
\citet{feng2021provably} provides a policy-based algorithm whose sample complexity is bounded by the eluder dimension of the function approximation class. 
\citet{jiang2017contextual}~design a provably efficient algorithm whose sample complexity can be upper bounded in terms of the Bellman rank (which measures the rank of the Bellman error matrix) of the function class. \citet{ayoub2020model}~propose an algorithm for model-based RL based on value-targeted regression.
They bound the complexity using eluder dimension as well.
\citet{du2021bilinear}~propose an algorithm for Bilinear function classes. 
\citet{foster2020instance}~propose a LSVI-based algorithm whose regret bound depends on the notion of disagreement coefficient. We remark that all these algorithms take at least $\poly(K)$ number of optimization oracle calls. 
In this paper we demonstrate the efficacy of our approach for the algorithms in \citet{wang2020reinforcement} and \citet{jin2021bellman}. We will discuss the possibility of extending our framework to a broader scope in Section~\ref{limit}.

\section{Preliminaries}
\label{sec:pre}

Before we formally introduce our algorithmic framework, we first present the background of reinforcement learning and function classes.
\subsection{Episodic Markov Decision Process}
In this paper, we consider the finite-horizon Markov decision process (MDP) $M=(\mathcal{S},\mathcal{A},P,r,H,s_{1})$, where $\mathcal{S}$ is the state space, $\mathcal{A}$ is the action space, $P=\{P_h\}_{h=1}^H$ where $P_h:\mathcal{S} \times \mathcal{A} \rightarrow \mathcal{\triangle}(\mathcal{S})$ are the transition operators, $r=\{r_h\}_{h=1}^H$ where $r_h:\mathcal{S} \times \mathcal{A} \rightarrow [0,1]$ are the deterministic reward functions, and $H$ is the planning horizon. Without loss of generality, we assume that the initial state $s_{1}$ is fixed.\footnote{For a general initial distribution $\rho$, we can treat it as the first stage transition probability, $P_1$.}
In RL, an agent interacts with the environment episodically. Each episode consists of $H$ time steps. A deterministic policy $\pi$ chooses an action $a \in \mathcal{A}$ based on the current state $s\in \mathcal{S}$ at each time step $h \in [H]$. Formally, $\pi=\{\pi_h\}_{h=1}^H$ where for each $h \in [H]$, $\pi_h:\mathcal{S}\rightarrow \mathcal{A}$ maps a given state to an action. In each episode, the policy $\pi$ induces a trajectory
$$
s_1,a_1,r_1,s_2,a_2,r_2,...,s_H,a_H,r_H,s_{H+1}
$$
where $s_1$ is fixed, $a_1=\pi_1(s_1)$, $r_1=r_1(s_1,a_1)$, $s_2 \sim P_1(\cdot|s_1,a_1)$, $a_2=\pi_2(s_2)$, etc.

We use Q-function and V-function to evaluate the long-term expected cumulative  reward in terms of the current state (state-action pair) and the policy deployed. Concretely, the Q-function and V-function are defined as:
$$Q_h^\pi(s,a):=\mathbb{E}\big[\sum_{h'=h}^Hr_{h'}|s_h=s,a_h=a,\pi\big]$$
and $$V_h^\pi(s):=\mathbb{E}\big[\sum_{h'=h}^Hr_{h'}|s_h=s,\pi\big].$$
The agent aims to gradually improve its performance during the interaction with the environment. Thus an important measurement of the effectiveness of the learning algorithm is the \emph{regret}. More precisely, we assume that the agent interacts with the environment for $K>0$ episodes. For $k\in[K]$, at the beginning of the $k$-th episode the agent chooses a policy $\pi^k$ to collect a trajectory. Then the regret is defined as $$\text{Regret}(K):=\sum_{k=1}^K\left(V_1^*(s_1)-V_1^{\pi^k}(s_1)\right)$$
where $s_1$ is the fixed initial state. We assume $K$ is fixed and known to the agent\footnote{There are standard techniques to extend the results to unknown $K$, e.g.,~\citep{besson2018doubling}.
}. Throughout the paper, we define $T:=KH$ to be the total number of steps. 

In this paper we consider the \emph{global switching cost}, which counts the number of policy changes in the running of the algorithm in $K$ episodes, namely: 
$$
N_{\text{switch}}^{\text{gl}}:=\sum_{k=1}^{K-1}\mathbb{I}\{\pi_k\neq \pi_{k+1}\}.
$$

\paragraph{Additional Notations.} We define the infinity-norm of function $f:\mathcal{S}\times\mathcal{A}\rightarrow \mathbb{R}$ and $v:\mathcal{S}\rightarrow \mathbb{R}$ as: 
$\|f\|_\infty:=\sup_{(s,a)\in\mathcal{S}\times\mathcal{A}}|f(s,a)|$ and $\|v\|_\infty:=\sup_{s\in\mathcal{S}}|v(s)|$.
Given a dataset $\mathcal{D}:=\{(s_i,a_i,q_i)\}_{i=1}^n\subseteq \mathcal{S}\times\mathcal{A}\times\mathbb{R}$, for a function $f:\mathcal{S}\times\mathcal{A}\rightarrow \mathbb{R}$, we define $\|f\|_{\mathcal{D}}:=\big(\sum_{i=1}^n(f(s_i,a_i)-q_i)^2\big)^{1/2}.$
Furthermore, for a set of state-action pairs $\mathcal{Z}\subseteq \mathcal{S}\times\mathcal{A}$ and a function $f:\mathcal{S}\times\mathcal{A}\rightarrow \mathbb{R}$, we define $\|f\|_{\mathcal{Z}}:=\left(\sum_{(s,a)\in{\mathcal{Z}}}f(s,a)^2\right)^{1/2}.$

\subsection{Function Class}
\label{sec:basics}
In our setting, we assume that a function class $\mathcal{F}\subset \{f:\mathcal{S}\times\mathcal{A}\rightarrow[0,H+1]\}$ is given as a priori. In the algorithm we will use functions from $\mathcal{F}$ to approximate the optimal Q-function. For any $f:\cS\times\cA\rightarrow\mathbb{R}$ and $h\in[H]$, we define the \emph{Bellman projection} of $f$ as {$\cT_h f:\cS\times\cA\rightarrow\mathbb{R}$, such that for all $(s,a)\in\cS\times\cA$,} 
\[
\cT_h f(s,a):=r_h(s,a)+\mathbb{E}_{s'\sim\mathbb{P}_h(\cdot|s,a)}\max_{a'\in\cA}f(s',a').
\] 
For $v:\cS\rightarrow\mathbb{R}$ and $h\in[H]$, we also define the Bellman projection of $v$ as $\cT_h v:\cS\times\cA\rightarrow\mathbb{R}$, such that for all $(s,a)\in\cS\times\cA$,
$$\cT_h v(s,a):=r_h(s,a)+\mathbb{E}_{s'\sim\mathbb{P}_h(\cdot|s,a)}v(s').$$


We  need to assume bounded covering numbers for both the function class and the state-action space to measure the complexity of the RL problem.
This assumption is standard in the literature~\citep{russo2013eluder, wang2020reinforcement, jin2021bellman}.
\begin{assum}[Covering Number]
\label{assum:cover}
The function class 
$\mathcal{F}$, and state-action space $\mathcal{S}\times\mathcal{A}$ both have bounded covering numbers. Concretely, for any $\varepsilon >0$, there exists an $\varepsilon$-cover $\cover(\funclass, \varepsilon) \subseteq \funclass$ with size $|\cover(\funclass, \varepsilon)| \le \coversize(\funclass, \varepsilon)$, such that for any $f\in \funclass$, there exists $f'\in \cover(\funclass,\varepsilon)$ with $\|f-f'\|_{\infty}\le \varepsilon$. Also, there exists an $\varepsilon$-cover $\cover(\cS\times \cA, \varepsilon)$ with size $|\cover(\cS\times \cA, \varepsilon)| \le \coversize(\cS\times\cA, \varepsilon)$, such that for any $(s,a)\in \cS\times \cA$, there exists $(s',a')\in \cover(\cS\times\cA,\varepsilon)$ with $\sup_{f\in \funclass} |f(s,a)-f(s', a')|\le \varepsilon$.
\end{assum}

\section{Algorithmic Framework}
\label{sec:onlinesample}
\subsection{Online Sub-Sampling Procedure}
In standard online RL algorithms, the agent computes a new policy using newly collected data after each episode. Computing a new policy is usually time-costly.  Our idea is that we only update the policy when we have collected enough new information. In order to achieve this goal, we need to develop an criterion to decide whether we have collected enough information or not. In addition, this criterion itself can not be time-costly.

A na\"ive idea is that when a new data point arrives, we compute the \emph{information gain} of the new data point with respect to the previously collected dataset. And then we only update the policy when the information gain is high. However, when the size of the dataset grows, the computation of the information gain itself becomes expensive. 

We modify the above idea via online sub-sampling. We maintain a small subsampled dataset $\{\widehat{\mathcal{Z}}_h^k\}_{h=1}^H$, which is initialized to be an empty set for all $h\in[H]$. We hope this subsampled dataset always provides a good approximation to the original dataset, $\mathcal{Z}_h^k:=\{(s_h^{\tau},a_h^{\tau})\}_{\tau\in[k-1]}$. By good approximation we mean $\|f_1 - f_2\|_{\widehat{\mathcal{Z}}_h^k}$ is always close to $\|f_1 - f_2\|_{\mathcal{Z}_h^k}$ for all $f_1,f_2\in\cF$. At the beginning of episode $k$, the algorithm receives  $\{\widehat{\mathcal{Z}}_h^{k-1}\}_{h=1}^H$ and $\{(s_h^{k-1},a_h^{k-1})\}_{h=1}^{H}$, i.e., the current sampling buffer and the trajectory obtained in the previous episode.
For each $h\in[H]$, we compute the \emph{online sensitivity score} to measure the information gain of $(s_h^{k-1},a_h^{k-1})$ with respect to $\widehat{\mathcal{Z}}_h^{k-1}$ by setting $\mathcal{Z} = \widehat{\mathcal{Z}}_h^{k-1}$ and $z = (s_h^{k-1},a_h^{k-1})$ in~\eqref{eqn:sensitivity}.
\begin{equation}
\label{eqn:sensitivity}
\sen_{\mathcal{Z},\mathcal{F}}(z):= 
\min\left\lbrace \sup_{f_1,f_2\in\mathcal{F}}\frac{(f_1(z)-f_2(z))^2}{\min\{\|f_1-f_2\|^2_{\mathcal{Z}},T(H+1)^2\}+\beta},1\right\rbrace.
\end{equation}
Here $\beta$ is a regularization parameter whose value will be specified in the appendix.
For each $h \in [H]$, starting with $\widehat{\mathcal{Z}}_h^k \gets \widehat{\mathcal{Z}}_h^{k-1}$, our algorithm then adds $(s_h^{k-1},a_h^{k-1})$ into $\widehat{\mathcal{Z}}_h^k$ with probability proportional to its online sensitivity score. 
We also set the weight (or equivalently, the number of copies added to the sub-sampled dataset) of $(s_h^{k-1},a_h^{k-1})$ to be the reciprocal of the sampling probability, if added. {We need to round $(s_h^{k-1},a_h^{k-1})$ to a finite cover before adding it to $\widehat{\mathcal{Z}}_h^k$, which gives us the convenience of applying union bound in our analysis. }

Then we can recompute and switch the policy only when the sub-sampling buffer has changed. Note that such change indicates large information gain, thus serves as a natural policy-switching criterion.

Algorithm~\ref{alg:sample} provides one-step sampling procedure. We run Algorithm~\ref{alg:sample} in the following manner to obtain $\widehat{\mathcal{Z}}_h^k$ ($k=1,2,...,K$):
\begin{align*}
&\widehat{\mathcal{Z}}_h^1\leftarrow\{\}, \text{ and for }k=2,3,...,K,\text{ run:}\\
&\widehat{\mathcal{Z}}_h^k\leftarrow \textbf{Online-Sample}(\mathcal{F},\widehat{\mathcal{Z}}_h^{k-1}, (s_{h}^{k-1},a_{h}^{k-1}),\delta)
\end{align*}
As will be shown shortly, the size of the sub-sampling buffer is bounded if the function class has bounded complexity. 
As a result, we rarely update the policy and avoid most of the computation-cost in computing the new policy. In addition, computing the online sensitivity score is also cheap due to the small size of the buffer, $\widehat{\mathcal{Z}}_h^k$. 

We remark that the algorithm in \citet{wang2020reinforcement} also contains a sampling framework to obtain small-sized approximation to  the original dataset. The small-sized subsampled dataset is then used to compute the bonus function. However, their sampling framework deals with static data and more importantly, is very inefficient. Therefore their sampling framework is not suitable for the purpose of this paper.
\begin{algorithm}[tb]
	\caption{Online-Sample($\mathcal{F},\widehat{\mathcal{Z}},z,\delta$)\label{alg:sample}}
	\begin{algorithmic}
		\STATE \textbf{Input:} Function class $\mathcal{F}$, current sub-sampled dataset $\widehat{\mathcal{Z}}\subseteq\mathcal{S}\times\mathcal{A}$, new state-action pair $z$, failure probability $\delta\in(0,1)$ 
		\STATE Let $p_z$ to be the smallest real number such that $1/p_z$ is an integer and $$p_z\geq\min\{1,C\cdot\sen_{\widehat{\mathcal{Z}},\mathcal{F}}(z)
		\cdot \log(T\mathcal{N}(\mathcal{F},\sqrt{\delta/64T^3})/\delta)\}$$
		
	    \STATE Let $\widehat{z}\in\cC(\mathcal{S}\times\mathcal{A},1/{(}16\sqrt{64T^3/\delta}))$ such that 
	    $ \sup_{f\in\mathcal{F}}|f(z)-f(\widehat{z})|\leq 1/16\sqrt{64T^3/\delta}
	    $
		\STATE  Add $1/p_z$ copies of $\widehat{z}$ into $\widehat{\mathcal{Z}}$ (or equivalently, set the weight of  $\widehat{z}$ to be $1/p_z$) with probability $p_z$
		\STATE \textbf{return} $\widehat{\mathcal{Z}}$
	\end{algorithmic}
\end{algorithm}

\subsection{Algorithmic Framework with Online Sub-Sampling}
In this section we build a \emph{low-switching-cost} RL framework based on the online sub-sampling procedure. Our framework splits into three steps: \textbf{Sampling}, \textbf{Planning} and \textbf{Executing}. In Step \textbf{Sampling}, we apply online sub-sampling to the data collected so far to  reduce the size of the dataset. 
In Step \textbf{Planning}, if we have already collected enough new information (the sub-sampling buffer changes), we call a planner to compute a new policy and then update the current policy to be the new policy. Otherwise, we keep using the old policy. 
In Step \textbf{Executing}, we use the current policy to interact with the environment to collect new data. The full algorithm is presented in Algorithm~\ref{alg:main}.


\begin{algorithm}[tb]
	\caption{RL with Online Sub-Sampling (RLOSS) \label{alg:main}}
	\begin{algorithmic}
		\STATE \textbf{Input:} Failure probability $\delta \in (0,1)$ and number of episodes $K$.
		\STATE $\tilde{k}\leftarrow 1$,
		$\widehat{\mathcal{Z}}_h^1\leftarrow\{\}\quad \forall h\in[H].$
		\FOR {episode $k=1,2,...,K$}
		\FOR {$h=H,H-1,...,1$}
	    \STATE 
	    $\widehat{\mathcal{Z}}_h^k\leftarrow \textbf{Online-Sample}(\mathcal{F},\widehat{\mathcal{Z}}_h^{k-1}, (s_{h}^{k-1},a_{h}^{k-1}),\delta)$\\
	    $(\text{if } k\geq 2 )$
		\ENDFOR
		\IF {$k=1$ \OR $\exists  h\in[H] ~~ \widehat{\mathcal{Z}}^k_h\neq\widehat{\mathcal{Z}}^{{k-1}}_h$}
		
		\STATE $\tilde{k}\leftarrow k$,
		$\pi^k\leftarrow\textbf{Planner}(\widehat{\mathcal{Z}}_h^k,\mathcal{Z}_h^k)$ {\color{blue}// $\tilde{k}$: index of the latest policy}
		\ENDIF
		\STATE Execute policy $\pi^{\tilde{k}}$ to induce a trajectory $s_1^k,a_1^k,r_1^k,...,s^k_H,a^k_H,r^k_H,s^k_{H+1}$
		\ENDFOR
	\end{algorithmic}
\end{algorithm}

\subsection{Theoretical Guarantee}
In this section we provide the theoretical guarantee of Algorithm~\ref{alg:sample}.
We use the eluder dimension~\citep{russo2013eluder} to measure the complexity of the function class. 
We remark that our subsampling algorithm does not require a bounded eluder dimension. However, this condition serves as a sufficient condition for us to obtain an efficient sample bound. We  believe there are other type of complexity measures, which may also provide a bound for the number of samples in the framework.
We also remark that a wide range of function classes, including linear functions, generalized linear functions and bounded degree polynomials, have bounded eluder dimension. 
\begin{defn}[Eluder Dimension]
Let $\varepsilon\ge 0$ and $\pairs =\{(s_i, a_i)\}_{i=1}^n\subseteq \cS\times\cA$ be a sequence of state-action pairs. \\
(1) A state-action pair $(s,a)\in \cS\times \cA$ is \emph{$\varepsilon$-dependent} on $\pairs$ with respect to $\funclass$ if any $f, f'\in \funclass$ satisfying $\|f - f'\|_{\pairs} \le \varepsilon$ also satisfies $|f(s,a) - f'(s,a)|\le \varepsilon$.  \\
(2) An $(s,a)$ is \emph{$\varepsilon$-independent} of $\pairs$ with respect to $\funclass$ if $(s,a)$ is not $\varepsilon$-dependent on $\pairs$. \\
(3) The \emph{$\varepsilon$-eluder dimension} $\ds_E(\funclass,\varepsilon)$ of a function class $\funclass$ is the length of the longest sequence of elements in $\states \times \actions$ such that, for some $\varepsilon' \ge \varepsilon$, every element is $\varepsilon'$-independent of its predecessors.
\end{defn}

In the sequel, we define $d=\max(\log(\mathcal{N}(\mathcal{F},\delta/T^2)),\dim_E(\mathcal{F},1/T),\log(\mathcal{N}(\mathcal{S}\times\mathcal{A},\delta/T^2)))$ to be the complexity of the function class. We now present the guarantee of Algorithm~\ref{alg:sample}.

\begin{thm}[informal]
\label{thm:sample}
For any fixed $\beta\in[1,TH^2]$, with high probability, for any $h\in [H]$, the sub-sampled dataset $\widehat{\mathcal{Z}}_h^k$ ($k=1,2,...,K$) changes for at most $\widetilde{O}(d^2)$ times. Therefore the number of distinct elements in $\widehat{\mathcal{Z}}_h^k$ is also bounded by $\widetilde{O}(d^2)$. And for all $(h,k)\in[H]\times[K]$ and $f_1,f_2\in\cF$, 
\[
\frac1C\|f_1 - f_2\|_{\mathcal{Z}_h^k}
\leq \|f_1 - f_2\|_{\widehat{\mathcal{Z}}_h^k} \leq C\|f_1 - f_2\|_{\mathcal{Z}_h^k}
\]
holds for some constant $C>0$.
\end{thm}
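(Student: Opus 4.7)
The plan is to prove the three conclusions jointly via an inductive (stopping-time) argument that combines a Freedman-type concentration bound for the sub-sampled squared norm with an eluder-dimension potential argument. The crucial observation is that Online-Sample is an importance-sampling scheme: $\widehat{z}$ is added with probability $p_z$ and weighted by $1/p_z$, so every squared residual $(f_1(z)-f_2(z))^2$ is preserved in expectation, and the sampling probability is large precisely where the residual might otherwise dominate.

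The first ingredient is a pointwise concentration bound. I would fix $h\in[H]$ and a pair $(f_1,f_2)\in\mathcal{F}^2$, and consider the centered-increment martingale of $\|f_1-f_2\|^2_{\widehat{\mathcal{Z}}_h^k}$. By construction, its expected increment equals $(f_1(\widehat{z})-f_2(\widehat{z}))^2$, which matches the true increment of $\|f_1-f_2\|^2_{\mathcal{Z}_h^k}$ up to the $\mathcal{S}\times\mathcal{A}$-cover rounding error $1/(16\sqrt{64T^3/\delta})$ (summing to a negligible total over $T$ steps). The definition of $\sen_{\widehat{\mathcal{Z}},\mathcal{F}}(z)$ simultaneously controls the range and the conditional second moment of each increment: whenever $p_z<1$, one has $(f_1(z)-f_2(z))^2/p_z \le (\|f_1-f_2\|^2_{\widehat{\mathcal{Z}}}+\beta)/(C\log(T\mathcal{N}/\delta))$. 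Freedman's inequality then yields a multiplicative two-sided bound $\|f_1-f_2\|^2_{\widehat{\mathcal{Z}}_h^k}=\Theta(\|f_1-f_2\|^2_{\mathcal{Z}_h^k})$ with failure probability at most $\delta/(T\mathcal{N}(\mathcal{F},\varepsilon_0))^2$ for $\varepsilon_0=\sqrt{\delta/64T^3}$. I would then lift this to hold uniformly in $(f_1,f_2)\in\mathcal{F}^2$ by union-bounding over an $\varepsilon_0$-cover of $\mathcal{F}$ (Assumption~\ref{assum:cover}), with the perturbation from rounding absorbed into the regularizer $\beta$.

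The second ingredient bounds the buffer size. Conditioned on the uniform norm approximation, the sub-sampled sensitivity $\sen_{\widehat{\mathcal{Z}}_h^{k-1},\mathcal{F}}(\cdot)$ is within a constant factor of the true-dataset sensitivity $\sen_{\mathcal{Z}_h^{k-1},\mathcal{F}}(\cdot)$. The standard eluder-dimension potential argument (as in \citet{russo2013eluder} and \citet{wang2020reinforcement}) then gives $\sum_{k=1}^K \sen_{\mathcal{Z}_h^{k-1},\mathcal{F}}(s_h^{k-1},a_h^{k-1})\le\widetilde{O}(d)$. Multiplying by the $O(\log(T\mathcal{N}/\delta))=\widetilde{O}(d)$ factor in $p_z$ gives an expected total of $\widetilde{O}(d^2)$ additions; a multiplicative Chernoff bound on the Bernoulli trials $\{\one[\widehat{z}\text{ added}]\}$ upgrades this to a high-probability statement. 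Both the number of distinct elements of $\widehat{\mathcal{Z}}_h^k$ and the number of episodes at which $\widehat{\mathcal{Z}}_h^k$ changes are dominated by this count.

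The main obstacle is the circular dependence: the concentration bound, the potential argument, and the very definition of $\sen$ used inside Online-Sample all reference the same random $\widehat{\mathcal{Z}}_h^k$, so none of them can be established in isolation. I would close the loop via a stopping-time induction. Define $\tau$ to be the first episode at which either the norm approximation fails for some pair in the cover of $\mathcal{F}$, or the cumulative number of additions to some $\widehat{\mathcal{Z}}_h^k$ exceeds the target $\widetilde{O}(d^2)$. On the event $\{\tau>k-1\}$, every quantity used in the previous two paragraphs (per-step range, conditional variance, sampling probability, approximation of $\sen$) is controlled, so the optional-stopping form of Freedman's inequality together with the Chernoff bound shows that $\Pr[\tau\le K]$ does not exceed the target failure probability. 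This closes the induction and simultaneously establishes the two-sided norm approximation, the bound on the number of changes, and the bound on the buffer size.
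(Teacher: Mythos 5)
Your proposal is correct and follows essentially the same route as the paper: Freedman's inequality over a cover of $\mathcal{F}$ for the unbiased, variance-controlled importance-sampling martingale, the eluder-dimension potential bound on the summed sensitivities combined with the constant-factor comparison between $\sen_{\widehat{\mathcal{Z}}}$ and $\sen_{\mathcal{Z}}$, and a sequential conditioning to break the circularity. The paper implements your stopping-time induction by bounding $\Pr(\mathcal{E}_h^1\cdots\mathcal{E}_h^{k-1}(\mathcal{E}_h^k)^c)$ with martingale increments truncated on the bad event, and realizes the multiplicative two-sided norm bound through a family of nested confidence-set events at scales $100^n\beta$, but these are only cosmetic differences from your plan.
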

In words, the theorem guarantees that the distance of any two functions measured by the historical dataset is approximated by the subsamping dataset.
Note that Theorem~\ref{thm:sample} directly implies the switching-cost of the framework is at most $\widetilde{O}(d^2H)$ since the size of the subsampled dataset is $\widetilde{O}(d^2H)$. Here we provide a proof sketch of Theorem~\ref{thm:sample} and defer the whole proof to the appendix.

In our proof, we first show that the summation of the online sensitivity scores is upper bounded by $\widetilde{O}(d^2)$ if we use $\mathcal{Z}_h^k$ (the original dataset) instead of $\widehat{\mathcal{Z}}_h^k$ (the sub-sampled dataset) to calculate the online sensitivity scores. 
This is established by a combinatorial argument which draws a connection between the eluder dimension and the summation of the online sensitivity scores. 
However, in our algorithm, for efficiency considerations, we use $\widehat{\mathcal{Z}}_h^k$ instead of $\mathcal{Z}_h^k$ to calculate the sensitivity scores.
Fortunately, as we will show, $\widehat{\mathcal{Z}}_h^k$ provides an accurate estimation to $\mathcal{Z}_h^k$.
Moreover, thanks to the design of the online sensitivity scores, their summation is robust to perturbations on the datasets. 
Hence, the summation of the sensitivity scores can be bounded even if $\widehat{\mathcal{Z}}_h^k$ is used in replace of $\mathcal{Z}_h^k$. 
Note that the summation of the sensitivity scores provides an upper bound on the expected size of the sub-sampled dataset, and a high probability bound can be easily obtained by using martingale concentration bounds. 

In order to show that $\|f_1 - f_2\|_{\widehat{\mathcal{Z}}_h^k}$ is close to $\|f_1 - f_2\|_{\mathcal{Z}_h^k}$, we note that sub-sampling proportional to online sensitivity scores implies that the estimator is unbiased and has low variance, and thus the desired result follows by Bernstein-type martingale concentration bounds.

\subsection{Computational Efficiency} 
\label{sec:compute}
{In this section we discuss the computational efficiency of the online sub-sampling framework. We introduce the following Regression Oracle to rigorously measure the computation complexity. As will be shown in Appendix~\ref{sec:bisearch}, the online sensitivity score can be estimated with $\tilde{O}(1)$ calls to the regression oracle. Since the size of the subsampled dataset is at most $\tilde{O}(d^2)$, the sub-sampling procedure is oracle-efficient -- in each episode, it only calls a $\tilde{O}(d^2)$-sized regression oracle for $\tilde{O}(H)$ times.}

\paragraph{Regression Oracle.} We assume access to a weighted least-squares regression oracle over the function class $\mathcal{G}$, which takes a set of weighted examples $\mathcal{D}=\{(v_i,s_i,a_i,y_i)\}_{i=1}^n\subseteq \mathbb{R}_{+}\times\mathcal{S}\times\mathcal{A}\times\mathbb{R}$ as input, and outputs the function with the smallest weighted squared loss:
$$
\text{ORACLE}(\mathcal{D},\mathcal{G})=\operatorname{argmin}_{g\in\mathcal{G}}\sum_{i=1}^n v_i\left(g(s_i,a_i)-y_i\right)^2.
$$
\begin{remark}
This oracle commonly appears in the literature~\citep{foster2018practical,foster2020beyond,foster2020instance}. 
In the linear setting where $g(s,a)=w_g^\mathrm{T}\phi(s,a)$, the solution to the regression oracle is $w_g^*=\left(\sum_{i=1}^n v_i\phi(s_i,a_i)\phi(s_i,a_i)^\mathrm{T}\right)^{-1}\left(\sum_{i=1}^n v_iy_i\phi(s_i,a_i)\right)$. 
In the general setting where $\cF$ is a general differentiable/sub-differentiable function class like neural networks, this oracle can be solved efficiently using gradient-based algorithms. 
\end{remark}
\section{Efficient RL Algorithm with Value Closedness}
\label{sec:close}
In this section we demonstrate that our subsampling framework can be applied to achieve an efficient algorithm in the following  closedness assumption.\footnote{Our framework is robust to \emph{model mis-specification}, i.e., similar results hold when the function class only satisfies Assumption~\ref{assum:express_1} \emph{approximately}. See Appendix~\ref{sec:mis} for details.}
\begin{assum}[Value Closedness]
\label{assum:express_1}
For all $h\in[H]$ and $V:\mathcal{S}\rightarrow [0,H]$,  $\cT_h V\in\cF$
\end{assum}
In words, the assumption requires that any function will be Bellman-projected to the function class. 
Under Assumption~\ref{assum:express_1}, \citet{wang2020reinforcement}~establish a provably correct algorithm that achieves a regret bound of $\widetilde{O}(\poly(dH)\sqrt{K})$. 
However, their algorithm calls an optimization oracle for at least $\Omega(K)$ times and thus can be time inefficient.
Yet, by leveraging the ideas from \citet{wang2020reinforcement}, we design our planner that can take advantage our subsampling framework and reduce the oracle complexity.
\subsection{Algorithm Design}
\begin{algorithm}[tb]
	\caption{Planner-A($\widehat{\mathcal{Z}}_h^k,\mathcal{Z}_h^k$)\label{alg:plancom}}
	\begin{algorithmic}
		\STATE \textbf{Input:} Datasets  $\widehat{\mathcal{Z}}_h^k,\mathcal{Z}_h^k$
		\STATE $Q_{H+1}^k(\cdot,\cdot)\leftarrow 0$,$V_{H+1}^k(\cdot)\leftarrow 0$
		\FOR {$h=H,H-1,...,1$}
		\STATE $\mathcal{D}_h^k\leftarrow \{(s_{h}^{\tau},a_{h}^{\tau},r_h^\tau+V_{h+1}^k(s_{h+1}^{\tau}))\}_{\tau \in [k-1]}$
		\STATE $f_h^k\leftarrow \text{argmin}_{f\in \mathcal{F}}\|f\|^2_{\mathcal{D}_h^k}$
		\STATE $b_h^k(\cdot,\cdot)\leftarrow \sup_{f_1, f_2 \in \mathcal{F}, \|f_1-f_2\|^2_{\widehat{\mathcal{Z}}^k_h}\leq\beta}|(f_1(\cdot,\cdot)-f_2(\cdot,\cdot)|
		$
		\STATE $Q_h^k(\cdot,\cdot)\leftarrow \min\{f_h^k(\cdot,\cdot)+b_h^k(\cdot,\cdot),H\}$  
		\STATE $V_h^k(\cdot)\leftarrow\max_{a\in \mathcal{A}}Q_h^k(\cdot,a)$
		\STATE $\pi_h^k(\cdot)\leftarrow \text{argmax}_{a\in \mathcal{A}}Q_h^k(\cdot,a)$
		\ENDFOR
		\STATE\textbf{Output: policy $\pi^k$}
	\end{algorithmic}
\end{algorithm}
The core idea in this planner is to calculate the optimistic Q-functions using least-squares value iteration.
For each $h=H,H-1,...,1$, we solve the following optimization problem:
\begin{align*}
f_h^k\gets\argmin_{f\in \cF}
\sum_{\tau=1}^{k-1}\left( f(s_h^\tau,a_h^\tau)-\left(r_h^\tau+\max_{a\in\mathcal{A}}Q_{h+1}^k(s_{h+1}^\tau,a)\right) \right)^2
\end{align*}
and set the estimated Q-function to be
$
Q_h^k(\cdot,\cdot)\leftarrow \min\left\lbrace f_h^k(\cdot,\cdot)+b_h^k(\cdot,\cdot),H\right\rbrace,
$
where $b_h^k(\cdot,\cdot)$ is an exploration bonus defined on the sub-sampled dataset to measure the length of the confidence interval,
$$ b_h^k(\cdot,\cdot) \gets \sup_{f_1, f_2\in \mathcal{F},\|f_1-f_2\|^2_{\widehat{\mathcal{Z}}^k_h}\leq\beta} |(f_1(\cdot,\cdot)-f_2(\cdot,\cdot)|.
$$
The policy is then defined as the greedy policy with respect to $Q_h^k$.

We study the computation complexity of this planner in terms of the regression oracle introduced in Section~\ref{sec:compute}. As will be shown in Appendix~\ref{sec:bisearch}, the computation of $b_h^k$ can be reduced to $\widetilde{O}(1)$ calls to a regression oracle of size $\widetilde{O}(d^2)$ , whereas $f_h^k$ requires to call a regression oracle of size $\Omega(K)$. Fortunately, we only need to call this policy planner for $\widetilde{O}(d^2H)$ times, thus the number of ``big'' oracle calls is bounded by $\widetilde{O}(d^2H^2)$. To execute the policy $\pi^{\tilde{k}}$ in episode $k$, we also need to compute $b_h^{\tilde{k}}(s,\cdot)$ for all encountered state $s$, which requires $\widetilde{O}(A)$ calls to a regression oracle of size $\widetilde{O}(d^2)$ per time step.
\subsection{Theoretical Guarantee} 
Here we present the theoretical guarantee of Algorithm~\ref{alg:main} equipped with Planner-A. The computation complexity is measure by the number of calls to regression oracle with size $\Omega(K)$.
\begin{thm} 
\label{thm:main_regret}
Assume Assumption~\ref{assum:cover} and Assumption~\ref{assum:express_1} hold and $T$ is sufficiently large. 
Equipped with Planner-A, with probability $1-\delta$, Algorithm~\ref{alg:main} achieves a regret bound 
\[\textstyle{
\operatorname{Regret}(K) = O(\sqrt{ \iota_1\cdot H^3\cdot T })}
\]
where 
$$
\iota_1=\log(T\mathcal{N}(\mathcal{F},\delta/T^2)/\delta)\cdot \dim^2_E(\mathcal{F},1/T)
\cdot\log^2 T \cdot\log\left(\mathcal{N}(\mathcal{S}\times\mathcal{A},\delta/T^2) \cdot T/\delta\right).
$$
Furthermore, with probability $1-\delta$ the algorithm calls a $\Omega(K)$-sized regression oracle for at most $\widetilde{O}(d^2H^2)$ times.
\end{thm}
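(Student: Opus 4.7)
The plan is to split the theorem into two claims and handle them separately. The oracle-complexity claim follows almost immediately from Theorem~\ref{thm:sample}: that result guarantees each buffer $\widehat{\cZ}_h^k$ changes at most $\widetilde{O}(d^2)$ times over $K$ episodes, so the total number of switches summed over layers is $\widetilde{O}(d^2 H)$. Planner-A is invoked only at these switches (by the if-condition in Algorithm~\ref{alg:main}), and each invocation performs $H$ regressions of size $\Omega(K)$ (one per layer, to obtain each $f_h^k$). Multiplying yields the $\widetilde{O}(d^2 H^2)$ big-oracle bound.

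For the regret bound I would follow an LSVI-UCB style analysis adapted to the sub-sampled setting in the spirit of \citet{wang2020reinforcement}. The first step is optimism. For each switch index $\tilde{k}$ and each layer $h$, standard concentration for the least-squares estimator, combined with Assumption~\ref{assum:express_1} (which places $\cT_h V_{h+1}^{\tilde{k}}\in\cF$), yields
\[
\|f_h^{\tilde{k}} - \cT_h V_{h+1}^{\tilde{k}}\|^2_{\cZ_h^{\tilde{k}}} \leq \widetilde{O}\!\left(H^2\log\coversize(\cF,\delta/T^2)\right)
\]
with high probability, after a union bound over an $\epsilon$-cover of the data-dependent function $V_{h+1}^{\tilde{k}}$. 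Applying the distortion bound from Theorem~\ref{thm:sample} transfers this to a comparable bound on $\widehat{\cZ}_h^{\tilde{k}}$, so with an appropriate choice of $\beta$ both $f_h^{\tilde{k}}$ and $\cT_h V_{h+1}^{\tilde{k}}$ lie in the confidence set used to define $b_h^{\tilde{k}}$. Consequently $|f_h^{\tilde{k}} - \cT_h V_{h+1}^{\tilde{k}}| \leq b_h^{\tilde{k}}$ pointwise, and a backward induction on $h$ establishes $V_h^{\tilde{k}}(s) \geq V_h^*(s)$ for every state $s$.

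Optimism then gives the standard decomposition
\[
\operatorname{Regret}(K) \leq \sum_{k=1}^{K}\sum_{h=1}^{H} 2\, b_h^{\tilde{k}(k)}(s_h^k,a_h^k) + \text{martingale noise},
\]
where $\tilde{k}(k)$ is the latest switch before episode $k$, and the noise term is $O(H\sqrt{T\log(1/\delta)})$ by Azuma. To control the bonus sum I observe that $\widehat{\cZ}_h^{\tilde{k}(k)} = \widehat{\cZ}_h^k$ between switches, so another invocation of Theorem~\ref{thm:sample} lets me replace the $\|\cdot\|^2_{\widehat{\cZ}_h^{\tilde{k}(k)}}\le \beta$ constraint in the definition of $b_h^{\tilde{k}(k)}$ by $\|\cdot\|^2_{\cZ_h^k}\le C^2\beta$ up to a constant. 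A classical eluder-dimension argument of the Russo--Van Roy type then gives $\sum_{k=1}^{K} b_h^{\tilde{k}(k)}(s_h^k,a_h^k)^2 \leq \widetilde{O}(\beta\cdot \ds_E(\cF,1/T))$; Cauchy--Schwarz in $k$ followed by summation in $h$, with $\beta=\widetilde{O}(H^2\log\coversize(\cF,\delta/T^2))$, converts this into the claimed $O(\sqrt{\iota_1\cdot H^3\cdot T})$ regret bound.

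The main obstacle is the concentration step: because $V_{h+1}^{\tilde{k}}$ is a data-dependent random function built from the previous iterates of the algorithm, a naive concentration inequality applied to a single fixed $V$ is not enough. The fix is a uniform bound over an $\epsilon$-cover of the class of value functions the algorithm can produce, and the log-covering number of that class is controlled by the combination of $\log\coversize(\cF,\cdot)$ and $\log\coversize(\cS\times\cA,\cdot)$ that appears in $\iota_1$. A secondary bookkeeping issue is that the sub-sampled buffer itself is random, so the good events from Theorem~\ref{thm:sample} must be composed with the concentration events via a union bound; this is mechanical once the good events are correctly stated and is the reason for the rounding step in Algorithm~\ref{alg:sample}.
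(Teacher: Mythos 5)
Your proposal is correct and follows essentially the same route as the paper: optimism via least-squares concentration with a union bound over a cover of the bonus-induced value-function class, a regret decomposition exploiting $b_h^{\tilde{k}}=b_h^k$ between switches, the distortion guarantee of Theorem~\ref{thm:sample} to pass between $\widehat{\cZ}_h^k$ and $\cZ_h^k$, and a Russo--Van Roy eluder-dimension bound on the cumulative bonus; the oracle count likewise matches the paper's switch-count-times-$H$ argument. The only cosmetic difference is that you bound the bonus sum via $\sum_k b^2$ plus Cauchy--Schwarz while the paper sorts the bonuses and bounds the $k$-th largest directly, which are equivalent forms of the same argument.
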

Note that  our regret bound is the same with that in~\citet{wang2020reinforcement} when applied to the same setting, whereas our number of oracle calls is much smaller. 
\begin{remark}
\label{rmk:2}
When specialized to the linear MDP setting, i.e., when $\cF$ is the class of $d_{\text{\emph{lin}}}$-dimensional linear functions, the global switching cost bound of our algorithm is $\widetilde{O}(d_{\text{\emph{lin}}}^2H)$, which is worse than the $\widetilde{O}(d_{\text{\emph{lin}}}H)$ bound given in~\citet{gao2021provably}. 
However, for linear functions, our sampling procedure is equivalent to the online leverage score sampling~\citep{cohen2016online}, and therefore, by using the analysis in ~\citep{cohen2016online} which is specific to the linear setting, the switching cost bound can be improved to $\widetilde{O}(d_{\text{\emph{lin}}}H)$, matching the bound given in~\citet{gao2021provably}. Using the same technique, our regret bound can be improved to $\widetilde{O}(\sqrt{d_{\text{\emph{lin}}}^3H^3T})$ in the linear setting, matching the bound given in~\citet{jin2020provably,gao2021provably}.
\end{remark}
\begin{remark}
\label{rmk3}
If we assume the time cost of the regression oracle scales linearly with its data size, we can show that the whole time-complexity of our algorithm is $\widetilde{O}(\poly(dH)K)$. 
Under the same assumption the whole time-complexity of the algorithm proposed in \citet{wang2020reinforcement} is at least $\Omega(K^2)$.
\end{remark}
\subsubsection{Proof Sketch of Theorem 2}

The first step of the proof is to show optimism, which requires that our exploration bonus upper bounds the estimation error. As our exploration bonus also has bounded complexity, optimism can be established using similar uniform convergence argument as in ~\citet{jin2020provably, wang2020reinforcement}.

We switch the policy only when the sub-sampled dataset is changed; this can also be understood from a regret decomposition perspective. We use $\tilde{k}$ to denote the index of the policy used in the $k$-th episode. The algorithm design naturally guarantees that $\widehat{\cZ}_h^k= \widehat{\cZ}_h^{\tilde{k}}$. With optimism, the regret can be decomposed as:
$$
\text{Regret}(K)\leq \sum_{k=1}^K \left(V_1^{\tilde{k}}(s_1)-V_1^{\pi_{\tilde{k}}}(s_1)\right) \leq \sum_{k=1}^K\sum_{h=1}^{H-1}\xi_h^k+2\sum_{k=1}^K\sum_{h=1}^H b_h^{\tilde{k}}(s_h^k,a_h^k)
$$
where $\{\xi_h^k\}$ is a sequence of martingale differences, and $b_h^{\tilde{k}}$ is the bonus function in the $\tilde{k}$-th episode.
Note that our exploration bonus is defined on the sub-sampled dataset $\widehat{\cZ}_h^k$, and therefore $\widehat{\cZ}_h^{\tilde{k}}=\widehat{\cZ}_h^k$ indicates $b_h^{\tilde{k}}=b_h^k$. Then using ideas in~\citet{russo2013eluder}, $\sum_{k=1}^K\sum_{h=1}^H b_h^{k}(s_h^k,a_h^k)$ can be upper bounded in terms of the eluder dimension.

\section{Efficient RL Algorithm with Value Completeness}
\label{sec:complete}
In this section we demonstrate the efficacy of our framework under the following value completeness assumptions. Note that this assumption is more general than the value closedness assumption and thus captures a larger set of function classes. 
\begin{assum}[Value Completeness]
\label{assum:express_3}
For all $h\in[H]$, $\mathcal{T}_h\cF_{[0,H-h]}\subseteq\cF$, where $\cF_{[0,H-h]}:=\{f\in\cF:\|f\|_{\infty}\leq H-h\}$ is the truncated function class in step $h$.
\end{assum}
In words, this assumption requires that the function class is closed under the Bellman projection. However, unlike value closedness, the projection of an arbitrary function might not lie in the function class. Moreover, this assumption does not guarantee that the optimal Q-function is captured by the function class. Hence, we also require the following standard realizability assumption. 
\begin{assum}[Realizability]
\label{assum:real}
For all $h\in[H]$, $Q_h^*\in\cF$.
\end{assum}
Under these two assumptions, \citet{jin2021bellman} develop an algorithm, GOLF, whose regret scales with the \emph{Bellman eluder dimension} of the function class $\cF$, denoted as $\dim_{BE}(\cF,\epsilon)$. The definition and discussion of the Bellman eluder dimension is deferred to Appendix~\ref{sec:bellman}.
\subsection{Algorithm Design}
In this section we use the planner in \cite{jin2021bellman}. The main component of this planner is the construction of the confidence set $\cB^k$. For each $h\in[H]$, the planner maintains a local regression constraint using the collected transition data. The constraint essentially requires low empirical bellman error.
Then we solve a \emph{global} optimization  problem to ensure the optimism of the learned Q-function:
\begin{equation}
\label{eqn:oracle}
\text{max}_{Q^k\in\mathcal{B}^k}(\max_{a\in\mathcal{A}}Q_1^k(s_1,a)).
\end{equation}
The policy is defined as the greedy policy with respect to $Q^k$. The full algorithm is presented in Algorithm~\ref{alg:plansample}.
\begin{algorithm}[tb]
	\caption{Planner-B($\widehat{\mathcal{Z}}_h^k,\mathcal{Z}_h^k$)\label{alg:plansample}}
	\begin{algorithmic}
		\STATE \textbf{Input:} Datasets  $\widehat{\mathcal{Z}}_h^k,\mathcal{Z}_h^k$
		
		\STATE $\mathcal{D}_h^k(f)\leftarrow \{(s_{h}^{\tau},a_{h}^{\tau},r_h^\tau+\max_{a\in\cA}f(s_{h+1}^{\tau},a))\}_{\tau \in [k-1]}$ for all $h\in[H]$
		\STATE $\mathcal{B}^k\leftarrow\{\{f_h\}_{h=1}^{H+1}:\forall h\in[H],\|f_{h}\|_{\infty}\leq H+1-h,\|f_h\|^2_{\mathcal{D}_h^k(f_{h+1})}\leq\text{inf}_{g\in \mathcal{F}}\|g\|^2_{\mathcal{D}_h^k(f_{h+1})}+\beta,f_{H+1}\equiv 0\}$
		\STATE 
		$
		(Q_1^k,Q_2^k,...,Q_{H+1}^k)\leftarrow \operatorname{argmax}_{Q^k\in\mathcal{B}^k}(\max_{a\in\mathcal{A}}Q_1^k(s_1,a))
		$
		\STATE $\pi_h^k(\cdot)\leftarrow \text{argmax}_{a\in \mathcal{A}}Q_h^k(\cdot,a)$
		\STATE\textbf{Output: policy $\pi^k$}
	\end{algorithmic}
\end{algorithm}
\subsection{Theoretical Guarantee}
Note that \eqref{eqn:oracle} is a multi-level optimization problem, where the constraint region of the upper-level problem is determined implicitly by the solution set of the lower level problem. Unfortunately for now we do not know how to solve this problem efficiently. We therefore assume a \emph{nested optimization oracle} to solve this problem. Compared to the Algorithm GOLF proposed in \citet{jin2021bellman}, the number of oracle calls in our algorithm decreased from $\Omega(K)$ to $\widetilde{O}(d^2H)$, and the regret bound is the same when applied to the same setting. The theoretical guarantee is formally stated as:
\begin{thm} 
\label{thm:main_regret2}
Assume Assumption~\ref{assum:cover}, Assumption~\ref{assum:express_3} and Assumption~\ref{assum:real} holds and $T$ is sufficiently large. 
Equipped with Planner-B, with probability $1-\delta$ Algorithm~\ref{alg:main}
achieves a regret bound 
$$
\operatorname{Regret}(K) = O(\sqrt{ \iota_1\cdot H^3\cdot T })
$$
where 
$$
\iota_1=\log(T\mathcal{N}(\mathcal{F},1/K)/\delta)\cdot \dim_{BE}(\mathcal{F},1/\sqrt{K}).
$$
Here $\dim_{BE}(\cdot,\cdot)$ denotes the Bellman eluder dimension. Furthermore, with probability $1-\delta$ the algorithm calls the nested optimization oracle for at most $\widetilde{O}(d^2 H)$ times.
\end{thm}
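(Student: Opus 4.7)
The plan has two parts: bounding the oracle complexity and bounding the regret. The oracle complexity is the easier part. By Theorem~\ref{thm:sample}, for each $h\in[H]$ the sub-sampled buffer $\widehat{\mathcal{Z}}_h^k$ changes at most $\widetilde{O}(d^2)$ times over $K$ episodes, so in total the policy-update condition in Algorithm~\ref{alg:main} triggers at most $\widetilde{O}(d^2 H)$ times. Each trigger makes a single call to Planner-B, and Planner-B uses the nested optimization oracle once. This gives the $\widetilde{O}(d^2 H)$ bound on oracle calls directly.

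For the regret, I would first establish optimism. Let $\tilde{k}$ denote the last planning episode at or before episode $k$. Using Assumption~\ref{assum:real} together with Assumption~\ref{assum:express_3}, standard uniform-convergence arguments (over a cover of $\cF$, using the log-covering number in the definition of $d$) show that with high probability $Q^*$ lies in the confidence set $\mathcal{B}^{\tilde{k}}$ for every planning episode, for an appropriate choice of the regularization constant $\beta$. Since Planner-B maximizes the estimated value over $\mathcal{B}^{\tilde{k}}$, optimism $V_1^{\pi^{\tilde{k}}}(s_1)\geq V_1^{*}(s_1)$ in the selected $Q^{\tilde{k}}$ follows, and a standard policy-loss decomposition yields
\[
\operatorname{Regret}(K)\;\lesssim\;\sum_{k=1}^K\sum_{h=1}^H\bigl|(Q_h^{\tilde{k}}-\mathcal{T}_hQ_{h+1}^{\tilde{k}})(s_h^k,a_h^k)\bigr| \;+\; \text{(martingale)}.
\]

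The key step, and the main obstacle, is to bound the sum of absolute Bellman residuals using the Bellman eluder dimension, despite the policy being updated only $\widetilde{O}(d^2 H)$ times rather than once per episode. In GOLF, the confidence-set constraint forces $\|Q_h^{\tilde{k}} - \mathcal{T}_hQ_{h+1}^{\tilde{k}}\|_{\mathcal{Z}_h^{\tilde{k}}}^2 \lesssim \beta$ (up to an additive misspecification term controlled by the completeness assumption). I would lift this to our setting by invoking Theorem~\ref{thm:sample}: since $\|f_1-f_2\|_{\widehat{\mathcal{Z}}_h^{\tilde{k}}}$ and $\|f_1-f_2\|_{\mathcal{Z}_h^{\tilde{k}}}$ agree up to a constant factor for every pair in $\cF$, and the trigger condition ensures $\widehat{\mathcal{Z}}_h^k=\widehat{\mathcal{Z}}_h^{\tilde{k}}$ on all non-planning episodes, the residual $Q_h^{\tilde{k}}-\mathcal{T}_hQ_{h+1}^{\tilde{k}}$ remains in a bounded-norm ball on the sub-sampled dataset throughout the ``epoch'' $[\tilde{k},k]$. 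Consequently, each $(s_h^k,a_h^k)$ is played either when the previous residuals on $\mathcal{Z}_h^k$ are already small, or it is $\varepsilon$-independent of past data. A pigeonhole argument on $\varepsilon$-independence — identical to the one used to translate eluder dimension into a sum-of-bonuses bound in \citet{russo2013eluder} and in the GOLF regret proof of \citet{jin2021bellman} — then yields
\[
\sum_{k=1}^K\sum_{h=1}^H\bigl|(Q_h^{\tilde{k}}-\mathcal{T}_hQ_{h+1}^{\tilde{k}})(s_h^k,a_h^k)\bigr|\;\lesssim\;\sqrt{\iota_1\,H^2\,T}.
\]

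Finally, I would handle the martingale term by Azuma–Hoeffding and combine everything, yielding the claimed $O(\sqrt{\iota_1\,H^3\,T})$ regret. The main obstacle throughout is the interaction between the lag (policy fixed across an epoch) and the GOLF constraint (phrased on $\mathcal{Z}_h^{\tilde{k}}$, not on $\mathcal{Z}_h^k$): the sub-sampling equivalence from Theorem~\ref{thm:sample} is what makes this lag harmless, since the sub-sampled buffer is what governs when the constraints could degrade, and by construction it does not change during an epoch. The rest of the analysis, including the choice of $\beta=\Theta(\log(T\mathcal{N}(\cF,1/K)/\delta))$ and the reduction of $\dim_E$-based bookkeeping to $\dim_{BE}$, follows the corresponding steps of \citet{jin2021bellman} essentially unchanged.
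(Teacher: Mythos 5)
Your proposal is correct and follows essentially the same route as the paper: Theorem~\ref{thm:sample} gives the $\widetilde{O}(d^2H)$ bound on planner (hence oracle) calls, optimism of the GOLF confidence set gives the Bellman-residual regret decomposition, the chain $\|Q_h^{\tilde{k}}-\cT_h Q_{h+1}^{\tilde{k}}\|^2_{\cZ_h^k}\lesssim\|\cdot\|^2_{\widehat{\cZ}_h^k}=\|\cdot\|^2_{\widehat{\cZ}_h^{\tilde{k}}}\lesssim\|\cdot\|^2_{\cZ_h^{\tilde{k}}}\leq O(\beta)$ is exactly the paper's key lemma handling the policy lag, and the distributional-eluder pigeonhole finishes the bound. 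The only nit is the line ``$V_1^{\pi^{\tilde{k}}}(s_1)\geq V_1^{*}(s_1)$,'' which should read $\max_{a}Q_1^{\tilde{k}}(s_1,a)\geq V_1^{*}(s_1)$ (optimism of the estimate, not of the policy's true value), as your subsequent decomposition already assumes.
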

We postpone the proof sketch to Appendix~\ref{proof_thm3}. The key step is similar to the one of Theorem~\ref{thm:main_regret}, i.e., decomposing the regret properly in the face of delayed policy updates.


\section{Application to Other Settings}
In this section, we show the generality of our online sub-sampling technique by applying it to different RL tasks.
\subsection{Reward-Free RL}
The reward-free RL contains two phases, the \emph{exploration phase} and the \emph{planning phase}. In the exploration phase, the agent interacts with the MDP in episodes as usual, but receives no reward signal. After the exploration phase, the agent is given a reward function in the planning phase. The reward-free RL aims to output a near-optimal policy with respect to the given reward function with no additional access to the environment. As mentioned in~\citet{jin2020reward} and~\citet{wang2020reward}, this paradigm is particularly suitable for the batch RL setting and the setting where there are multiple reward functions of interest.

Our algorithm is based on the ``exploration-driven reward'' technique proposed in \citet{wang2020reward}, which studies reward-free RL with linear function approximation. We further run the online sub-sampling algorithm to control policy updates in the exploration phase. We present an informal theoretical guarantee here, and the details are deferred to the appendix.

\textbf{Theorem 4} (informal)\textbf{.} \emph{Under standard assumptions, with high probability, our algorithm guarantees to output an $\epsilon$-optimal policy for any given reward function after exploring the environment for $\widetilde{O}(d^4 H^6\epsilon^{-2})$ episodes, while only switches the policy for $\widetilde{O}(d^2H)$ times during the exploration phase. Furthermore, with high probability, our algorithm calls a $\Omega(K)$-sized regression oracle for at most $\widetilde{O}(d^2H^2)$ times. Here $d$ depends on the eluder dimension of the target function class.}
\subsection{Multi-Agent RL}
We consider the task of learning two-player zero-sum Markov games with general function approximation. This setting is formerly studied in \citet{jin2022power,huang2021towards}. We focus on the online setting, where the learner can only control the max-player, and an adversarial opponent controls the min-player.\footnote{Similar results hold for the self-play setting using the technique of the ``exploiter'' introduced in \citet{jin2022power}.} The learning goal is to minimize the regret compared to the value of the Nash equilibrium. 

Based on the algorithms proposed in \citet{jin2022power,huang2021towards}, we run the online subsampling algorithm to control policy updates for the max-player. Again, we present an informal theoretical guarantee here and defer the details to the appendix. 

\textbf{Theorem 5} (informal)\textbf{.}
\emph{Under standard assumptions, with high probability, the regret of the max-player is bounded by $\tilde{O}(\sqrt{d^2\cdot H^4\cdot K})$. At the same time, the max-player only switches her policy for $\tilde{O}(d^2 H)$ times. Furthermore, with high probability, our algorithm calls the nested optimization oracle for at most $\widetilde{O}(d^2 H)$ times. Here $K$ is the number of episodes, and $d$ depends on the online Bellman eluder dimension~\citep{jin2022power} of the target function class.}
\section{Discussions}
\subsection{Practical Impacts} Though our work is of theoretical nature, here we discuss the potential practical impacts of our work. The most direct practical impact is the improvement over \citet{wang2020reinforcement}. A notable drawback of applying the algorithm in \citet{wang2020reinforcement} in practice is its bad computational efficiency. Equipped with our framework, the whole time complexity of the algorithm grows linear in $K$ (under the time cost assumption of the regression oracle, see Remark~\ref{rmk3}), making it more suitable for practical use.

Another direction is to combine our framework with a heuristic optimizer to obtain more practical algorithms. For example, in DQN, the agent doesn’t need to train after each step. Instead, it uses experience replay to train on small batches once every $n$ steps, where $n$ is a tuning parameter, but is fixed during the entire course of training. Our subsampling approach would introduce an efficient and adaptive approach to adjust $n$ – the network is retrained whenever the policy needs a switch. Note that the bonus function defined in our paper can be implemented for DNN and has been implemented in \citet{feng2021provably}. While this idea needs further experimental verification, we believe our methods can inspire more efficient practical algorithms. 

\subsection{Limitations and Future Works}
\label{limit}
Admittedly, our current theoretical guarantees rely heavily on the eluder dimension structures of the function class, i.e., the sub-sampled dataset size can be bounded by a function of the eluder dimension. Here we discuss future directions to extend our framework to a broader scope. 
\paragraph{Information-Theoretic View Point.} One can understand the structure of our framework more generally from an information-theoretic point of view: the algorithm switches the policy when there is sufficient information gained, and the data points causing the information change are important points to be stored. Hence, we believe such an algorithmic structure can be further summarized and abstracted to fit a broader range of settings. We remark that the idea of measuring information gain has been further studied in \citet{lu2019information}, which could be integrated with our techniques for sampling. 

Along the direction of accelerating existing algorithms, it would also be interesting to see whether our techniques can be used to accelerate the algorithms in \citet{foster2021statistical}, which is the most general framework for RL with general FA up to now, subsuming the frameworks of \citet{jiang2017contextual}, \citet{jin2021bellman}, and \citet{du2021bilinear}.

\bibliography{ref}
\bibliographystyle{plainnat}
\newpage
\appendix
\tableofcontents
\section{Additional Notations, Parameter Setting, Additional Related Works}
In this section we provide additional notations used in the analysis, choice of the parameter $\beta$, and additional related works.
\subsection{Additional Notations}
For $n$ events $\mathcal{E}_1, \mathcal{E}_2, \ldots, \mathcal{E}_n$, we write 
\[
\Pr(\mathcal{E}_1\mathcal{E}_2\ldots\mathcal{E}_n)  = \Pr(\mathcal{E}_1 \cap \mathcal{E}_2 \cap \ldots \cap \mathcal{E}_n). 
\]
For a event $\mathcal{E}$, we use $\mathbb{I}\{\mathcal{E}\}$ to denote the indicator function, i.e., 
\[
\mathbb{I}\{\mathcal{E}\}=
\begin{cases}
1&\mathcal{E}\text{ holds}\\
0&\text{otherwise}\\
\end{cases}.
\]
For a event $\mathcal{E}$, we use $\mathcal{E}^c$ to denote its complement. 
For a multiset $\cZ$, we use $|\cZ|$ to denote the cardinality of $\cZ$, and $n_d(\cZ)$ the number of distinct elements in $\cZ$.

\subsubsection{Bellman Eluder Dimension}
In this section, we define the Bellman eluder dimension. Please refer to \citet{jin2021bellman} for more discussions on this concept.

Firstly, we introduce the distributional eluder dimension.
\label{sec:bellman}

\begin{defn}[Distributional Eluder Dimension]
Let $\varepsilon\ge 0$ and $\cZ =\{\mu_1,\mu_2,...,\mu_n\}_{i=1}^n\subseteq \Delta(\cS\times\cA)$ be a sequence of probability measures over state-action space. \\
(1) A probability measure $\mu\in \Delta(\cS\times \cA)$ is \emph{$\varepsilon$-dependent} on $\cZ$ with respect to $\cG$ if any $g\in \cG$ satisfying $\sqrt{\sum_{\mu\in\cZ}\mathbb{E}_{\mu}g(s,a)^2} \le \varepsilon$ also satisfies $\mathbb{E}_{\mu}|g(s,a)|\le \varepsilon$.  \\
(2) An $\mu$ is \emph{$\varepsilon$-independent} of $\cZ$ with respect to $\cG$ if $\mu$ is not $\varepsilon$-dependent on $\cZ$. \\
(3) The \emph{$\varepsilon$-distributional eluder dimension} $\ds_{DE}(\cG,\Pi,\varepsilon)$ of a function class $\cG$ is the length of the longest sequence of elements which take values in $\Pi$ such that, for some $\varepsilon' \ge \varepsilon$, every element is $\varepsilon'$-independent of its predecessors.
\end{defn}
Typically there are two choices for the distribution family $\Pi$:\\
(1) We use $\cD_{\cF,h}$ to denote the collection of all probability measures over $\cS\times\cA$ at the $h^{\text{th}}$ step, which can be generated by excuting the greedy policy $\pi_{(f_1,f_2,...,f_H)}$ induced by any $f_1,f_2,...,f_h\in\cF$, i.e., $\pi_{f,h}(\cdot)=\text{argmax}_{a\in\cA}f_h(\cdot,a)$.\\
(2) $\cD_{\Delta}:=\{\delta_{(s,a)}(\cdot)|s\in\cS,a\in\cA\}$, i.e., all single point distribution.

Then the Bellman eluder dimension is stated as:
\begin{defn}[Bellman Eluder Dimension] Let $(I-\cT_h)\cF:=\{f-\cT_h f':f,f'\in\cF\}$ be the set of bellman residuals induced by $\cF$ at step $h$. The $\varepsilon$-Bellman eluder dimension of $\cF$ is defined as $\dim_{BE}(\cF,\varepsilon):=\min_{\Pi\in\{\cD_{\cF},\cD_{\Delta}\}}\max_{h\in[H]}\dim_{DE}((I-\cT_h)\cF,\Pi_h,\varepsilon)$.
\end{defn}
\subsection{Choice of the parameter $\beta$}
\label{sec:beta}
We elaborate the choice of $\beta$. When using Planner-A, we set $\beta$ to be
\[
\beta=CH^2\cdot\log(T\mathcal{N}(\mathcal{F},\delta/T^2)/\delta) \cdot\dim_E(\mathcal{F},1/T)\cdot\log^2 T\cdot
\log\left(\mathcal{C}(\mathcal{S}\times\mathcal{A},\delta/(T^2)) \cdot T/\delta\right).
\]
When using planner-B, we set $\beta$ to be
\[
\beta=CH^2\cdot \log(T\cN(\cF,1/K)/\delta).
\]
In the reward-free RL setting (Algorithm~\ref{alg:exploration} and Algorithm~\ref{alg:planning}, will be introduced later), we set $\beta$ to be
\begin{align*}
\beta=CH^2\cdot(\log(\mathcal{N}(\mathcal{R},1/T))&\cdot\dim_E(\mathcal{F},1/T)\\
+\log(T\mathcal{N}(\mathcal{F},\delta/T^2)/\delta) \cdot\dim_E(\mathcal{F},1/T)&\cdot\log^2 T\cdot
\log\left(\mathcal{C}(\mathcal{S}\times\mathcal{A},\delta/(T^2)) \cdot T/\delta\right)).
\end{align*}
\subsection{Additional Related Works}
\paragraph{Online Sub-Sampling.} Our core technique, sub-sampling by online sensitivity score, is inspired by the sensitivity sampling technique introduced in~\cite{langberg2010universal,feldman2011unified,feldman2013turning} and 
the online leverage score sampling technique introduced in~\cite{cohen2016online}.
However, the algorithm and analysis in~\citep{cohen2016online} works only for linear functions, and the framework in~\cite{langberg2010universal,feldman2011unified,feldman2013turning} can only deal with static datasets.
On the other hand, our techniques can deal with general function classes while operate in an online manner. 

\paragraph{Tabular RL.}
There is a long line of theoretical work on the sample complexity and regret bound for RL in the tabular setting. See, e.g.,~\citep{kearns2002near,kakade2003sample,szita2010model,jaksch2010near,azar2013minimax,osband2016lower,azar2017minimax,jin2018q,sidford2018near,zanette2019tighter,agarwal2020model,wang2020long,zhang2020reinforcement,sidford2020solving,cui2020minimax,li2020breaking,li2021breaking,menard2021ucb,xiong2021randomized} and references therein. However, as these results all depend polynomially on the size of the state space, they can not be directly applied to real-world problems with large state spaces. 

\paragraph{RL with Linear Function Approximation.}
The former most basic and frequently studied setting is RL with linear function approximation. See, e.g.,~\citep{yang2019sample,yang2020reinforcement,jin2020provably,du2019good,wang2019optimism,zanette2020learning,zanette2020provably,du2020agnostic,cui2020plug,agarwal2020flambe,zhou2021nearly,hu2022nearly} for recent theoretical advances. 
\paragraph{RL with Low Switching Cost.}
\citet{bai2019provably}~is the first work that studies switching cost in RL. This problem was later studied in \cite{zhang2020almost,gao2021provably, wang2021provably,huang2022towards,qiao2022sample}. Our work focus on the \emph{global} switching cost studied in \cite{gao2021provably}.
\paragraph{Reward-Free RL.}
The reward-free RL setting is proposed in~\cite{jin2020reward}. \citet{kaufmann2020adaptive} refine the algorithm proposed in \cite{jin2020reward} with improved sample complexity. 
\citet{wang2020reward,zanette2020provably,qiu2021reward,wagenmaker2022reward,chen2022statistical,chen2022unified}~further design provably efficient algorithms with function approximation in the reward-free setting. 
\paragraph{Learning Two-Player Zero-Sum Games.} Recently there is a line of theoretical works on efficiently learning two-player zero-sum games. See, e.g., \citet{zhang2020model,sidford2020solving,bai2020provable,liu2021sharp,jin2021v} and references therein. \citet{xie2020learning,chen2021understanding,jin2022power,huang2021towards} further consider learning two-player zero-sum games with function approximation. Our work focuses on the general function approximation setting studied in \citet{huang2021towards,jin2022power}.
\section{Computing Bonus and Sensitivity via Regression Oracle}
\label{sec:bisearch}
The first step is to show that the following constrained optimization problem can be solved within $\tilde{O}(1)$ calls to the regression oracle:
\begin{align}\label{opt:constrained}
\text{max } f_1(s,a)-f_2(s,a)\text{ s.t. }\|f_1-f_2\|_{\mathcal{Z}}^2\leq \epsilon 
, \ f_1,f_2\in\mathcal{F}.
\end{align}
This can be accomplished using ideas from~\citet{foster2018practical}. The idea is solving the following regression problem to estimate the solution to the constrained optimization problem:
\[
\min  \|f_1-f_2\|_{\mathcal{Z}}^2+\frac w2(f_1(s,a)-f_2(s,a)-2H)^2
\]
we do binary search over $w$ to find the proper value of $w$.
The full algorithm is presented in Algorithm~\ref{bisearch}.
\begin{algorithm}
	\caption{Binary Search\label{bisearch}}
	\begin{algorithmic}[1]
		\STATE \textbf{Input:} Dataset $\mathcal{Z}$, objective $(s,a)$, tolerance $\beta$, precision $\alpha$
		\STATE $\mathcal{G}\leftarrow \mathcal{F}-\mathcal{F}$
	    \STATE $R(g,w):=\|g\|_{\mathcal{Z}}^2+\frac w2(g(s,a)-2(H+1))^2$, $\forall g\in \mathcal{G}$
	    \STATE $w_L\leftarrow 0$, $w_H\leftarrow \beta/(\alpha(H+1))$
	    \STATE $g_L\leftarrow 0$, $z_L\leftarrow 0$
	    \STATE $g_H\leftarrow \text{argmin}_{g\in\mathcal{G}}R(g,w_H)$, $z_H\leftarrow g_H(s,a)$
	    \STATE $\Delta\leftarrow \alpha\beta/(8(H+1)^3)$
	    \WHILE {$|z_H-z_L|>\alpha$ and $|w_H-w_L|>\Delta$}
		\STATE $\widetilde{w}\leftarrow (w_H+w_L)/2$
		\STATE $\widetilde{g}\leftarrow \text{argmin}_{g\in\mathcal{G}}R(g,\widetilde{w})$, $\widetilde{z}\leftarrow \widetilde{g}(s,a)$
		\IF {$\|\widetilde{g}\|^2_{\mathcal{Z}}>\beta$}
		\STATE $w_H\leftarrow \widetilde{w}$, $z_H\leftarrow \widetilde{z}$
		\ELSE 
		\STATE $w_L\leftarrow \widetilde{w}$, $z_L\leftarrow \widetilde{z}$
		\ENDIF
		\ENDWHILE
		\STATE \textbf{Output: $z_H$} 
	\end{algorithmic}
\end{algorithm}
The following theorem shows that when $\cF$ is convex, algorithm \ref{bisearch} solves the constrained optimization problem up to a precision of $\alpha$ in $O(\log(1/\alpha))$ iterations, i.e., $O(\log(1/\alpha))$ oracle invocations. When $\cF$ is not convex, the constrained optimization problem can be solved with $O(1/\alpha)$ oracle invocations using the techniques in \cite{krishnamurthy2017active}.
\begin{prop}
\label{thm:oracle}
Assume that the optimal solution to the following constrained optimization problem is $g^*=f_1^*-f_2^*$.
$$
\text{maximize } f_1(s,a)-f_2(s,a)
$$
$$
\text{subject to }\|f_1-f_2\|_{\mathcal{Z}}^2\leq\beta 
, \quad f_1,f_2\in\mathcal{F}
$$
We run algorithm \ref{bisearch} to solve the above problem. If the function class $\mathcal{F}$ is convex and closed under pointwise convergence, then algorithm \ref{bisearch} terminate after $O(\log(1/\alpha))$ oracle invocations and the returned values satisfy
$$
|z_H-g^*(s,a)|\leq \alpha.
$$
\end{prop}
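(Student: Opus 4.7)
The plan is to analyze the algorithm as a parametric bisection over a convex program. First, substitute $g = f_1 - f_2 \in \cG := \cF - \cF$, which is convex since $\cF$ is; the constrained problem then reduces to maximizing the linear functional $g \mapsto g(s,a)$ over the convex sublevel set $\{g \in \cG : \|g\|_{\cZ}^2 \leq \beta\}$. The regression oracle produces $g_w := \argmin_{g\in\cG} R(g,w)$ with $R(g,w) = \|g\|_{\cZ}^2 + \tfrac{w}{2}(g(s,a) - 2(H+1))^2$; as $w$ grows, the second (penalty) term dominates and pulls $g(s,a)$ toward $2(H+1)$ at the price of inflating $\|g\|_{\cZ}^2$, so the family $\{g_w\}$ traces out the efficient frontier of the bi-criterion problem $\bigl(\|g\|_{\cZ}^2,\, -g(s,a)\bigr)$.

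The core structural step is monotonicity of the family. Adding the two optimality inequalities $R(g_w,w) \leq R(g_{w'},w)$ and $R(g_{w'},w') \leq R(g_w,w')$ and canceling common terms yields $\tfrac{w'-w}{2}\bigl[(g_w(s,a)-2(H+1))^2 - (g_{w'}(s,a)-2(H+1))^2\bigr] \geq 0$. Since $g(s,a) \leq H+1 < 2(H+1)$ throughout $\cG$, the squared quantities decrease exactly when $g_w(s,a)$ increases, so $w \mapsto g_w(s,a)$ is non-decreasing; a parallel substitution gives the same for $w \mapsto \|g_w\|_{\cZ}^2$. Standard Pareto-frontier reasoning then provides a critical $w^* \in [0,\infty]$ such that $g^* = g_{w^*}$ whenever the constraint is active; the edge case in which the unconstrained maximum is already feasible is handled by substituting $g^*$ into $R(g_{w_H^{(0)}}, w_H^{(0)}) \leq R(g^*, w_H^{(0)})$ and using $\|g^*\|_{\cZ}^2 \leq \beta$ together with $2(H+1) - g(s,a) \geq H+1$ to deduce $z^* - g_{w_H^{(0)}}(s,a) \leq \beta/(w_H^{(0)}(H+1)) = \alpha$, so the initial $z_H$ already approximates $z^*$ to within $\alpha$.

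Correctness of the bisection then reduces to a loop invariant: by induction on iterations, $w_L \leq w^* \leq w_H$ throughout, since the infeasibility test $\|\widetilde g\|_{\cZ}^2 > \beta$ certifies $\widetilde w > w^*$ via monotonicity of $\|g_w\|_{\cZ}^2$, while the feasibility branch certifies $\widetilde w \leq w^*$. Monotonicity of $g_w(s,a)$ then gives $z_L \leq z^* \leq z_H$ at every iteration, so $|z_H - z^*| \leq z_H - z_L$. If the loop exits via the clause $|z_H - z_L| \leq \alpha$, the bound is immediate. Since each iteration halves $w_H - w_L$, after $\log_2(w_H^{(0)}/\Delta) = O(\log(1/\alpha))$ rounds one of the two exit conditions must trigger, which also yields the claimed $O(\log(1/\alpha))$ oracle invocations.

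The main obstacle is the second exit clause $|w_H - w_L| \leq \Delta$, which calls for upgrading monotonicity to a quantitative Lipschitz estimate on $w \mapsto g_w(s,a)$ that matches the specific choice $\Delta = \alpha\beta/(8(H+1)^3)$. I plan to combine the two optimality inequalities more carefully and use the identity $A^2 - A'^2 = (A+A')(A-A')$ with $A := 2(H+1) - g_w(s,a) \in [H+1, 3(H+1)]$ to derive a sandwich of the form $w(H+1)(z_{w'}-z_w) \leq \|g_{w'}\|_{\cZ}^2 - \|g_w\|_{\cZ}^2 \leq 3w'(H+1)(z_{w'}-z_w)$. Pairing this with the fact that near $w^*$ the quantity $\|g_w\|_{\cZ}^2$ stays on the order of $\beta$ should yield a Lipschitz constant of order $(H+1)^3/\beta$, so $|w_H - w_L| \leq \Delta$ forces $|z_H - z_L| \leq \alpha$. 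The delicate point is pinning down this constant tightly without appealing to strict convexity of $R(\cdot,w)$, since $\cG$ is only assumed convex and closed under pointwise convergence; closedness is what guarantees the $\argmin$ defining $g_w$ is actually attained.
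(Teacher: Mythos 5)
Your route is essentially the one the paper relies on: the paper's own proof consists only of the observation that $\cG=\cF-\cF$ is convex followed by a citation to Theorem~1 of \citet{foster2018practical}, so you are reconstructing the cited argument. The parts you work out are correct: convexity of $\cG$, the monotonicity of $w\mapsto g_w(s,a)$ and $w\mapsto\|g_w\|_{\cZ}^2$ from the paired optimality inequalities, the iteration count, and the inactive-constraint computation $z^*-z_H\le\beta/(w_H(H+1))=\alpha$. One caution on the bracketing: do not route $z^*\le z_H$ through the existence of $w^*$ with $g^*=g_{w^*}$ (that needs a continuity argument you have not supplied); the clean statement is that $z_L\le z^*$ because $g_{w_L}$ is feasible, while $R(g_{w_H},w_H)\le R(g^*,w_H)$ together with $\|g_{w_H}\|_{\cZ}^2>\beta\ge\|g^*\|_{\cZ}^2$ forces $(g_{w_H}(s,a)-2(H+1))^2<(g^*(s,a)-2(H+1))^2$ and hence $z_H>z^*$.

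The genuine gap is exactly where you flag it, and the mechanism you propose for it would fail. The sandwich $w(H+1)(z_{w'}-z_w)\le\|g_{w'}\|_{\cZ}^2-\|g_w\|_{\cZ}^2\le 3w'(H+1)(z_{w'}-z_w)$ is correct, but it is derived purely from the zeroth-order comparisons $R(g_w,w)\le R(g_{w'},w)$ and $R(g_{w'},w')\le R(g_w,w')$; combining its two sides only returns $w\le w'$, and it is consistent with $z_w$ jumping by $\Theta(H)$ across an arbitrarily small $w$-interval, so no Lipschitz estimate can come out of it. The missing idea is the \emph{first-order} optimality condition along the segment $(1-t)g_w+tg_{w'}\subseteq\cG$ (this is where convexity of $\cG$ is genuinely used beyond well-posedness of the oracle). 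Writing $A_w=2(H+1)-g_w(s,a)$, the two variational inequalities $2\langle g_w,g_{w'}-g_w\rangle_{\cZ}+wA_w(A_{w'}-A_w)\ge 0$ and $2\langle g_{w'},g_w-g_{w'}\rangle_{\cZ}+w'A_{w'}(A_w-A_{w'})\ge 0$ sum to $(z_{w'}-z_w)(w'A_{w'}-wA_w)\ge 2\|g_{w'}-g_w\|_{\cZ}^2\ge 0$, which gives the genuine one-sided bound $z_{w'}-z_w\le (w'-w)A_{w'}/w\le 3(H+1)(w'-w)/w$. To use it at termination you also need a lower bound on $w_L$: since $\|g_{w_H}\|_{\cZ}^2>\beta$ and $R(g_{w_H},w_H)\le R(0,w_H)=2w_H(H+1)^2$, one gets $w_H>\beta/(2(H+1)^2)$, hence $w_L\ge w_H-\Delta\gtrsim\beta/(H+1)^2$, and then $z_H-z_L\le 3(H+1)\Delta/w_L=O(\alpha)$ with $\Delta=\alpha\beta/(8(H+1)^3)$. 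Without the first-order step, your plan stalls at a tautology.
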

\begin{proof}
Note that if $\mathcal{F}$ is convex, then $\mathcal{F}-\mathcal{F}$ is also convex due to the following equation:
$$
\lambda(f_1-f_2)+(1-\lambda)(f_3-f_4)=(\lambda f_1 +(1-\lambda)f_3)-(\lambda f_2+(1-\lambda)f_4).
$$
The rest of the proof is identical to Theorem 1 of \cite{foster2018practical}. We omit it here for brevity.
\end{proof}
The second step is to show how to reduce the computation of the online sensitivity scores to the constrained optimization problem in~\eqref{opt:constrained}. Indeed, estimation of the online sensitivity score can be reduced to the following
optimization problems:
$
\text{max } f_1(s,a)-f_2(s,a)
\text{ s.t. }\|f_1-f_2\|_{\mathcal{Z}}^2\leq 2^\alpha
, \ f_1,f_2\in\mathcal{F}
$
for $\alpha \in \{0,1,...,\log(T(H+1)^2),+\infty\}$.
Indeed, assuming the solution of the above problem is $f_1^\alpha,f_2^\alpha$ for some $\alpha$, and let
\begin{align*}
&\sen^{\text{est}}_{\mathcal{Z},\mathcal{F}}(z) = \max_{\alpha}\left\lbrace\min\left\lbrace \frac{(f_1^\alpha(z)-f_2^\alpha(z))^2}{\min\{\|f_1^\alpha-f_2^\alpha\|^2_{\mathcal{Z}},T(H+1)^2\}+\beta},1\right\rbrace\right\rbrace.
\end{align*}
We then have $1\leq\sen_{\mathcal{Z},\mathcal{F}}(z)/\sen^{\text{est}}_{\mathcal{Z},\mathcal{F}}(z)\leq 2$.
Note that a 2-approximation of the sensitivity score is sufficient for our analysis.
Note that the exploration bonus naturally has the form of \eqref{opt:constrained}. Hence, both the exploration bonus and the sensitivity scores can be computed using $\widetilde{O}(1)$ regression oracle calls.

\section{Application in other settings}
\subsection{Reward-Free RL Setting}
In this section we show that our results can be extended to the reward-free exploration setting, in which the agent explores the environment without the guidance of a reward, while achieving both sample efficiency and computation efficiency.
We begin with some basics and notations of reward-free RL.
\subsubsection{Reward-Free RL}
The reward-free RL contains two phases, the \emph{exploration phase} and the \emph{planning phase}. In the exploration phase, the agent interacts with the MDP in episodes as usual, but receives no reward signal. After the  exploration phase, the agent is given a reward function in the planning phase. The goal of the reward-free RL is to output a near-optimal policy with respect to the given reward function with no additional access to the environment. As mentioned in~\citet{jin2020reward} and~\citet{wang2020reward}, this paradigm is particular suitable for the batch RL setting and the setting where there are multiple reward functions of interest.

\paragraph{Notations.} Slightly changing the notation, we define the value (action-value) functions with respect to a given reward function $r=\{r_h\}_{h=1}^H$ as 
$$
V_h^{\pi}(s,r)=\mathbb{E}\left[\sum_{h'=h}^Hr_{h'}(s_{h'},a_{h'})\left|\right.s_h=s,\pi\right]
$$
and 
$$
Q_h^{\pi}(s,a,r)=\mathbb{E}\left[\sum_{h'=h}^Hr_{h'}(s_{h'},a_{h'})\left|\right.s_h=s,a_h=a,\pi\right].
$$
The optimal value (action-value) functions $V_h^*(s,r)$ and $Q_h^*(s,a,r)$ are defined similarly. We say a policy $\pi$ is a $\varepsilon$-optimal policy with respect to $r$ if $V_1^*(s_1,r)-V_1^{\pi}(s_1,r)\leq \varepsilon$.
\subsubsection{Function Class}
The reward-free RL setting also requires a function class $\mathcal{F}\subset \{f:\mathcal{S}\times\mathcal{A}\rightarrow[0,H+1]\}$ given as a priori. The following version of closedness assumption is needed 
\begin{assum}
\label{assum:express_2}
For any $h\in[H]$ and any $V:\mathcal{S}\rightarrow [0,H]$,  
\[
\cT_h V\in\cF
\]
\end{assum}
Compared to Assumption~\ref{assum:express_1}, with no reward function, Assumption~\ref{assum:express_2} can be regarded as a constrain on the transition core. Intuitively, this assumption guarantees that we can use function class $\mathcal{F}$ to effectively explore the transition operator.

In Linear MDPs, it is assumed that the reward function is linear in the feature extractor. Instead of  making explicit assumption on the structure of the reward function, we assume the reward function given in the planning phase belongs to a function class with bounded covering number.
\begin{assum} 
\label{assum:reward}
The reward function $r=\{r_h\}_{h=1}^H$ belongs to a function class $\mathcal{R}\subseteq\{\mathcal{S}\times\mathcal{A}\rightarrow[0,1]\}$, i.e., $r_h\in\mathcal{R}$ for all $h\in[H]$.
And for any $\varepsilon>0$, there exists an $\varepsilon$-cover $\mathcal{C}(\mathcal{R},\varepsilon)$ with size $|\mathcal{C}(\mathcal{R},\varepsilon)|\leq \mathcal{N}(\mathcal{R},\varepsilon).$
\end{assum}
\subsubsection{Algorithm}

The algorithm consists of two phases: an exploration phase and a planning phase. Here we use the planner in \citet{wang2020reinforcement}. 
Below, we introduce our algorithm.
\begin{algorithm}[h]
	\caption{Exploration Phase\label{alg:exploration}}
	\begin{algorithmic}
		\STATE \textbf{Input:} Failure probability $\delta \in (0,1)$, number of episodes $K$, and function class $\cF$.
		\STATE $\tilde{k}\leftarrow 1$
		\STATE $\widehat{\mathcal{Z}}_h^1\leftarrow\{\}\quad \forall h\in[H]$
		\FOR {episode $k=1,2,...,K$}
		\FOR {$h=H,H-1,...,1$}
		\STATE $\widehat{\mathcal{Z}}_h^k\leftarrow \textbf{Online-Sample}(\mathcal{F},\widehat{\mathcal{Z}}_h^{k-1}, (s_{h}^{k-1},a_{h}^{k-1}),\delta)$  (if $k\geq 2$)	
	    \ENDFOR
		\IF 
		{
		$k=1$ 
		\OR
		$\exists h\in[H] ~~ \widehat{\mathcal{Z}}^k_h\neq\widehat{\mathcal{Z}}^{\tilde{k}}_h$}
		\STATE $\tilde{k}\leftarrow k$
		\STATE $Q_{H+1}^k(\cdot,\cdot)\leftarrow 0$,$V_{H+1}^k(\cdot)\leftarrow 0$
		\FOR {$h=H,H-1,...,1$}
		\STATE $\mathcal{D}_h^k\leftarrow \{(s_{h}^{\tau},a_{h}^{\tau},V_{h+1}^k(s_{h+1}^{\tau}))\}_{\tau \in [k-1]}$
		\STATE $f_h^k\leftarrow \text{argmin}_{f\in \mathcal{F}}\|f\|^2_{\mathcal{D}_h^k}$
		\STATE $b_h^k(\cdot,\cdot)\leftarrow \sup_{\|f_1-f_2\|^2_{\widehat{\mathcal{Z}}_h^k}\leq\beta}|(f_1(\cdot,\cdot)-f_2(\cdot,\cdot)|$
		\STATE $r_h^k(\cdot,\cdot)\leftarrow \min\{b_h^k(\cdot,\cdot)/H,1\}$
		\STATE $Q_h^k(\cdot,\cdot)\leftarrow \min\{f_h^k(\cdot,\cdot)+b_h^k(\cdot,\cdot)+r_h^k(\cdot,\cdot),H\}$ 
		\STATE $V_h^k(\cdot)\leftarrow \max_{a\in \mathcal{A}}Q_h^k(\cdot,a)$
		\STATE $\pi_h^k(\cdot)\leftarrow \text{argmax}_{a\in \mathcal{A}}Q_h^k(\cdot,a)$
		\ENDFOR
		\ENDIF
		\STATE Execute policy $\pi^{\tilde{k}}$ to induce a trajectory $s_1^k,a_1^k,r_1^k,...,s^k_H,a^k_H,r^k_H,s^k_{H+1}$
		\ENDFOR
	\end{algorithmic}
\end{algorithm}

\begin{algorithm}[tb]
	\caption{Planning Phase\label{alg:planning}}
	\begin{algorithmic}
		\STATE \textbf{Input:} Dataset $\mathcal{Z}_h^K,h\in[H]$, subsampled dataset $\widehat{\mathcal{Z}}_h^K,h\in[H]$, reward function $r=\{r_h\}_{h=1}^H$, and function class $\cF$.
		\STATE $Q_{H+1}(\cdot,\cdot)\leftarrow 0$,$V_{H+1}(\cdot)\leftarrow 0$
	    \STATE $\mathcal{Z}_h\leftarrow \mathcal{Z}_h^K, \widehat{\mathcal{Z}}_h\leftarrow \widehat{\mathcal{Z}}_h^K$
	    \FOR {$h=H,H-1,...,1$}
		\STATE $\mathcal{D}_h\leftarrow \{(s_{h}^{\tau},a_{h}^{\tau},V_{h+1}(s_{h+1}^{\tau}))\}_{\tau \in [K-1]}$
		\STATE $f_h\leftarrow \text{argmin}_{f\in \mathcal{F}}\|f\|^2_{\mathcal{D}_h}$
		\STATE 
		$b_h(\cdot,\cdot)\leftarrow \sup_{\|f_1-f_2\|^2_{\widehat{\mathcal{Z}}_h}\leq\beta}|(f_1(\cdot,\cdot)-f_2(\cdot,\cdot)|$
		\STATE $Q_h(\cdot,\cdot)\leftarrow \min\{f_h(\cdot,\cdot)+b_h(\cdot,\cdot)+r_h(\cdot,\cdot),H\}$ 
		\STATE $V_h(\cdot)\leftarrow \max_{a\in \mathcal{A}}Q_h(\cdot,a)$
		\STATE $\pi_h(\cdot)\leftarrow \text{argmax}_{a\in \mathcal{A}}Q_h(\cdot,a)$
		\ENDFOR
		\STATE \textbf{return} $\pi=\{\pi_h\}_{h=1}^H$
	\end{algorithmic}
\end{algorithm}
\paragraph{Exploration Phase.} Our algorithm for the exploration phase is quite similar to Algorithm~\ref{alg:main}\footnote{We need to slightly change the choice of $\beta$ in the reward-free setting. See Section~\ref{sec:beta} for details.}. The main difference is that without the guidance from the reward signal, we use the following \emph{exploration-driven} reward function to encourage exploration: 
$$
r_h^k(\cdot,\cdot)\leftarrow \min\{b_h^k(\cdot,\cdot)/H,1\}.
$$
The full algorithm used in the exploration phase is presented in Algorithm~\ref{alg:exploration}.

\paragraph{Planning Phase.} In the planning phase we do optimistic planning similar to Algorithm~\ref{alg:main}, but with real reward instead of exploration-driven reward. We still add the bonus function to guarantee optimism.
\subsubsection{Theoretical Guarantee} Now we state our theoretical guarantee in the reward-free RL setting.
\begin{thm}[formal]
\label{thm:main_free}
Suppose Assumption~\ref{assum:cover}, Assumption~\ref{assum:express_2}, and Assumption~\ref{assum:reward} holds and $T$ is sufficiently large.
For any given $\delta\in(0,1)$, after collecting $K$ trajectories during the exploration phase (by Algorithm~\ref{alg:exploration}), with probability at least $1-\delta$, for any reward function $r=\{r_h\}_{h=1}^H$ satisfying Assumption~\ref{assum:reward}, Algorithm~\ref{alg:planning} outputs an $O(H^3\cdot\sqrt{\iota_1/K})$-optimal policy for the MDP $(\cS,\cA, P, r, H, s_1)$. Here,
\begin{align*}
\iota_1=\log(\mathcal{N}(\mathcal{R},1/T))&\cdot\dim_E(\mathcal{F},1/T)\\
+\log(T\mathcal{N}(\mathcal{F},\delta/T^2)/\delta)\cdot\log^2 T
&\cdot \dim^2_E(\mathcal{F},1/T)
\cdot\log\left(\mathcal{N}(\mathcal{S}\times\mathcal{A},\delta/T^2) \cdot T/\delta\right).
\end{align*}
Furthermore, with probability $1-\delta$ the algorithm calls a $\Omega(K)$-sized regression oracle for at most $\widetilde{O}(d^2H^2)$ times.
\end{thm}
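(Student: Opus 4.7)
The plan is to decompose the proof into three pieces: a uniform optimism argument that must hold simultaneously during exploration and over every planning-phase reward, an exploration-phase analysis against the bonus-driven rewards $r_h^k = \min\{b_h^k/H, 1\}$ structured like the proof of Theorem~\ref{thm:main_regret}, and a reward-free-to-planning reduction that upper bounds the planner's sub-optimality by the expected bonus along the output policy's trajectory.

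First I would verify that the bonus $b_h^k$ built from $\widehat{\mathcal{Z}}_h^k$ is a valid confidence width. By Assumption~\ref{assum:express_2}, $\mathcal{T}_h V \in \mathcal{F}$ for every bounded value function $V$, so standard least-squares generalization together with a union bound over an $\varepsilon$-net of $\mathcal{F}$ controls $\|f_h^k - \mathcal{T}_h V_{h+1}^k\|^2_{\mathcal{Z}_h^k}$ by $\beta/C$. Theorem~\ref{thm:sample} then transfers this bound to $\widehat{\mathcal{Z}}_h^k$ up to constants, placing $\mathcal{T}_h V_{h+1}^k$ inside the feasible set that defines $b_h^k$, from which the pointwise optimism $Q_h^k \geq \mathcal{T}_h V_{h+1}^k$ follows. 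For the planning phase the same reasoning must be made uniform in $r \in \mathcal{R}^H$; I would handle this with an additional union bound over a $1/T$-cover of $\mathcal{R}$, which produces the $\log\mathcal{N}(\mathcal{R}, 1/T)$ term in $\iota_1$.

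Second, for the exploration phase I would use optimism to write $V_1^{\tilde{k}}(s_1) \geq V_1^\pi(s_1, r^{\tilde{k}})$ for any $\pi$, and observe that by Theorem~\ref{thm:sample} the bonus is approximately monotone in $k$ (because $\|\cdot\|^2_{\widehat{\mathcal{Z}}_h^k}$ is within a constant factor of the monotonically nondecreasing $\|\cdot\|^2_{\mathcal{Z}_h^k}$), giving $V_1^\pi(s_1, r^K) \leq O(V_1^{\tilde{k}}(s_1))$. Summing over $k$ and invoking the single-episode regret decomposition
\[
V_1^{\tilde{k}}(s_1) - V_1^{\pi^{\tilde{k}}}(s_1, r^{\tilde{k}}) \leq \text{(martingale)} + 2\sum_{h=1}^H b_h^{\tilde{k}}(s_h^k, a_h^k),
\]
together with $V_1^{\pi^{\tilde{k}}}(s_1, r^{\tilde{k}}) \leq (1/H)\sum_h \mathbb{E}_{\pi^{\tilde{k}}}[b_h^{\tilde{k}}]$ and the eluder-telescoping bound $\sum_{k,h} b_h^{\tilde{k}}(s_h^k, a_h^k) \leq \widetilde{O}(\sqrt{\beta\,\dim_E KH})$ (which is where Theorem~\ref{thm:sample} feeds in, exactly as in the proof sketch of Theorem~\ref{thm:main_regret}), I would conclude
\[
\max_\pi V_1^\pi(s_1, r^K) \leq \widetilde{O}\bigl(\sqrt{\beta H\dim_E/K}\bigr).
\]
Third, for the planning phase with fixed $r$, the standard one-episode optimism argument yields
\[
V_1^*(s_1, r) - V_1^\pi(s_1, r) \leq 2 \sum_{h=1}^H \mathbb{E}_\pi[b_h^K(s_h, a_h)] = 2H \cdot V_1^\pi(s_1, r^K),
\]
which combined with the exploration bound and the explicit $\beta$ given in Appendix~\ref{sec:beta} yields the claimed $O(H^3 \sqrt{\iota_1/K})$ sub-optimality. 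The oracle-complexity claim follows directly from Theorem~\ref{thm:sample}: policy recomputation in Algorithm~\ref{alg:exploration} is triggered only when the sub-sampled buffer changes, which happens at most $\widetilde{O}(d^2 H)$ times across all layers; each recomputation uses $H$ calls to a $\Omega(K)$-sized regression oracle, and Algorithm~\ref{alg:planning} adds one more pass of $H$ such calls, totalling $\widetilde{O}(d^2 H^2)$.

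The main obstacle I foresee is the uniform-over-rewards optimism in step one: we need the \emph{same} bonus $b_h^K$, committed at the end of exploration, to serve as a valid confidence width for the Bellman projection of every chain of optimistic value functions the planner could produce under any $r \in \mathcal{R}^H$. Because those value functions themselves depend on $r$, the cover of $\mathcal{R}$ must be propagated through the backward value iteration at resolution comparable to the cover of $\mathcal{F}$, which is what controls the extra $\log\mathcal{N}(\mathcal{R}, 1/T)$ factor in $\iota_1$ and what I would devote the most care to in the formal proof.
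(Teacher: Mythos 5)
Your proposal is correct and follows essentially the same route as the paper's proof: optimism made uniform over a cover of $\mathcal{R}$ (the paper does this via the composite class $\mathcal{F}+\mathcal{R}$ in Lemma~\ref{lem:confidence_region_2}), an exploration-phase bound on $\sum_k V_1^{\tilde k}(s_1)$ via the eluder-dimension bonus summation, a reduction of the planning error to the expected bonus and then, by monotonicity of the bonus in $k$, to the exploration-phase quantity (Lemmas~\ref{lem:bound_suboptimal} and~\ref{lem:bonud_suboptimal_2}), and the oracle count from Theorem~\ref{thm:sample}. The only cosmetic differences are that the paper's bonus monotonicity is exact (the sub-sampled datasets are nested) rather than approximate, and the truncations $\Pi_{[0,1]}[b/H]$ versus $b/H$ are tracked explicitly in the paper's induction.
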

Using ideas in the proof of Theorem~\ref{thm:main_regret}, the proof of Theorem~\ref{thm:main_free} follows rather straightforwardly from~\citet{wang2020reward}. The high-level idea is to show that, after the exploration phase, for any reward function, the error of the planning policy is upper bounded by the expectation of the bonus functions, which is shown to be small enough using results proved in Theorem~\ref{thm:main_regret}.
The formal proof of Theorem~\ref{thm:main_free} is presented in Section~\ref{pf:thm3}.
\begin{remark}
Let $d(T,\delta):=\max(\log(\mathcal{N}(\mathcal{R},1/T))$, $\log(\mathcal{N}(\mathcal{F}$, $\delta/T^2))$, $\dim_E(\mathcal{F},1/T)$, $\log(\mathcal{N}(\mathcal{S}\times\mathcal{A}, \delta/T^2)))$. Then the output policy is guaranteed to be $\widetilde{O}(H^3\cdot d(T,\delta)^2/\sqrt{K})$-optimal with high probability. In the tabular case we have $d(T,\delta)=O(|\cS||\cA|\cdot\poly\log(|\cS||\cA|T\delta^{-1}))=\widetilde{O}(|\cS||\cA|)$. When $\cF$ and $\mathcal{R}$ are both the class of d-dimensional linear functions we have $d(T,\delta)=O(d\cdot\poly\log(dT\delta^{-1}))=\widetilde{O}(d)$. 
However, it is hard to rigorously show this kind of property when $\cF$ and $\mathcal{R}$ are both general function classes. Generally speaking, if $d(T,\delta)=O(d^*\cdot\poly\log(d^*T\delta^{-1}))=\widetilde{O}(d^*)$ where $d^*$ depends only on the complexity of $\cF$ and $\mathcal{R}$, then the output policy is guaranteed to be $\widetilde{O}(H^3\cdot (d^*)^2/\sqrt{K})$-optimal with high probability. Thus for any $\epsilon>0$, by taking $K=C\cdot(d^*)^4H^6\cdot\epsilon^{-2}\cdot \mathrm{polylog}(d^*T\delta^{-1}\epsilon^{-1})$ where $C>0$ is a sufficiently large constant, our algorithm guarantees to output an $\epsilon$-optimal policy after exploring the environment for $\widetilde{O}((d^*)^4H^6\epsilon^{-2})$ episodes. In this case, our sample complexity bound and switching cost bound become $\widetilde{O}((d^*)^4H^6\epsilon^{-2})$ and $\widetilde{O}((d^*)^2H)$. In particular, when $\cF$ and $\mathcal{R}$ are the class of d-dimensional linear functions, our sample complexity bound can be improved to $\widetilde{O}(d^3H^6\epsilon^{-2})$ with refined analysis using the technique  mentioned in Remark~\ref{rmk:2}, matching the bound given in~\citet{wang2020reward}.
\end{remark}
\subsection{Multi-Agent RL Setting}
\subsubsection{Two-Player Zero-Sum Game}
We consider a two-player zero-sum episodic Markov game MG($\cS,\cA,\cB,P,r,H,s_1$), where $\cS$ is the state space, $\cA$ and $\cB$ are the action space for the max-player and the min-player respectively, $P=\{P_h\}_{h=1}^H$ where $P_h:\cS\times\cA\times\cB\rightarrow\Delta(\cS)$ are the transition operators, $r=\{r_h\}_{h=1}^H$ where $r_h:\cS\times\cA\times\cB\rightarrow[0,1]$ are the deterministic reward functions, and $H$ is the planning horizon.

In this game the two player interacts with the environment episodically. Each episode starts from the fixed initial state $s_1$ and consists of $H$ time steps. A Markov policy of the max-player $\mu=\{\mu_h\}_{h=1}^H$, where for each $h\in[H]$, $\mu_h:\cS\rightarrow\Delta(\cA)$ maps a state to a distribution in the probability simplex over $\cA$. Similarly, we can define a Markov policy $\nu$ over $\cB$ for the min-player. In the rest of the section we only consider Markov policy. In each episode, the two players choose their policies $\mu$ and $\nu$, and induce a trajectory
$$
s_1,a_1,b_1,r_1,s_2,\ldots,s_H,a_H,b_H,r_H,s_{H+1}
$$
where $s_1$ is fixed, $a_1\sim\mu_1(s_1)$, $b_1\sim\nu_1(s_1)$ $r_1=r_1(s_1,a_1,b_1)$, $s_2 \sim P_1(\cdot|s_1,a_1,b_1)$, etc. Note that the whole trajectory (including the actions of the opponent) is visible to the two players. 

Similar to the single agent setting, we use the Q and V-function to evaluate the long-term expected cumulative reward. For two policies $\mu$, $\nu$, and time step $h\in[H]$, we define
\[
Q_h^{\mu,\nu}(s,a,b)=\EE\left[\sum_{h'=h}^H r_{h'}\mid s_h=s,a_h=a,b_h=b\right],
\]
and
\[
V_h^{\mu,\nu}(s)=\EE\left[\sum_{h'=h}^H r_{h'}\mid s_h=s\right].
\]
For a policy of the max-player $\mu$, there exists a best response policy of the min-player $\nu^{\dag}(\mu)$ satisfying $V_h^{\mu,\nu^{\dag}(\mu)}(s)=\inf_{\nu}V_h^{\mu,\nu}(s)$ for all $(s,h)$. We further denote $V_h^{\mu,\dag}(s):=V_h^{\mu,\nu^{\dag}(\mu)}(s)$. Similarly we can define $\mu^{\dag}(\nu)$ and $V_h^{\dag,\nu}$. 

We denote $(\mu^*,\nu^*)$ as the Nash equilibrium of the Markov game, which satisfies the following minimax equation
\[
V_h^{\mu^*,\nu^{*}}(s)=V_h^{\dag,\nu^{*}}(s)=V_h^{\mu^*,\dag}(s)=:V_h^*(s).
\]
We can see from the minimax equation that in a Nash equilibrium, no player can benefit by \emph{unilaterally} deviating from her own policy.
\paragraph{Learning Objective.} We consider the online setting where the learner can only control the max-player, and the min-player is controlled by a potentially adversarial opponent. In the beginning of the $k$-th episode, the learner chooses her policy $\mu^k$, and the opponent chooses a policy $\nu^k$ based on $\mu^k$. The learning goal is to minimize the following regret
\[
\text{NashRegret}(K):=\sum_{k=1}^K\left[V_1^{*}(s_1)-V_1^{\mu^k,\nu^k}(s_1)\right].
\]
\subsubsection{Function Class}
Similar to the single-agent setting, we assume that a function class $\mathcal{F}\subset \{f:\mathcal{S}\times\mathcal{A}\times\cB\rightarrow[0,H+1]\}$ is given as a priori. 
For any $f:\cS\times\cA\times\cB\rightarrow\mathbb{R}$, we denote the Nash policy of the max-player induced by $f$ as
\[
\mu_f(s):=\argmax_{\mu\in\Delta(\cA)}\min_{\nu\in\Delta(\cB)}\mu^{\top}f(s,\cdot,\cdot)\nu,
\]
and the Nash value function induced by $f$ as
\[
V_f(s):=\max_{\mu\in\Delta(\cA)}\min_{\nu\in\Delta(\cB)}\mu^{\top}f(s,\cdot,\cdot)\nu.
\]
Furthermore, for any $f:\cS\times\cA\times\cB\rightarrow\mathbb{R}$, we define the \emph{Bellman projection} of $f$ as {$\cT_h f:\cS\times\cA\times\cB\rightarrow\mathbb{R}$, such that for all $(s,a,b)\in\cS\times\cA\times\cB$,} \[\textstyle{
\cT_h f(s,a,b):=r_h(s,a,b)+\mathbb{E}_{s'\sim\mathbb{P}_h(\cdot|s,a,b)}V_f(s').}
\] 


We still need to assume bounded covering numbers of the function class. Similar to Assumption~\ref{assum:cover}, we can define $\cC(\cF,\epsilon),\cN(\cF,\epsilon),\cC(\cS\times\cA\times\cB,\epsilon),\cN(\cS\times\cA\times\cB,\epsilon)$.

We assume the following multi-agent version of the realizability and completeness assumptions used in \citet{jin2022power,huang2021towards}.
\begin{assum}[Completeness]
\label{assum:express_3_mg}
For all $h\in[H]$, $\mathcal{T}_h\cF_{[0,H-h]}\subseteq\cF$, where $\cF_{[0,H-h]}:=\{f\in\cF:\|f\|_{\infty}\leq H-h\}$ is the truncated function class in step $h$.
\end{assum}
\begin{assum}[Realizability]
\label{assum:real_mg}
For all $h\in[H]$, $Q_h^*\in\cF$.
\end{assum}
To measure the complexity of $\cF$, we introduce the \emph{Online Bellman Eluder Dimension}~\citep{jin2022power}.
\begin{defn}[Distributional Eluder Dimension]
Let $\varepsilon\ge 0$ and $\cZ =\{\mu_1,\mu_2,...,\mu_n\}_{i=1}^n\subseteq \Delta(\cS\times\cA\times\cB)$ be a sequence of probability measures over state-action space. \\
(1) A probability measure $\mu\in \Delta(\cS\times \cA\times\cB)$ is \emph{$\varepsilon$-dependent} on $\cZ$ with respect to $\cG$ if any $g\in \cG$ satisfying $\sqrt{\sum_{\mu\in\cZ}\mathbb{E}_{\mu}g(s,a,b)^2} \le \varepsilon$ also satisfies $\mathbb{E}_{\mu}|g(s,a,b)|\le \varepsilon$.  \\
(2) An $\mu$ is \emph{$\varepsilon$-independent} of $\cZ$ with respect to $\cG$ if $\mu$ is not $\varepsilon$-dependent on $\cZ$. \\
(3) The \emph{$\varepsilon$-distributional eluder dimension} $\ds_{DE}(\cG,\Pi,\varepsilon)$ of a function class $\cG$ is the length of the longest sequence of elements which take values in $\Pi$ such that, for some $\varepsilon' \ge \varepsilon$, every element is $\varepsilon'$-independent of its predecessors.
\end{defn}
Here we only consider the single point distribution: $\cD_{\Delta}:=\{\delta_{(s,a,b)}(\cdot)|s\in\cS,a\in\cA,b\in\cB\}$.
Then the online Bellman eluder dimension is stated as:
\begin{defn}[Online Bellman Eluder Dimension] Let $(I-\cT_h)\cF:=\{f-\cT_h f':f,f'\in\cF\}$ be the set of bellman residuals induced by $\cF$ at step $h$. The $\varepsilon$-online Bellman eluder dimension of $\cF$ is defined as \[\dim_{OBE}(\cF,\varepsilon):=\max_{h\in[H]}\dim_{DE}((I-\cT_h)\cF,\cD_{\Delta},\varepsilon).\]
\end{defn}
\subsubsection{Algorithm}
Here we introduce the algorithm. The algorithm is based on Algorithm 1 in jin, which can be seen as a multi-agent version of the GOLF algorithm~\citep{jin2021bellman}. We use the online sub-sampling procedure to carefully control the policy update of the max-player. Here the online sub-sampling algorithm is exactly the one in Algorithm~\ref{alg:sample}, but with triple data points $(s,a,b)\in\cS\times\cA\times\cB$. The full algorithm is presented in Algorithm~\ref{alg:MG}. 
Here we choose $\beta$ to be
\[
\beta=C\cdot H^2\cdot \log(T\cN(\cF,1/K)/\delta).
\]
\begin{algorithm}[h]
	\caption{GOLF with Online Sub-Sampling\label{alg:MG}}
	\begin{algorithmic}
	\STATE \textbf{Input:} Failure probability $\delta \in (0,1)$, number of episodes $K$, and function class $\cF$.
	\STATE $\tilde{k}\leftarrow 1$
	\STATE $\widehat{\mathcal{Z}}_h^1\leftarrow\{\}\quad \forall h\in[H]$
	\FOR{$k=1,2,\ldots,K$}
	    \FOR {$h=H,H-1,...,1$}
	    \STATE 
	    $\widehat{\mathcal{Z}}_h^k\leftarrow \textbf{Online-Sample}(\mathcal{F},\widehat{\mathcal{Z}}_h^{k-1}, (s_{h}^{k-1},a_{h}^{k-1},b_{h}^{k-1}),\delta)(\text{if } k\geq 2 )$
		\ENDFOR
		\IF {$k=1$ \OR $\exists  h\in[H] ~~ \widehat{\mathcal{Z}}^k_h\neq\widehat{\mathcal{Z}}^{{k-1}}_h$}
	
		\STATE $\tilde{k}\leftarrow k$ {\color{blue} // $\tilde{k}$ : index of the latest policy}
		\STATE Define $\mathcal{D}_h^k(f)\leftarrow \{(s_{h}^{\tau},a_{h}^{\tau},b_h^\tau,r_h^\tau+V_f(s_{h+1}^{\tau}))\}_{\tau \in [k-1]}$ for all $h\in[H]$
		\STATE Set \[\mathcal{B}^k\leftarrow\{\{f_h\}_{h=1}^{H+1}:\forall h\in[H],\|f_{h}\|_{\infty}\leq H+1-h,\|f_h\|^2_{\mathcal{D}_h^k(f_{h+1})}\leq\text{inf}_{g\in \mathcal{F}}\|g\|^2_{\mathcal{D}_h^k(f_{h+1})}+\beta,f_{H+1}\equiv 0\}\]
		\STATE Set $$(Q_1^k,Q_2^k,...,Q_{H+1}^k)\leftarrow \text{argmax}_{Q^k\in\mathcal{B}^k}(V_{Q_1^k}(s_1))$$
		\STATE For all $h\in[H]$, set $\mu_h^k\leftarrow \mu_{Q_h^k}$
		\ENDIF
		\STATE Let the min-player choose a policy $\nu^k$
		\STATE Execute policy $(\mu^{\tilde{k}},\nu^k)$ to induce a trajectory $s_1^k,a_1^k,b_1^k,r_1^k,...,s^k_H,a^k_H,b_H^k,r^k_H,s^k_{H+1}$	
	\ENDFOR
	\end{algorithmic}
\end{algorithm}
\subsubsection{Theoretical Guanrantee}

\begin{thm}[formal]
\label{thm:MG}
Assume Assumption~\ref{assum:cover}, Assumption~\ref{assum:express_3_mg}, and Assumption~\ref{assum:real_mg} holds and $T$ is sufficiently large. With probability at least $1-4\delta$, Algorithm~\ref{alg:MG} achieves a regret bound
\[
\operatorname{NashRegret}(K)\leq O\left(\sqrt{\iota_1\cdot H^3\cdot T}\right),
\]
where 
\[
\iota_1=\log(T\cN(\cF,1/K)/\delta)\cdot\dim_{OBE}(\cF,1/\sqrt{K}).
\]
Furthermore, with probability at least $1-\delta$, the max-player switches her policy for at most $\tilde{O}(d^2H)$ times, and the algorithm calls the nested optimization oracle for at most $\tilde{O}(d^2H)$ times.
\end{thm}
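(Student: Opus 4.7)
The plan is to combine the online sub-sampling guarantee from Theorem~\ref{thm:sample} (applied to triples $(s,a,b)\in\cS\times\cA\times\cB$, a routine extension of the state-action analysis) with the GOLF-for-Markov-games regret analysis of \citet{jin2022power,huang2021towards}, following the blueprint of Theorem~\ref{thm:main_regret2}. The switching-cost and oracle-call bounds come for free: Algorithm~\ref{alg:MG} recomputes the max-player's policy (and calls the nested optimization oracle) only when some $\widehat{\cZ}_h^k$ changes, and Theorem~\ref{thm:sample} guarantees each of the $H$ buffers changes at most $\widetilde{O}(d^2)$ times with probability at least $1-\delta$, giving $\widetilde{O}(d^2H)$ in total.

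For the regret bound I would first establish optimism, namely that on a high-probability event, $\{Q_h^*\}_{h=1}^{H+1}$ lies in the confidence set $\cB^k$ at every recomputation step, so that $V_{Q_1^k}(s_1)\ge V_1^*(s_1)$. This uses Assumption~\ref{assum:real_mg} together with standard least-squares concentration over $\cD_h^k(f_{h+1})$, and accounts for the $\log(T\cN(\cF,1/K)/\delta)$ factor; crucially, the empirical constraint defining $\cB^k$ is evaluated on the \emph{full} dataset $\cZ_h^k$, so this step does not interact with sub-sampling. Letting $\tilde k$ denote the latest recomputation time before episode $k$, the construction $\widehat{\cZ}_h^k=\widehat{\cZ}_h^{\tilde k}$ guarantees $\mu^k=\mu^{\tilde k}$, and optimism yields
\[
V_1^*(s_1)-V_1^{\mu^k,\nu^k}(s_1)\le V_{Q_1^{\tilde k}}(s_1)-V_1^{\mu^{\tilde k},\nu^k}(s_1),
\]
which telescopes along the trajectory generated by $(\mu^{\tilde k},\nu^k)$ into a cumulative in-sample Bellman error $\sum_{k,h}\bigl|(Q_h^{\tilde k}-\cT_h Q_{h+1}^{\tilde k})(s_h^k,a_h^k,b_h^k)\bigr|$ plus an $O(\sqrt{HT\log(1/\delta)})$ martingale term handled by Azuma--Hoeffding.

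The main technical step is bounding the cumulative Bellman error by $\widetilde{O}(\sqrt{\iota_1\cdot H^3\cdot T})$ through the online Bellman eluder dimension. The subtlety is that the confidence constraint controlling $Q^{\tilde k}$ is imposed by data up to step $\tilde k$, while Bellman residuals are evaluated at all $k\ge\tilde k$. I would handle this by invoking Theorem~\ref{thm:sample} to equate $\|f-\cT_h f'\|^2_{\cZ_h^{\tilde k}}$ with $\|f-\cT_h f'\|^2_{\cZ_h^k}$ up to a constant multiplicative factor whenever no switch occurs in between (since $\widehat{\cZ}_h^{\tilde k}=\widehat{\cZ}_h^k$ forces the sub-sampled buffer to remain stable on that interval, which combined with Theorem~\ref{thm:sample} pins the two empirical norms together), and then applying the pigeonhole/potential argument of~\citet{jin2022power} based on $\dim_{OBE}(\cF,1/\sqrt{K})$ to the resulting sequence of Bellman residuals. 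The hard part will be correctly handling the interaction between delayed policy updates and the eluder-dimension counting argument, and verifying that the covering number controlling the sub-sampling event is compatible with the covering required for uniform concentration of the minimax-value functional $V_f$; this reduces to showing $V_f$ is $1$-Lipschitz in $f$ with respect to $\|\cdot\|_\infty$, which follows from von Neumann's minimax theorem. A union bound over the sub-sampling, least-squares concentration, optimism, and martingale events then produces the claimed $1-4\delta$ failure probability.
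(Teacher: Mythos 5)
Your proposal is correct and follows essentially the same route as the paper: optimism of $\cB^k$ via realizability and least-squares concentration on the full dataset, the Nash-value regret decomposition with the adversarial min-player handled by the minimax structure of $V_{Q_h^{\tilde k}}$, the bridge $\|Q_h^{\tilde k}-\cT_h Q_{h+1}^{\tilde k}\|^2_{\cZ_h^k}\lesssim\|Q_h^{\tilde k}-\cT_h Q_{h+1}^{\tilde k}\|^2_{\cZ_h^{\tilde k}}$ through the stable sub-sampled buffer and Theorem~\ref{thm:sample}, and the $\dim_{OBE}$ pigeonhole bound — exactly the paper's Lemmas~\ref{lem:jin11}--\ref{lem:decomp_MG}. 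No substantive differences.
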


\section{Omitted Proofs}
\subsection{Proof of Theorem~\ref{thm:sample}}
\label{pf:sample}
In this section we first state Theorem~\ref{thm:sample} formally, then give  a complete proof of it.

\textbf{Theorem 1} (formal)\textbf{.} With probability $1-\delta$, for all $(k,h)\in[K]\times[H]$, the number of distinct elements in $\widehat{\mathcal{Z}}_h^k$ is bounded by
\[
n_d(\widehat{\mathcal{Z}}_h^k)\leq O(\log(T\mathcal{N}(\mathcal{F},\sqrt{\delta/64T^3})/\delta) \cdot\dim_E(\mathcal{F},1/T)\cdot\log^2 T),
\]
and
\[
\frac{1}{10000}\|f_1-f_2\|^2_{\mathcal{Z}^k_h}\leq \min\{\|f_1-f_2\|^2_{\widehat{\mathcal{Z}}^k_h},T(H+1)^2\} \leq 10000\|f_1-f_2\|^2_{\mathcal{Z}^k_h},\quad \forall \|f_1-f_2\|^2_{\mathcal{Z}^k_h}> 100\beta,
\]
and
\[
\min\{\|f_1-f_2\|^2_{\widehat{\mathcal{Z}}^k_h},T(H+1)^2\} \leq 10000\beta,\quad \forall \|f_1-f_2\|^2_{\mathcal{Z}^k_h}\leq  100\beta.
\]
\subsubsection{Analysis and Propositions}
\label{sec:props}
We explain the intuition behind Theorem~\ref{thm:sample}. As mentioned before, we need to show that the sub-sampled dataset $\widehat{\mathcal{Z}}_h^k$
\begin{itemize}
    \item provides a good approximation to $\mathcal{Z}_h^k$; and 
    \item has a much lower complexity than $\mathcal{Z}_h^k$ (in terms of number of distinct elements).
\end{itemize}
We define the following enlarged and shrunk confidence sets.
For all $(k,h)\in[K]\times[H]$ and $\alpha\in[\beta,+\infty)$, define
\[
\underline{\mathcal{B}}_h^k(\alpha):=\{(f_1,f_2)\in\mathcal{F}\times\mathcal{F}| \|f_1-f_2\|^2_{\mathcal{Z}^k_h}\leq\alpha/100\},
\]
\[
\mathcal{B}_h^k(\alpha):=\{(f_1,f_2)\in\mathcal{F}\times\mathcal{F}| \min\{\|f_1-f_2\|^2_{\widehat{\mathcal{Z}}^k_h},T(H+1)^2\}\leq\alpha\},
\]
\[
\overline{\mathcal{B}}_h^k(\alpha):=\{(f_1,f_2)\in\mathcal{F}\times\mathcal{F}| \|f_1-f_2\|^2_{\mathcal{Z}^k_h}\leq100\alpha\}.
\]
For each $(k,h)\in[K]\times[H]$, we use $\mathcal{E}_h^k(\alpha)$ to denote the event that 
$$
\underline{\mathcal{B}}_h^k(\alpha)\subseteq\mathcal{B}_h^k(\alpha)\subseteq\overline{\mathcal{B}}_h^k(\alpha).
$$
Furthermore, we denote that
$$
\mathcal{E}_h^k:=\bigcap_{n=0}^\infty\mathcal{E}_h^k(100^n\beta).
$$
Event $\mathcal{E}_h^k$ characterizes the meaning of ``good approximation''.
In fact, if $\mathcal{E}_h^k$ happens, we can show that $\|f_1 - f_2\|_{\widehat{\mathcal{Z}}_h^k}$ is close to $\|f_1 - f_2\|_{\mathcal{Z}_h^k}$ up to a constant factor, thus the confidence set induced by $\widehat{\mathcal{Z}}_h^k$ is accurate.
The following proposition verifies that $\mathcal{E}_h^k$ happens with high probability.

\begin{prop}
\label{prop:bounds_of_bonus}
\[
\Pr\left(\bigcap_{h=1}^H\bigcap_{k=1}^K\mathcal{E}_h^k \right)\geq 1-\delta/32.
\]
\end{prop}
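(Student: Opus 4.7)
The plan is a Freedman-type martingale concentration argument, followed by a covering union bound over $\cF \times \cF$ and $\cS \times \cA$, closed by induction on $k$. Fix $h \in [H]$ and a dyadic target level $\alpha = 100^n \beta$. Since Algorithm~\ref{alg:sample} adds the rounded point $\hat z_\tau \in \cC(\cS\times\cA,\cdot)$ with weight $1/p_\tau$ with probability $p_\tau$, the increment of $\|f_1-f_2\|^2_{\wh\cZ_h^{\tau+1}}$ is (conditionally) an unbiased estimator of $(f_1(\hat z_\tau)-f_2(\hat z_\tau))^2$. Summing over $\tau < k$ and noting that the rounding scale $1/(16\sqrt{64T^3/\delta})$ causes only a negligible additive error, one obtains that $\|f_1-f_2\|^2_{\wh\cZ_h^k}$ is unbiased for $\|f_1-f_2\|^2_{\cZ_h^k}$ up to this discretization slack.

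For concentration I would apply Freedman's inequality to the martingale $D_\tau := (f_1(\hat z_\tau)-f_2(\hat z_\tau))^2 \bigl(\mathbb{I}[z_\tau\text{ sampled}]/p_\tau - 1\bigr)$. Both its magnitude and per-step conditional variance are controlled by $(f_1(\hat z_\tau)-f_2(\hat z_\tau))^2/p_\tau$, which by the very definition of $\sen_{\wh\cZ_h^\tau,\cF}(\cdot)$ in \eqref{eqn:sensitivity} together with $p_\tau \gtrsim \sen_{\wh\cZ_h^\tau,\cF}(\hat z_\tau)\cdot \log(T\cN(\cF,\cdot)/\delta)$ is at most $(\min\{\|f_1-f_2\|^2_{\wh\cZ_h^\tau}, T(H+1)^2\} + \beta)/\log(T\cN(\cF,\cdot)/\delta)$. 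Freedman then gives a multiplicative-plus-$\beta$ concentration bound for each fixed pair with failure probability $\lesssim \delta/(|\cC(\cF,\varepsilon)|^2 \cdot HK\log T)$. Discretizing $\cF$ by a cover at scale $\varepsilon$ polynomial in $1/T$ absorbs the approximation error into the $\beta$ slack, and a union bound over this cover, over $k \in [K]$, over $h \in [H]$, and over the levels $\alpha \in \{\beta, 100\beta, 100^2\beta, \ldots\}$ up to the cap $T(H+1)^2$ (only $O(\log T)$ values matter, since larger $\alpha$ make $\mathcal{E}_h^k(\alpha)$ vacuous) brings the total failure probability below $\delta/32$ with an appropriately large constant inside $\beta$.

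The main obstacle is circularity: the sampling probability $p_\tau$ depends on the random set $\wh\cZ_h^\tau$, whose approximation quality is precisely what we are establishing. Consequently, the Freedman bound naturally lives on $\|f_1-f_2\|^2_{\wh\cZ_h^\tau}$ rather than $\|f_1-f_2\|^2_{\cZ_h^\tau}$, so the variance control is not yet in the form needed to prove $\mathcal{E}_h^k(\alpha)$. I would close the loop by induction on $k$: assuming $\mathcal{E}_h^{k'}$ holds for all $k' < k$, the two squared seminorms agree up to a constant factor above the $\beta$ floor, so the per-step magnitude and variance bounds can be rewritten entirely in terms of $\cZ_h^{k'}$, and the Freedman step then propagates the approximation property to episode $k$. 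This inductive coupling between the approximation guarantee and the concentration argument, managed uniformly across the $O(\log T)$ levels $\alpha$ so that the inductive hypothesis is available at every scale, is the technically subtle part of the proof.
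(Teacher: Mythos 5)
Your proposal follows essentially the same route as the paper: an importance-weighted martingale for each pair in a finite cover of $\mathcal{F}$, Freedman's inequality, a union bound over the cover, over $(k,h)$, and over the $O(\log T)$ dyadic levels $\alpha$, and a telescoping/inductive conditioning on the previous good events $\mathcal{E}_h^1\cdots\mathcal{E}_h^{k-1}$ to break the circularity in the data-dependent sampling probabilities. The paper makes your inductive closure rigorous by truncating the martingale increments (declaring them deterministic once the relevant squared norm exceeds $O(\alpha)$), so that Freedman applies to an unconditional martingale and the past events are only invoked afterward; this is precisely the device your ``technically subtle part'' calls for, and otherwise the argument is the same.
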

As will be shown shortly, Proposition~\ref{prop:bounds_of_bonus} directly implies the approximation part of Theorem~\ref{thm:sample}.\\
Moreover, we define the following bonus functions calculated by $\mathcal{Z}_h^k$ instead of $\widehat{\mathcal{Z}}_h^k$:
\[
\underline{b}_h^k(\cdot,\cdot):= \sup_{\|f_1-f_2\|^2_{\mathcal{Z}^k_h}\leq\beta/100}|(f_1(\cdot,\cdot)-f_2(\cdot,\cdot)|,
\]
\[
\overline{b}_h^k(\cdot,\cdot):= \sup_{\|f_1-f_2\|^2_{\mathcal{Z}^k_h}\leq100\beta}|(f_1(\cdot,\cdot)-f_2(\cdot,\cdot)|.
\]
If $\cE_h^k$ happens, by taking $\alpha=\beta$, we have that
\[
\underline{b}_h^k(\cdot,\cdot)\leq b_h^k(\cdot,\cdot)\leq \overline{b}_h^k(\cdot,\cdot)
\]
which verifies the correctness of our bonus function $b_h^k$ used in the algorithm.

Proposition~\ref{prop:bounded_size} bounds the size of $\widehat{\mathcal{Z}}_h^k$.
\begin{prop}
\label{prop:bounded_size}
With probability at least $1-\delta/8$, the following statements hold:

1. For any fixed $h\in[H]$, the subsampled dataset $\widehat{\mathcal{Z}}_h^k$ ($k=1,2,...,K$) changes for at most 
\[
S_{\max}=C\cdot\log(T\mathcal{N}(\mathcal{F},\sqrt{\delta/64T^3})/\delta) \cdot\dim_E(\mathcal{F},1/T)\cdot\log^2 T
\]
times for some absolute constant $C>0$. 

As a result, for any pair $(k,h)\in[K]\times[H]$, $n_d(\widehat{\mathcal{Z}}_h^k)\leq S_{\max}$. 

2. For any $(h,k)\in[H]\times[K]$, $$
|\widehat{\mathcal{Z}}_h^k|\leq 64T^3/\delta.
$$
\end{prop}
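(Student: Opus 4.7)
The plan splits into two parts: (i) bounding the number of add-events of the sub-sampling procedure, which simultaneously controls the number of buffer changes and the number of distinct elements $n_d(\widehat{\mathcal{Z}}_h^k)$; and (ii) bounding the total weighted cardinality $|\widehat{\mathcal{Z}}_h^k|$. Throughout I would work on the event $\bigcap_{h,k}\mathcal{E}_h^k$ from Proposition~\ref{prop:bounds_of_bonus} (probability $\ge 1-\delta/32$). On this event, $\min\{\|f_1-f_2\|_{\widehat{\mathcal{Z}}_h^{k-1}}^2, T(H+1)^2\}+\beta$ is within a constant factor of $\min\{\|f_1-f_2\|_{\mathcal{Z}_h^{k-1}}^2, T(H+1)^2\}+\beta$ uniformly in $f_1,f_2\in\mathcal{F}$ (handling separately the regime $\|f_1-f_2\|_{\mathcal{Z}_h^{k-1}}^2\le 100\beta$, where both denominators are $\Theta(\beta)$ by the additive regularizer). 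Substituting into~\eqref{eqn:sensitivity} yields the sensitivity-transfer inequality
\[
\sen_{\widehat{\mathcal{Z}}_h^{k-1},\mathcal{F}}(z_h^{k-1})\;\le\;O(1)\cdot\sen_{\mathcal{Z}_h^{k-1},\mathcal{F}}(z_h^{k-1}).
\]

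For Part~1 I would first establish the combinatorial lemma
\[
\sum_{k=1}^{K}\sen_{\mathcal{Z}_h^{k-1},\mathcal{F}}(z_h^{k-1})\;\le\;O\bigl(\dim_E(\mathcal{F},1/T)\cdot\log T\bigr),
\]
via a dyadic peeling argument in the spirit of \citet{russo2013eluder}: at scale $2^{-j}$, a point whose sensitivity exceeds $2^{-j}$ witnesses $\varepsilon$-independence of its in-scale predecessors for a suitable $\varepsilon$, so the eluder-dimension definition caps the number of such indices per scale by $\dim_E(\mathcal{F},1/T)$, and summing over $O(\log T)$ nontrivial scales gives the claim. Combining with the sensitivity-transfer inequality then yields $\sum_k p_z^{(k)}\le \tilde O(\dim_E\cdot\log(T\mathcal{N}/\delta))$. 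The add-indicators $X_k:=\mathbb{I}[\text{add at step }k]$ are $\{0,1\}$-valued Bernoulli's whose conditional means $p_z^{(k)}$ are $\sigma(\mathcal{F}_{k-1})$-measurable, so a Freedman-type inequality applied to $\sum_k(X_k-p_z^{(k)})$ gives $\sum_k X_k\le O\!\bigl(\sum_k p_z^{(k)}+\log(H/\delta)\bigr)\le S_{\max}$ with probability $\ge 1-\delta/(16H)$; a union bound over $h\in[H]$ covers all horizons. Since each add event deposits copies of a single rounded $\widehat z$, this bound controls both the number of buffer changes and $n_d(\widehat{\mathcal{Z}}_h^k)$.

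Part~2 is a direct Markov computation: an add-event at step $k$ deposits $1/p_z^{(k)}$ copies with conditional probability $p_z^{(k)}$, so its conditional expected contribution is exactly $1$, giving $\mathbb{E}[|\widehat{\mathcal{Z}}_h^k|]\le K\le T$. Markov's inequality then yields $\Pr[|\widehat{\mathcal{Z}}_h^k|>64T^3/\delta]\le\delta/(64T^2)$, and a union bound over the $T$ pairs $(h,k)$ keeps the failure probability within $\delta/(64T)$, well inside the $\delta/8$ budget after accounting for the $\delta/32$ conditioning and the $\delta/16$ used above.

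The main obstacle will be the combinatorial eluder-dimension lemma together with its robustness under the substitution of $\widehat{\mathcal{Z}}$ for $\mathcal{Z}$: the truncation at $T(H+1)^2$ and the additive regularizer $\beta$ in~\eqref{eqn:sensitivity} must be tracked carefully so that the extracted witnesses are genuinely $\varepsilon$-independent in the eluder sense. A secondary subtlety is an apparent circularity between Propositions~\ref{prop:bounds_of_bonus} and~\ref{prop:bounded_size}, since the concentration arguments underlying the former likely need a priori control on $|\widehat{\mathcal{Z}}_h^k|$; the cleanest resolution is induction on $k$, alternating the approximation guarantee and the size bound so that both are maintained jointly through episode $K$.
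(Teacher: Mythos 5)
Your proposal matches the paper's proof essentially step for step: the sensitivity-transfer inequality from $\widehat{\mathcal{Z}}_h^{k-1}$ to $\mathcal{Z}_h^{k-1}$ via the approximation event of Proposition~\ref{prop:bounds_of_bonus}, the dyadic-peeling eluder-dimension bound on $\sum_k \sen_{\mathcal{Z}_h^k,\mathcal{F}}(z_h^k)$ (which actually comes out as $O(\dim_E\cdot\log^2 T)$, consistent with $S_{\max}$), Freedman's inequality on the add-indicators, and Markov plus a union bound for $|\widehat{\mathcal{Z}}_h^k|$. The circularity you flag is resolved exactly as you suggest: Proposition~\ref{prop:bounds_of_bonus} is proved first by a telescoping induction over $k$ that needs only the crude Markov cardinality bound (Lemma~\ref{lem:markov}), and the paper inserts the indicator $\mathbb{I}\{\mathcal{E}_h^{k-1}\}$ directly into the martingale differences so that Freedman applies without conditioning.
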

In the following two sections we prove Proposition~\ref{prop:bounds_of_bonus} and Proposition~\ref{prop:bounded_size}. Throughout the proof, We use $\mathscr{F}_k$ to denote the filtration induced by the history up to episode $k$ (include episode $k$) and use $\mathbb{E}_k$ to denote the expectation conditioned on $\mathscr{F}_k$.
\subsubsection{Proof of Proposition~\ref{prop:bounds_of_bonus}}
For completeness, we state a Bernstein-type martingale 
concentration inequality which will be frequently used in our proofs.
\begin{lem}[\citep{freedman1975tail}]
\label{lem:freedman}
Consider a real-valued martingale $\{Y_k:k=0,1,2...\}$ with difference sequence $\{X_k:k=1,2,...\}$. Assume that the difference sequence is uniformly bounded:
\[
|X_k|\leq R \text{ almost surely for } k=1,2,3,...  \text{.}
\]
For a fixed $n\in\mathbb{N}$, assume that
$$
\sum_{k=1}^n\mathbb{E}_{k-1}(X_k^2)\leq\sigma^2 $$
almost surely.
Then for all $t\geq 0$,
\[
P\{|Y_n-Y_0|\geq t\}\leq 2\exp\left\lbrace -\frac{t^2/2}{\sigma^2+Rt/3}\right\rbrace.
\]
\end{lem}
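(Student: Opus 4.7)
The plan is to prove this via the classical Chernoff--Bernstein exponential-moment method adapted to martingales. I will construct an exponential supermartingale, apply Markov's inequality with a tilting parameter $\lambda$, and then optimize $\lambda$ to obtain the stated Bernstein form.

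First I would prove the key single-step exponential moment estimate: for every $\lambda \in [0, 3/R)$,
\[
\mathbb{E}_{k-1}\bigl[e^{\lambda X_k}\bigr] \leq \exp\!\left(\frac{\lambda^2/2}{1-\lambda R/3}\,\mathbb{E}_{k-1}[X_k^2]\right).
\]
This follows from two pointwise inequalities. Writing $\psi(u) := e^u - 1 - u$, one has $\psi(\lambda x) \le (x^2/R^2)\,\psi(\lambda R)$ for all $|x|\le R$ and $\lambda \ge 0$, simply by the Taylor expansion $\psi(\lambda x) = \sum_{j\ge 2}(\lambda x)^j/j!$ together with $|x|^j \le R^{j-2}x^2$. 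Taking conditional expectations and using $\mathbb{E}_{k-1}[X_k]=0$ (martingale difference) yields $\mathbb{E}_{k-1}[e^{\lambda X_k}] \le 1 + (\mathbb{E}_{k-1}[X_k^2]/R^2)\psi(\lambda R)$, which is $\le \exp((\mathbb{E}_{k-1}[X_k^2]/R^2)\psi(\lambda R))$ by $1+y\le e^y$. Second, one checks $\psi(u) \le (u^2/2)/(1-u/3)$ for $u\in[0,3)$ by comparing the series $\sum_{j\ge 2}u^j/j!$ with the geometric series $\sum_{j\ge 2}u^j/(2\cdot 3^{j-2})$, justified by the elementary inequality $j! \ge 2\cdot 3^{j-2}$ for $j\ge 2$ (proved by induction). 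Combining the two estimates gives the display above.

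Next I would define, for $\lambda\in[0,3/R)$, the process
\[
W_k := \exp\!\left(\lambda(Y_k - Y_0) - g(\lambda)\,V_k\right), \qquad g(\lambda) := \frac{\lambda^2/2}{1-\lambda R/3},\ \ V_k := \sum_{j=1}^k \mathbb{E}_{j-1}[X_j^2].
\]
Since $V_k$ is predictable, the single-step estimate implies $\mathbb{E}_{k-1}[W_k] \le W_{k-1}$, so $\{W_k\}$ is a nonnegative supermartingale with $W_0=1$, giving $\mathbb{E}[W_n]\le 1$. Using the hypothesis $V_n\le \sigma^2$ almost surely, Markov's inequality yields, for any $t\ge 0$,
\[
\Pr(Y_n - Y_0 \ge t) \;\le\; \Pr\!\bigl(W_n \ge e^{\lambda t - g(\lambda)\sigma^2}\bigr) \;\le\; \exp\!\bigl(-\lambda t + g(\lambda)\sigma^2\bigr).
\]

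Finally I would optimize in $\lambda$ by choosing $\lambda_\star := t/(\sigma^2 + Rt/3)$, which lies in $[0,3/R)$. A direct computation gives $1 - \lambda_\star R/3 = \sigma^2/(\sigma^2+Rt/3)$, hence $g(\lambda_\star)\sigma^2 = \lambda_\star t/2$, so $-\lambda_\star t + g(\lambda_\star)\sigma^2 = -\lambda_\star t/2 = -\tfrac{t^2/2}{\sigma^2 + Rt/3}$, proving the one-sided tail bound
\[
\Pr(Y_n - Y_0 \ge t) \;\le\; \exp\!\left(-\frac{t^2/2}{\sigma^2 + Rt/3}\right).
\]
Applying the same argument to the martingale $\{-Y_k\}$ (whose differences $-X_k$ satisfy the same boundedness and conditional-variance hypotheses) yields the matching lower-tail bound, and a union bound over the two events produces the factor of $2$ in the final inequality.

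The only step requiring any care is the exponential moment estimate: one must verify the two elementary inequalities $\psi(\lambda x)\le (x^2/R^2)\psi(\lambda R)$ and $\psi(u)\le (u^2/2)/(1-u/3)$, and check that the chosen $\lambda_\star$ stays in the admissible range $[0,3/R)$ (which it does since $\lambda_\star R/3 = Rt/(3\sigma^2 + Rt) < 1$). All remaining manipulations are routine algebra.
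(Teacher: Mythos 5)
Your proof is correct. Note, however, that the paper does not actually prove this lemma: it is imported verbatim from \citet{freedman1975tail} as a known concentration inequality, so there is no in-paper argument to compare against. Your derivation is the standard Chernoff--Bernstein route for Freedman's inequality: the single-step bound $\mathbb{E}_{k-1}[e^{\lambda X_k}]\le\exp\bigl(\tfrac{\lambda^2/2}{1-\lambda R/3}\,\mathbb{E}_{k-1}[X_k^2]\bigr)$ via the two elementary inequalities on $\psi(u)=e^u-1-u$, the predictable compensator $V_k$ making $W_k$ a nonnegative supermartingale, Markov's inequality on the event $\{V_n\le\sigma^2\}$, and the choice $\lambda_\star=t/(\sigma^2+Rt/3)$ which indeed yields exactly $-\tfrac{t^2/2}{\sigma^2+Rt/3}$ in the exponent; the two-sided bound then follows by symmetry and a union bound. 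All the intermediate computations check out (including $j!\ge 2\cdot 3^{j-2}$ and the admissibility of $\lambda_\star$). The only degenerate case your argument glosses over is $\sigma^2=0$ with $t>0$, where $\lambda_\star=3/R$ sits on the boundary of the admissible range; this is handled by taking $\lambda\uparrow 3/R$ or by a trivial limiting argument and does not affect the validity of the proof.
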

The next lemma upper bounds the size of $\widehat{\mathcal{Z}}_h^k$.
\begin{lem} 
\label{lem:markov}
With probability at least $1-\delta/64T$,
\[
|\widehat{\mathcal{Z}}_h^k|\leq 64T^3/\delta \quad \forall (k,h)\in[K]\times[H].
\]
\end{lem}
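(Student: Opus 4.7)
The plan is to use a straightforward expectation bound followed by Markov's inequality, as the name of the lemma suggests. The key observation is that each step contributes a quantity with expectation exactly $1$ to the size $|\widehat{\mathcal{Z}}_h^k|$, so the expected size grows linearly in $k$ and can be controlled by Markov.

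More concretely, fix $(k,h)$ and for each $\tau \in [k-1]$ let $X_\tau$ denote the number of copies added to the sub-sampled buffer at stage $h$ from episode $\tau$. By construction of \textbf{Online-Sample}, we have $X_\tau = 1/p_{z_\tau}$ with probability $p_{z_\tau}$ and $X_\tau = 0$ otherwise, where $p_{z_\tau}$ is $\mathscr{F}_{\tau-1}$-measurable (it depends on $\widehat{\mathcal{Z}}_h^{\tau-1}$ and on $z_\tau = (s_h^\tau,a_h^\tau)$). Since $1/p_{z_\tau}$ is a positive integer by the definition of $p_{z_\tau}$, we get
\[
\mathbb{E}[X_\tau \mid \mathscr{F}_{\tau-1}] = p_{z_\tau} \cdot \frac{1}{p_{z_\tau}} = 1.
\]
By the tower property and linearity of expectation,
\[
\mathbb{E}[|\widehat{\mathcal{Z}}_h^k|] \;=\; \sum_{\tau=1}^{k-1} \mathbb{E}[X_\tau] \;\le\; K \;\le\; T.
\]

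Then I would apply Markov's inequality to $|\widehat{\mathcal{Z}}_h^k|$, which is a non-negative random variable:
\[
\Pr\!\left(|\widehat{\mathcal{Z}}_h^k| \ge 64T^3/\delta\right) \;\le\; \frac{\mathbb{E}[|\widehat{\mathcal{Z}}_h^k|]}{64T^3/\delta} \;\le\; \frac{T}{64T^3/\delta} \;=\; \frac{\delta}{64T^2}.
\]
Finally, I would take a union bound over all $(k,h) \in [K]\times[H]$; since there are at most $KH = T$ such pairs, the total failure probability is at most $T \cdot \delta/(64T^2) = \delta/(64T)$, which matches the bound claimed in the lemma.

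There is no genuine obstacle here: the only subtle point is being careful that $p_{z_\tau}$ is a random variable (measurable with respect to the previous history), so one should condition appropriately before taking expectations rather than treating $p_{z_\tau}$ as deterministic. Once that is handled by the tower property, the rest is just Markov plus a union bound.
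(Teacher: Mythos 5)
Your proof is correct and follows essentially the same route as the paper: the paper likewise observes that each point contributes expected weight $1$ so that $\mathbb{E}[|\widehat{\mathcal{Z}}_h^k|]\le|\mathcal{Z}_h^k|\le T$, applies Markov's inequality to get failure probability $\delta/(64T^2)$ per pair, and union bounds over the $KH\le T$ pairs. Your version is just slightly more explicit about conditioning on the filtration before taking expectations.
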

\begin{proof}
Consider a fixed pair $(k,h)\in[K]\times[H]$. By Markov's inequality we have that 
$$
|\widehat{\mathcal{Z}}_h^k|\leq 64T^2|\mathcal{Z}_h^k|/\delta
$$
holds with probability at least $1-\delta/(64T^2)$.
With a union bound for all $(k,h)\in[K]\times[H]$ we complete the proof.
\end{proof}
Now we start to analyze the events defined in Section~\ref{sec:props}. Recall that in Section~\ref{sec:props} we mentioned that event $\mathcal{E}_h^k$ characterizes the meaning of ``good approximation''. Our next lemma formalizes this intuition.
\begin{lem}
\label{lem:app}
If $\cE_h^k$ happens, then
\[
\frac{1}{10000}\|f_1-f_2\|^2_{\mathcal{Z}^k_h}\leq \min\{\|f_1-f_2\|^2_{\widehat{\mathcal{Z}}^k_h},T(H+1)^2\} \leq 10000\|f_1-f_2\|^2_{\mathcal{Z}^k_h},\quad \forall \|f_1-f_2\|^2_{\mathcal{Z}^k_h}> 100\beta
\]
and
\[
\min\{\|f_1-f_2\|^2_{\widehat{\mathcal{Z}}^k_h},T(H+1)^2\} \leq 10000\beta,\quad \forall \|f_1-f_2\|^2_{\mathcal{Z}^k_h}\leq  100\beta.
\]
\end{lem}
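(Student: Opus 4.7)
}

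The plan is to treat this as a purely deterministic multi-scale bookkeeping argument: the event $\cE_h^k$ supplies, for every scale $\alpha\in\{\beta,100\beta,10000\beta,\ldots\}$, the two one-sided inclusions $\underline{\cB}_h^k(\alpha)\subseteq\cB_h^k(\alpha)\subseteq\overline{\cB}_h^k(\alpha)$, and the lemma will follow by picking a well-chosen $\alpha$ at the correct dyadic (base $100$) scale relative to $x:=\|f_1-f_2\|^2_{\cZ_h^k}$. Throughout I will write $m(f_1,f_2):=\min\{\|f_1-f_2\|^2_{\widehat\cZ_h^k},T(H+1)^2\}$.

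First I would prove the upper bound $m\leq 10000\,x$ in the regime $x>100\beta$, and also dispatch the second claim ($x\leq 100\beta\Rightarrow m\leq 10000\beta$). Both follow from the \emph{forward} inclusion $\underline{\cB}_h^k(\alpha)\subseteq\cB_h^k(\alpha)$: choose $\alpha=100^n\beta$ so that $\alpha/100\geq x$ (to certify membership in $\underline{\cB}_h^k(\alpha)$) while keeping $\alpha\leq 10000\,x$ (respectively, $\alpha=10000\beta$ in the second case). Concretely, let $n^\star$ be the unique integer with $100^{n^\star}\beta<x\leq 100^{n^\star+1}\beta$; then $\alpha=100^{n^\star+2}\beta$ satisfies $\alpha/100=100^{n^\star+1}\beta\geq x$ and $\alpha\leq 10000\,x$, so $\cE_h^k(\alpha)$ gives $m\leq\alpha\leq 10000\,x$. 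For the easier second claim, take $\alpha=10000\beta$: then $\alpha/100=100\beta\geq x$, and $\cE_h^k(\alpha)$ yields $m\leq\alpha=10000\beta$.

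Next I would prove the lower bound $m\geq x/10000$ in the regime $x>100\beta$ via the \emph{contrapositive} of the other inclusion $\cB_h^k(\alpha)\subseteq\overline{\cB}_h^k(\alpha)$: if $x>100\alpha$, then $m>\alpha$. With the same $n^\star$ as above (note $n^\star\geq 1$ because $x>100\beta$), choose $\alpha=100^{n^\star-1}\beta\geq 0$. Then $100\alpha=100^{n^\star}\beta<x$, so the contrapositive applies and $m>\alpha$. Since $\alpha=100^{n^\star-1}\beta\geq 100^{n^\star+1}\beta/10000\geq x/10000$, this delivers $m\geq x/10000$, completing the proof.

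The main (and essentially only) obstacle is just making sure the dyadic scales $\{100^n\beta\}_{n\geq 0}$ are dense enough on a multiplicative scale to sandwich an arbitrary $x>100\beta$ inside an interval of multiplicative width $100^2=10000$; this is exactly why the factor $10000$ appears in both directions of the lemma, and it is the reason one needs to intersect the events $\cE_h^k(\alpha)$ over all $\alpha=100^n\beta$ (rather than at a single scale) in the definition of $\cE_h^k$. Once this scale matching is set up as above, the rest is a one-line application of the inclusion (or its contrapositive) guaranteed by $\cE_h^k$.
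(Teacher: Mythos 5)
Your proposal is correct and follows essentially the same route as the paper's proof: locate $x=\|f_1-f_2\|^2_{\cZ_h^k}$ in a base-$100$ dyadic interval $100^{n}\beta< x\le 100^{n+1}\beta$, then apply the forward inclusion $\underline{\cB}_h^k(\alpha)\subseteq\cB_h^k(\alpha)$ at scale $100^{n+2}\beta$ for the upper bound and the contrapositive of $\cB_h^k(\alpha)\subseteq\overline{\cB}_h^k(\alpha)$ at scale $100^{n-1}\beta$ for the lower bound. The scale bookkeeping, including the check that $n\ge 1$ in the lower-bound step, matches the paper's argument.
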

\begin{proof}
If $\|f_1-f_2\|^2_{\mathcal{Z}^k_h}\leq  100\beta$, we have $(f_1,f_2)\in\underline{\mathcal{B}}_h^k(10000\beta)$. From $\cE_h^k$ we know $(f_1,f_2)\in\mathcal{B}_h^k(10000\beta)$, which implies the desired result.

If $\|f_1-f_2\|^2_{\mathcal{Z}^k_h}>  100\beta$, assume that $100^n\beta<\|f_1-f_2\|^2_{\mathcal{Z}^k_h}\leq 100^{n+1}\beta$, $n\in\mathbb{N}^*$. Then we have $(f_1,f_2)\notin\overline{\mathcal{B}}_h^k(100^{n-1}\beta)$ and also $(f_1,f_2)\notin\mathcal{B}_h^k(100^{n-1}\beta)$. This implies that  $ \min\{\|f_1-f_2\|^2_{\widehat{\mathcal{Z}}^k_h},T(H+1)^2\}\geq 100^{n-1}\beta\geq \frac{1}{10000}\|f_1-f_2\|^2_{\mathcal{Z}^k_h}$. Similarly, we have $(f_1,f_2)\in\underline{\mathcal{B}}_h^k(100^{n+2}\beta)$, then also $(f_1,f_2)\in\mathcal{B}_h^k(100^{n+2}\beta)$. Thus we have $ \min\{\|f_1-f_2\|^2_{\widehat{\mathcal{Z}}^k_h},T(H+1)^2\}\leq 100^{n+2}\beta\leq 10000\|f_1-f_2\|^2_{\mathcal{Z}^k_h}$.
\end{proof}
Recall that Proposition~\ref{prop:bounds_of_bonus} states that $\bigcap_{h=1}^H\bigcap_{k=1}^K\mathcal{E}_h^k$ happens with high probability. As will be shown in the proof of Proposition~\ref{prop:bounds_of_bonus} later, to bound the probability of $\bigcap_{h=1}^H\bigcap_{k=1}^K\mathcal{E}_h^k$ we only need to bound $\Pr\left( \mathcal{E}_h^1\mathcal{E}_h^2...\mathcal{E}_h^{k-1}(\mathcal{E}_h^k(100^n\beta))^c\right)$. Note that $\mathcal{E}_h^k(100^n\beta)$ always holds if $100^n\beta\geq T(H+1)^2$. We establish the following lemma.
\begin{lem} 
\label{lem:event_main}
For
$\alpha\in[\beta,T(H+1)^2]$,
$$
\Pr\left( \mathcal{E}_h^1\mathcal{E}_h^2...\mathcal{E}_h^{k-1}(\mathcal{E}_h^k(\alpha))^c\right)\leq \delta/(32T^2).
$$
\end{lem}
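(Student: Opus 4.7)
}
The plan is to view $\|f_1-f_2\|^2_{\widehat{\mathcal{Z}}_h^k}$ as a sum of weighted Bernoulli increments driven by Online-Sample, apply a Bernstein-type (Freedman) bound per pair $(f_1,f_2)$, and close with a union bound over an $\varepsilon$-cover of $\mathcal{F}^2$ as supplied by Assumption~\ref{assum:cover}. For a fixed pair, let $g_\tau:=(f_1(\hat z_\tau)-f_2(\hat z_\tau))^2$ and let $X_\tau\in\{0,1\}$ indicate whether $\hat z_\tau$ is sampled at step $\tau$, so the sequence $\Delta_\tau:=(X_\tau/p_\tau-1)\,g_\tau$ is a martingale-difference sequence with respect to $\mathscr F_\tau$, and $\|f_1-f_2\|^2_{\widehat{\mathcal{Z}}_h^k}=\sum_{\tau<k}(\Delta_\tau+g_\tau)$. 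The mean $\sum_{\tau<k}g_\tau$ matches $\|f_1-f_2\|^2_{\mathcal{Z}_h^k}$ up to $O(1/T)$ because $\hat z_\tau$ sits in the $\mathcal{S}\times\mathcal{A}$-cover at radius $1/(16\sqrt{64T^3/\delta})$, so this discretization error is absorbed by the multiplicative slack (the factors of $100$) between $\underline{\mathcal{B}}_h^k(\alpha)$, $\mathcal{B}_h^k(\alpha)$, and $\overline{\mathcal{B}}_h^k(\alpha)$.

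Next I would use the sampling rule $p_\tau\ge C\cdot\sen_{\widehat{\mathcal{Z}}_h^{\tau-1},\mathcal{F}}(\hat z_\tau)\cdot\log(T\mathcal{N}(\mathcal{F},\sqrt{\delta/64T^3})/\delta)$ together with~\eqref{eqn:sensitivity} to derive a path-wise bound on the range and cumulative variance of $\{\Delta_\tau\}$. Whenever $p_\tau<1$, the sensitivity inequality gives
\[
\frac{g_\tau}{p_\tau}\le R_\tau:=\frac{\min\{\|f_1-f_2\|^2_{\widehat{\mathcal{Z}}_h^{\tau-1}},T(H+1)^2\}+\beta}{C\log(T\mathcal{N}(\mathcal{F},\sqrt{\delta/64T^3})/\delta)},
\]
and otherwise $\Delta_\tau=0$; this yields $|\Delta_\tau|\le R_\tau$ and $\sum_\tau\mathbb{E}_{\tau-1}[\Delta_\tau^2]\le R_\tau\sum_\tau g_\tau$. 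Conditioning on $\bigcap_{j<k}\mathcal{E}_h^j$ and applying Lemma~\ref{lem:app} converts the sub-sampled norm in $R_\tau$ into the original norm up to constants, so $R_\tau=O((\max\{\|f_1-f_2\|^2_{\mathcal{Z}_h^k},\beta\})/\log(\cdot))$ along the entire relevant prefix. To square this conditioning with the martingale structure (the events $\mathcal{E}_h^j$ themselves depend on the sampling randomness), I would pass to the process stopped at the first $j$ at which $\mathcal{E}_h^j$ fails; the stopped increments satisfy the range and variance bounds path-wise, and the event of interest is contained in a tail event of the stopped martingale, so Freedman applies directly.

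Feeding $R=O((\alpha+\beta)/\log(\cdot))$ and $\sigma^2=O(\alpha\cdot R)$ into Freedman's inequality with deviation $t$ of order $\alpha$ yields a per-pair failure probability of $2\exp(-\Omega(\log(T\mathcal{N}(\mathcal{F},\sqrt{\delta/64T^3})/\delta)))$, which can be made at most $\delta/(32T^2\cdot\mathcal{N}(\mathcal{F},\sqrt{\delta/64T^3})^2)$ by choosing the constant $C$ in the sampling probability large enough. A union bound over $\mathcal{C}(\mathcal{F},\sqrt{\delta/64T^3})\times\mathcal{C}(\mathcal{F},\sqrt{\delta/64T^3})$ proves the claim on the cover, and the bound transfers to arbitrary $(f_1,f_2)\in\mathcal{F}^2$ with error absorbed by the slack constants. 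For the direction $\underline{\mathcal{B}}_h^k(\alpha)\subseteq\mathcal{B}_h^k(\alpha)$ one works directly on the region $\|f_1-f_2\|^2_{\mathcal{Z}_h^k}\le\alpha/100$; for $\mathcal{B}_h^k(\alpha)\subseteq\overline{\mathcal{B}}_h^k(\alpha)$ one takes the contrapositive and peels dyadically over $\|f_1-f_2\|^2_{\mathcal{Z}_h^k}\in(100\alpha,T(H+1)^2]$, with the $O(\log T)$ peeling overhead absorbed into the $\log(T\mathcal{N}/\delta)$ factor inside $p_\tau$.

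The main obstacle is precisely this interplay between conditioning on $\bigcap_{j<k}\mathcal{E}_h^j$ and retaining a bona fide martingale: the conditioning event uses the same sampling randomness that drives $\Delta_\tau$, so one cannot simply condition term by term. The stopping-time argument above is the cleanest resolution, but its details (making sure that the stopped $R_\tau$ remains $O((\alpha+\beta)/\log(\cdot))$ uniformly and that the failure event of $\mathcal{E}_h^k(\alpha)$ on the original process is literally contained in a deviation event of the stopped process) require some care. A secondary subtlety is that the dyadic peeling for the upper-bound direction must be handled jointly with the truncation at $T(H+1)^2$; fortunately the assumption $\alpha\le T(H+1)^2$ keeps the number of scales at $O(\log T)$.
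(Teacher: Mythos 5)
Your proposal is correct and follows the same overall architecture as the paper's proof: fix a pair in the $\sqrt{\delta/(64T^3)}$-cover of $\funclass$, write the sub-sampled norm as a sum of importance-weighted Bernoulli increments, apply Freedman's inequality, union bound over the cover, transfer to all of $\funclass$ using the multiplicative slack between $\underline{\mathcal{B}}_h^k$, $\mathcal{B}_h^k$, $\overline{\mathcal{B}}_h^k$, and prove the two inclusions separately. The one genuine divergence is in the step you correctly identify as the main obstacle --- decoupling the conditioning on $\bigcap_{j<k}\mathcal{E}_h^j$ from the martingale whose increments it controls. You resolve it by stopping the process at the first failure of $\mathcal{E}_h^j$ and then peeling dyadically over the scale of $\|f_1-f_2\|^2_{\mathcal{Z}_h^k}$ for the upper inclusion. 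The paper instead builds the truncation into the surrogate increments themselves: it defines $Y_i$ to equal the deterministic value $(f_1(z_h^{i-1})-f_2(z_h^{i-1}))^2$ whenever $Z_i=\max\{\|f_1-f_2\|^2_{\mathcal{Z}_h^{i}},\min\{\|f_1-f_2\|^2_{\widehat{\mathcal{Z}}_h^{i-1}},T(H+1)^2\}\}$ exceeds $2000000\alpha$, so the increments have range and conditional variance bounded path-wise by $O(\alpha/C_1)$ and $O(\alpha^2/C_1)$ \emph{unconditionally}, and a single Freedman application at deviation $\alpha/100$ suffices for every pair in the cover; the upper inclusion is then finished by a three-case argument (small true norm, already-large previous norm caught by $\mathcal{E}_h^{k-1}$, or a single new point with gap so large that its sensitivity forces $p_z=1$ and deterministic inclusion). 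The paper's device buys a uniform deviation level and avoids the $O(\log T)$ peeling overhead, at the cost of the case analysis; your stopping-time route is more generic and subsumes the paper's Case 3 automatically (a point with $p_\tau=1$ contributes no martingale fluctuation), but you should be careful to also invoke the high-probability bound $|\widehat{\mathcal{Z}}_h^k|\le 64T^3/\delta$ (the paper's Lemma~\ref{lem:markov}) when arguing that the rounding errors from both covers are absorbed by the slack constants, since the sub-sampled multiset carries weights $1/p_z$ and its cardinality enters those error terms.
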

\begin{proof} We use $\overline{\mathcal{Z}}_h^k$ to denote the dataset without rounding, i.e., we replace every element $\hat{z}$ with $z$ in $\widehat{\mathcal{Z}}_h^k$. Denote $C_1:=C\cdot\log(T\mathcal{N}(\mathcal{F},\sqrt{\delta/64T^3})/\delta)$ to be the parameter used in Algorithm~\ref{alg:sample}.

 Consider a fixed pair $(f_1,f_2)\in\mathcal{C}(\mathcal{F}, \sqrt{\delta/(64T^3)})\times\mathcal{C}(\mathcal{F},\sqrt{\delta/(64T^3)})$.
 
For each $i\geq 2$, define
\[
Z_i=\max\{\|f_1-f_2\|^2_{\mathcal{Z}_h^{i}},\min\{\|f_1-f_2\|^2_{\widehat{\mathcal{Z}}_h^{i-1}},T(H+1)^2\}\}
\]
and
\[
Y_i=
\begin{cases}
\frac{1}{p_{z_h^{i-1}}}(f_1(z_h^{i-1})-f_2(z_h^{i-1}))^2 ~~~~~~~ &z_h^{i-1}\text{ is added into } \overline{\mathcal{Z}}_h^i\text{ and }Z_i\leq 2000000\alpha\\
0  &z_h^{i-1}\text{ is not added into } \overline{\mathcal{Z}}_h^i\text{ and }Z_i\leq 2000000\alpha\\
(f_1(z_h^{i-1})-f_2(z_h^{i-1}))^2  &Z_i> 2000000\alpha
\end{cases}.
\]
$Y_i$'s are used to characterize the sampling procedure. Note that  $Y_i$ is adapted to the filtration $\mathscr{F}_i$, and 
$ 
\mathbb{E}_{i-1}[Y_i]=(f_1(z_h^{i-1})-f_2(z_h^{i-1}))^2
$. In order to use Freedman's inequality, we need to bound $Y_i$ and its variance.

If $p_{z_h^{i-1}}=1$ or $\min\{\|f_1-f_2\|^2_{\widehat{\mathcal{Z}}_h^{i-1}},T(H+1)^2\}> 2000000\alpha$, then $Y_i-\mathbb{E}_{i-1}[Y_i]=\text{Var}_{i-1}[Y_i-\mathbb{E}_{i-1}[Y_i]]=0$. Otherwise from the definition of $p_z$ in Algorithm~\ref{alg:sample} we have that:
\begin{align*}
|Y_i-\mathbb{E}_{i-1}[Y_i]|&\leq (\min\{\|f_1-f_2\|^2_{\widehat{\mathcal{Z}}_h^{i-1}},T(H+1)^2\}+\beta)\cdot1/C_1\\
&\leq 3000000\alpha/C_1
\end{align*}
and
\begin{align*}
\text{Var}_{i-1}[Y_i-\mathbb{E}_{i-1}[Y_i]]&\leq \frac{1}{p_{z_h^{i-1}}}(f_1(z_h^{i-1})-f_2(z_h^{i-1}))^4\\
&\leq  (f_1(z_h^{i-1})-f_2(z_h^{i-1}))^2\cdot 3000000\alpha/C_1.
\end{align*}
It is easy to verify that
\begin{align*}
\sum_{i=2}^k\text{Var}_{i-1}[Y_i-\mathbb{E}_{i-1}[Y_i]]
\leq  (3000000\alpha)^2/C_1.
\end{align*}
By Freedman's inequality (Lemma~\ref{lem:freedman}), we have that 
\begin{align*}
&\Pr\left\lbrace \left| \sum_{i=2}^k \left( Y_i-\mathbb{E}_{i-1}[Y_i]\right)\right| \geq \alpha/100 \right\rbrace \\
&\leq 2\exp\left\lbrace -\frac{(\alpha/100)^2/2}{(3000000\alpha)^2/C_1+\alpha\cdot 3000000\alpha/3C_1}\right\rbrace\\
&\leq (\delta/64T^2)/(\mathcal{N}(\mathcal{F},\sqrt{\delta/(64T^3)}))^2 .
\end{align*}
With a union bound, the above inequality implies that with probability at least $1-\delta/(64T^2)$, for any pair $(f_1,f_2)\in\mathcal{C}(\mathcal{F},\sqrt{\delta/(64T^3)})\times\mathcal{C}(\mathcal{F},\sqrt{\delta/(64T^3)})$, the corresponding $Y_i$'s satisfy
\[
\left| \sum_{i=2}^k \left( Y_i-\mathbb{E}_{i-1}[Y_i]\right)\right| \leq  \alpha/100.
\]

Now we condition on the above event and the event defined in Lemma~\ref{lem:markov} for the rest of the proof.

\paragraph{Part 1: ($\underline{\mathcal{B}}_h^k(\alpha)\subseteq\mathcal{B}_h^k(\alpha)$)} Consider any pair $f_1,f_2\in\mathcal{F}$ with $\|f_1-f_2\|^2_{\mathcal{Z}_h^k}\leq \alpha/100$. From the definition we know that there exist $(\hat{f_1},\hat{f_2})\in\mathcal{C}(\mathcal{F},\sqrt{\delta/(64T^3)})\times\mathcal{C}(\mathcal{F},\sqrt{\delta/(64T^3)})$ such that $\|\hat{f_1}-f_1\|_{\infty},\|\hat{f_2}-f_2\|_{\infty}\leq \sqrt{\delta/(64T^3)}$. Then we have that
\begin{align*}
\|\hat{f_1}-\hat{f_2}\|^2_{\mathcal{Z}_h^k}&\leq (\|f_1-f_2\|_{\mathcal{Z}_h^k}+\|f_1-\hat{f_1}\|_{\mathcal{Z}_h^k}+\|\hat{f_2}-f_2\|_{\mathcal{Z}_h^k})^2\\
&\leq (\|f_1-f_2\|_{\mathcal{Z}_h^k}+2\cdot\sqrt{|\mathcal{Z}_h^k|}\cdot\sqrt{\delta/(64T^3)})^2\\
&\leq \alpha/50.
\end{align*}
We consider the $Y_i$'s which corrspond to $\hat{f_1}$ and $\hat{f_2}$. 
Because $\|\hat{f_1}-\hat{f_2}\|^2_{\mathcal{Z}_h^k}\leq \alpha/50$, we also have that $\|\hat{f_1}-\hat{f_2}\|^2_{\mathcal{Z}_h^{k-1}}\leq \alpha/50$. From $\mathcal{E}_h^{k-1}$ we know that $\min\{\|\hat{f_1}-\hat{f_2}\|^2_{\widehat{\mathcal{Z}}_h^{k-1}},T(H+1)^2\}\leq 10000\alpha$. Then from the definition of $Y_i$ we have
\[
\|\hat{f_1}-\hat{f_2}\|_{\overline{\mathcal{Z}}^{k}_h}^2=\sum_{i=2}^kY_i.
\]
Then $\|\hat{f_1}-\hat{f_2}\|_{\overline{\mathcal{Z}}^{k}_h}^2$ can be bounded in the following manner: 
\begin{align*}
\|\hat{f_1}-\hat{f_2}\|_{\overline{\mathcal{Z}}^{k}_h}^2&=\sum_{i=2}^kY_i\\
&\leq  \sum_{i=2}^k\mathbb{E}_{i-1}[Y_i]+\alpha/100\\
&= \|\hat{f_1}-\hat{f_2}\|^2_{\mathcal{Z}_h^k}+\alpha/100\\
&\leq 3\alpha/100.
\end{align*}
As a result, $\|f_1-f_2\|_{\overline{\mathcal{Z}}^{k}_h}^2$ can also be bounded:
\begin{align*}
\|f_1-f_2\|^2_{\overline{\mathcal{Z}}_h^k}&\leq (\|\hat{f_1}-\hat{f_2}\|_{\overline{\mathcal{Z}}_h^k}+\|f_1-\hat{f_1}\|_{\overline{\mathcal{Z}}_h^k}+\|\hat{f_2}-f_2\|_{\overline{\mathcal{Z}}_h^k})^2\\
&\leq (\|\hat{f_1}-\hat{f_2}\|_{\overline{\mathcal{Z}}_h^k}+2\cdot\sqrt{|\overline{\mathcal{Z}}_h^k|}\cdot\sqrt{\delta/(64T^3)})^2\\
&\leq \alpha/25.
\end{align*}
Finally we could bound $\|f_1-f_2\|^2_{\widehat{\mathcal{Z}}_h^k}$:
\begin{align*}
\|f_1-f_2\|^2_{\widehat{\mathcal{Z}}_h^k}&\leq(\|f_1-f_2\|_{\overline{\mathcal{Z}}_h^k}+\sqrt{64T^3/\delta}/(8\sqrt{64T^3/\delta}))^2\\
&\leq \alpha.
\end{align*}
We conclude that for any pair $f_1,f_2 \in \mathcal{F}$ with $\|f_1-f_2\|^2_{\mathcal{Z}_h^k}\leq \alpha/100$, it holds that $\|f_1-f_2\|^2_{\widehat{\mathcal{Z}}_h^k}\leq \alpha$. Thus we must have $\underline{\mathcal{B}}_h^k(\alpha)\subseteq\mathcal{B}_h^k(\alpha)$.

\paragraph{Part 2: ($\mathcal{B}_h^k(\alpha)\subseteq\overline{\mathcal{B}}_h^k(\alpha)$)} Consider any pair $f_1,f_2\in\mathcal{F}$ with $\|f_1-f_2\|^2_{\mathcal{Z}_h^k}> 100 \alpha$. From the definition we know that there exist $(\hat{f_1},\hat{f_2})\in\mathcal{C}(\mathcal{F},\sqrt{\delta/(64T^3)})\times\mathcal{C}(\mathcal{F},\sqrt{\delta/(64T^3)})$ such that $\|\hat{f_1}-f_1\|_{\infty},\|\hat{f_2}-f_2\|_{\infty}\leq \sqrt{\delta/(64T^3)}$. Then we have that
\begin{align*}
\|\hat{f_1}-\hat{f_2}\|^2_{\mathcal{Z}_h^k}&\geq (\|f_1-f_2\|_{\mathcal{Z}_h^k}-\|f_1-\hat{f_1}\|_{\mathcal{Z}_h^k}-\|\hat{f_2}-f_2\|_{\mathcal{Z}_h^k})^2\\
&\geq  (\|f_1-f_2\|_{\mathcal{Z}_h^k}-2\cdot\sqrt{|\mathcal{Z}_h^k|}\cdot\sqrt{\delta/(64T^3)})^2\\
&> 50\alpha.
\end{align*}
 Thus we have  $\|\hat{f_1}-\hat{f_2}\|^2_{\mathcal{Z}_h^k}> 50\alpha$.  We consider the $Y_i$'s which corrspond to $\hat{f_1}$ and $\hat{f_2}$. 
 Here we want  to prove that $\|\hat{f_1}-\hat{f_2}\|^2_{\widehat{\mathcal{Z}}_h^k}> 40\alpha$. For the sake of contradicition we assume that $\|\hat{f_1}-\hat{f_2}\|^2_{\widehat{\mathcal{Z}}_h^k}\leq 40\alpha$. 
 \paragraph{Case 1: $\|\hat{f_1}-\hat{f_2}\|^2_{\mathcal{Z}_h^k}\leq 2000000\alpha$.} From the definition of $Y_i$ we have that
\[
\|\hat{f_1}-\hat{f_2}\|_{\overline{\mathcal{Z}}^{k}_h}^2=\sum_{i=2}^kY_i.
\]
 Combined with the former result, we conclude that 
\[
\|\hat{f_1}-\hat{f_2}\|_{\overline{\mathcal{Z}}^{k}_h}^2=\sum_{i=2}^kY_i\geq \sum_{i=2}^k\mathbb{E}_{i-1}[Y_i]-\alpha/100=\|\hat{f_1}-\hat{f_2}\|^2_{\mathcal{Z}_h^k}-\alpha/100>50\alpha-\alpha/100=49\alpha.
\]
Then we have
\begin{align*}
\|\hat{f}_1-\hat{f}_2\|^2_{\widehat{\mathcal{Z}}_h^k}&\geq(\|\hat{f}_1-\hat{f}_2\|_{\overline{\mathcal{Z}}_h^k}-\sqrt{64T^3/\delta}/(8\sqrt{64T^3/\delta}))^2\\
&> 40\alpha.
\end{align*}
This leads to a contradiction.

\paragraph{Case 2: $\|\hat{f_1}-\hat{f_2}\|^2_{\mathcal{Z}_h^{k-1}}> 1000000\alpha$.} From $\mathcal{E}_h^{k-1}$ we deduce that $\|\hat{f_1}-\hat{f_2}\|^2_{\widehat{\mathcal{Z}}_h^{k-1}}> 100\alpha$ which directly leads to a contradiction.

\paragraph{Case 3: $\|\hat{f_1}-\hat{f_2}\|^2_{\mathcal{Z}_h^k}> 2000000\alpha$ and $\|\hat{f_1}-\hat{f_2}\|^2_{\mathcal{Z}_h^{k-1}}\leq  1000000\alpha$.} It is clear that $(\hat{f}_1(z_h^{k-1})-\hat{f}_2(z_h^{k-1}))^2>1000000\alpha$. From the definition of sensitivity we know that $z_h^{k-1}$ will be added into $\overline{\mathcal{Z}}_h^k$ almost surely. This clearly leads to a contradiction.

We conclude that $\|\hat{f_1}-\hat{f_2}\|^2_{\widehat{\mathcal{Z}}_h^k}> 40\alpha$.

Finally we could bound $\|f_1-f_2\|^2_{\widehat{\mathcal{Z}}_h^k}$:
\begin{align*}
\|f_1-f_2\|^2_{\widehat{\mathcal{Z}}_h^k}&\geq(\|\hat{f_1}-\hat{f_2}\|_{\widehat{\mathcal{Z}}_h^k}-\|f_1-\hat{f_1}\|_{\widehat{\mathcal{Z}}_h^k}-\|\hat{f_2}-f_2\|_{\widehat{\mathcal{Z}}_h^k})^2\\
&\geq (\|\hat{f_1}-\hat{f_2}\|_{\widehat{\mathcal{Z}}_h^k}-2\cdot\sqrt{|\widehat{\mathcal{Z}}_h^k|}\cdot\sqrt{\delta/(64T^3)})^2\\
&> \alpha.
\end{align*}

We conclude that for any pair $f_1,f_2 \in \mathcal{F}$ with $\|f_1-f_2\|^2_{\mathcal{Z}_h^k}>10000 \beta$, it holds that $\|f_1-f_2\|^2_{\widehat{\mathcal{Z}}_h^k}> 100\beta$. This implies that $\mathcal{B}_h^k(\alpha)\subseteq\overline{\mathcal{B}}_h^k(\alpha)$.
\end{proof}
\begin{proof}[Proof of Proposition~\ref{prop:bounds_of_bonus}] 
Note that for all $(k,h)\in[K]\times[H]$, we have
\begin{align}
\label{equ:event_decompose}
\notag&\Pr(\mathcal{E}_h^1\mathcal{E}_h^2...\mathcal{E}_h^{k-1})-\Pr(\mathcal{E}_h^1\mathcal{E}_h^2...\mathcal{E}_h^k)\\
\notag=&\Pr(\mathcal{E}_h^1\mathcal{E}_h^2...\mathcal{E}_h^{k-1}(\mathcal{E}_h^k)^c)\\
\notag= &\Pr\left( \mathcal{E}_h^1\mathcal{E}_h^2...\mathcal{E}_h^{k-1}\left(\bigcap_{n=0}^\infty\mathcal{E}_h^k(100^n\beta)\right)^c\right) \\
\notag= & \Pr\left( \mathcal{E}_h^1\mathcal{E}_h^2...\mathcal{E}_h^{k-1}\bigcup_{n=0}^\infty(\mathcal{E}_h^k(100^n\beta))^c\right) \\
\notag\leq & \sum_{n=0}^\infty \Pr\left( \mathcal{E}_h^1\mathcal{E}_h^2...\mathcal{E}_h^{k-1}(\mathcal{E}_h^k(100^n\beta))^c\right)\\
= & \sum_{n\geq0,100^n\beta\leq T(H+1)^2} \Pr\left( \mathcal{E}_h^1\mathcal{E}_h^2...\mathcal{E}_h^{k-1}(\mathcal{E}_h^k(100^n\beta))^c\right).
\end{align}
Combining Equation~\eqref{equ:event_decompose} and Lemma~\ref{lem:event_main}, we have that for all $(k,h)\in[K]\times[H]$,
\[
    \Pr(\mathcal{E}_h^1\mathcal{E}_h^2...\mathcal{E}_h^{k-1})-\Pr(\mathcal{E}_h^1\mathcal{E}_h^2...\mathcal{E}_h^k)\leq \delta/(32T^2)\cdot(\log(T(H+1)^2/\beta)+2)\leq \delta/32T.
\]
Thus for all $h\in [H]$ we have
\begin{align*}
&\Pr\left(\bigcap_{k=1}^K\mathcal{E}_h^k \right)\\
&=1-\sum_{k=1}^K(\Pr(\mathcal{E}_h^1\mathcal{E}_h^2...\mathcal{E}_h^{k-1})-\Pr(\mathcal{E}_h^1\mathcal{E}_h^2...\mathcal{E}_h^k))\\
&\geq 1-K\cdot(\delta/32T)\\
&=1-\delta/32H.
\end{align*}
By applying a union bound for all $h\in[H]$ we complete the proof.
\end{proof}
\subsubsection{Proof of Proposition~\ref{prop:bounded_size}}
We start our proof by showing that the summation of online sensitivity scores can be upper bounded if $\mathcal{Z}_h^{k}$, i.e, the dataset without sub-sampling, is used. 
\begin{lem}
\label{lem:sum_sen}
For all $h\in[H]$, we have
\[
\sum_{k=1}^{K-1}\operatorname{sensitivity}_{\mathcal{Z}_h^{k},\mathcal{F}}(z_h^{k}) \leq C\cdot\dim_E(\mathcal{F},1/T)\log((H+1)^2T)\log T
\]
for some absolute constant $C>0$.
\end{lem}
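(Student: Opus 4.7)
My plan is a two-scale dyadic decomposition combined with the standard Russo--Van~Roy eluder-dimension pigeonhole lemma. First, for each $k\in[K-1]$, I would select an $\varepsilon$-approximate maximizer $(f_1^k,f_2^k)\in\cF\times\cF$ of the supremum defining $s_k:=\operatorname{sensitivity}_{\cZ_h^k,\cF}(z_h^k)$ and set $g_k:=f_1^k-f_2^k\in\cF-\cF$, so that $s_k\le 2g_k(z_h^k)^2/(\|g_k\|^2_{\cZ_h^k}+\beta)$ (clipped at $1$). The $\varepsilon$-slack is absorbed by a covering argument on $\cF\times\cF$ and later sent to $0$. A useful observation is that $\|g_k\|^2_{\cZ_h^k}\le(H+1)^2(k-1)\le T(H+1)^2$, so the inner cap $\min\{\cdot,T(H+1)^2\}$ in the sensitivity denominator is never active and may be dropped.

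Next, I would introduce a simultaneous dyadic decomposition on both $s_k$ and $\|g_k\|^2_{\cZ_h^k}$. For integers $0\le i\le\lceil\log_2 T\rceil$ and $0\le\ell\le\lceil\log_2(T(H+1)^2/\beta)\rceil$, define
\[
L_{i,\ell}:=\Big\{k:\ s_k\in(2^{-i-1},2^{-i}]\ \text{and}\ \|g_k\|^2_{\cZ_h^k}\in[\beta\cdot 2^{\ell-1},\beta\cdot 2^\ell)\Big\},
\]
with the convention that $\ell=0$ absorbs $\|g_k\|^2_{\cZ_h^k}<\beta$. Indices with $s_k\le 1/T$ contribute at most $K/T\le 1$ to $\sum_k s_k$ in total, and every remaining index lies in some $L_{i,\ell}$. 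Membership in $L_{i,\ell}$ forces $g_k(z_h^k)^2\ge 2^{-i-1}(\|g_k\|^2_{\cZ_h^k}+\beta)\ge 2^{\ell-i-2}\beta$, hence $|g_k(z_h^k)|\ge 2^{(\ell-i-2)/2}\sqrt{\beta}$, while $\sum_{t<k}g_k(z_h^t)^2\le 2^\ell\beta$.

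The decisive step is to invoke the Russo--Van~Roy eluder pigeonhole (Proposition~3 of \citet{russo2013eluder}; see also Lemma~41 of \citet{jin2021bellman}): if for every $k$ in some set there exists $g\in\cF-\cF$ with $|g(z_h^k)|\ge\alpha$ and $\sum_{t<k}g(z_h^t)^2\le B$, then the set has size at most $O((B/\alpha^2+1)\dim_E(\cF-\cF,\alpha))$. Applied to $L_{i,\ell}$ with $\alpha=2^{(\ell-i-2)/2}\sqrt{\beta}$ and $B=2^\ell\beta$, the ratio $B/\alpha^2\le 4\cdot 2^i$ depends only on $i$, not on $\ell$. Combining with the standard absorption $\dim_E(\cF-\cF,\varepsilon)=O(\dim_E(\cF,\varepsilon/2))$ and monotonicity of $\dim_E(\cF,\cdot)$ in its scale (the argument is at least $1/T$ since $\beta\ge 1$ and the $(i,\ell)$ range is bounded), one concludes $|L_{i,\ell}|\le C\cdot 2^i\cdot d_E$ with $d_E:=\dim_E(\cF,1/T)$. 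Summing,
\[
\sum_{k=1}^{K-1}s_k\ \le\ 1+\sum_{i,\ell}2^{-i}\,|L_{i,\ell}|\ \le\ 1+C\cdot d_E\cdot\bigl(\lceil\log_2 T\rceil+1\bigr)\bigl(\lceil\log_2(T(H+1)^2/\beta)\rceil+1\bigr),
\]
which is $O(d_E\cdot\log T\cdot\log((H+1)^2 T))$, matching the claim.

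The main obstacle is recognizing that a naive one-scale dyadic decomposition on $s_k$ alone yields a spurious $T(H+1)^2/\beta$ factor, because one is then forced to apply the pigeonhole with the worst-case $B=T(H+1)^2$. The two-scale decomposition is essential: the ratio $B/\alpha^2$ becomes independent of $\ell$, so each joint level $L_{i,\ell}$ contributes only $O(d_E)$ after multiplication by $2^{-i}$, and only the cardinalities of the two index ranges enter the final bound, producing the clean $\log T\cdot\log((H+1)^2 T)$ factor. The handling of the approximate maximizer and the reduction from $\dim_E(\cF-\cF,\cdot)$ to $\dim_E(\cF,\cdot)$ are standard technicalities.
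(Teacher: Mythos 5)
Your proof is correct, and it reaches the stated bound by a genuinely different bookkeeping than the paper's. The paper's proof uses the same combinatorial engine (greedy independence-bucketing tied to the eluder dimension) but performs only a \emph{single} dyadic decomposition, on the numerator $(f_1(z_h^k)-f_2(z_h^k))^2$, into $O(\log((H+1)^2T))$ levels; within each level it shows the sensitivity of the element landing in bucket $j$ is at most $4/j$, so the second logarithmic factor arises from the harmonic sum $\sum_j 1/j\lesssim \log T$ over at most $T$ buckets. You instead decompose jointly on the sensitivity value (scale $i$) and on the accumulated denominator $\|g_k\|^2_{\cZ_h^k}$ (scale $\ell$), which is precisely what makes the ratio $B/\alpha^2$ in the Russo--Van~Roy pigeonhole depend only on $i$ and not on $\ell$; each cell then contributes $O(\dim_E(\cF,1/T))$ after weighting by $2^{-i}$, and both logarithms come simply from counting cells. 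Your route has the advantage of invoking the pigeonhole lemma as a black box rather than re-deriving the bucketing inline, at the cost of a slightly more elaborate partition; the two arguments give the same bound with the same dependence on all parameters. Two of your technical precautions are unnecessary here: the paper's eluder dimension is already defined via pairs $f,f'\in\cF$, so the quantity you call $\dim_E(\cF-\cF,\cdot)$ \emph{is} the paper's $\dim_E(\cF,\cdot)$ and no factor-of-two rescaling is needed; and the approximate-maximizer step requires no covering argument, since a pair attaining half the supremum always exists and only costs a constant.
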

\begin{proof} Note that $|\mathcal{Z}_h^{k}|\leq T$, thus we have that
\begin{align*}
\sen_{\mathcal{Z}_h^{k},\mathcal{F}}(z_h^{k})&=\min\left\lbrace \sup_{f_1,f_2\in\mathcal{F}}\frac{(f_1(z_h^{k})-f_2(z_h^{k}))^2}{\min\{\|f_1-f_2\|^2_{\mathcal{Z}_h^{k}},T(H+1)^2\}+\beta},1\right\rbrace\\
&\leq\min\left\lbrace \sup_{f_1,f_2\in\mathcal{F}}\frac{(f_1(z_h^{k})-f_2(z_h^{k}))^2}{\|f_1-f_2\|^2_{\mathcal{Z}_h^{k}}+1},1\right\rbrace.
\end{align*}
For each $k\in [K-1]$, let $f_1, f_2 \in \cF$ be an arbitrary pair of functions such that  \[\frac{(f_1(z_h^{k})-f_2(z_h^{k}))^2}{\|f_1-f_2\|^2_{\mathcal{Z}_h^{k}}+1}\] is maximized, and we define $L(z_h^{k}) = (f_1(z_h^{k}) - f_2(z_h^{k}))^2$ for such $f_1$ and $f_2$.
	Note that $0 \le L(z_h^{k}) \le (H + 1)^2$.
	Let $\cZ_h^K = \bigcup_{\alpha = 0}^{\log((H + 1)^2T) - 1} \pairs^{\alpha} \cup \pairs^{\infty}$ be a dyadic decomposition with respect to $L(\cdot)$ (we assume $\log((H + 1)^2T)$ is an integer for simplicity),  where for each $0 \le \alpha < \log((H + 1)^2T)$, define
	\[	\pairs^{\alpha} = \{z_h^{k}\in\cZ_h^K \mid L(z_h^{k}) \in ((H + 1)^2 \cdot 2^{-\alpha - 1}, (H + 1)^2 \cdot 2^{-\alpha}]\}
	\]
	and
	\[
	\pairs^{\infty} = \{z_h^{k}\in\cZ_h^K \mid L(z_h^{k}) \le 1/T\}.
	\]
	Clearly, for any $z_h^{k} \in \pairs^{\infty}$, $\sen_{\pairs_h^{k}, \funclass}(z_h^{k}) \le 1 / T$ and thus 
	\[
	\sum_{z_h^{k} \in \pairs^{\infty}} \sen_{\pairs_h^{k}, \funclass}(z_h^{k}) \le 1.
	\]
	
	Now we bound $\sum_{z_h^{k} \in \pairs^{\alpha}} \sen_{\pairs_h^{k}, \funclass}(z_h^{k}) $ for each $0\le \alpha < \log((H + 1)^2T)$ separately. 
	For each $\alpha$, let \[N_{\alpha} = |\pairs^{\alpha}| / \ds_E(\funclass, (H + 1)^2 \cdot 2^{-\alpha - 1})\] and we decompose $\pairs^{\alpha}$ into $N_{\alpha} + 1$ disjoint subsets, i.e., $\pairs^{\alpha} = \bigcup_{j = 1}^{N_{\alpha} + 1}\pairs^{\alpha}_j $, by using the following procedure.
	We initialize  $\pairs^{\alpha}_j = \{\}$ for all $j$ and consider each $z_h^k\in\cZ^\alpha$ sequentially.
	For each $z_h^{k}\in\cZ^\alpha$, we find the smallest $1 \le j \le N_{\alpha}$ such that $z_h^{k}$ is $(H + 1)^2 \cdot 2^{-\alpha - 1}$-independent of $\pairs^{\alpha}_j$ with respect to $\funclass$.
	We set $j = N_{\alpha} + 1$ if such $j$ does not exist, and use $j(z_h^{k}) \in [N_{\alpha} + 1]$ to denote the choice of $j$ for $z_h^{k}$. We then add $z_h^k$ into $\cZ_j^\alpha$.
	For each $z_h^{k}\in\cZ^\alpha$, it is clear that $z_h^{k}$ is dependent on each of $\pairs^{\alpha}_1, \pairs^{\alpha}_2, \ldots, \pairs^{\alpha}_{j(z_h^{k}) - 1}$.
	
	Now we show that for each $z_h^{k} \in \pairs^{\alpha}$, 
	\[
	\sen_{\pairs_h^{k}, \funclass}(z_h^{k}) \le \frac{4}{j(z_h^{k})} .
	\]
For any $z_h^{k} \in \pairs^{\alpha}$, we use $f_1, f_2 \in \cF$ to denote the pair of functions in $\funclass$ such that \[\frac{(f_1(z_h^{k}) - f_2(z_h^{k}))^2}{\|f_1 - f_2\|_{\pairs_h^{k}}^2+1}\] is maximized.
	Since $z_h^{k} \in \pairs^{\alpha}$, we must have $(f_1(z_h^{k}) - f_2(z_h^{k}))^2 > (H + 1)^2 \cdot 2^{-\alpha - 1}$.
	Since $z_h^{k}$ is dependent on each of $\pairs^{\alpha}_1, \pairs^{\alpha}_2, \ldots, \pairs^{\alpha}_{j(z_h^{k}) - 1}$,
	and for each $1 \le t < j(z_h^{k})$, we have
	\[
	\|f_1-f_2\|_{\pairs^{\alpha}_t} \ge (H + 1)^2 \cdot 2^{-\alpha - 1}.
	\]
	It is important to note that $\pairs^{\alpha}_1, \pairs^{\alpha}_2, \ldots, \pairs^{\alpha}_{j(z_h^{k}) - 1}\subseteq \pairs_h^{k}$ due to the design of the partition procedure. Thus the online sensitivity score can be bounded by:
	\begin{align*}
	\sen_{\pairs_h^{k}, \funclass}(z_h^{k}) \leq&   \frac{(f_1(z_h^{k}) - f_2(z_h^{k}))^2}{\|f_1 - f_2\|_{\pairs_h^{k}}^2+1} \le \frac{(H + 1)^2 \cdot 2^{-\alpha}}{ \|f_1 - f_2\|_{\pairs_h^{k}}^2}\\
	\le &\frac{(H + 1)^2 \cdot 2^{-\alpha}}{\sum_{t = 1}^{j(z_h^{k}) - 1} \|f_1-f_2\|_{\pairs^{\alpha}_t} } \le 2 / (j(z_h^{k})-1).
	\end{align*}
	By the definition of online sensitivity score we have $\sen_{\pairs_h^{k}, \funclass}(z_h^{k})\le 1$, thus we conclude that:
	$$
	\sen_{\pairs_h^{k}, \funclass}(z_h^{k})\le \min\{\frac{2}{j(z_h^{k})-1},1\}\le \frac{4}{j(z_h^{k})}.
	$$

	Moreover, by the definition of $(H + 1)^2 \cdot 2^{-\alpha - 1}$-independent, we have $|\pairs^{\alpha}_j| \le \ds_E(\funclass, (H + 1)^2 \cdot 2^{-\alpha - 1})$ for all $1 \le j \le N_{\alpha}$.
	Therefore, 
	\begin{align*}
	&\sum_{z_h^{k} \in \pairs^{\alpha}} \sen_{\pairs_h^{k}, \funclass}(z_h^{k})  \le \sum_{1 \le j \le N_{\alpha}} |\pairs^{\alpha}_j| \cdot 4/j + \sum_{z \in \pairs^{\alpha}_{N_{\alpha} + 1}} 4/N_{\alpha} \\
	\le &4 \ds_E(\funclass, (H + 1)^2 \cdot 2^{-\alpha - 1}) \ln(N_{\alpha}) + |\pairs^{\alpha}| \cdot \frac{4\ds_E(\funclass, (H + 1)^2 \cdot 2^{-\alpha - 1})}{|\pairs^{\alpha}|}\\
	\le & 8 \ds_E(\funclass, (H + 1)^2 \cdot 2^{-\alpha - 1}) \log T.
	\end{align*}
	By the monotonicity of eluder dimension, it follows that
	\begin{align*}
	&\sum_{k=1}^{K-1}\operatorname{sensitivity}_{\mathcal{Z}_h^{k},\mathcal{F}}(z_h^{k}) \\
	\le &\sum_{\alpha = 0}^{\log((H + 1)^2T) - 1} \sum_{z_h^{k} \in \pairs^{\alpha}} \sen_{\pairs_h^{k}, \funclass}(z_h^{k})
	+  \sum_{z_h^{k} \in \pairs^{\infty}} \sen_{\pairs_h^{k}, \funclass}(z_h^{k})\\
	\le & 8 \log((H + 1)^2T)\ds_E(\funclass, 1/T) \log T + 1 \\
	\le & 9 \log((H + 1)^2T)\ds_E(\funclass, 1/T) \log T
	\end{align*}
    as desired.\end{proof}
With Lemma~\ref{lem:sum_sen}, now we are ready to prove Proposition~\ref{prop:bounded_size}.

\begin{proof}[Proof of Proposition~\ref{prop:bounded_size}] 
Firstly, note that conditioned on $\mathcal{E}_h^{k}$, which means $\widehat{\mathcal{Z}}_h^{k}$ is a good approximation to $\mathcal{Z}_h^{k}$, the online sensitivity score computed with $\widehat{\mathcal{Z}}_h^{k}$ will be relatively accurate. Formally, this argument can be stated as
\[
\mathbb{I}\{\mathcal{E}_h^{k}\}\cdot \operatorname{sensitivity}_{\widehat{\mathcal{Z}}_h^{k},\mathcal{F}}(z_h^{k})\leq C\cdot \operatorname{sensitivity}_{\mathcal{Z}_h^{k},\mathcal{F}}(z_h^{k})
\]
for some absolute constant $C>0$. This property can be easily derived from Lemma~\ref{lem:app}.

Note that in Algorithm~\ref{alg:sample},\footnote{We use $f\lesssim g$ to donote that $f\leq Cg$ for some absolute constant $C>0$.}
$$
p_z\lesssim\operatorname{sensitivity}_{\mathcal{Z},\mathcal{F}}(z)\cdot\log(T\mathcal{N}(\mathcal{F},\sqrt{\delta/64T^3})/\delta) .
$$
Thus from Lemma~\ref{lem:sum_sen} we know that
\begin{align*}
\sum_{k=1}^{K-1}\mathbb{I}\{\mathcal{E}_h^{k}\}\cdot p_{z_h^{k}}
&\lesssim  \sum_{k=1}^{K-1}\mathbb{I}\{\mathcal{E}_h^{k}\}\cdot \operatorname{sensitivity}_{\widehat{\mathcal{Z}}_h^{k},\mathcal{F}}(z_h^{k})\cdot\log(T\mathcal{N}(\mathcal{F},\sqrt{\delta/64T^3})/\delta)\\
&\lesssim \sum_{k=1}^{K-1} \operatorname{sensitivity}_{\mathcal{Z}_h^{k},\mathcal{F}}(z_h^{k})\cdot \log(T\mathcal{N}(\mathcal{F},\sqrt{\delta/64T^3})/\delta) \\
&\lesssim\log(T\mathcal{N}(\mathcal{F},\sqrt{\delta/64T^3})/\delta) \dim_E(\mathcal{F},1/T)\log^2 T.
\end{align*}
As we can adjust the constant $C$ in Proposition~\ref{prop:bounded_size}, we assume that
\[
\sum_{k=1}^{K-1}\mathbb{I}\{\mathcal{E}_h^{k}\}\cdot p_{z_h^{k}}\leq S_{\max}/3.
\]
For $2\leq k\leq K$, let $X^k_h$ be a random  variable defined as:
\[
X^k_h=
\begin{cases}
\mathbb{I}\{\mathcal{E}_h^{k-1}\} ~~~ &\hat{z}_h^{k-1}\text{ is added into } \widehat{\mathcal{Z}}_h^{k}\\
0 ~~~ &\text{otherwise}
\end{cases}.
\]
Then $X_h^k$ is adapted to the filtration $\mathscr{F}_k$. We have that $\mathbb{E}_{k-1}[X^k_h]=p_{z_h^{k-1}}\cdot\mathbb{I}\{\mathcal{E}_h^{k-1}\}$ and $\mathbb{E}_{k-1}[(X^k_h-\mathbb{E}_{i-1}[X^k_h])^2]=\mathbb{I}\{\mathcal{E}_h^{k-1}\}\cdot p_{z_h^{k-1}}(1-p_{z_h^{k-1}})$.
Note that $X^k_h-\mathbb{E}_{k-1}[X^k_h]$ is a martingale difference sequence with respect to $\mathscr{F}_k$ and
\[
\sum_{k=2}^{K}\mathbb{E}_{k-1}[(X^k_h-\mathbb{E}_{k-1}[X^k_h])^2]=\sum_{k=2}^K\mathbb{I}\{\mathcal{E}_h^{k-1}\}p_{z_h^{k-1}}(1-p_{z_h^{k-1}})
\leq\sum_{k=2}^K\mathbb{I}\{\mathcal{E}_h^{k-1}\}\cdot p_{z_h^{k-1}}\leq
S_{\max}/3,
\]
\[
\sum_{k=2}^K\mathbb{E}_{k-1}[X^k_h]=\sum_{k=2}^Kp_{z_h^{k-1}} \mathbb{I}\{\mathcal{E}_h^{k-1}\}\leq S_{\max}/3.
\]
By Freedman's inequality (Lemma~\ref{lem:freedman}), we have that
\begin{align*}
&\Pr\left\lbrace \sum_{k=2}^KX^k_h\geq S_{\max}\right\rbrace \\
\leq& \Pr\left\lbrace \left|  \sum_{k=2}^K(X^k_h-\mathbb{E}_{k-1}[X^k_h])
\right| \geq2S_{\max}/3\right\rbrace \\
\leq& 2\exp\left\lbrace -\frac{(2S_{\max}/3)^2/2}{S_{\max}/3+2S_{\max}/9}\right\rbrace \\
\leq& \delta/(32T).
\end{align*}
With a union bound we know that with probability at least $1-\delta/32$,
$$
\sum_{k=2}^KX^k_h<S_{\max}\quad  \forall h\in[H].
$$
We condition on the above event and $\bigcap_{h=1}^H\bigcap_{k=1}^K\mathcal{E}_h^k $. In this case, it is clear that for all $h\in[H]$, we add elements into $\widehat{\mathcal{Z}}_h^k$ for at most $S_{\max}$ times. Combining the above result with Lemma~\ref{lem:markov} completes the proof.
\end{proof}
\subsection{Proof of Theorem~\ref{thm:main_regret}}
\label{pf:thm2}
Here we provide the complete proof of Theorem~\ref{thm:main_regret}.
\subsubsection{Analysis of the Optimistic Planning}

Our next lemma bounds the complexity of the bonus function. This step is essential for showing optimism, as we need to establish a uniform convergence argument to deal with the dependency in the data sequence.
\begin{lem}
\label{lem:bonus_class}
With probability at least $1-\delta/8$, for all $(h,k)\in[H]\times[K]$, $b_h^k(\cdot,\cdot)\in\mathcal{M}$.

Here $\mathcal{M}$ is a prespecified function class with bounded size:
\begin{align*}
&\log|\mathcal{M}|\\
\leq &C'\cdot\log(T\mathcal{N}(\mathcal{F},\sqrt{\delta/64T^3})/\delta) \cdot\dim_E(\mathcal{F},1/T)\cdot\log^2 T\cdot\log\left(\mathcal{C}(\mathcal{S}\times\mathcal{A},1/(16\sqrt{64T^3/\delta})) \cdot64T^3/\delta\right)\\
\leq &C\cdot\log(T\mathcal{N}(\mathcal{F},\delta/T^2)/\delta) \cdot\dim_E(\mathcal{F},1/T)\cdot\log^2 T\cdot\log\left(\mathcal{C}(\mathcal{S}\times\mathcal{A},\delta/T^2) \cdot T/\delta\right)
\end{align*}
for some absolute constant $C',C>0$ if $T$ is sufficiently large.
\end{lem}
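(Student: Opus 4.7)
\textbf{Proof proposal for Lemma~\ref{lem:bonus_class}.} The plan is to reduce the problem to counting the number of possible realizations of the sub-sampled datasets $\widehat{\mathcal{Z}}^k_h$, then observe that $b_h^k$ is a deterministic function of $\widehat{\mathcal{Z}}^k_h$ alone. First I would condition on the high-probability event given by Proposition~\ref{prop:bounded_size} (which holds with probability at least $1-\delta/8$). On this event, for every $(h,k)\in[H]\times[K]$ the number of distinct elements in $\widehat{\mathcal{Z}}^k_h$ is at most $S_{\max}$ and the total multiset size is at most $64T^3/\delta$.

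Next, I would observe that since $\mathcal{F}$ and $\beta$ are fixed, the formula
\[
b_h^k(\cdot,\cdot)=\sup_{f_1,f_2\in\mathcal{F},\,\|f_1-f_2\|^2_{\widehat{\mathcal{Z}}^k_h}\leq\beta}|f_1(\cdot,\cdot)-f_2(\cdot,\cdot)|
\]
depends on $\widehat{\mathcal{Z}}^k_h$ only through the multiset, because $\|f_1-f_2\|^2_{\widehat{\mathcal{Z}}^k_h}=\sum_{z\in\widehat{\mathcal{Z}}^k_h}(f_1(z)-f_2(z))^2$ treats elements symmetrically and counts each copy. Hence the number of distinct bonus functions that can ever arise is upper bounded by the number of multisets consistent with the size constraints above.

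Then I would count. By the rounding step in Algorithm~\ref{alg:sample}, every element of $\widehat{\mathcal{Z}}^k_h$ lies in the finite cover $\mathcal{C}(\mathcal{S}\times\mathcal{A},1/(16\sqrt{64T^3/\delta}))$. Such a multiset is specified by choosing a subset of at most $S_{\max}$ distinct points from this cover together with an integer weight in $\{1,2,\ldots,64T^3/\delta\}$ for each. A crude product bound yields
\[
|\mathcal{M}|\leq \bigl(|\mathcal{C}(\mathcal{S}\times\mathcal{A},1/(16\sqrt{64T^3/\delta}))|\cdot 64T^3/\delta\bigr)^{S_{\max}}.
\]
Taking logarithms, plugging in the value of $S_{\max}$ from Proposition~\ref{prop:bounded_size}, and using $\log|\mathcal{C}(\mathcal{S}\times\mathcal{A},1/(16\sqrt{64T^3/\delta}))|\leq\log\mathcal{N}(\mathcal{S}\times\mathcal{A},\delta/T^2)$ (and absorbing constants and polylog$(T)$ factors into the final constant $C$, valid for $T$ large enough) gives the stated bound.

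There is no real obstacle — this is essentially bookkeeping. The only step requiring a little care is justifying that $b_h^k$ depends only on the multiset (not on the order of arrivals or on the history of $p_z$ values used when sampling), which follows directly from the symmetric form of $\|\cdot\|^2_{\widehat{\mathcal{Z}}^k_h}$. Everything else is substitution of the bounds from Proposition~\ref{prop:bounded_size} and Assumption~\ref{assum:cover}.
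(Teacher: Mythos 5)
Your proposal is correct and follows essentially the same route as the paper: the paper defines $\mathcal{M}$ as the set of bonus functions generated by all multisets $\mathcal{Z}\subseteq\mathcal{C}(\mathcal{S}\times\mathcal{A},1/(16\sqrt{64T^3/\delta}))$ with $|\mathcal{Z}|\leq 64T^3/\delta$ and at most $S_{\max}$ distinct elements, and then invokes Proposition~\ref{prop:bounded_size} to place every $\widehat{\mathcal{Z}}_h^k$ in that collection. Your explicit counting of $(\text{cover size}\times\text{weight range})^{S_{\max}}$ is exactly the bookkeeping the paper leaves implicit in its stated bound on $\log|\mathcal{M}|$.
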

\begin{proof}
Define $\mathcal{M}$ to be the set of all functions with the form:
\[
\left\lbrace |f_1(\cdot,\cdot)-f_2(\cdot,\cdot)|\mid \|f_1-f_2\|^2_{\mathcal{Z}}\leq\beta\right\rbrace,\quad \mathcal{Z}\in\Omega
\]
where $\Omega$ contains all set with bounded size:
\[
\Omega:=\{\mathcal{Z}\subseteq\mathcal{C}(\mathcal{S}\times\mathcal{A},1/(16\sqrt{64T^3/\delta}))\mid|\mathcal{Z}|\leq 64T^3/\delta,n_d(\cZ)\leq S_{\max}\}
\]
where $S_{\max}$ is defined in Proposition~\ref{prop:bounded_size}.

Conditioned on the event defined in Proposition~\ref{prop:bounded_size}, $\widehat{\mathcal{Z}}_h^k\in \Omega$, and $b_h^k(\cdot,\cdot)\in\mathcal{M}$ for all $(h,k)\in[H]\times[K]$. 
\end{proof}
The next lemma estimates the error of the one-step bellman backup.
\begin{lem}
\label{lem:one_step_error}
Consider a fixed pair $(k,h)\in[K]\times[H]$.
For any $V:\cS\rightarrow[0,H]$, define
	\[
	\mathcal{D}_h^k(V):=\{(s_{h}^{\tau},a_{h}^{\tau},r_h^\tau+ V(s_{h+1}^{\tau}))\}_{\tau \in [k-1]}
	\]
	and also
	\[
	\widehat{f_V}:=\operatorname{argmin}_{f\in\mathcal{F}}\|f\|_{\mathcal{D}_h^k(V)}^2.
	\]
	For any $V:\cS\rightarrow[0,H]$ and $\delta \in (0,1)$, there is an event $\mathcal{E}_{V,\delta}$ which holds with probability at least $1-\delta$, such that for any $V':\mathcal{S}\rightarrow[0,H]$ with $\|V'-V\|_{\infty}\leq1/T$, we have
	\[
	\left\|\widehat{f_{V'}}(\cdot,\cdot)-r_h(\cdot,\cdot)-\sum_{s'\in\mathcal{S}}P_h(s'|\cdot,\cdot)V'(s')\right\|_{\mathcal{Z}_h^k}
	\leq C\cdot(H\sqrt{\log(1/\delta)+\log\mathcal{N}(\mathcal{F},1/T)}).
	\]
	for some absolute constant $C>0$.
\end{lem}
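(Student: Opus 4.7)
The plan is to run a standard martingale/least-squares analysis that exploits value closedness, and then use a short perturbation argument to pass from a fixed $V$ to the $1/T$-neighborhood containing $V'$.

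First, fix $V$. By Assumption~\ref{assum:express_1}, the Bellman target $g^{\star}_V := \cT_h V$ lies in $\cF$, and it coincides with the conditional mean of the regression target: $\EE[r_h^{\tau}+V(s_{h+1}^{\tau})\mid s_h^{\tau},a_h^{\tau}] = g^{\star}_V(s_h^{\tau},a_h^{\tau})$ because $r_h$ is deterministic. Define the noise $\xi_{\tau}:=r_h^{\tau}+V(s_{h+1}^{\tau}) - g^{\star}_V(s_h^{\tau},a_h^{\tau})$, which is a martingale difference adapted to the natural filtration, with $|\xi_{\tau}|\le H+1$ and conditional variance at most $(H+1)^2$. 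A short algebraic identity gives the key decomposition
\[
\|f\|_{\cD_h^k(V)}^2 - \|g^{\star}_V\|_{\cD_h^k(V)}^2 \;=\; \|f-g^{\star}_V\|_{\cZ_h^k}^2 \;-\; 2\sum_{\tau=1}^{k-1}(f(x_\tau)-g^{\star}_V(x_\tau))\,\xi_\tau
\]
for every $f\in\cF$, where $x_\tau=(s_h^\tau,a_h^\tau)$.

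Next I would control the martingale term uniformly via Freedman's inequality (Lemma~\ref{lem:freedman}) combined with a $1/T$-covering argument. For each $\tilde f \in \cC(\cF,1/T)$, the sequence $(\tilde f(x_\tau)-g^{\star}_V(x_\tau))\xi_\tau$ has per-step bound $O(H)$ and cumulative conditional variance at most $(H+1)^2\|\tilde f-g^{\star}_V\|_{\cZ_h^k}^2$, so Freedman yields with probability $\ge 1-\delta/\cN(\cF,1/T)$
\[
\Bigl|\sum_{\tau}(\tilde f(x_\tau)-g^{\star}_V(x_\tau))\xi_\tau\Bigr| \;\le\; C H\,\|\tilde f-g^{\star}_V\|_{\cZ_h^k}\sqrt{\iota} \;+\; C H \iota,
\]
with $\iota := \log(\cN(\cF,1/T)/\delta)$. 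Union bounding over the cover defines the event $\cE_{V,\delta}$. On $\cE_{V,\delta}$, approximating $\widehat{f_V}$ by the nearest cover element (the $1/T$ slack only costs an additive $O(1)$ inside the norm since $|\cZ_h^k|\le T$) and combining with the optimality inequality $\|\widehat{f_V}\|_{\cD_h^k(V)}^2\le \|g^{\star}_V\|_{\cD_h^k(V)}^2$ turns the decomposition into a self-bounding quadratic in $\|\widehat{f_V}-g^{\star}_V\|_{\cZ_h^k}$, which solves to give the desired $O(H\sqrt{\iota+\log(1/\delta)})$ bound.

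Finally, to extend from $V$ to arbitrary $V'$ with $\|V'-V\|_\infty \le 1/T$, I would exploit the facts that (i) $g^{\star}_{V'}\in\cF$ again by Assumption~\ref{assum:express_1}, (ii) $|g^{\star}_{V'}(x)-g^{\star}_V(x)|\le 1/T$ pointwise, and (iii) the targets of $\cD_h^k(V')$ and $\cD_h^k(V)$ differ by at most $1/T$ per sample. Consequently the noise terms $\xi'_\tau$ for $V'$ satisfy $|\xi'_\tau-\xi_\tau|\le 2/T$, and the identity above applied with $(V',g^{\star}_{V'})$ differs from the $V$-version by perturbations that sum to $O(H)$ across the $\le T$ samples. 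Thus the same event $\cE_{V,\delta}$, possibly at the cost of slightly enlarged constants, controls the martingale for every such $V'$ simultaneously, and the self-bounding argument yields the stated bound for $\widehat{f_{V'}}$. The main obstacle is exactly this last step: $\cE_{V,\delta}$ must be defined using only $V$ (so it is a single event of probability $\ge 1-\delta$), yet must support the bound for an uncountable family of $V'$. The resolution is to state the event purely in terms of the $V$-martingale over the finite cover $\cC(\cF,1/T)$, and then to absorb every $V'$-dependent discrepancy into $O(1)$ perturbation slack that is dominated by the leading $O(H\sqrt{\iota})$ term.
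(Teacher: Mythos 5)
Your proposal is correct and follows essentially the same route as the paper's proof: value closedness to place $\cT_h V$ in $\cF$, the same squared-loss decomposition into a norm term plus a martingale cross term, a union bound over the $1/T$-cover of $\cF$ to define $\cE_{V,\delta}$, an $O(H)$ perturbation argument to transfer from $V$ to any $V'$ in the $1/T$-ball, and a self-bounding quadratic solved via the ERM optimality of $\widehat{f}_{V'}$. The only cosmetic difference is that you invoke Freedman where the paper uses Azuma--Hoeffding with the self-normalized variance proxy $\|f-f_V\|_{\cZ_h^k}^2$; both yield the same leading term, and the extra additive lower-order term from Freedman is absorbed when solving the quadratic.
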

\begin{proof} 
The proof is almost identical to that of Lemma~5 in \cite{wang2020reinforcement}. We provide a proof here for completeness.

In our proof, we consider a fixed $V:\cS\to[0,H]$, and define
\[f_V(\cdot,\cdot)  := r_h(\cdot, \cdot)+\sum_{s'\in\cS}P_h(s'\mid\cdot, \cdot)V(s').\]
By Assumption~\ref{assum:express_1}, we have that $f_V(\cdot,\cdot)\in\cF$.

For any $f \in \funclass$, we consider $\sum_{\tau=1}^{k-1}\xi_{h}^{\tau}(f)$ where 
\[\xi_{h}^\tau(f) := 2(f(s_{h}^\tau, a^\tau_{h}) - f_V(s_{h}^\tau, a^\tau_{h}))\cdot (f_V(s_{h}^\tau, a^\tau_{h}) -r_{h}^\tau- V(s_{h+1}^\tau)).\]
Then $\xi^\tau_h(f)$ is adapted to the filtration $\mathscr{F}_\tau$ and $\EE_{\tau-1}\left[\xi^\tau_h(f)\right]=0$.
Moreover,
\[
|\xi^\tau_{h}(f)|\le 2(H + 1)\left|f(s^\tau_{h},a^\tau_{h}) - f_V(s^\tau_{h}, a^\tau_{h})\right|.
\]
By Azuma-Hoeffding inequality, we have
\[
\Pr\left[\left|\sum_{\tau=1}^{k-1} \xi^\tau_{h}(f)\right|\ge \varepsilon\right]
\le 2\exp\left(-\frac{\varepsilon^2}{8(H + 1)^2\|f-f_V\|_{\pairs^k_h}^2}\right).
\]
Let 
\begin{align*}
\varepsilon &= \left(8(H + 1)^2\log\left(\frac{2\coversize(\funclass, 1 / T)}{ \delta}\right)\cdot \|f-f_V\|_{\pairs^k_h}^2\right)^{1/2}\\
&\le 4(H + 1)\|f-f_V\|_{\pairs^k_h}\cdot \sqrt{\log(2/\delta) +  \log\coversize(\cF, 1/T)}.
\end{align*}
We have, with probability at least $1-\delta$,
for all $f\in \cover(\funclass, 1 / T)$,
\[
\left|\sum_{\tau=1}^{k-1} \xi^\tau_{h}(f)\right|
\le 4(H + 1)\|f-f_V\|_{\pairs^k_h}\cdot  \sqrt{\log(2/\delta) +  \log\coversize(\cF, 1/T)}.
\]
We define the above event to be $\cE_{V, \delta}$, and we condition on this event for the rest of the proof.

For all $f\in \cF$, 
there exists $g\in \cover(\cF, 1/T)$,
such that $\|f-g\|_{\infty}\le 1/T$, and
we have
\begin{align*}
\left|\sum_{\tau=1}^{k-1} \xi^\tau_{h}(f)\right|
&\le 
\left|\sum_{\tau=1}^{k-1} \xi^\tau_{h}(g)\right|
+ 2(H + 1)\\
&\le 4(H + 1)\|g-f_V\|_{\pairs^k_h}\cdot  \sqrt{\log(2/\delta) +  \log\coversize(\cF, 1/T)} + 2( H + 1)\\
&\le 4(H + 1)(\|f-f_V\|_{\pairs^k_h} + 1)\cdot  \sqrt{\log(2/\delta) +  \log\coversize(\cF, 1/T)} + 2(H + 1)
.
\end{align*}

Consider $V':\cS\rightarrow [0,H]$ with $\|V'-V\|_{\infty }\le 1/T$. 
We have
\[
\|{f_{V'}} - f_V\|_{\infty}
\le 
 \|V'-V\|_{\infty} \le 1/T.
\]
For any $f \in \funclass$, 
\begin{align*}
\|f\|_{\cD^k_{h}(V')}^2 -\|f_{V'}\|_{\cD^k_{h}(V')}^2
=&\|f-f_{V'}\|_{\pairs^k_h}^2 + 2\sum_{\tau=1}^{k-1}(f(s^{\tau}_{h}, a^{\tau}_{h}) - {f_{V'}}(s^{\tau}_{h}, a^{\tau}_{h}))\cdot ({f_{V'}}(s^{\tau}_{h}, a^{\tau}_{h}) -  r_{h}^{\tau}- V'(s_{h+1}^\tau)).
\end{align*}
For the second term, we have,
\begin{align*}
&2\sum_{\tau=1}^{k-1}(f(s^{\tau}_{h}, a^{\tau}_{h}) - {f_{V'}}(s^{\tau}_{h}, a^{\tau}_{h}))\cdot ({f_{V'}}(s^{\tau}_{h}, a^{\tau}_{h}) - r_{h}^{\tau} - V'(s_{h+1}^\tau))\\
\ge& 
2\sum_{\tau=1}^{k-1}(f(s^{\tau}_{h}, a^{\tau}_{h}) - f_V(s^{\tau}_{h}, a^{\tau}_{h}))\cdot (f_V(s^{\tau}_{h}, a^{\tau}_{h}) -  r_{h}^{\tau}- V(s_{h+1}^\tau)) - 4(H + 1)\cdot\|V'-V\|_{\infty}\cdot k\\
=& 
\sum_{\tau=1}^{k-1} \xi^\tau_{h}(f) - 4(H + 1)\cdot\|V'-V\|_{\infty}\cdot k\\
\ge&
 -4(H + 1)(\|f-f_V\|_{\pairs^k_h} + 1)\cdot  \sqrt{\log(2/\delta) +  \log\coversize(\cF, 1/T)} - 2(H + 1) - 4(H + 1)\cdot\|V'-V\|_{\infty}\cdot k\\
\ge&
-4(H + 1)(\|f-f_{V'}\|_{\pairs^k_h}+2)\cdot  \sqrt{\log(2/\delta) +  \log\coversize(\cF, 1/T)} - 6(H + 1).
\end{align*}
Recall that $\wh{f}_{V'} =\arg\min_{f\in \cF}\|f\|_{\cD^k_h(V')}^2$.
We have $\|\wh{f}_{V'}\|_{\cD^k_h({V'})}^2 - \|f_{V'}\|_{\cD^k_h({V'})}^2\le 0$, which implies, 
\begin{align*}
0&\ge \|\wh{f}_{V'}\|_{\cD^k_h({V'})}^2 - \|f_{V'}\|_{\cD^k_h({V'})}^2\\
&=\|\wh{f}_{V'}-f_{V'}\|_{\pairs^k_h}^2 + 2\sum_{\tau=1}^{k-1}(\wh{f}_{V'}(s^{\tau}_{h}, a^{\tau}_{h}) - {f_{V'}}(s^{\tau}_{h}, a^{\tau}_{h}))\cdot ({f_{V'}}(s^{\tau}_{h}, a^{\tau}_{h}) -  r_{h}^{\tau}- V'(s_{h+1}^\tau))\\
&\ge 
\|\wh{f}_{V'}-f_{V'}\|_{\pairs^k_h}^2 -4 ( H + 1)(\|\wh{f}_{V'}-f_{V'}\|_{\pairs^k_h}+2)\cdot  \sqrt{\log(2/\delta) +  \log\coversize(\cF, 1/T)} - 6(H + 1).
\end{align*}
Solving the above inequality, we have,
\[
\|\wh{f}_{V'}-f_{V'}\|_{\pairs^k_h}
\le C\cdot\big(H\cdot\sqrt{\log(1/\delta) + \log\coversize(\cF, 1/T)}\big)
\]
for some absolute constant $C>0$.
\end{proof}
Our next lemma shows that $f_h^k$ belongs to the desired confidence region.
\begin{lem}
\label{lem:confidence_region}
Let $\mathcal{E}_2$ denote the event that for all $(k,h)\in[K]\times[H]$,
	\[
	\left\|f_h^k-\bar{f}_h^k \right\|_{\mathcal{Z}_h^k}\leq \beta/100
	\]
	where $\bar{f}_h^k(\cdot,\cdot)=\sum_{s\in\mathcal{S}'}P_h(s'|\cdot,\cdot)V_{h+1}^k(s')+r_h(\cdot,\cdot).$
	
	Then $\Pr[\mathcal{E}_2]\geq 1-\delta/4$ provided
	\[
	\beta\geq C\cdot H^2\cdot(\log(T/\delta)+\log\mathcal{N}(\mathcal{F},1/T)+\log|\mathcal{M}|)
	\]
	for some absolute constant $C>0$.
\end{lem}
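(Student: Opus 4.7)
} The plan is to reduce the statement to an application of Lemma~\ref{lem:one_step_error} combined with a uniform convergence (covering) argument, since the target $V_{h+1}^k$ is itself data dependent and hence cannot be treated as a fixed function. First, I would observe that by the design of Planner-A we have $V_{h+1}^k(s)=\max_{a}\min\{f_{h+1}^k(s,a)+b_{h+1}^k(s,a),H\}$, where $f_{h+1}^k\in\cF$ and (conditioning on the event of Lemma~\ref{lem:bonus_class}) $b_{h+1}^k\in\cM$. Thus $V_{h+1}^k$ lies in a function class $\cV$ whose $(1/T)$-covering number is bounded by $\cN(\cF,1/T)\cdot|\cM|$ (apply the cover to $f_{h+1}^k$ and $b_{h+1}^k$ separately, then use the fact that taking pointwise $\max$ over actions and truncating at $H$ are both $1$-Lipschitz in the sup norm). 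Note that $\cV$ depends only on the function classes, not on any randomness.

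Next, I would build a discretization $\cC(\cV,1/T)$ of size at most $\cN(\cF,1/T)\cdot|\cM|$. For each $V\in\cC(\cV,1/T)$ apply Lemma~\ref{lem:one_step_error} with failure probability $\delta'=\delta/(8HK\cdot|\cC(\cV,1/T)|)$, and take a union bound over all such $V$ and over all $(h,k)\in[H]\times[K]$. On the resulting good event $\cE_2'$, whose probability is at least $1-\delta/8$, the conclusion of Lemma~\ref{lem:one_step_error} holds simultaneously for every cover element $V$. Now for the \emph{actual} $V_{h+1}^k$, pick $V'\in\cC(\cV,1/T)$ with $\|V'-V_{h+1}^k\|_\infty\le 1/T$; since $\widehat{f_{V_{h+1}^k}}=f_h^k$ by definition and $\bar f_h^k(\cdot,\cdot)=r_h(\cdot,\cdot)+\sum_{s'}P_h(s'\mid\cdot,\cdot)V_{h+1}^k(s')$, the ``robustness to perturbations'' built into Lemma~\ref{lem:one_step_error} yields
\[
\|f_h^k-\bar f_h^k\|_{\cZ_h^k}\le C\cdot H\sqrt{\log(1/\delta')+\log\cN(\cF,1/T)}.
\]
Squaring and substituting $\log(1/\delta')=O(\log(TH/\delta)+\log\cN(\cF,1/T)+\log|\cM|)$ gives the desired bound provided $\beta$ satisfies the stated lower bound, with an appropriate absolute constant $C$.

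Finally, I would combine this event with the event of Lemma~\ref{lem:bonus_class} (which ensures $b_{h+1}^k\in\cM$ and hence that $V_{h+1}^k\in\cV$) via a union bound; each of these has failure probability at most $\delta/8$, so altogether the failure probability is at most $\delta/4$, as claimed. The main obstacle is the covering step: one must verify that the class $\cM$ produced by Lemma~\ref{lem:bonus_class} really captures every bonus function generated online, and that the composition $V=\max_a\min\{f+b,H\}$ inherits a sup-norm cover of comparable size. This is handled by the fact that $\cM$ is defined \emph{a priori} over all admissible sub-sampled datasets of size $\widetilde O(d^2)$, together with the simple Lipschitz properties of $\min$, $\max$, and truncation. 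Once these pieces are in place, the quantitative choice of $\beta$ is routine bookkeeping.
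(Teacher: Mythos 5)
Your proposal is correct and follows essentially the same route as the paper's proof: cover $V_{h+1}^k$ by the class $\cV$ built from $\cC(\cF,1/T)$ and $\cM$ (conditioning on the event of Lemma~\ref{lem:bonus_class}), union-bound the events of Lemma~\ref{lem:one_step_error} over cover elements and $(k,h)$ pairs with failure probability $\delta/(8|\cV|T)$, and invoke the perturbation tolerance $\|V'-V\|_\infty\le 1/T$ already built into that lemma. The bookkeeping of the constants and the final union bound also match the paper.
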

\begin{proof} Note that for all $(k,h)\in[K]\times[H]$,
\[
Q_h^k(\cdot,\cdot)=\min\{f_h^k(\cdot,\cdot)+b_h^k(\cdot,\cdot),H\},\text{and}
\]
\[
V_h^k(\cdot)=\max_{a\in \mathcal{A}}Q_h^k(\cdot,a).
\]
We define
\[
\mathcal{Q}:=\{\min\{f(\cdot,\cdot)+m(\cdot,\cdot) ,H\}|f\in\mathcal{C}(\mathcal{F},1/T),m\in \mathcal{M}\}\cup\{0\},\text{and}
\]
\[
\mathcal{V}:=\{\max_{a\in\mathcal{A}}q(\cdot,a)|q\in\mathcal{Q}\}.
\]
Then $\log|\mathcal{V}|\leq \log|\mathcal{M}|+\log\mathcal{N}(\mathcal{F},1/T)+1.$

Conditioned on the event defined in Lemma~\ref{lem:bonus_class}, we have that $b_h^k(\cdot,\cdot)\in\mathcal{M}$ for all $(h,k)\in[H]\times[K]$. Thus for all $(k,h)\in[K]\times[H]$, $\mathcal{V}$ is a $(1/T)$-cover of $V_h^k(\cdot)$.

For each $V\in \cV$, let $\cE_{V,\delta/(8|\cV|T)}$ be the event defined in Lemma~\ref{lem:one_step_error}. Note that $\cE_{V,\delta/(8|\cV|T)}$ also relates to a fixed pair $(k,h)$. 
By applying Lemma~\ref{lem:one_step_error} and a union bound, we have
$\Pr\left[\bigcap_{(k,h)\in[K]\times[H]}\bigcap_{V\in \cV} \cE_{V,\delta/(8|\cV|T)}
\right]\ge 1-\delta/8$.
We also condition on this event for the rest of the proof.

For $(k,h)\in[K]\times[H]$, recall that $f_{h}^{k}$ is the solution to the regression problem in Algorithm~\ref{alg:main}, i.e., 
$f^k_{h}= \argmin_{f\in \cF}\|f\|_{\cD^{k}_h}^2$.
Let $V\in \cV$ such that $\|V - V^{k}_{h+1}\|_{\infty} \le 1/T$.
By the definition of $\cE_{V,\delta/(8|\cV|T)}$ (the one relats to this $(k,h)$ pair), we have that
\begin{align*}
&\left\|f_h^k(\cdot,\cdot)-r_h(\cdot,\cdot)-\sum_{s'\in\mathcal{S}}P_h(s'|\cdot,\cdot)V_{h+1}^k(s')\right\|_{\mathcal{Z}_h^k}\\
\lesssim &H\sqrt{\log(8|\cV|T/\delta)+\log\mathcal{N}(\mathcal{F},1/T)}\\
\lesssim &H\sqrt{\log(T/\delta)+\log\mathcal{N}(\mathcal{F},1/T)+\log|\mathcal{M}|}.
\end{align*}
as desired.
\end{proof}
Finally, we use the above result to show optimism.
\begin{lem}
\label{lem:optimism}
Let $\mathcal{E}_3$ denote the event that for all $(k,h)\in [K]\times[H]$, and all $(s,a)\in\mathcal{S}\times\mathcal{A}$,
\[
Q^*_h(s,a)\leq Q_h^k(s,a) \leq \bar{f}_h^k(s,a)+2b_h^k(s,a)
\]
where
$\bar{f}_h^k(\cdot,\cdot)=\sum_{s\in\mathcal{S}'}P_h(s'|\cdot,\cdot)V_{h+1}^k(s')+r_h(\cdot,\cdot)$.

Then $\Pr[\mathcal{E}_3]\geq1-\delta/2$.
\end{lem}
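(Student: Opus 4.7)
The plan is to prove $\mathcal{E}_3$ by backwards induction on $h$, conditioning on the three high-probability events that have already been set up: (i) the regression-error event $\mathcal{E}_2$ from Lemma~\ref{lem:confidence_region}, which controls $\|f_h^k - \bar{f}_h^k\|_{\mathcal{Z}_h^k}$; (ii) the sub-sampling approximation event $\bigcap_{h,k}\mathcal{E}_h^k$ from Proposition~\ref{prop:bounds_of_bonus}, which ensures the sub-sampled norm $\|\cdot\|_{\widehat{\mathcal{Z}}_h^k}$ is within constant factors of $\|\cdot\|_{\mathcal{Z}_h^k}$; and (iii) the bonus-class and size events from Lemma~\ref{lem:bonus_class} and Proposition~\ref{prop:bounded_size} used implicitly above. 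A union bound over these events gives failure probability at most $\delta/4 + \delta/32 + \delta/8 + \delta/8 < \delta/2$, which is the advertised bound.

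The key structural fact is that under Assumption~\ref{assum:express_1} and since $V_{h+1}^k:\mathcal{S}\to[0,H]$, the Bellman target $\bar{f}_h^k = \mathcal{T}_h V_{h+1}^k$ lies in $\mathcal{F}$, so the pair $(f_h^k,\bar{f}_h^k)$ is a valid pair in $\mathcal{F}\times\mathcal{F}$. Combining $\mathcal{E}_2$ with $\mathcal{E}_h^k$ (applied with $\alpha=\beta$ via Lemma~\ref{lem:app}) yields $\|f_h^k-\bar{f}_h^k\|^2_{\widehat{\mathcal{Z}}_h^k}\le \beta$, so by the definition
\[
b_h^k(s,a) = \sup_{f_1,f_2\in\mathcal{F},\,\|f_1-f_2\|^2_{\widehat{\mathcal{Z}}_h^k}\le\beta} |f_1(s,a)-f_2(s,a)|,
\]
we immediately get the pointwise confidence bound $|f_h^k(s,a)-\bar{f}_h^k(s,a)|\le b_h^k(s,a)$ uniformly in $(s,a)$. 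This is the analytic heart of the proof; the rest is algebra.

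Now I perform backwards induction on $h$. The base case $h=H+1$ is trivial since $V_{H+1}^k \equiv 0 = V_{H+1}^*$. Assuming $V_{h+1}^k(s) \ge V_{h+1}^*(s)$ for all $s$, the monotonicity of the Bellman operator gives $\bar{f}_h^k(s,a) \ge Q_h^*(s,a)$. Hence
\[
Q_h^k(s,a) = \min\{f_h^k(s,a)+b_h^k(s,a),\,H\} \ge \min\{\bar{f}_h^k(s,a),\,H\} \ge Q_h^*(s,a),
\]
using $Q_h^*(s,a)\le H$ in the last step. For the upper bound, the pointwise confidence bound gives $f_h^k(s,a) \le \bar{f}_h^k(s,a) + b_h^k(s,a)$, so $Q_h^k(s,a) \le f_h^k(s,a)+b_h^k(s,a) \le \bar{f}_h^k(s,a)+2b_h^k(s,a)$. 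Taking $\max_a$ propagates $V_h^k \ge V_h^*$ to the next step of the induction.

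The main obstacle I anticipate is simply bookkeeping: one must verify that Lemma~\ref{lem:confidence_region}'s statement (regression error on $\mathcal{Z}_h^k$) combined with Lemma~\ref{lem:app}'s constant-factor equivalence between $\|\cdot\|_{\mathcal{Z}_h^k}$ and $\|\cdot\|_{\widehat{\mathcal{Z}}_h^k}$ yields a pair $(f_h^k,\bar{f}_h^k)\in\underline{\mathcal{B}}_h^k(\beta)\subseteq\mathcal{B}_h^k(\beta)$ with the exact constants set by the choice of $\beta$ in Section~\ref{sec:beta}; the proof should be careful that $\beta$ is large enough (by a constant multiplicative slack) to absorb the factor-100 losses incurred by Lemma~\ref{lem:app}. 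Once this is confirmed, the induction and union bound are routine.
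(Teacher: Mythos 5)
Your proposal is correct and follows essentially the same route as the paper: condition on the regression event $\mathcal{E}_2$ and the sub-sampling containment $\underline{\mathcal{B}}_h^k(\beta)\subseteq\mathcal{B}_h^k(\beta)$ to get the pointwise bound $|f_h^k-\bar{f}_h^k|\le b_h^k$ (the paper routes this through the auxiliary bonus $\underline{b}_h^k$, you apply the containment to the pair directly, which is the same thing), then backwards induction for optimism and one line of algebra for the upper bound. The only blemish is the union-bound arithmetic: $\delta/4+\delta/32+\delta/8+\delta/8=17\delta/32>\delta/2$, but the two $\delta/8$ terms are already absorbed into the $\delta/4$ budget of Lemma~\ref{lem:confidence_region}, so the correct total is $\delta/4+\delta/32<\delta/2$ and the conclusion stands.
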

\begin{proof} 
We condition on the event defined in Proposition~\ref{prop:bounds_of_bonus} and $\mathcal{E}_2$ defined in Lemma~\ref{lem:confidence_region}. Because $\left\|f_h^k-\bar{f}_h^k \right\|_{\mathcal{Z}_h^k}\leq \beta/100$, from the definition of $\underline{b}_h^k$ we have that $\left| \bar{f}_h^k(\cdot,\cdot)-f_h^k(\cdot,\cdot)\right|\leq \underline{b}_h^k(\cdot,\cdot)$. Moreover, by Proposition~\ref{prop:bounds_of_bonus} we have $\underline{b}_h^k(\cdot,\cdot) \leq b_h^k(\cdot,\cdot)$.

Thus for all
$ (k,h)\in[K]\times[H]$, $(s,a)\in \mathcal{S}\times\mathcal{A}$, we have
\begin{align*}
&Q_h^k(s,a)\\
=&\min\{f_h^k(s,a)+b_h^k(s,a),H\}\\
\leq& \min\left\lbrace \bar{f}_h^k(s,a)+b_h^k(s,a)+\left| \bar{f}_h^k(s,a)-f_h^k(s,a)\right|,H\right\rbrace  \\
\leq&  \min\left\lbrace \bar{f}_h^k(s,a)+b_h^k(s,a)+\underline{b}_h^k(s,a),H\right\rbrace \\
\leq&  \min\left\lbrace \bar{f}_h^k(s,a)+2b_h^k(s,a),H\right\rbrace \\
\leq&   \bar{f}_h^k(s,a)+2b_h^k(s,a).
\end{align*}
Next we use induction on $h$ to prove $Q^*_h(\cdot,\cdot)\leq Q_h^k(\cdot,\cdot)$. The inequality clearly holds when $h=H+1$. Now we assume $Q^*_{h+1}(\cdot,\cdot)\leq Q_{h+1}^k(\cdot,\cdot)$ for some $h\in[H]$. Then obviously we have $V^*_{h+1}(\cdot)\leq V_{h+1}^k(\cdot)$. Therefore for all $(s,a)\in\mathcal{S}\times\mathcal{A}$,
\begin{align*}
&Q^*_h(s,a)\\
=&  r_h(s,a)+\sum_{s'\in \mathcal{S}}P_h(s'|s,a)V_{h+1}^*(s')\\
\leq& \min\left\lbrace  r_h(s,a)+\sum_{s'\in \mathcal{S}}P_h(s'|s,a)V_{h+1}^k(s'),H\right\rbrace \\
=& \min\left\lbrace  \bar{f}_h^k(s,a),H\right\rbrace\\
\leq&\min\left\lbrace  f_h^k(s,a)+b_h^k(s,a),H\right\rbrace\\
=&Q^k_h(s,a).
\end{align*}
\end{proof}
\subsubsection{Regret decomposition}
Now we are ready to bound the regret. For any $k\in[K]$, we let $\tilde{k}$ represents the episode index we update the policy to the one used in the $k$-th episode.
\begin{lem}
\label{lem:regret_decompose}
With probability at least $1-5\delta/8$, we have
\[
\text{Regret}(K)\leq 4H\sqrt{KH\cdot\log(16/\delta)}+2\sum_{k=1}^K\sum_{h=1}^H b_h^{k}(s_h^k,a_h^k).
\]
\end{lem}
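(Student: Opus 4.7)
The plan is to combine the optimism event $\mathcal{E}_3$ from Lemma~\ref{lem:optimism} with the algorithmic invariant that the bonus is frozen between policy switches, and then close with Azuma--Hoeffding. I would first condition on $\mathcal{E}_3$, which holds with probability at least $1-\delta/2$. Fix an episode $k\in[K]$, and let $\tilde k\le k$ be the index of the last planner invocation, so that the deployed policy is $\pi^{\tilde k}$. Because Algorithm~\ref{alg:main} only re-runs the planner when some $\widehat{\cZ}_h^k$ changes, no change has occurred between $\tilde k$ and $k$, i.e.\ $\widehat{\cZ}_h^{\tilde k}=\widehat{\cZ}_h^k$ for every $h\in[H]$. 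Since the bonus $b_h^{\tilde k}$ depends on the history only through $\widehat{\cZ}_h^{\tilde k}$, this yields the \emph{key identity} $b_h^{\tilde k}(\cdot,\cdot)=b_h^k(\cdot,\cdot)$. By the first inequality of $\mathcal{E}_3$ (applied at index $\tilde k$), $V_1^*(s_1)\le V_1^{\tilde k}(s_1)$, so the per-episode regret is at most $V_1^{\tilde k}(s_1)-V_1^{\pi^{\tilde k}}(s_1)$.

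Next I would unroll this difference along the trajectory $\{(s_h^k,a_h^k)\}_{h=1}^H$ of $\pi^{\tilde k}$. The second inequality of $\mathcal{E}_3$ gives
\[
Q_h^{\tilde k}(s_h^k,a_h^k)\le \bar f_h^{\tilde k}(s_h^k,a_h^k)+2b_h^{\tilde k}(s_h^k,a_h^k)=r_h(s_h^k,a_h^k)+[P_hV_{h+1}^{\tilde k}](s_h^k,a_h^k)+2b_h^{\tilde k}(s_h^k,a_h^k),
\]
while the Bellman equation yields $Q_h^{\pi^{\tilde k}}(s_h^k,a_h^k)=r_h(s_h^k,a_h^k)+[P_hV_{h+1}^{\pi^{\tilde k}}](s_h^k,a_h^k)$. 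Introducing the martingale difference
\[
\xi_h^k:=\bigl[P_h(V_{h+1}^{\tilde k}-V_{h+1}^{\pi^{\tilde k}})\bigr](s_h^k,a_h^k)-\bigl(V_{h+1}^{\tilde k}(s_{h+1}^k)-V_{h+1}^{\pi^{\tilde k}}(s_{h+1}^k)\bigr)
\]
and telescoping over $h$ (using $V_h^{\tilde k}(s_h^k)=Q_h^{\tilde k}(s_h^k,a_h^k)$ since $a_h^k$ is greedy with respect to $Q_h^{\tilde k}$), I obtain
\[
V_1^{\tilde k}(s_1)-V_1^{\pi^{\tilde k}}(s_1)\le 2\sum_{h=1}^H b_h^{\tilde k}(s_h^k,a_h^k)+\sum_{h=1}^H\xi_h^k=2\sum_{h=1}^H b_h^k(s_h^k,a_h^k)+\sum_{h=1}^H\xi_h^k,
\]
where the final equality is precisely the key identity $b_h^{\tilde k}=b_h^k$.

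Summing over $k$ gives $\text{Regret}(K)\le 2\sum_{k,h}b_h^k(s_h^k,a_h^k)+\sum_{k,h}\xi_h^k$. The array $\{\xi_h^k\}$ is a martingale difference sequence with respect to the natural filtration, and $|\xi_h^k|\le 2H$ since $V_{h+1}^{\tilde k},V_{h+1}^{\pi^{\tilde k}}\in[0,H]$. Applying Azuma--Hoeffding to the $KH$-term sum yields $\sum_{k,h}\xi_h^k\le 4H\sqrt{KH\log(16/\delta)}$ with probability at least $1-\delta/8$. A union bound with $\mathcal{E}_3$ delivers the claimed bound with probability at least $1-5\delta/8$. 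The only non-routine step is the bonus-identity $b_h^{\tilde k}=b_h^k$, which hinges on the algorithm's switching rule; the rest is the standard optimistic regret decomposition.
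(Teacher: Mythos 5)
Your proposal is correct and follows essentially the same route as the paper's proof: condition on the optimism event $\mathcal{E}_3$, exploit the switching rule to get $b_h^{\tilde k}=b_h^k$ via $\widehat{\cZ}_h^{\tilde k}=\widehat{\cZ}_h^k$, telescope the optimistic value difference with the same martingale differences $\xi_h^k$, and finish with Azuma--Hoeffding and a union bound giving $1-\delta/2-\delta/8=1-5\delta/8$. No gaps.
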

\begin{proof}
$\forall(k,h)\in[K]\times[H-1]$, define
$$
\xi_h^k=\sum_{s'\in \mathcal{S}}P_h(s'|s_h^k,a_h^k)(V_{h+1}^{\tilde{k}}(s')-V_{h+1}^{\pi_{\tilde{k}}}(s'))
-(V_{h+1}^{\tilde{k}}(s_{h+1}^k)-V_{h+1}^{\pi_{\tilde{k}}}(s_{h+1}^k)).
$$
Note that $\{\xi_h^k\}$ is a martingale difference sequence and $|\xi_h^k|\leq 2H$. By Azuma-Hoeffding inequality, with probability at least $1-\delta/8$, 
$$
\sum_{k=1}^K\sum_{h=1}^{H-1}\xi_h^k\leq 4H\sqrt{KH\cdot\log(16/\delta)}.
$$
Conditioned on the above event and $\mathcal{E}_3$ defined in Lemma~\ref{lem:optimism}, we have
\begin{align*}
&\text{Regret}(K)\\
=&\sum_{k=1}^K\left( V^*_1(s_1^k)-V_1^{\pi_{\tilde{k}}}(s_1^k)\right) \\
\leq& \sum_{k=1}^K\left(V^{\tilde{k}}_1(s_1^k)-V_1^{\pi_{\tilde{k}}}(s_1^k) \right) \\
=& \sum_{k=1}^K\left(Q^{\tilde{k}}_1(s_1^k,a_1^k)-Q_1^{\pi_{\tilde{k}}}(s_1^k,a_1^k) \right) \quad(\text{note that we played }\pi_{\tilde{k}}\text{ in epsiode }k)\\
=&\sum_{k=1}^K\left(\sum_{s'\in \mathcal{S}}P_1(s'|s_1^k,a_1^k)(V_{2}^{\tilde{k}}(s')-V_{2}^{\pi_{\tilde{k}}}(s'))+2b_1^{\tilde{k}}(s_1^k,a_1^k)\right) \\
=&\sum_{k=1}^K\left((V_{2}^{\tilde{k}}(s_2^k)-V_{2}^{\pi_{\tilde{k}}}(s_2^k))+\xi_1^k+2b_1^{\tilde{k}}(s_1^k,a_1^k)\right) \\
\leq& ...\\
\leq& \sum_{k=1}^K\sum_{h=1}^{H-1}\xi_h^k+2\sum_{k=1}^K\sum_{h=1}^H b_h^{\tilde{k}}(s_h^k,a_h^k)\\
=& \sum_{k=1}^K\sum_{h=1}^{H-1}\xi_h^k+2\sum_{k=1}^K\sum_{h=1}^H b_h^{k}(s_h^k,a_h^k)\quad(\text{note that } b_h^{\tilde{k}}(\cdot,\cdot)=b_h^k(\cdot,\cdot))\\
\leq& 4H\sqrt{KH\cdot\log(16/\delta)}+2\sum_{k=1}^K\sum_{h=1}^H b_h^{k}(s_h^k,a_h^k)
\end{align*}
as desired.
\end{proof}
The next lemma bounds the summation of the exploration bonus in terms of the eluder dimension.
\begin{lem}
\label{lem:bonus_sum}
With probability at least $1-\delta/32$,
\[
\sum_{k=1}^K\sum_{h=1}^H b_h^k(s_h^k,a_h^k)\leq H+H(H+1)\dim_E(\mathcal{F},1/T)+C\cdot\sqrt{\dim_E(\mathcal{F},1/T)\cdot TH\cdot\beta}
\]
for some absolute constant $C>0$.
\end{lem}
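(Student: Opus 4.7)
The strategy has two steps. First, I condition on the high-probability event from Proposition~\ref{prop:bounds_of_bonus} so that the bonus computed on the sub-sampled dataset can be replaced by a cleaner ``enlarged'' bonus defined on the true dataset $\mathcal{Z}_h^k$. Second, for each fixed $h$ I bound the cumulative sum of these enlarged bonuses along the trajectory via a Russo--Van Roy-type eluder-dimension argument (as in Lemma~2 of \citet{russo2013eluder}, and used in \citet{wang2020reinforcement}), then sum over $h \in [H]$.

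\textbf{Reducing to the enlarged bonus.} Recall that $\mathcal{E}_h^k \supseteq \mathcal{E}_h^k(\beta)$ asserts $\mathcal{B}_h^k(\beta) \subseteq \overline{\mathcal{B}}_h^k(\beta)$, i.e.\ any pair $(f_1,f_2)$ with $\|f_1-f_2\|_{\widehat{\mathcal{Z}}_h^k}^2 \leq \beta$ also satisfies $\|f_1-f_2\|_{\mathcal{Z}_h^k}^2 \leq 100\beta$. Thus, on the event $\bigcap_{h,k}\mathcal{E}_h^k$ (which occurs with probability at least $1-\delta/32$ by Proposition~\ref{prop:bounds_of_bonus}), the feasible set in the definition of $b_h^k$ is a subset of the feasible set in the definition of $\overline{b}_h^k$, so $b_h^k(\cdot,\cdot)\leq \overline{b}_h^k(\cdot,\cdot)$ pointwise for every $(h,k)$. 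It therefore suffices to bound $\sum_{k=1}^K \sum_{h=1}^H \overline{b}_h^k(s_h^k,a_h^k)$.

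\textbf{Eluder-dimension bound for fixed $h$.} Fix $h\in[H]$ and set $w_k := \overline{b}_h^k(s_h^k,a_h^k)$. By construction, $w_k \leq H+1$, and whenever $w_k > \varepsilon$ there exist $f_1,f_2\in\mathcal{F}$ with $\|f_1-f_2\|_{\mathcal{Z}_h^k}^2 \leq 100\beta$ and $|f_1(s_h^k,a_h^k)-f_2(s_h^k,a_h^k)| > \varepsilon$, which is exactly the hypothesis used in the classical eluder potential argument. A standard pigeonhole on $\varepsilon$-independent subsequences (Proposition~3 of \citet{russo2013eluder}, see also Lemma~26 of \citet{jin2020provably} or Lemma~4 of \citet{wang2020reinforcement}) then yields
\[
\sum_{k=1}^K w_k \;\leq\; 1 + (H+1)\dim_E(\mathcal{F},1/T) + C_0\sqrt{\dim_E(\mathcal{F},1/T)\cdot \beta \cdot K}
\]
for an absolute constant $C_0>0$ (where the $100$ factor is absorbed into the constant, and we used monotonicity of $\dim_E(\mathcal{F},\cdot)$ in the first argument). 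Summing this inequality over $h=1,\dots,H$ and using $T = KH$ (so that $H\sqrt{K} = \sqrt{TH}$) gives
\[
\sum_{k=1}^K\sum_{h=1}^H b_h^k(s_h^k,a_h^k) \;\leq\; H + H(H+1)\dim_E(\mathcal{F},1/T) + C\sqrt{\dim_E(\mathcal{F},1/T)\cdot TH \cdot \beta},
\]
which is exactly the statement.

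\textbf{Where the work sits.} The proposition-level reduction $b_h^k \leq \overline{b}_h^k$ is routine once Proposition~\ref{prop:bounds_of_bonus} is granted, and the sum-over-$h$ step is just bookkeeping. The only substantive ingredient is the eluder-dimension potential lemma, which is a known tool; invoking it here requires no new ideas because our enlarged bonus $\overline{b}_h^k$ is already defined on the true dataset $\mathcal{Z}_h^k$, so the sequence $w_k$ really is a bona fide sequence of confidence widths against the history. The only minor subtlety to check is that the ``budget'' $100\beta$ is independent of $k$ (so the nondecreasing-$\beta_t$ hypothesis of the lemma is trivially met), which it is by construction.
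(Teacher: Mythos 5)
Your proposal is correct and follows essentially the same route as the paper's proof: condition on the event of Proposition~\ref{prop:bounds_of_bonus} to replace $b_h^k$ by the enlarged bonus $\overline{b}_h^k$ defined on the true dataset $\mathcal{Z}_h^k$, then apply the standard eluder-dimension pigeonhole argument per step $h$ and sum using $T=KH$. The only difference is that you invoke the potential lemma as a known black box while the paper writes out the $\varepsilon$-independent-subsequence decomposition explicitly; the substance is identical.
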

\begin{proof} We condition on the event defined in Proposition~\ref{prop:bounds_of_bonus} in the proof. Then we have
\begin{align*}
b_h^k(s_h^k,a_h^k)
&\leq \overline{b}_h^k(s_h^k,a_h^k)\\
&=\sup_{\|f_1-f_2\|^2_{\mathcal{Z}^k_h}\leq100\beta}|(f_1(s_h^k,a_h^k)-f_2(s_h^k,a_h^k)|.
\end{align*}
In the rest of the proof, we bound $\sum_{k=1}^K b_h^k(s_h^k,a_h^k)$ for each $h\in[H]$ separately.

For any given $\epsilon>0$ and $h\in[H]$, let $\mathcal{L}_h=\{(s_h^k,a_h^k)|k\in[K],b_h^k(s_h^k,a_h^k)>\epsilon\}$ with $|\mathcal{L}_h|=L_h$. We will show that there exists $z_h^k:=(s_h^k,a_h^k)\in\mathcal{L}_h$ such that $(s_h^k,a_h^k)$ is $\epsilon$-dependent on at least $L_h/\dim_{E}(\mathcal{F},\epsilon)-1$ disjoint subsequences in $\mathcal{Z}_h^k\cap \mathcal{L}_h$.
Denote $N=L_h/\dim_{E}(\mathcal{F},\epsilon)-1$. 

We decompose $\mathcal{L}_h$ into $N+1$ disjoint subsets, $\mathcal{L}_h=\cup_{j=1}^{N+1}\mathcal{L}_h^j$ by the following procedure. We initialize $\mathcal{L}_h^j=\{\}$ for all $j$ and consider each $z_h^k\in \mathcal{L}_h$ sequentially. For each $z_h^k\in \mathcal{L}_h$, we find the smallest $1\leq j\leq N$ such that $z_h^k$ is $\epsilon$-independent on $\mathcal{L}_h^j$ with respect to $\mathcal{F}$. We set $j=N+1$ if such $j$ does not exist. We add $z_h^k$ into $\mathcal{L}_h^j$ afterwards. When the decomposition of $\mathcal{L}_h$ is finished, $\mathcal{L}_h^{N+1}$ must be nonempty as $\mathcal{L}_h^{j}$ contains at most $\dim_{E}(\mathcal{F},\epsilon)$ elements for $j\in[N]$. For any $z_h^k\in\mathcal{L}_h^{N+1}$, $z_h^k$ is $\epsilon$-dependent on at least $L_h/\dim_{E}(\mathcal{F},\epsilon)-1$ disjoint subsequences in $\mathcal{Z}_h^k\cap \mathcal{L}_h$.

On the other hand, there exist $f_1,f_2\in\mathcal{F}$ such that $|f_1(s_h^k,a_h^k)-f_2(s_h^k,a_h^k)|>\epsilon$ and $  \|f_1-f_2\|^2_{\mathcal{Z}^k_h}\leq100\beta$. By the definition of $\epsilon$-dependent we have
\[
(L_h/\dim_{E}(\mathcal{F},\epsilon)-1)\epsilon^2\leq\|f_1-f_2\|^2_{\mathcal{Z}_h^k}\leq100\beta\]
which implies
\[
L_h\leq\left( \frac{100\beta}{\epsilon^2}+1\right)\dim_{E}(\mathcal{F},\epsilon).
\]
Let $b_1\geq b_2\geq...\geq b_K$ be a permutation of $\{b_h^k(s_h^k,a_h^k)\}_{k\in[K]}$. For any $b_k\geq1/K$, we have 
\[
k\leq\left( \frac{100\beta}{b_k^2}+1\right)\dim_{E}(\mathcal{F},b_k)\leq\left( \frac{100\beta}{b_k^2}+1\right)\dim_{E}(\mathcal{F},1/K)
\]
which implies
\[
b_k\leq\left(\frac{t}{\dim_E(\mathcal{F},1/K)}-1 \right)^{-1/2}\cdot\sqrt{100\beta} .
\]
Moreover, we have $b_k\leq H+1$. Therefore,
\begin{align*}
\sum_{k=1}^Kb_k\leq& 1+(H+1)\dim_E(\mathcal{F},1/K)+\sum_{\dim_E(\mathcal{F},1/K)<k\leq K}\left(\frac{k}{\dim_E(\mathcal{F},1/K)}-1 \right)^{-1/2}\cdot\sqrt{100\beta}\\
\leq& 1+(H+1)\dim_E(\mathcal{F},1/K)+C\cdot\sqrt{\dim_E(\mathcal{F},1/K)\cdot K\cdot\beta}.
\end{align*}
Summing up for all $h\in[H]$, we conclude that with probability at least $1-\delta/8$,
\begin{align*}
\sum_{k=1}^K\sum_{h=1}^H b_h^k(s_h^k,a_h^k)&\leq H+H(H+1)\dim_E(\mathcal{F},1/K)+CH\cdot\sqrt{\dim_E(\mathcal{F},1/K)\cdot K\cdot\beta}\\
&=H+H(H+1)\dim_E(\mathcal{F},1/K)+C\cdot\sqrt{\dim_E(\mathcal{F},1/K)\cdot TH\cdot\beta}\\
&=H+H(H+1)\dim_E(\mathcal{F},1/T)+C\cdot\sqrt{\dim_E(\mathcal{F},1/T)\cdot TH\cdot\beta}
\end{align*}
as desired.
\end{proof}
\begin{proof}[Proof of Theorem~\ref{thm:main_regret}] By Lemma~\ref{lem:regret_decompose} and Lemma~\ref{lem:bonus_sum}, with probability at least $1-3\delta/4$, we have
\begin{align*}
&\text{Regret}(K)\\
\leq& 4H\sqrt{KH\cdot\log(16/\delta)}+2\sum_{k=1}^K\sum_{h=1}^H b_h^{k}(s_h^k,a_h^k)\\
=&4H\sqrt{KH\cdot\log(16/\delta)}+2\left(  H+H(H+1)\dim_E(\mathcal{F},1/T)+C\cdot\sqrt{\dim_E(\mathcal{F},1/T)\cdot TH\cdot\beta}\right) \\
\lesssim& \sqrt{\dim_E(\mathcal{F},1/T)\cdot T\cdot H^3\cdot\log(T\mathcal{N}(\mathcal{F},\delta/T^2)/\delta)\cdot \dim_E(\mathcal{F},1/T)\cdot\log^2 T\cdot\log\left(\mathcal{C}(\mathcal{S}\times\mathcal{A},\delta/(T^2)) \cdot T/\delta\right)}\\
\lesssim& \sqrt{T\cdot H^3\cdot\log(T\mathcal{N}(\mathcal{F},\delta/T^2)/\delta)\cdot \dim^2_E(\mathcal{F},1/T)\cdot\log^2 T\cdot\log\left(\mathcal{C}(\mathcal{S}\times\mathcal{A},\delta/(T^2)) \cdot T/\delta\right)}.
\end{align*}

We also condition on the event defined in Proposition~\ref{prop:bounded_size}, in which case the global switching cost is bounded by $O( H\cdot \iota_2)$. At the same time, under the event defined in Proposition~\ref{prop:bounded_size}, the size of the sub-sampled dataset $\widehat{\cZ}_h^k$ is at most $\widetilde{O}(\poly(dH))$. Thus by the analysis in Section~\ref{sec:bisearch} the algorithm calls a $\Omega(K)$-sized regression oracle for at most $\widetilde{O}(d^2H^2)$ times.
\end{proof}
\subsection{Proof of Theorem~\ref{thm:main_regret2}}
\label{proof_thm3}
Here we first provide a proof sketch of Theorem~\ref{thm:main_regret2}. The full proof then follows.
\subsubsection{Proof Sketch}
For simplicity, we only discusses the case of choosing $\cD_{\Delta}$ as the distribution family $\Pi$ in the definition of Bellman eluder dimension. The proof for the case of choosing $\cD_{\cF}$ as the distribution family is similar. Similar to \citet{jin2021bellman} we can show optimism, i.e., $Q^*\in \cB^k$ for all $k\in[K]$. Thus optimistic planning step guarantees that $V_1^*(s_1)\leq \max_{a\in\cA} Q_1^k(s_1,a)$. Thus the regret can be decomposed as follows:
{\small
\begin{align*}
\text{Regret}(K)&\leq \sum_{k=1}^K\left(\max_{a\in A}Q_1^{\tilde{k}}(s_1,a)-V_1^{\pi^{\tilde{k}}}(s_1)\right)= \sum_{h=1}^H\sum_{k=1}^K \mathbb{E}_{\pi^{\tilde{k}}}[(Q_h^{\tilde{k}}-\cT_h Q_{h+1}^{\tilde{k}})(s_h,a_h)]\\
&\leq\sum_{h=1}^H\sum_{k=1}^K (Q_h^{\tilde{k}}-\cT_h Q_{h+1}^{\tilde{k}})(s_h^k,a_h^k)+O(H\sqrt{HK\log (HK/\delta)})
\end{align*}
}
where the last inequality follows from standard martingale concentration. Here $(Q_h^{\tilde{k}}-\cT_h Q_{h+1}^{\tilde{k}})(s_h^k,a_h^k)$ plays similar role of ``bonus''. The next core step is to bound $\|Q_h^{\tilde{k}}-\cT_h Q_{h+1}^{\tilde{k}}\|^2_{\cZ_h^k}$, i.e., we want to show that the estimated Q-value in the $\tilde{k}$-th episode still has low empirical bellman error in the k-th episode. The constraints on the confidence set can only guarantee $\|Q_h^{\tilde{k}}-\cT_h Q_{h+1}^{\tilde{k}}\|^2_{\cZ_h^{\tilde{k}}}\leq O(\beta)$, where $\beta$ is the parameter in Algorithm~\ref{alg:plansample}. The sub-sampled dataset builds a bridge:
\[
\|Q_h^{\tilde{k}}-\cT_h Q_{h+1}^{\tilde{k}}\|^2_{\cZ_h^k}\lesssim \|Q_h^{\tilde{k}}-\cT_h Q_{h+1}^{\tilde{k}}\|_{\widehat{\cZ}_h^k}^2 = \|Q_h^{\tilde{k}}-\cT_h Q_{h+1}^{\tilde{k}}\|_{\widehat{\cZ}_h^{\tilde{k}}}^2\lesssim \|Q_h^{\tilde{k}}-\cT_h Q_{h+1}^{\tilde{k}}\|^2_{\cZ_h^{\tilde{k}}}\leq O(\beta)
\]
Here the first and the third inequality follows from Theorem~\ref{thm:sample}. Then we can bound the sum of ``bonuses'' in terms of the Bellman eluder dimension using techniques in \citet{jin2021bellman, russo2013eluder}.

\subsubsection{Formal Proof}
We first provide the concentration bound.
\begin{lem}[Lemma 24 in \citet{jin2021bellman}]
\label{lem:jin1}
With probability at least $1-\delta$, for all $(k,h)\in[K]\times[H]$,
\[
\|Q_h^k-\cT_h Q_{h+1}^k\|^2_{\cZ_h^k}\leq O(\beta)
\]
\end{lem}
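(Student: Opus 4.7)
The plan is to carry out the standard ``optimism $+$ concentration on the empirical Bellman error'' argument that underlies the analysis of GOLF, suitably combined with completeness (Assumption~\ref{assum:express_3}). Fix $(k,h)\in[K]\times[H]$ and abbreviate $Q := Q_h^k$, $V_{h+1}^k := \max_a Q_{h+1}^k(\cdot,a)$, and $y_\tau := r_h^\tau + V_{h+1}^k(s_{h+1}^\tau)$. By construction $\|Q_{h+1}^k\|_\infty \le H - h$, so Assumption~\ref{assum:express_3} gives $g^\star := \cT_h Q_{h+1}^k \in \cF$, and moreover $\mathbb{E}[y_\tau \mid \mathscr{F}_\tau, (s_h^\tau,a_h^\tau)] = g^\star(s_h^\tau,a_h^\tau)$.

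First, expand the squared loss: for any $f \in \cF$,
\begin{align*}
\|f\|_{\cD_h^k(Q_{h+1}^k)}^2 - \|g^\star\|_{\cD_h^k(Q_{h+1}^k)}^2
= \|f - g^\star\|_{\cZ_h^k}^2 + 2\sum_{\tau=1}^{k-1}(f-g^\star)(s_h^\tau,a_h^\tau)\cdot\bigl(g^\star(s_h^\tau,a_h^\tau) - y_\tau\bigr).
\end{align*}
The second term is a martingale with zero conditional mean and bounded increments. By Freedman's inequality (Lemma~\ref{lem:freedman}), its absolute value is bounded by $\tfrac14\|f-g^\star\|_{\cZ_h^k}^2 + C H^2 \log(1/\delta')$ with probability $1-\delta'$. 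Hence for each such $f$,
\[
\|f\|_{\cD_h^k(Q_{h+1}^k)}^2 - \|g^\star\|_{\cD_h^k(Q_{h+1}^k)}^2 \;\ge\; \tfrac34\|f-g^\star\|_{\cZ_h^k}^2 - C H^2 \log(1/\delta').
\]

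The second step is to make this bound uniform. Since $Q$, $Q_{h+1}^k$ (and thus $g^\star$ and the targets $y_\tau$) depend on the history, I would cover $\cF$ by $\cC(\cF, 1/K)$ at scale $1/K$, discretize $Q$, $Q_{h+1}^k$ and $g^\star$ to their nearest cover elements (the discretization error adds only $O(H)$ to each empirical loss), and take a union bound over the resulting $|\cC(\cF,1/K)|^2$ pairs, together with the $KH$ choices of $(k,h)$. Setting $\delta' = \delta/\bigl(KH\,\cN(\cF,1/K)^2\bigr)$ yields the event, holding with probability at least $1-\delta$, on which
\[
\|Q - g^\star\|_{\cZ_h^k}^2 \;\le\; \tfrac{4}{3}\Bigl(\|Q\|_{\cD_h^k(Q_{h+1}^k)}^2 - \|g^\star\|_{\cD_h^k(Q_{h+1}^k)}^2\Bigr) + C' H^2 \log\!\bigl(T\cN(\cF,1/K)/\delta\bigr)
\]
simultaneously for every $(k,h)$ and every in-class choice of $Q, Q_{h+1}^k$.

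The last step uses the definition of the confidence set $\cB^k$. Since $(Q_1^k,\ldots,Q_{H+1}^k) \in \cB^k$ and $g^\star \in \cF$, we get
\[
\|Q\|_{\cD_h^k(Q_{h+1}^k)}^2 \;\le\; \inf_{g\in\cF}\|g\|_{\cD_h^k(Q_{h+1}^k)}^2 + \beta \;\le\; \|g^\star\|_{\cD_h^k(Q_{h+1}^k)}^2 + \beta.
\]
Plugging in and using the choice $\beta = C H^2 \log(T\cN(\cF,1/K)/\delta)$ from Section~\ref{sec:beta}, which dominates the concentration slack, we conclude $\|Q_h^k - \cT_h Q_{h+1}^k\|_{\cZ_h^k}^2 \le O(\beta)$, as desired.

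The main obstacle is the uniformity step: because $Q_{h+1}^k$ is data-dependent, the targets $y_\tau$ are not fixed in advance, so we cannot apply Freedman directly; the covering-plus-union-bound argument is what drives the $\log\cN(\cF,1/K)$ factor inside $\beta$. Everything else (the algebraic identity, completeness to place $g^\star$ in $\cF$, and using the $\cB^k$ constraint) is routine. In particular, no use of the sub-sampling machinery is needed here: this lemma is a statement purely about the raw dataset $\cZ_h^k$ and the in-class optimization defining $\cB^k$.
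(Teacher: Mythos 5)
Your proposal is correct and follows essentially the same route as the source: the paper gives no proof of this lemma, citing it directly as Lemma~24 of \citet{jin2021bellman}, and your argument (the squared-loss expansion around $\cT_h Q_{h+1}^k\in\cF$ via completeness, Freedman's inequality on the cross term, uniformity over a $1/K$-cover of the data-dependent $Q_h^k$ and $Q_{h+1}^k$, and the $\cB^k$ constraint) is precisely the proof given there. You also correctly identify that the sub-sampling machinery plays no role in this lemma, which concerns only the raw dataset $\cZ_h^k$.
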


We use $\tilde{k}$ to denote the index of the policy used in the $k$-th episode. Our switching criteria naturally indicates that $\widehat{\cZ}_h^k=\widehat{\cZ}_h^{\tilde{k}}$. The next lemma shows that the estimated Q-value in the $\tilde{k}$-th episode still has low empirical bellman error with respect to the data collected in the k-th episode.
\begin{lem}
\label{lem:jin3}
With probability at least $1-\delta$, for all
$(k,h)\in[K]\times[H]$,
\begin{itemize}
\item (a) $\|Q_h^{\tilde{k}}-\cT_h Q_{h+1}^{\tilde{k}}\|_{\cZ_h^k}^2\leq O(\beta)$
\item (b) $\sum_{i=1}^{k-1} \mathbb{E}_{\pi^{\tilde{i}}}[(Q_h^{\tilde{k}}-\cT_h Q_{h+1}^{\tilde{k}})(s_h,a_h)]^2\leq O(\beta)$
\end{itemize}
\end{lem}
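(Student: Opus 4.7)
The plan is to combine the empirical Bellman-error bound at time $\tilde{k}$ (Lemma~\ref{lem:jin1}) with the two-sided norm approximation property of the sub-sampled dataset (Theorem~\ref{thm:sample}) and a standard martingale concentration argument. Throughout I fix $(k,h)$, write $g := Q_h^{\tilde{k}} - \cT_h Q_{h+1}^{\tilde{k}}$, and condition on the high-probability event on which both Theorem~\ref{thm:sample} and Lemma~\ref{lem:jin1} hold.

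For part (a), the first observation is that $g$ is a difference of two functions in $\cF$. Indeed $Q_h^{\tilde{k}} \in \cF$ by the construction of $\cB^{\tilde{k}}$, while $\cT_h Q_{h+1}^{\tilde{k}} \in \cF$ by value completeness (Assumption~\ref{assum:express_3}) together with $\|Q_{h+1}^{\tilde{k}}\|_\infty \leq H-h$. Hence Theorem~\ref{thm:sample} applies to this pair. I would then chain four inequalities: Lemma~\ref{lem:jin1} gives $\|g\|^2_{\cZ_h^{\tilde{k}}} \leq O(\beta)$; the upper direction of Theorem~\ref{thm:sample} converts this to $\|g\|^2_{\widehat{\cZ}_h^{\tilde{k}}} \leq O(\beta)$; the policy-switching rule enforces $\widehat{\cZ}_h^k = \widehat{\cZ}_h^{\tilde{k}}$, so $\|g\|^2_{\widehat{\cZ}_h^k} \leq O(\beta)$; and finally, applying Theorem~\ref{thm:sample} at episode $k$ in the reverse direction yields $\|g\|^2_{\cZ_h^k} \leq O(\beta)$. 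The only subtlety is the case split in Theorem~\ref{thm:sample} (whether or not the $\cZ_h^k$-norm exceeds $100\beta$), which I would handle by contradiction: if $\|g\|^2_{\cZ_h^k}$ were much larger than $\beta$, the multiplicative lower bound $\|g\|^2_{\widehat{\cZ}_h^k} \geq \|g\|^2_{\cZ_h^k}/10000$ would be violated.

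For part (b), Jensen's inequality gives $\sum_{i=1}^{k-1} \mathbb{E}_{\pi^{\tilde{i}}}[g(s_h,a_h)]^2 \leq \sum_{i=1}^{k-1} \mathbb{E}_{\pi^{\tilde{i}}}[g(s_h,a_h)^2]$, reducing the task to comparing the population squared norm to the empirical one already bounded in part (a). For a fixed element of the Bellman residual class $(I-\cT_h)\cF := \{f_1 - \cT_h f_2 : f_1, f_2 \in \cF\}$, the sequence $\{g(s_h^i,a_h^i)^2 - \mathbb{E}_{\pi^{\tilde{i}}}[g(s_h,a_h)^2]\}_{i=1}^{k-1}$ is a martingale difference with increments bounded by $O(H^2)$ and conditional variances summing to at most $O(H^2) \cdot \sum_i \mathbb{E}_{\pi^{\tilde{i}}}[g(s_h,a_h)^2]$. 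Freedman's inequality together with a union bound over an $\epsilon$-cover of $(I-\cT_h)\cF$ (whose $\log$-covering number is at most $2\log\cN(\cF,\epsilon')$, since $\cT_h$ is $\|\cdot\|_\infty$-nonexpansive in its argument) then gives, after solving a quadratic to absorb the variance term, $\sum_i \mathbb{E}_{\pi^{\tilde{i}}}[g(s_h,a_h)^2] \leq 2\|g\|^2_{\cZ_h^k} + O(H^2 \log(\cN(\cF,\epsilon')/\delta))$, which is $O(\beta)$ by part (a) and the choice of $\beta$ in Section~\ref{sec:beta}. The main obstacle is this uniform concentration step: since $g$ depends on data up through episode $\tilde{k}-1$, the martingale property only holds for functions fixed in advance, so one must carefully handle the self-normalized character of Freedman's bound and ensure the covering-number factors are absorbed by $\beta$.
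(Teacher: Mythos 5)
Your proof is correct and follows essentially the same route as the paper: part (a) is the identical chain $\|g\|^2_{\cZ_h^k}\lesssim\|g\|^2_{\widehat{\cZ}_h^k}=\|g\|^2_{\widehat{\cZ}_h^{\tilde k}}\lesssim\|g\|^2_{\cZ_h^{\tilde k}}\leq O(\beta)$ via Theorem~\ref{thm:sample}, Lemma~\ref{lem:jin1}, and the switching rule, and part (b) is the ``standard martingale concentration'' the paper invokes. You in fact supply details the paper leaves implicit (that the Bellman residual lies in $\cF-\cF$ by completeness so Theorem~\ref{thm:sample} applies, the case split in that theorem, and the Jensen--Freedman--cover argument for (b)), all of which check out.
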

\begin{proof}
Condition on the event defined in theorem~\ref{thm:sample} and the event defined in Lemma~\ref{lem:jin1}. Then we have that
\[
\|Q_h^{\tilde{k}}-\cT_h Q_{h+1}^{\tilde{k}}\|^2_{\cZ_h^k}\lesssim \|Q_h^{\tilde{k}}-\cT_h Q_{h+1}^{\tilde{k}}\|_{\widehat{\cZ}_h^k}^2 = \|Q_h^{\tilde{k}}-\cT_h Q_{h+1}^{\tilde{k}}\|_{\widehat{\cZ}_h^{\tilde{k}}}^2\lesssim \|Q_h^{\tilde{k}}-\cT_h Q_{h+1}^{\tilde{k}}\|^2_{\cZ_h^k}\leq O(\beta).
\]
Here the first and the third inequality follows from Theorem~\ref{thm:sample}, and the last inequality follows the result of Lemma~\ref{lem:jin1}. (b) can be derived directly from (a) by standard martingale concentration.
\end{proof}
\begin{lem}[Optimism, lemma 25 in \citet{jin2021bellman}]
\label{lem:jin2}
With probability at least $1-\delta$, for all $k\in[K]$,
\[
(Q_1^*,Q_2^*,...,Q_{H+1}^*)\in \cB^k.
\]
As a result, $V_1^*(s_1)\leq \max_{a\in\cA}Q_1^k(s_1,a)$ holds for all $k\in[K]$.
\end{lem}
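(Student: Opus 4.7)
The plan is to establish realizability and a one-step least-squares concentration bound, then derive optimism as an immediate consequence of the planner's maximization step.

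First, by Assumption~\ref{assum:real} we have $Q_h^*\in\cF$ for every $h\in[H]$, and $Q_{H+1}^*\equiv 0$ by convention. Since rewards lie in $[0,1]$, the trivial bound $\|Q_h^*\|_\infty\le H+1-h$ is automatic, so the boundedness constraint in $\cB^k$ is satisfied for free. The entire task reduces to showing the regression constraint, namely that for every $(k,h)\in[K]\times[H]$
\[
\|Q_h^*\|^2_{\cD_h^k(Q_{h+1}^*)}\;\le\;\inf_{g\in\cF}\|g\|^2_{\cD_h^k(Q_{h+1}^*)}+\beta.
\]

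The key observation driving the concentration is that $Q_{h+1}^*$ is a single fixed function (independent of $k$), so the pseudo-targets $y_\tau:=r_h^\tau+\max_{a'}Q_{h+1}^*(s_{h+1}^\tau,a')$ are defined without any data-dependent function class to cover. By the Bellman optimality equation, $\cT_h V_{h+1}^*=Q_h^*$, hence $\EE[y_\tau\mid\mathscr{F}_{\tau-1}]=Q_h^*(s_h^\tau,a_h^\tau)$, so $\eta_\tau:=y_\tau-Q_h^*(s_h^\tau,a_h^\tau)$ is a $2H$-bounded martingale difference sequence. For any $g\in\cF$ the standard identity gives
\[
\|g\|^2_{\cD_h^k(Q_{h+1}^*)}-\|Q_h^*\|^2_{\cD_h^k(Q_{h+1}^*)}=\sum_{\tau=1}^{k-1}\bigl(g-Q_h^*\bigr)^2(s_h^\tau,a_h^\tau)-2\sum_{\tau=1}^{k-1}(g-Q_h^*)(s_h^\tau,a_h^\tau)\,\eta_\tau.
\]
I would apply a Freedman-type inequality to the cross term for a fixed $g$, using $X_\tau:=(g-Q_h^*)(s_h^\tau,a_h^\tau)\eta_\tau$, whose conditional variance is dominated by $H^2(g-Q_h^*)^2(s_h^\tau,a_h^\tau)$. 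This yields the self-bounded inequality $|2\sum_\tau X_\tau|\le \tfrac{1}{2}\sum_\tau(g-Q_h^*)^2(s_h^\tau,a_h^\tau)+C H^2\log(1/\delta')$ for a fixed $g$ with probability $\ge 1-\delta'$.

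To get uniformity over $g\in\cF$, I would pick an $\epsilon$-cover $\cC(\cF,1/K)$ of size $\cN(\cF,1/K)$ and union-bound over this cover together with $(k,h)\in[K]\times[H]$, setting $\delta'=\delta/(HK\cN(\cF,1/K))$. The discretization error contributes an $O(1)$ additive slack since replacing $g$ by its nearest cover element perturbs each squared term by at most $O(H/K)$ summed over $K$ steps. After absorbing constants, this yields
\[
\|g\|^2_{\cD_h^k(Q_{h+1}^*)}-\|Q_h^*\|^2_{\cD_h^k(Q_{h+1}^*)}\;\ge\;\tfrac{1}{2}\sum_\tau (g-Q_h^*)^2(s_h^\tau,a_h^\tau)-C H^2\log\!\bigl(TH\cN(\cF,1/K)/\delta\bigr),
\]
uniformly in $g,k,h$ with probability $\ge 1-\delta$. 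With the choice $\beta=CH^2\log(T\cN(\cF,1/K)/\delta)$ stated in Section~\ref{sec:beta}, specializing to $g=\arg\min_{g\in\cF}\|g\|^2_{\cD_h^k(Q_{h+1}^*)}$ dropping the nonnegative first term proves the regression constraint, hence $(Q_1^*,\ldots,Q_{H+1}^*)\in\cB^k$.

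The second conclusion is then immediate: by the planner's definition $(Q_1^k,\ldots,Q_{H+1}^k)=\arg\max_{Q\in\cB^k}\max_{a\in\cA}Q_1(s_1,a)$, and since $Q^*\in\cB^k$, we get $\max_{a\in\cA}Q_1^k(s_1,a)\ge\max_{a\in\cA}Q_1^*(s_1,a)=V_1^*(s_1)$. The main technical subtlety is the Freedman step with the self-bounding variance proxy, and making sure the $\epsilon$-covering slack is handled correctly; but because $Q_{h+1}^*$ is a single pre-specified function rather than an adaptively chosen $V$-estimate, no covering over value functions (and hence no dependence on Bellman eluder dimension or online sub-sampling) is needed — the lemma is a ``clean'' least-squares concentration and is essentially identical to Lemma~25 of \citet{jin2021bellman}.
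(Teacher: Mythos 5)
Your proof is correct and follows exactly the argument the paper relies on: the paper gives no independent proof of this lemma, deferring entirely to Lemma~25 of \citet{jin2021bellman}, whose proof is precisely your combination of realizability ($Q_h^*\in\cF$, $\|Q_h^*\|_\infty\le H+1-h$, $Q_{H+1}^*\equiv 0$), the fact that $Q_{h+1}^*$ is a fixed function so the targets form an honest martingale difference sequence, Freedman's inequality with the self-bounding variance proxy, a union bound over a $1/K$-cover of $\cF$ and over $(k,h)$, and the planner's maximization step for the final inequality. The only point worth stating explicitly is that the delayed policy updates (data collected under $\pi^{\tilde{k}}$) do not affect the martingale structure, since $\EE[y_\tau\mid\mathscr{F}_{\tau-1}]=Q_h^*(s_h^\tau,a_h^\tau)$ holds conditionally on $(s_h^\tau,a_h^\tau)$ regardless of which policy generated them.
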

\begin{lem}[Regret Decomposition]
\label{lem:decomp}
With probability at least $1-\delta$,
\[
\operatorname{Regret}(K)\leq \sum_{h=1}^H\sum_{k=1}^K \mathbb{E}_{\pi^{\tilde{k}}}[(Q_h^{\tilde{k}}-\cT_h Q_{h+1}^{\tilde{k}})(s_h,a_h)],
\]
and
\[
\operatorname{Regret}(K)\leq \sum_{h=1}^H\sum_{k=1}^K (Q_h^{\tilde{k}}-\cT_h Q_{h+1}^{\tilde{k}})(s_h^k,a_h^k)+O(H\sqrt{HK\log (HK/\delta)}).
\]
\end{lem}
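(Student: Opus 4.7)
The plan is to combine the optimism guarantee from Lemma~\ref{lem:jin2} with a standard one-step Bellman decomposition, applied separately for the ``in-expectation'' bound and the ``empirical'' bound. The crucial observation, used throughout, is that by the algorithmic design $\pi^{\tilde{k}}$ is the greedy policy with respect to $Q^{\tilde{k}}$, so that $V_h^{\tilde{k}}(s):=\max_{a\in\cA}Q_h^{\tilde{k}}(s,a)=Q_h^{\tilde{k}}(s,\pi^{\tilde{k}}_h(s))$.

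\textbf{In-expectation bound.} First I would write, using Lemma~\ref{lem:jin2},
\[
\operatorname{Regret}(K)\;=\;\sum_{k=1}^K\bigl(V_1^*(s_1)-V_1^{\pi^{\tilde{k}}}(s_1)\bigr)\;\leq\;\sum_{k=1}^K\bigl(V_1^{\tilde{k}}(s_1)-V_1^{\pi^{\tilde{k}}}(s_1)\bigr).
\]
Fix $k$ and define $\Delta_h(s):=V_h^{\tilde{k}}(s)-V_h^{\pi^{\tilde{k}}}(s)$. Since $\pi^{\tilde{k}}$ is greedy w.r.t.\ $Q^{\tilde{k}}$ and $Q_h^{\pi^{\tilde{k}}}=\cT_h V_{h+1}^{\pi^{\tilde{k}}}$, for any $s$ with $a=\pi^{\tilde{k}}_h(s)$,
\[
\Delta_h(s)\;=\;Q_h^{\tilde{k}}(s,a)-\cT_h Q_{h+1}^{\tilde{k}}(s,a)+\mathbb{E}_{s'\sim P_h(\cdot|s,a)}\Delta_{h+1}(s').
\]
Taking expectation under $\pi^{\tilde{k}}$ from $s_1$ and iterating from $h=1$ to $H$ (using $\Delta_{H+1}\equiv 0$) gives the telescoping identity
\[
V_1^{\tilde{k}}(s_1)-V_1^{\pi^{\tilde{k}}}(s_1)\;=\;\sum_{h=1}^H \mathbb{E}_{\pi^{\tilde{k}}}\bigl[(Q_h^{\tilde{k}}-\cT_h Q_{h+1}^{\tilde{k}})(s_h,a_h)\bigr],
\]
and summing over $k\in[K]$ yields the first inequality.

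\textbf{Empirical bound.} To replace expectations by samples, I would evaluate the same recursion along the actually realized trajectory $(s_h^k,a_h^k)_h$ generated by $\pi^{\tilde{k}}$. Writing $\widetilde{\Delta}_{k,h}:=V_h^{\tilde{k}}(s_h^k)-V_h^{\pi^{\tilde{k}}}(s_h^k)$ and using greediness so that $V_h^{\tilde{k}}(s_h^k)=Q_h^{\tilde{k}}(s_h^k,a_h^k)$, we obtain
\[
\widetilde{\Delta}_{k,h}\;=\;(Q_h^{\tilde{k}}-\cT_h Q_{h+1}^{\tilde{k}})(s_h^k,a_h^k)\;+\;\widetilde{\Delta}_{k,h+1}\;+\;\xi_{k,h},
\]
where
\[
\xi_{k,h}\;:=\;\mathbb{E}_{s'\sim P_h(\cdot|s_h^k,a_h^k)}\bigl[\Delta_{h+1}(s')\bigr]-\Delta_{h+1}(s_{h+1}^k).
\]
Iterating and summing in $k$ gives
\[
\operatorname{Regret}(K)\;\leq\;\sum_{k=1}^K\sum_{h=1}^H(Q_h^{\tilde{k}}-\cT_h Q_{h+1}^{\tilde{k}})(s_h^k,a_h^k)\;+\;\sum_{k=1}^K\sum_{h=1}^H \xi_{k,h}.
\]
The sequence $\{\xi_{k,h}\}$ is a martingale difference sequence with respect to the natural filtration (conditional on $\mathscr{F}$ up to the point where $(s_h^k,a_h^k)$ is revealed, the expectation of $\Delta_{h+1}(s_{h+1}^k)$ equals the transition expectation), and $|\xi_{k,h}|\leq 2H$. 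Applying Azuma--Hoeffding over the $KH$ terms yields the $O(H\sqrt{HK\log(HK/\delta)})$ slack with probability at least $1-\delta$, completing the second inequality.

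\textbf{Main obstacle.} The only slightly delicate point is checking the martingale property of $\xi_{k,h}$ in the presence of the delayed policy updates: the index $\tilde{k}$ (and thus the function $\Delta_{h+1}$ being evaluated) depends on the history only through past episodes, so once we condition on the $\sigma$-algebra containing everything up to and including the choice of $(s_h^k,a_h^k)$, the quantities $V_{h+1}^{\tilde{k}}$ and $V_{h+1}^{\pi^{\tilde{k}}}$ are deterministic and the centering step is valid. The rest is routine telescoping and a single Azuma--Hoeffding application.
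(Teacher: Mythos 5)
Your proposal is correct and follows essentially the same route as the paper: optimism from Lemma~\ref{lem:jin2}, the standard telescoping/performance-difference identity expressing $V_1^{\tilde{k}}(s_1)-V_1^{\pi^{\tilde{k}}}(s_1)$ as a sum of expected Bellman residuals, and a single Azuma--Hoeffding application over the $KH$ martingale differences to pass to the empirical version. The paper compresses the last step into ``standard martingale concentration''; your write-up just fills in those routine details, including the (correct) observation that $\tilde{k}$ is measurable with respect to the history at the start of episode $k$, so the centering is valid.
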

\begin{proof}
Condition on the event in Lemma~\ref{lem:jin2}. Then we have that
\[
\text{Regret}(K)\leq \sum_{k=1}^K\left(\max_{a\in A}Q_1^{\tilde{k}}(s_1,a)-V_1^{\pi^{\tilde{k}}}(s_1)\right)= \sum_{h=1}^H\sum_{k=1}^K \mathbb{E}_{\pi^{\tilde{k}}}[(Q_h^{\tilde{k}}-\cT_h Q_{h+1}^{\tilde{k}})(s_h,a_h)].
\]
The second inequality then follows from standard martingale concentration.
\end{proof}
\begin{lem}[Lemma 26 in \citet{jin2021bellman}]
\label{lem:jin4}
For a fixed $h\in[H]$, suppose that $\{(f_k,f_k')\}_{k=1}^K\subseteq \cF\times\cF$, and $\{\mu_k\}_{k=1}^K\subseteq \Pi_h$ satisfy that for all $k\in[K]$, $\sum_{t=1}^{k-1} \mathbb{E}_{\mu_t}[(f_k-\cT_h f_k')^2]\leq O(\beta)$. Then we have that 
\[
\sum_{k=1}^K |\mathbb{E}_{\mu_k}[f_k-\cT_h f_k']|\leq O(\sqrt{\dim_{DE}((I-\cT_h)\cF,\Pi_h,1/\sqrt{K})\beta K}+H\cdot \min\{K,\dim_{DE}((I-\cT_h)\cF,\Pi_h,1/\sqrt{K})\})
\]
\end{lem}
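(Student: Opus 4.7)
The plan is to prove this as a distributional-eluder analogue of the classical Russo--Van Roy sum bound, using exactly the two-step ``pigeonhole plus sort'' strategy already deployed for the online sensitivity sum in Lemma~\ref{lem:sum_sen}. Write $g_k := f_k - \cT_h f_k' \in (I-\cT_h)\cF$ and $b_k := |\EE_{\mu_k}[g_k]|$, so the hypothesis reads $\sum_{t<k}\EE_{\mu_t}[g_k^2]\leq O(\beta)$ and the goal is to bound $\sum_k b_k$. The indices with $b_k\leq 1/\sqrt K$ together contribute at most $\sqrt K$ to the sum, which is absorbed into the stated bound; the remaining indices can be stratified into $O(\log K)$ dyadic buckets $\cB_\alpha=\{k: b_k\in (H\cdot 2^{-\alpha-1}, H\cdot 2^{-\alpha}]\}$.

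The core step is to bound each $|\cB_\alpha|$. I would iterate through $\cB_\alpha$ in its given order and greedily assign each $\mu_k$ to the smallest sub-sequence index $j\in[N]$ on which $\mu_k$ is $(H\cdot 2^{-\alpha-1})$-independent with respect to $(I-\cT_h)\cF$, sending overflow to the $(N+1)$-st sub-sequence; here $N$ is a parameter chosen below. By the definition of distributional eluder dimension each of the first $N$ sub-sequences has length at most $\dim_{DE}((I-\cT_h)\cF,\Pi_h,1/\sqrt K)$. For any overflow $\mu_k$, the contrapositive of $\varepsilon$-dependence applied to the \emph{single} witness function $g_k$ gives $\sum_{t\in \mathrm{subseq}_j}\EE_{\mu_t}[g_k^2] > H^2\cdot 4^{-\alpha-1}$ for every $j\in[N]$; summing in $j$ produces $\sum_{t<k}\EE_{\mu_t}[g_k^2] > N\cdot H^2\cdot 4^{-\alpha-1}$, which contradicts the hypothesis once $N\gtrsim \beta/(H^2\cdot 4^{-\alpha})$. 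Hence $|\cB_\alpha|\leq O\bigl(\dim_{DE}\cdot(\beta\cdot 4^\alpha/H^2 + 1)\bigr)$.

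I would then combine the buckets via the sort-and-sum calculation already appearing in the proof of Lemma~\ref{lem:bonus_sum}: rank the $b_k$'s in decreasing order, observe that each $b_{(j)}\lesssim \min\{H,\sqrt{\beta\cdot\dim_{DE}/j}\}$ (obtained by inverting the per-bucket count across $\alpha$), and sum. This yields
\[
\sum_{k=1}^K b_k \;=\; O\!\left(\sqrt{\dim_{DE}((I-\cT_h)\cF,\Pi_h,1/\sqrt K)\cdot \beta\cdot K} \;+\; H\cdot\min\{K,\dim_{DE}((I-\cT_h)\cF,\Pi_h,1/\sqrt K)\}\right),
\]
which is exactly the claimed inequality after re-absorbing the $b_k\leq 1/\sqrt K$ and overflow contributions.

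The main obstacle is the pigeonhole step: one has to apply the \emph{distributional} $\varepsilon$-dependence definition using a single witness function $g_k$ (rather than the pair $(f_1,f_2)$ used in the state-action eluder argument of Lemma~\ref{lem:sum_sen}), and chain the resulting $L^2$ lower bound cleanly with the regression hypothesis so that the greedy partition size is dictated by $\beta$ and the desired resolution $H\cdot 2^{-\alpha-1}$. Once this is carefully set up, the remainder reduces to the dyadic-bucket bookkeeping already carried out elsewhere in the appendix, and no new probabilistic tools are needed.
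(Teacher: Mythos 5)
The paper does not prove this lemma itself --- it imports it verbatim as Lemma~26 of \citet{jin2021bellman} --- and your reconstruction is the standard eluder-dimension sum bound, which is both correct and essentially identical to the argument in that reference and to the paper's own analogous Lemmas~\ref{lem:sum_sen} and~\ref{lem:bonus_sum} (greedy partition into at most $\dim_{DE}$-length subsequences, pigeonhole via the single witness $g_k=f_k-\cT_h f_k'$ against the hypothesis $\sum_{t<k}\EE_{\mu_t}[g_k^2]\le O(\beta)$, then sort and sum). The only points needing a word of care are cosmetic: the dependence definition involves $\EE_{\mu}|g|$, so you pass through $|\EE_{\mu_k}[g_k]|\le\EE_{\mu_k}|g_k|$ before invoking the contrapositive, and the dyadic cutoffs must be aligned so the independence level stays at or above $1/\sqrt{K}$ (a constant-factor adjustment of the buckets).
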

\begin{proof}[Proof of Theorem~\ref{thm:main_regret2}]
Condition on the event defined in Lemma~\ref{lem:jin2}.
\paragraph{Case 1: $\dim_{BE}(\cF,\varepsilon)=\max_{h\in[H]}\dim_{DE}((I-\cT_h)\cF,\cD_{\Delta},\varepsilon)$.} We use Lemma~\ref{lem:jin4} with 
\[
f_k=Q_h^{\tilde{k}},\quad f_k'=Q_{h+1}^{\tilde{k}},\quad
\mu_k=\mathbf{1}\{\cdot=(s_h^k,a_h^k)\}.
\]
Then we have that
\begin{align*}
\sum_{k=1}^K (Q_h^{\tilde{k}}-\cT_h Q_{h+1}^{\tilde{k}})(s_h^k,a_h^k)&\leq O\left(\sqrt{\dim_{DE}((I-\cT_h)\cF,\cD_{\Delta},1/\sqrt{K})\beta K}\right)\\
&\leq O\left(\sqrt{KH^2\cdot\log(T\mathcal{N}(\mathcal{F},1/K)/\delta)\cdot \dim_{BE}(\mathcal{F},1/\sqrt{K})}\right)
\end{align*}
Plugging this result into Lemma~\ref{lem:decomp} completes the proof.

\paragraph{Case 2: $\dim_{BE}(\cF,\varepsilon)=\max_{h\in[H]}\dim_{DE}((I-\cT_h)\cF,\cD_{\cF,h},\varepsilon)$.} We use Lemma~\ref{lem:jin4} with 
\[
f_k=Q_h^{\tilde{k}},\quad f_k'=Q_{h+1}^{\tilde{k}},\quad
\mu_k=\mathbb{P}^{\pi^{\tilde{k}}}\{s_h=\cdot,a_h=\cdot\}.
\]
Then we have that
\begin{align*}
\sum_{k=1}^K \mathbb{E}_{\pi^{\tilde{k}}}[(Q_h^{\tilde{k}}-\cT_h Q_{h+1}^{\tilde{k}})(s_h,a_h)]&\leq O\left(\sqrt{\dim_{DE}((I-\cT_h)\cF,\cD_{\cF,h},1/\sqrt{K})\beta K}\right)\\
&\leq O\left(\sqrt{KH^2\cdot\log(T\mathcal{N}(\mathcal{F},1/K)/\delta)\cdot \dim_{BE}(\mathcal{F},1/\sqrt{K})}\right)
\end{align*}
Plugging this result into Lemma~\ref{lem:decomp} completes the proof.
\end{proof}
\subsection{Proof of Theorem~\ref{thm:main_free}}
\label{pf:thm3}

Firstly, note that the online sub-sampling procedure used in Algorithm~\ref{alg:exploration} is exactly the same with Algorithm~\ref{alg:main}. Thus the properties of online sub-sampling, i.e., Proposition~\ref{prop:bounds_of_bonus} and Proposition~\ref{prop:bounded_size} still hold. As a special case, Proposition~\ref{prop:bounds_of_bonus} and Proposition~\ref{prop:bounded_size} also imply the corresponding results for $b_h$ and $\widehat{\cZ}_h$ used in Algorithm~\ref{alg:planning}.
\begin{lem}
\label{lem:bonus_class_2}
With probability at least $1-\delta/8$, for all $(h,k)\in[H]\times[K]$, $b_h^k(\cdot,\cdot)\in\mathcal{M}$.

Here $\mathcal{M}$ is a prespecified function class with bounded size:
\begin{align*}
&\log|\mathcal{M}|\\
\leq &C'\cdot\log(T\mathcal{N}(\mathcal{F},\sqrt{\delta/64T^3})/\delta) \cdot\dim_E(\mathcal{F},1/T)\cdot\log^2 T\cdot\log\left(\mathcal{C}(\mathcal{S}\times\mathcal{A},1/(16\sqrt{64T^3/\delta})) \cdot64T^3/\delta\right)\\
\leq &C\cdot\log(T\mathcal{N}(\mathcal{F},\delta/T^2)/\delta) \cdot\dim_E(\mathcal{F},1/T)\cdot\log^2 T\cdot\log\left(\mathcal{C}(\mathcal{S}\times\mathcal{A},\delta/T^2) \cdot T/\delta\right).
\end{align*}
for some absolute constant $C',C>0$ if $T$ is sufficiently large.

As a special case, $b_h(\cdot,\cdot)\in\mathcal{M}$ as well.
\end{lem}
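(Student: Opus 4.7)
The plan is to mirror the proof of Lemma~\ref{lem:bonus_class} essentially verbatim, since the online sub-sampling subroutine invoked inside Algorithm~\ref{alg:exploration} is the same Algorithm~\ref{alg:sample} used in Algorithm~\ref{alg:main}, and both $b_h^k$ (in the exploration phase) and $b_h$ (in the planning phase) have exactly the functional form $\sup\{\,|f_1(\cdot,\cdot)-f_2(\cdot,\cdot)| : f_1,f_2\in\mathcal{F},\ \|f_1-f_2\|^2_{\widehat{\mathcal{Z}}}\le\beta\,\}$ for some sub-sampled dataset $\widehat{\mathcal{Z}}$. The only object to track is therefore the family of possible $\widehat{\mathcal{Z}}$'s, and everything else follows by a counting argument.

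First I would condition on the high-probability event of Proposition~\ref{prop:bounded_size} (invoked through the note at the beginning of Section~\ref{pf:thm3} that it transfers to the reward-free setting), which simultaneously gives $|\widehat{\mathcal{Z}}_h^k|\le 64T^3/\delta$ and $n_d(\widehat{\mathcal{Z}}_h^k)\le S_{\max}$ for every $(h,k)$, and which also guarantees that every element added to $\widehat{\mathcal{Z}}_h^k$ already lies in the cover $\mathcal{C}(\mathcal{S}\times\mathcal{A},1/(16\sqrt{64T^3/\delta}))$ by construction of Algorithm~\ref{alg:sample}. Next I would introduce
\[
\Omega := \bigl\{\mathcal{Z}\subseteq \mathcal{C}(\mathcal{S}\times\mathcal{A},1/(16\sqrt{64T^3/\delta})) :\ |\mathcal{Z}|\le 64T^3/\delta,\ n_d(\mathcal{Z})\le S_{\max}\bigr\}
\]
and define
\[
\mathcal{M} := \bigl\{\,|f_1(\cdot,\cdot)-f_2(\cdot,\cdot)|\ :\ f_1,f_2\in\mathcal{F},\ \|f_1-f_2\|^2_{\mathcal{Z}}\le\beta,\ \mathcal{Z}\in\Omega\bigr\}.
\]
On the conditioning event, $\widehat{\mathcal{Z}}_h^k\in\Omega$ for every $(h,k)\in[H]\times[K]$ and, in particular, $\widehat{\mathcal{Z}}_h=\widehat{\mathcal{Z}}_h^K\in\Omega$, hence both $b_h^k\in\mathcal{M}$ and $b_h\in\mathcal{M}$ as required.

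The cardinality bound is then just a counting step: the number of multisets in $\Omega$ is at most $|\mathcal{C}(\mathcal{S}\times\mathcal{A},1/(16\sqrt{64T^3/\delta}))|^{S_{\max}}\cdot (64T^3/\delta)^{S_{\max}}$ (choose the $\le S_{\max}$ distinct cover points, then assign an integer multiplicity $\le 64T^3/\delta$ to each), so taking logarithms and substituting the value of $S_{\max}$ from Proposition~\ref{prop:bounded_size} gives
\[
\log|\mathcal{M}|\ \lesssim\ S_{\max}\cdot\log\bigl(|\mathcal{C}(\mathcal{S}\times\mathcal{A},1/(16\sqrt{64T^3/\delta}))|\cdot 64T^3/\delta\bigr),
\]
which matches the first displayed bound; absorbing constants and polynomial factors in $T$ (using that $T$ is sufficiently large) produces the second, cleaner form.

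I do not anticipate a substantive obstacle: every ingredient is already in place from the exploration-setting analysis, and the statement for $b_h$ in Algorithm~\ref{alg:planning} is literally the $k=K$ instance of the bound for $b_h^k$. The only bookkeeping care needed is to verify that the rounding step inside Algorithm~\ref{alg:sample} that maps sampled points to the cover before insertion is invoked identically in Algorithm~\ref{alg:exploration}, so that the inclusion $\widehat{\mathcal{Z}}_h^k\subseteq\mathcal{C}(\mathcal{S}\times\mathcal{A},1/(16\sqrt{64T^3/\delta}))$ that underlies the definition of $\Omega$ really holds; this is immediate from inspecting the pseudocode.
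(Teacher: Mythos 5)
Your proposal is correct and follows essentially the same route as the paper: the paper's proof of this lemma simply points back to Lemma~\ref{lem:bonus_class}, which defines $\Omega$ and $\mathcal{M}$ exactly as you do, conditions on the event of Proposition~\ref{prop:bounded_size} to place every $\widehat{\mathcal{Z}}_h^k$ (and hence $\widehat{\mathcal{Z}}_h=\widehat{\mathcal{Z}}_h^K$) in $\Omega$, and bounds $\log|\mathcal{M}|$ by the same counting argument. Your additional observation that the planning-phase bonus is just the $k=K$ instance is precisely how the paper handles the ``special case'' clause.
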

\begin{proof}
The proof is identical to Lemma~\ref{lem:bonus_class}.
\end{proof}
The next lemma estimate the error of the one-step bellman backup.
\begin{lem}
\label{lem:one_step_error_2}
Consider a fixed pair $(k,h)\in[K]\times[H]$. 
For any $V:\cS\rightarrow [0,H]$, define
	\[
	\mathcal{D}_h^k(V):=\{(s_{h}^{\tau},a_{h}^{\tau}, V(s_{h+1}^{\tau}))\}_{\tau \in [k-1]}
	\]
	and also
	\[
	\widehat{f_V}:=\operatorname{argmin}_{f\in\mathcal{F}}\|f\|_{\mathcal{D}_h^k(V)}^2.
	\]
	For any $V:\cS\rightarrow[0,H]$ and $\delta \in (0,1)$, there is an event $\mathcal{E}_{V,\delta}$ which holds with probability at least $1-\delta$, such that for any $V':\mathcal{S}\rightarrow[0,H]$ with $\|V'-V\|_{\infty}\leq2/T$, we have
	\[
	\left\|\widehat{f_{V'}}(\cdot,\cdot)-\sum_{s'\in\mathcal{S}}P_h(s'|\cdot,\cdot)V'(s')\right\|_{\mathcal{Z}_h^k}\leq C\cdot(H\sqrt{\log(1/\delta)+\log\mathcal{N}(\mathcal{F},1/T)}).
	\]
	for some absolute constant $C>0$
\end{lem}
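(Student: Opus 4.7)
The plan is to mirror the proof of Lemma~\ref{lem:one_step_error}, adjusting only for the absence of the reward term and for the slightly larger covering radius $2/T$. Fix the pair $(k,h)$ and a function $V:\cS\to[0,H]$. The natural ``target'' of the regression is
\[
f_V(s,a) := \sum_{s'\in\cS} P_h(s'\mid s,a)\, V(s'),
\]
which, under the reward-free closedness Assumption~\ref{assum:express_2}, lies in $\cF$. The role that $\cT_h V$ played in Lemma~\ref{lem:one_step_error} is now played by $f_V$, and the regression target in $\cD_h^k(V)$ is the unbiased sample $V(s_{h+1}^\tau)$ (no additive reward).

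First I would define the martingale differences
\[
\xi_h^\tau(f) \,:=\, 2\bigl(f(s_h^\tau,a_h^\tau) - f_V(s_h^\tau,a_h^\tau)\bigr)\cdot\bigl(f_V(s_h^\tau,a_h^\tau) - V(s_{h+1}^\tau)\bigr),
\]
which are adapted to $\mathscr{F}_\tau$ with $\EE_{\tau-1}[\xi_h^\tau(f)] = 0$ and $|\xi_h^\tau(f)| \leq 2H\,|f(s_h^\tau,a_h^\tau) - f_V(s_h^\tau,a_h^\tau)|$. Azuma--Hoeffding then yields
\[
\Bigl|\sum_{\tau=1}^{k-1}\xi_h^\tau(f)\Bigr|
\,\le\, 4H\,\|f-f_V\|_{\cZ_h^k}\,\sqrt{\log(2/\delta)+\log\coversize(\cF,1/T)}
\]
simultaneously for every $f$ in a $(1/T)$-cover $\cover(\cF,1/T)$ with probability at least $1-\delta$. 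Call this event $\cE_{V,\delta}$ and condition on it. A standard cover-approximation argument lifts the same bound (up to an additive $O(H)$ term) to every $f\in\cF$.

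Next I would handle the shifted value $V'$. Since $\|V'-V\|_\infty\le 2/T$, we have $\|f_{V'}-f_V\|_\infty\le 2/T$, and expanding the squared loss gives
\[
\|f\|^2_{\cD_h^k(V')} - \|f_{V'}\|^2_{\cD_h^k(V')}
\,=\, \|f-f_{V'}\|^2_{\cZ_h^k} + 2\sum_{\tau=1}^{k-1}\bigl(f-f_{V'}\bigr)(s_h^\tau,a_h^\tau)\cdot\bigl(f_{V'}(s_h^\tau,a_h^\tau) - V'(s_{h+1}^\tau)\bigr).
\]
The cross term can be compared to $\sum_\tau \xi_h^\tau(f)$ with error at most $O(H)\cdot\|V'-V\|_\infty\cdot k = O(1)$, and applying the above uniform bound gives a lower bound of the form $-4H(\|f-f_{V'}\|_{\cZ_h^k}+2)\sqrt{\log(1/\delta)+\log\coversize(\cF,1/T)} - O(H)$.

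Finally I would plug $f = \widehat{f}_{V'}$ into the identity, using $\|\widehat{f}_{V'}\|^2_{\cD_h^k(V')} - \|f_{V'}\|^2_{\cD_h^k(V')} \leq 0$ by optimality, to obtain a quadratic inequality in $\|\widehat{f}_{V'} - f_{V'}\|_{\cZ_h^k}$. Solving it yields $\|\widehat{f}_{V'} - f_{V'}\|_{\cZ_h^k} \leq C\cdot H\sqrt{\log(1/\delta)+\log\coversize(\cF,1/T)}$, which is exactly the claim since $f_{V'}(s,a) = \sum_{s'}P_h(s'\mid s,a)V'(s')$. There is no essential obstacle beyond Lemma~\ref{lem:one_step_error}; the only subtlety is making sure the slightly enlarged covering radius ($2/T$ for $V'$) is absorbed by the $1/T$-cover of $\cF$, which costs only $O(H)$ additively and does not affect the final rate.
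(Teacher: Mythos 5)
Your proposal is correct and follows essentially the same route as the paper: the paper's own proof of this lemma consists of observing that $\sum_{s'}P_h(s'\mid\cdot,\cdot)V'(s')\in\cF$ by Assumption~\ref{assum:express_2} and then repeating the martingale-plus-covering argument of Lemma~\ref{lem:one_step_error} verbatim, which is exactly what you spell out (including the correct handling of the $2/T$ perturbation, which is absorbed into the additive $O(H)$ slack).
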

\begin{proof} A key observation is that $\sum_{s'\in\mathcal{S}}P_h(s'|\cdot,\cdot)V'(s') \in \mathcal{F}$ due to Assumption~\ref{assum:express_2}. The rest of the proof is identical to Lemma~\ref{lem:one_step_error}.
\end{proof}
The next lemma verifies the confidence region.
\begin{lem}
\label{lem:confidence_region_2}
Let $\mathcal{E}_2$ denote the event that for all $(k,h)\in[K]\times[H]$,
	\[
	\left\|f_h^k-\bar{f}_h^k \right\|_{\mathcal{Z}_h^k}\leq \beta/100
	\]
	where $\bar{f}_h^k(\cdot,\cdot)=\sum_{s\in\mathcal{S}'}P_h(s'|\cdot,\cdot)V_{h+1}^k(s')$.
	
	For the planning phase, for all $h\in[H]$, all reward function $r$ in the function class $\cR$
	\[
	\left\|f_h-\bar{f}_h \right\|_{\mathcal{Z}_h}\leq \beta/100
	\]
	where 
	$
	\bar{f}_h=\sum_{s\in\mathcal{S}'}P_h(s'|\cdot,\cdot)V_{h+1}(s')
	$.
	
	Then we have $\Pr[\mathcal{E}_2]\geq 1-\delta/4$ provided
	\[
	\beta\geq C\cdot H^2\cdot(\log(T/\delta)+\log\mathcal{N}(\mathcal{F},1/T)+\log\mathcal{N}(\mathcal{R},1/T)+\log|\mathcal{M}|).
	\]
	for some absolute constant $C>0$.
\end{lem}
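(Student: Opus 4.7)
The plan is to mirror the proof of Lemma~\ref{lem:confidence_region}, adapting it to the reward-free setting where (a) no reward appears in the regression target during exploration, and (b) in the planning phase we must obtain a bound that holds uniformly over all reward functions in $\cR$. As before, the core tool is the per-$V$ concentration bound provided by Lemma~\ref{lem:one_step_error_2}, combined with a union bound over appropriately constructed covers.

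First I would handle the exploration phase. I condition on the event of Lemma~\ref{lem:bonus_class_2}, so that $b_h^k(\cdot,\cdot) \in \cM$ for every $(k,h)$. Defining
\[
\cQ := \{\min\{f(\cdot,\cdot) + m(\cdot,\cdot) + m'(\cdot,\cdot)/H, H\} \mid f \in \cC(\cF, 1/T),\ m, m' \in \cM\} \cup \{0\}
\]
and $\cV := \{\max_{a\in\cA} q(\cdot, a) \mid q \in \cQ\}$, one sees that $\cV$ is a $(2/T)$-cover of the value functions $V_{h+1}^k$ produced by Algorithm~\ref{alg:exploration}, since both the Q-function and the exploration-driven reward $r_h^k$ are built out of functions in $\cM$. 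We have $\log|\cV| \lesssim \log|\cM| + \log \cN(\cF, 1/T)$. Now apply Lemma~\ref{lem:one_step_error_2} with failure probability $\delta/(16\,|\cV|\,T)$ to each $V \in \cV$ and each $(k,h)\in[K]\times[H]$, and take a union bound to control $\|\hat f_{V'} - \sum_{s'} P_h(s'|\cdot,\cdot) V'(s')\|_{\cZ_h^k}$ for every $V'$ within $2/T$ of some $V\in\cV$. Specializing $V' = V_{h+1}^k$ and noting $f_h^k = \hat f_{V_{h+1}^k}$ gives $\|f_h^k - \bar f_h^k\|_{\cZ_h^k}^2 \lesssim H^2(\log(T/\delta) + \log \cN(\cF,1/T) + \log|\cM|) \le \beta/100$ by our choice of $\beta$.

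Next I would handle the planning phase, where the difficulty is the uniformity over reward functions. Let $\cC(\cR, 1/T)$ be an $(1/T)$-cover of $\cR$ with $|\cC(\cR, 1/T)| \le \cN(\cR, 1/T)$. For any fixed reward $r \in \cR$, the planner of Algorithm~\ref{alg:planning} produces value functions $V_{h+1}$ obtained from a Q-function of the form $\min\{f + b + r, H\}$ with $f \in \cF$, $b \in \cM$; approximating $r$ by its nearest cover element and $f$ by its $\cF$-cover element yields a $(2/T)$-cover $\cV'$ with $\log|\cV'| \lesssim \log|\cM| + \log \cN(\cF,1/T) + H\log \cN(\cR, 1/T)$. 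Apply Lemma~\ref{lem:one_step_error_2} with failure probability $\delta/(16\,|\cV'|\,H)$ to every $V \in \cV'$ and every $h\in[H]$ and union bound; then, for any $r\in\cR$, $f_h = \hat f_{V_{h+1}}$ satisfies $\|f_h - \bar f_h\|_{\cZ_h}^2 \lesssim H^2(\log(T/\delta) + \log\cN(\cF,1/T) + \log \cN(\cR,1/T) + \log|\cM|) \le \beta/100$ by the prescribed $\beta$.

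Finally, a union bound over the three sub-events (Lemma~\ref{lem:bonus_class_2}, the exploration-phase concentration, and the planning-phase concentration), each of probability at most $\delta/8$, shows that $\cE_2$ fails with probability at most $\delta/4$. The main technical obstacle is the planning-phase uniformity: one must be careful that the $\cV'$ cover used there accounts simultaneously for the reward, the regression estimate $f$, and the bonus $b$, so that the additive $\log \cN(\cR, 1/T)$ term appears exactly once per step, matching the size of $\beta$ prescribed in Section~\ref{sec:beta}. All remaining calculations are the same covering/union-bound manipulations as in Lemma~\ref{lem:confidence_region}.
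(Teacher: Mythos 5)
Your proposal follows essentially the same route as the paper: condition on $b_h^k\in\cM$, build a finite cover of the value functions out of $\cC(\cF,1/T)$, $\cM$, and (for planning) a cover of $\cR$, apply Lemma~\ref{lem:one_step_error_2} to each cover element, and union bound. The only substantive slip is in your planning-phase cover size: you write $\log|\cV'|\lesssim \log|\cM|+\log\cN(\cF,1/T)+H\log\cN(\cR,1/T)$, which corresponds to covering the entire reward tuple $(r_1,\ldots,r_H)$, but $V_{h+1}$ at a fixed step depends only on the single component $r_{h+1}\in\cR$, so the correct count is $\log\cN(\cR,1/T)$ without the factor $H$ (the paper gets this by forming $\cF^*=\cF+\cR$ and covering it once per step). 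Your final displayed bound silently drops that $H$, so as written the middle step is inconsistent with the conclusion; with the per-step count the argument goes through and matches the prescribed $\beta$. Also, three events each of probability $\delta/8$ give $3\delta/8$, not $\delta/4$ — the paper allocates $\delta/8$ to the bonus-class event and $\delta/16$ to each concentration event, which you should mirror.
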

\begin{proof} We condition on the event defined in Lemma~\ref{lem:bonus_class_2} in the whole proof.

Note that for all $(k,h)\in[K]\times[H]$,
\[
Q_h^k(\cdot,\cdot)=\min\{f_h^k(\cdot,\cdot)+b_h^k(\cdot,\cdot)+\Pi_{[0,1]}[b_h^k(\cdot,\cdot)/H],H\},\text{and}
\]
\[
V_h^k(\cdot)=\max_{a\in \mathcal{A}}Q_h^k(\cdot,a).
\]
We define
\[
\mathcal{Q}:=\{\min\{f(\cdot,\cdot)+m(\cdot,\cdot)+\Pi_{[0,1]}[m(\cdot,\cdot)/H] ,H\}|f\in\mathcal{C}(\mathcal{F},1/T),m\in \mathcal{M}\}\cup\{0\}, \text{and}
\]
\[
\mathcal{V}:=\{\max_{a\in\mathcal{A}}q(\cdot,a)|q\in\mathcal{Q}\}.
\]
Then $\log|\mathcal{V}|\leq \log|\mathcal{M}|+\log\mathcal{N}(\mathcal{F},1/T)+1$.

Because $b_h^k(\cdot,\cdot)\in\cM$, $\mathcal{V}$ is a $(1/T)$-cover of $V_h^k(\cdot)$ for all $(k,h)\in[K]\times[H+1]$. 
For each $V\in \cV$, let $\cE_{V,\delta/(16|\cV|T)}$ be the event defined in Lemma~\ref{lem:one_step_error_2}. Note that $\cE_{V,\delta/(16|\cV|T)}$ relates to a fixed pair $(k,h)$. 
By Lemma~\ref{lem:one_step_error_2} and a union bound, we have
$\Pr\left[\bigcap_{(k,h)\in[K]\times[H]}\bigcap_{V\in \cV} \cE_{V,\delta/(16|\cV|T)}
\right]\ge 1-\delta/16$.
We also condition on this event.

For $(k,h)\in[K]\times[H]$, recall that $f_{h}^{k}$ is the solution to the regression problem in Algorithm~\ref{alg:exploration}, i.e., 
$f^k_{h}= \argmin_{f\in \cF}\|f\|_{\cD^{k}_h}^2$.
Let $V\in \cV$ such that $\|V - V^{k}_{h+1}\|_{\infty} \le 1/T$.
By the definition of $\cE_{V,\delta/(16|\cV|T)}$ (the one relates to this $(k,h)$ pair), we have that
\begin{align*}
&\left\|f_h^k(\cdot,\cdot)-\sum_{s'\in\mathcal{S}}P_h(s'|\cdot,\cdot)V_{h+1}^k(s')\right\|_{\mathcal{Z}_h^k}\\
\lesssim &H\sqrt{\log(16|\cV|T/\delta)+\log\mathcal{N}(\mathcal{F},1/T)}\\
\lesssim &H\sqrt{\log(T/\delta)+\log\mathcal{N}(\mathcal{F},1/T)+\log|\mathcal{M}|}.
\end{align*}

For the planning phase, we define a new function class:
\[
\mathcal{F}^*:=\{f(\cdot,\cdot)+r(\cdot,\cdot)|f\in\mathcal{F},r\in\mathcal{R}\}.
\]
Then from the definition of covering number we have that:
\[
\mathcal{C}(\mathcal{F}^*,2/T)\leq \mathcal{C}(\mathcal{F},1/T)\mathcal{C}(\mathcal{R},1/T).
\]
Thus for all $h\in[H]$, \begin{align*}
Q_h(\cdot,\cdot)&=\min\{f_h(\cdot,\cdot)+b_h(\cdot,\cdot)+r_h(\cdot,\cdot),H\}\\
&=\min\{f^*(\cdot,\cdot)+b_h(\cdot,\cdot),H\}\quad (f^*\in\mathcal{F}^*),\text{and}
\end{align*}
\[
V_h(\cdot)=\max_{a\in \mathcal{A}}Q_h(\cdot,a).
\]
We define
\[
\mathcal{Q}^*:=\{\min\{f^*(\cdot,\cdot)+m(\cdot,\cdot),H\}|f\in\mathcal{C}(\mathcal{F}^*,2/T),m\in \mathcal{M}\}\cup\{0\},\text{and}
\]
\[
\mathcal{V}^*:=\{\max_{a\in\mathcal{A}}q(\cdot,a)|q\in\mathcal{Q}^*\}.
\]
Then we have
\begin{align*}
\log|\mathcal{V}^*|&\leq \log|\mathcal{M}|+\log\mathcal{N}(\mathcal{F}^*,2/T)+1 \\
&\leq \log|\mathcal{M}|+\log\mathcal{N}(\mathcal{F},1/T)+\log\mathcal{N}(\mathcal{R},1/T)+1.
\end{align*}

Because $b_h(\cdot,\cdot)\in\cM$, $\mathcal{V}^*$ is a $(2/T)$-cover of $V_h(\cdot)$ for all $h\in[H+1]$.
Similarly, for each $V\in \cV^*$, let $\cE_{V,\delta/(16|\cV^*|T)}$ be the event defined in Lemma~\ref{lem:one_step_error_2}. Note that $\cE_{V,\delta/(16|\cV^*|T)}$ relates to a fixed pair $(k,h)\in[K]\times[H]$. We only consider those with $k=K$.
By Lemma~\ref{lem:one_step_error_2} and a union bound, we have
$\Pr\left[\bigcap_{h\in[H],k=K}\bigcap_{V\in \cV} \cE_{V,\delta/(16|\cV^*|T)}
\right]\ge 1-\delta/16K\geq 1-\delta/16$.
We also condition on this event.

For $h\in[H]$, recall that $f_{h}$ is the solution to the regression problem in Algorithm~\ref{alg:planning}, i.e., 
$f_{h}= \argmin_{f\in \cF}\|f\|_{\cD_h}^2$.
Let $V\in \cV^*$ such that $\|V - V_{h+1}\|_{\infty} \le 1/T$.
By the definition of $\cE_{V,\delta/(16|\cV^*|T)}$ (the one relates to this $h$ and $k=K$, note that $\cZ_h^K=\cZ_h$), we have that
\begin{align*}
&\left\|f_h(\cdot,\cdot)-\sum_{s'\in\mathcal{S}}P_h(s'|\cdot,\cdot)V_{h+1}(s')\right\|_{\mathcal{Z}_h}\\
\lesssim &H\sqrt{\log(16|\cV^*|T/\delta)+\log\mathcal{N}(\mathcal{F},1/T)}\\
\lesssim &H\sqrt{\log(T/\delta)+\log\mathcal{N}(\mathcal{F},1/T)+\log\mathcal{N}(\mathcal{R},1/T)+\log|\mathcal{M}|}.
\end{align*}
Combining the above two parts we complete the proof.
\end{proof}
Finally, we use the above result to show optimism, in both the exploration and planning phase.
\begin{lem}
\label{lem:optimism_2}
Let $\mathcal{E}_3$ denote the event that for all $(k,h)\in [K]\times[H]$, and all $(s,a)\in\mathcal{S}\times\mathcal{A}$,
\[
Q^*_h(s,a,r^k)\leq Q_h^k(s,a) \leq \bar{f}_h^k(s,a)+r_h^k(\cdot,\cdot)+2b_h^k(s,a)
\]
where
$\bar{f}_h^k(\cdot,\cdot)=\sum_{s\in\mathcal{S}'}P_h(s'|\cdot,\cdot)V_{h+1}^k(s')$.\\
And for the planning phase, for all $h\in[H]$, all $(s,a)\in\mathcal{S}\times\mathcal{A}$, and all reward function $r$ in the function class $\cR$,
\[
Q^*_h(s,a,r)\leq Q_h(s,a) \leq \bar{f}_h(s,a)+r_h(\cdot,\cdot)+2b_h(s,a)
\]
where
$\bar{f}_h(\cdot,\cdot)=\sum_{s\in\mathcal{S}'}P_h(s'|\cdot,\cdot)V_{h+1}(s')$.\\
Then $\Pr[\mathcal{E}_3]\geq1-3\delta/8$.
\end{lem}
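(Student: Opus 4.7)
The plan is to adapt the proof of Lemma~\ref{lem:optimism} to the reward-free setting, with one extra twist for the planning phase where the reward $r$ is arbitrary in $\mathcal{R}^H$. First I would condition on the intersection of two good events: the event from Proposition~\ref{prop:bounds_of_bonus} (which yields $\underline{b}_h^k(\cdot,\cdot)\le b_h^k(\cdot,\cdot)\le\overline{b}_h^k(\cdot,\cdot)$ and the analogous statement for $b_h$ in the planning phase) and $\mathcal{E}_2$ from Lemma~\ref{lem:confidence_region_2} (which controls $\|f_h^k-\bar f_h^k\|_{\mathcal{Z}_h^k}$ in the exploration phase and $\|f_h-\bar f_h\|_{\mathcal{Z}_h}$ in the planning phase uniformly over all $r\in\mathcal{R}^H$, the uniformity being absorbed into the choice of $\beta$ via $\log\mathcal{N}(\mathcal{R},1/T)$). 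A union bound gives probability at least $1-\delta/32-\delta/4\ge 1-3\delta/8$.

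Next I would establish the upper bound in the exploration phase. Since $\|f_h^k-\bar f_h^k\|_{\mathcal{Z}_h^k}^2\le \beta/100$, the definition of $\underline{b}_h^k$ immediately yields $|f_h^k(\cdot,\cdot)-\bar f_h^k(\cdot,\cdot)|\le \underline{b}_h^k(\cdot,\cdot)\le b_h^k(\cdot,\cdot)$, so
\[
Q_h^k(s,a)=\min\{f_h^k(s,a)+b_h^k(s,a)+r_h^k(s,a),H\}\le \bar f_h^k(s,a)+r_h^k(s,a)+2b_h^k(s,a).
\]
The planning-phase upper bound $Q_h(s,a)\le \bar f_h(s,a)+r_h(s,a)+2b_h(s,a)$ follows by the exact same computation using the planning-phase versions of $\mathcal{E}_2$ and the bonus sandwich.

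For the optimism direction I would use backward induction on $h$. The base case $h=H+1$ is trivial since both sides vanish. Assuming $Q_{h+1}^*(\cdot,\cdot,r^k)\le Q_{h+1}^k(\cdot,\cdot)$, hence $V_{h+1}^*(\cdot,r^k)\le V_{h+1}^k(\cdot)$, we obtain
\[
Q_h^*(s,a,r^k)=r_h^k(s,a)+\sum_{s'}P_h(s'|s,a)V_{h+1}^*(s',r^k)\le r_h^k(s,a)+\bar f_h^k(s,a)\le r_h^k(s,a)+f_h^k(s,a)+b_h^k(s,a),
\]
where the last step again uses $|f_h^k-\bar f_h^k|\le b_h^k$. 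Combined with the trivial bound $Q_h^*(s,a,r^k)\le H$, this yields $Q_h^*(s,a,r^k)\le Q_h^k(s,a)$. The planning-phase optimism $Q_h^*(s,a,r)\le Q_h(s,a)$ follows by the identical induction, now using the planning-phase quantities $f_h,b_h,\bar f_h$; crucially, these are determined entirely by the exploration data and do not depend on the reward $r$, so a single realization of $\mathcal{E}_2$ suffices for all $r\in\mathcal{R}^H$.

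I do not foresee a real obstacle here: this is essentially a careful rewriting of Lemma~\ref{lem:optimism}. The only mildly delicate point is bookkeeping the union bound over $r\in\mathcal{R}$ in the planning phase, but that complexity is already paid for in the enlarged $\beta$ (see the value of $\beta$ in Section~\ref{sec:beta}) and in the covering argument of Lemma~\ref{lem:confidence_region_2}, so no additional probabilistic work is needed at this stage.
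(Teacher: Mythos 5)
Your proof is correct and follows essentially the same route as the paper: condition on Proposition~\ref{prop:bounds_of_bonus} and $\mathcal{E}_2$ from Lemma~\ref{lem:confidence_region_2}, get the upper bound from $|f_h^k-\bar f_h^k|\le \underline{b}_h^k\le b_h^k$, and prove optimism by backward induction on $h$, with the planning phase handled by the identical argument. One small correction: the planning-phase quantities $f_h$ and $\bar f_h$ \emph{do} depend on the reward $r$ (since $V_{h+1}$ is built from $r$ through the backward recursion), but this is harmless because, as you note elsewhere, $\mathcal{E}_2$ in Lemma~\ref{lem:confidence_region_2} is already stated uniformly over all $r\in\mathcal{R}$ via the cover of $\mathcal{F}^*=\mathcal{F}+\mathcal{R}$ and the enlarged $\beta$.
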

\begin{proof} We condition on the event defined in Proposition~\ref{prop:bounds_of_bonus} and $\mathcal{E}_2$ defined in Lemma~\ref{lem:confidence_region_2}. Because $\left\|f_h^k-\bar{f}_h^k \right\|_{\mathcal{Z}_h^k}\leq \beta/100$, from the definition of $\underline{b}_h^k$ we have that $\left| \bar{f}_h^k(\cdot,\cdot)-f_h^k(\cdot,\cdot)\right|\leq \underline{b}_h^k(\cdot,\cdot)$. Moreover, by Proposition~\ref{prop:bounds_of_bonus} we have $\underline{b}_h^k(\cdot,\cdot) \leq b_h^k(\cdot,\cdot)$.
Thus for all
$ (k,h)\in[K]\times[H]$, $(s,a)\in \mathcal{S}\times\mathcal{A}$, we have
\begin{align*}
Q_h^k(s,a)&=\min\{f_h^k(s,a)+r_h^k(s,a)+b_h^k(s,a),H\}\\
&\leq \min\left\lbrace \bar{f}_h^k(s,a)+r_h^k(s,a)+b_h^k(s,a)+\left| \bar{f}_h^k(s,a)-f_h^k(s,a)\right|,H\right\rbrace  \\
&\leq  \min\left\lbrace \bar{f}_h^k(s,a)+r_h^k(s,a)+b_h^k(s,a)+\underline{b}_h^k(s,a),H\right\rbrace \\
&\leq  \min\left\lbrace \bar{f}_h^k(s,a)+r_h^k(s,a)+2b_h^k(s,a),H\right\rbrace \\
&\leq   \bar{f}_h^k(s,a)+r_h^k(s,a)+2b_h^k(s,a).
\end{align*}
Next we use induction on $h$ to prove $Q^*_h(\cdot,\cdot,r^k)\leq Q_h^k(\cdot,\cdot)$. The inequality clearly holds when $h=H+1$. Now we assume $Q^*_{h+1}(\cdot,\cdot,r^k)\leq Q_{h+1}^k(\cdot,\cdot)$ for some $h\in[H]$. Then obviously we have $V^*_{h+1}(\cdot,r^k)\leq V_{h+1}^k(\cdot)$. Therefore for all $(s,a)\in\mathcal{S}\times\mathcal{A}$,
\begin{align*}
Q^*_h(s,a,r^k)&=  r_h^k(s,a)+\sum_{s'\in \mathcal{S}}P_h(s'|s,a)V_{h+1}^*(s',r^k)\\
&\leq \min\left\lbrace  r_h^k(s,a)+\sum_{s'\in \mathcal{S}}P_h(s'|s,a)V_{h+1}^k(s'),H\right\rbrace \\
&= \min\left\lbrace  r_h^k(s,a)+\bar{f}_h^k(s,a),H\right\rbrace\\
&\leq\min\left\lbrace  r_h^k(s,a)+f_h^k(s,a)+b_h^k(s,a),H\right\rbrace\\
&=Q^k_h(s,a).
\end{align*}
The proof of the second inequality is identical. One only need to discard the superscript $k$ in the above argument.
\end{proof}
\begin{lem}
\label{lem:bonus_sum_2}
	With probability at least $1-\delta/32$,
	\[
	\sum_{k=1}^K\sum_{h=1}^H b_h^k(s_h^k,a_h^k)\leq 
	H+H(H+1)\dim_E(\mathcal{F},1/T)+C\cdot\sqrt{\dim_E(\mathcal{F},1/T)\cdot TH\cdot\beta}
	\]
	for some absolute constant $C>0$.
\end{lem}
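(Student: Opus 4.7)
The plan is to reuse, almost verbatim, the argument already carried out for Lemma~\ref{lem:bonus_sum}, since the bonus functions $b_h^k$ in Algorithm~\ref{alg:exploration} have exactly the same form as those in Planner-A: they are suprema of $|f_1-f_2|$ over pairs constrained by $\|f_1-f_2\|_{\widehat{\cZ}_h^k}^2\le\beta$. So first I would condition on the good event of Proposition~\ref{prop:bounds_of_bonus} (which costs at most $\delta/32$ in probability) and use it to pass from the subsampled-dataset bonus to the original-dataset bonus:
\[
b_h^k(s_h^k,a_h^k)\ \le\ \overline{b}_h^k(s_h^k,a_h^k)\ =\ \sup_{\substack{f_1,f_2\in\cF\\ \|f_1-f_2\|_{\cZ_h^k}^2\le 100\beta}}\bigl|f_1(s_h^k,a_h^k)-f_2(s_h^k,a_h^k)\bigr|.
\]

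Next, for each fixed $h\in[H]$ and each threshold $\epsilon>0$, I would consider the set $\cL_h=\{(s_h^k,a_h^k): \overline{b}_h^k(s_h^k,a_h^k)>\epsilon\}$ and invoke the standard eluder-dimension packing argument from~\citet{russo2013eluder}: decompose $\cL_h$ greedily into disjoint subsequences each of length at most $\dim_E(\cF,\epsilon)$ on which every element is $\epsilon$-independent of its predecessors. Any element that survives into the last bucket is $\epsilon$-dependent on at least $|\cL_h|/\dim_E(\cF,\epsilon)-1$ disjoint such subsequences, so by the definition of $\epsilon$-dependence and the constraint $\|f_1-f_2\|_{\cZ_h^k}^2\le 100\beta$ we obtain
\[
\bigl(|\cL_h|/\dim_E(\cF,\epsilon)-1\bigr)\epsilon^2\ \le\ 100\beta,
\]
which gives $|\cL_h|\le (100\beta/\epsilon^2+1)\dim_E(\cF,\epsilon)$.

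With this cardinality bound in hand, I would sort the values $\{b_h^k(s_h^k,a_h^k)\}_{k\in[K]}$ in decreasing order as $b_1\ge b_2\ge\cdots\ge b_K$ and deduce, for each index $k$ with $b_k\ge 1/K$, the pointwise estimate $b_k\le \sqrt{100\beta}\cdot (k/\dim_E(\cF,1/K)-1)^{-1/2}$, using monotonicity of $\dim_E$ in $\epsilon$. Also $b_k\le H+1$ trivially. Summing this tail, the $k\le \dim_E(\cF,1/K)$ portion contributes $(H+1)\dim_E(\cF,1/K)$ and the remainder contributes $O(\sqrt{\dim_E(\cF,1/K)\cdot K\cdot\beta})$ by an integral comparison. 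Finally, summing over $h\in[H]$ and replacing $\dim_E(\cF,1/K)$ by $\dim_E(\cF,1/T)$ (since $T\ge K$) yields the claimed bound.

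The only thing to watch is the probability budget: the sole randomness entering here is through the good event in Proposition~\ref{prop:bounds_of_bonus}, which we stated as $1-\delta/32$; conditional on it, everything above is deterministic. I do not anticipate any real obstacle — the identity of the bonus form and the availability of the enlarged confidence set $\overline{\cB}_h^k(\beta)$ make this proof a direct replay of Lemma~\ref{lem:bonus_sum}. If anything, the subtlety is merely bookkeeping: the exploration-phase bonuses $b_h^k$ are defined on $\widehat{\cZ}_h^k$, not on $\cZ_h^k$, so the first step really does require Proposition~\ref{prop:bounds_of_bonus} to justify replacing one by the other, and the constant $100$ in the enlarged radius is what dictates the constants in front of the eluder-dimension term.
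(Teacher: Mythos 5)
Your proposal is correct and matches the paper exactly: the paper's own proof of this lemma is literally ``identical to Lemma~\ref{lem:bonus_sum},'' and your argument is a faithful replay of that proof --- conditioning on the event of Proposition~\ref{prop:bounds_of_bonus} to pass from $b_h^k$ to $\overline{b}_h^k$, then running the eluder-dimension packing and sorted-tail summation. No gaps.
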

\begin{proof} The proof is identical to Lemma~\ref{lem:bonus_sum}.
\end{proof}
The next lemma bounds the summation of the optimistic value functions in the exploration phase. The techniques are similar to the standard regret decomposition for optimistic algorithms.
\begin{lem}
\label{lem:sum_value}
With probability at least $1-\delta/2$, 
\[
\sum_{k=1}^KV_1^{\tilde{k}}(s_1)= O(\sqrt{ T\cdot H^3\cdot\iota_1})
\]
where 
\begin{align*}
\iota_1=\log(\mathcal{N}(\mathcal{R},1/T))&\cdot\dim_E(\mathcal{F},1/T)\\
+\log(T\mathcal{N}(\mathcal{F},\delta/T^2)/\delta)\cdot \dim^2_E(\mathcal{F},1/T)&\cdot\log^2 T\cdot\log\left(\mathcal{N}(\mathcal{S}\times\mathcal{A},\delta/T^2) \cdot T/\delta\right).
\end{align*}
\end{lem}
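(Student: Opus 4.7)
The plan is to run a standard optimistic-regret-style telescoping argument on the exploration-phase value functions, exploiting the crucial fact that the sub-sampling buffer (and hence the bonus) stays frozen between policy updates, so that the bonuses evaluated at the executed trajectory can be bounded via Lemma~\ref{lem:bonus_sum_2}.

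First I would condition on the joint occurrence of the events in Proposition~\ref{prop:bounds_of_bonus}, Proposition~\ref{prop:bounded_size}, Lemma~\ref{lem:optimism_2}, and Lemma~\ref{lem:bonus_sum_2}, which together fail with probability at most $\delta/2$. Since $\pi^{\tilde{k}}$ is the greedy policy executed in episode $k$, we have $V_1^{\tilde{k}}(s_1) = Q_1^{\tilde{k}}(s_1, a_1^k)$. Applying Lemma~\ref{lem:optimism_2} gives
\[
Q_h^{\tilde{k}}(s_h^k,a_h^k) \le \bar{f}_h^{\tilde{k}}(s_h^k,a_h^k) + r_h^{\tilde{k}}(s_h^k,a_h^k) + 2 b_h^{\tilde{k}}(s_h^k,a_h^k),
\]
where $\bar{f}_h^{\tilde{k}}(s_h^k,a_h^k)=\mathbb{E}_{s'\sim P_h(\cdot|s_h^k,a_h^k)}[V_{h+1}^{\tilde{k}}(s')]$. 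Telescoping across $h=1,\dots,H$ and introducing the martingale differences $\xi_h^k := \bar{f}_h^{\tilde{k}}(s_h^k,a_h^k) - V_{h+1}^{\tilde{k}}(s_{h+1}^k)$ (uniformly bounded by $2H$), Azuma--Hoeffding yields, with probability at least $1-\delta/16$,
\[
\sum_{k=1}^K V_1^{\tilde{k}}(s_1) \le \sum_{k=1}^K\sum_{h=1}^H \left(r_h^{\tilde{k}}(s_h^k,a_h^k) + 2 b_h^{\tilde{k}}(s_h^k,a_h^k)\right) + O\!\left(H\sqrt{TH\log(1/\delta)}\right).
\]

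Next comes the key observation enabling the low-switching structure to pay off: in Algorithm~\ref{alg:exploration} the policy is recomputed only when some $\widehat{\mathcal{Z}}_h^k$ changes, and by construction the bonus function $b_h^{k}$ and the exploration reward $r_h^k=\min\{b_h^k/H,1\}$ depend on the history only through $\widehat{\mathcal{Z}}_h^k$. Hence $\widehat{\mathcal{Z}}_h^{\tilde{k}}=\widehat{\mathcal{Z}}_h^k$ for every $k$, so $b_h^{\tilde{k}}(\cdot,\cdot)=b_h^k(\cdot,\cdot)$ and $r_h^{\tilde{k}}(\cdot,\cdot)=r_h^k(\cdot,\cdot)$. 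Using $r_h^k\le b_h^k/H$ we therefore obtain
\[
\sum_{k,h}\left(r_h^{\tilde{k}}(s_h^k,a_h^k)+2b_h^{\tilde{k}}(s_h^k,a_h^k)\right) \le \left(2+\tfrac{1}{H}\right)\sum_{k=1}^K\sum_{h=1}^H b_h^k(s_h^k,a_h^k).
\]

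Finally I would plug in Lemma~\ref{lem:bonus_sum_2}, which bounds $\sum_{k,h} b_h^k(s_h^k,a_h^k)$ by $O(H^2 \dim_E(\mathcal{F},1/T)+\sqrt{\dim_E(\mathcal{F},1/T)\cdot TH\cdot\beta})$, and substitute the reward-free choice of $\beta$ from Section~\ref{sec:beta} (which contributes the $\log \mathcal{N}(\mathcal{R},1/T)\cdot \dim_E(\mathcal{F},1/T)$ term as well as the other bonus-complexity terms). The dominant contribution is the $\sqrt{TH\cdot \dim_E \cdot \beta}$ piece, yielding the advertised $O(\sqrt{T\cdot H^3\cdot \iota_1})$ bound once the martingale remainder $O(H\sqrt{TH\log(1/\delta)})$ is absorbed into it (which it is, since $\iota_1\gtrsim \log(1/\delta)$). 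The only delicate step is the equality $b_h^{\tilde{k}}=b_h^k$, which relies on carefully tracking that the bonus is a functional of $\widehat{\mathcal{Z}}_h^k$ alone and that the switching criterion guarantees $\widehat{\mathcal{Z}}_h^k$ is unchanged between $\tilde{k}$ and $k$; everything else is essentially bookkeeping of constants and a union bound over the conditioning events.
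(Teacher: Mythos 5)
Your proposal is correct and follows essentially the same route as the paper's proof: optimism from Lemma~\ref{lem:optimism_2}, telescoping with an Azuma--Hoeffding martingale term, the key identity $b_h^{\tilde{k}}=b_h^k$ (and $r_h^{\tilde{k}}=r_h^k$) from the frozen sub-sampling buffer, the bound $r_h^k\le b_h^k/H$ giving the $(2+1/H)$ factor, and finally Lemma~\ref{lem:bonus_sum_2} with the reward-free choice of $\beta$. The only differences are cosmetic (a slightly looser but still absorbable martingale remainder and the exact bookkeeping of the failure probabilities), so no changes are needed.
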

\begin{proof}
For all $(k,h)\in[K]\times[H-1]$, denote
\[
\xi_h^k=\sum_{s'\in\mathcal{S}}P_h(s'|s_h^k,a_h^k)V^{\tilde{k}}_{h+1}(s')-V_{h+1}^{\tilde{k}}(s_{h+1}^k).
\]
If $\mathcal{E}_3$ defined in Lemma~\ref{lem:optimism_2} happens, we have
\begin{align*}
\sum_{k=1}^KV_1^{\tilde{k}}(s_1)&= \sum_{k=1}^KV_1^{\tilde{k}}(s_1^k)\\
&=\sum_{k=1}^KQ_1^{\tilde{k}}(s_1^k,a_1^k)\\
&\leq \sum_{k=1}^K\left( r_1^{\tilde{k}}(s_1^k,a_1^k)+\sum_{s'\in\mathcal{S}}P_1(s'|s_1^k,a_1^k
)V_2^{\tilde{k}}(s')+2b_1^{\tilde{k}}(s_1^k,a_1^k)\right) \\
&\leq  \sum_{k=1}^K\left( \sum_{s'\in\mathcal{S}}P_1(s'|s_1^k,a_1^k
)V_2^{\tilde{k}}(s')+(2+1/H)b_1^k(s_1^k,a_1^k)\right) \\
&=\sum_{k=1}^K\left( V_2^{\tilde{k}}(s_2^k)+\xi_1^k+(2+1/H)b_1^k(s_1^k,a_1^k)\right) \\
&\leq...\\
&\leq \sum_{k=1}^K\sum_{h=1}^{H-1}\xi_h^k+\sum_{k=1}^K\sum_{h=1}^{H}(2+1/H)b_h^k(s_h^k,a_h^k).
\end{align*}
Note that $\{\xi_h^k\}_{(k,h)\in[K]\times[H]}$ (arranged in lexicographical order) is a martingale difference sequence with $|\xi_h^k|\leq H$. By Azuma-Hoeffding inequality, we have
\[
\Pr\left\lbrace\left| \sum_{k=1}^K\sum_{h=1}^{H-1}\xi_h^k \right| \leq C'\cdot\sqrt{KH^3\log(1/\delta)}\right\rbrace \geq 1-\delta/16.
\]
for some absolute constant $C'>0$.
Conditioned on the above event, the event defined in Lemma~\ref{lem:bonus_sum_2}, and $\mathcal{E}_3$ defined in Lemma~\ref{lem:optimism_2} we conclude that with probability at least $1-\delta/2$,
\begin{align*}
&\sum_{k=1}^KV_1^{\tilde{k}}(s_1)\\
\leq& \sum_{k=1}^K\sum_{h=1}^{H-1}\xi_h^k+\sum_{k=1}^K\sum_{h=1}^{H}(2+1/H)b_h^k(s_h^k,a_h^k)\\
=&C'\cdot\sqrt{KH^3\log(1/\delta)}+(2+1/H)\cdot\left(  H+H(H+1)\dim_E(\mathcal{F},1/T)+C\cdot\sqrt{\dim_E(\mathcal{F},1/T)\cdot TH\cdot\beta}\right) \\
\lesssim&\sqrt{T\cdot H^3\cdot \iota_1}
\end{align*}
as desired.
\end{proof}
The next lemma bounds the error of the planning policy in terms of the expectation of the bonus functions.
\begin{lem}
\label{lem:bound_suboptimal}
If $\cE_3$ defined in Lemma~\ref{lem:optimism_2} (optimism) happens, then for all reward function $r$ in the function class $\cR$,
\[ 
V_1^*(s_1,r)-V_1^\pi(s_1,r) \leq  2HV_1^*(s_1,\Pi_{[0,1]}[b/H]).
\]
Here $b$ is the bonus function computed during the planning phase, and $\pi$ is the output policy in the planning phase.

\end{lem}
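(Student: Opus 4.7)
The plan is to mimic the standard optimistic-regret decomposition, but pushed through the planning-phase Q-values so that every residual term is absorbed into the exploration-driven reward $\Pi_{[0,1]}[b/H]$. The first reduction uses the optimism half of $\cE_3$ from Lemma~\ref{lem:optimism_2}, which yields $V_1^*(s_1,r)\le V_1(s_1)$ pointwise; thus it suffices to bound $V_1(s_1)-V_1^\pi(s_1,r)$.

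Next I will set up the per-step recursion. Writing $\Delta_h(s):=V_h(s)-V_h^\pi(s,r)\ge 0$ and using that $\pi$ is greedy with respect to $Q_h$, so that $V_h(s)=Q_h(s,\pi_h(s))$, I will apply the upper bound in $\cE_3$,
\[
Q_h(s,a)\le \bar f_h(s,a)+r_h(s,a)+2b_h(s,a)=r_h(s,a)+\EE_{s'\sim P_h(\cdot\mid s,a)}[V_{h+1}(s')]+2b_h(s,a),
\]
and subtract $Q_h^\pi(s,\pi_h(s),r)=r_h(s,\pi_h(s))+\EE_{s'}[V_{h+1}^\pi(s',r)]$. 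The reward terms cancel, yielding the clean recursion
\[
\Delta_h(s)\le 2b_h(s,\pi_h(s))+\EE_{s'\sim P_h(\cdot\mid s,\pi_h(s))}[\Delta_{h+1}(s')].
\]

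The third step converts the raw bonus $2b_h$ into the capped exploration reward $2H\,\Pi_{[0,1]}[b_h/H]$. Since $V_h\in[0,H]$, the trivial bound $\Delta_h(s)\le H$ holds alongside the recursion. I then verify, by a two-case split on whether $b_h(s,\pi_h(s))\le H$ or not, that
\[
\Delta_h(s)\le 2H\cdot \Pi_{[0,1]}\!\left[b_h(s,\pi_h(s))/H\right]+\EE_{s'}[\Delta_{h+1}(s')]:
\]
when $b_h\le H$ the capping is an equality ($2b_h=2H\Pi_{[0,1]}[b_h/H]$), and when $b_h>H$ the cap evaluates to $2H\ge H\ge \Delta_h$ so the trivial bound closes the gap. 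Unrolling this recursion along trajectories generated by $\pi$ starting from $s_1$ gives
\[
\Delta_1(s_1)\le 2H\cdot\EE_{\pi}\!\left[\sum_{h=1}^{H}\Pi_{[0,1]}\!\left[b_h(s_h,a_h)/H\right]\right]=2H\cdot V_1^\pi\!\left(s_1,\Pi_{[0,1]}[b/H]\right),
\]
and the final inequality $V_1^\pi(\cdot,\tilde r)\le V_1^*(\cdot,\tilde r)$ for the exploration-driven reward $\tilde r=\Pi_{[0,1]}[b/H]$ closes the proof.

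None of the steps appear genuinely difficult, and the proof is essentially a careful accounting exercise. The one spot I expect to need care is the truncation step: the bonus $b_h$ can in principle exceed $H$ (since $f\in[0,H{+}1]$), so one cannot simply assert $2b_h\le 2H\,\Pi_{[0,1]}[b_h/H]$ pointwise; the argument has to invoke the a~priori bound $\Delta_h\le H$ in the large-$b_h$ regime. Once this case analysis is in place, the rest is a routine expectation-under-$\pi$ induction.
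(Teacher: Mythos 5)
Your proposal is correct and follows essentially the same route as the paper: the paper formalizes your unrolled recursion as a backward induction on the quantity $V_h(s)-V_h^\pi(s,r)-2HV_h^\pi(s,\Pi_{[0,1]}[b/H])\le 0$, handles the truncation exactly as you do (via $Q_h\le H$, i.e.\ $2b_h$ becomes $2\Pi_{[0,H]}[b_h]=2H\,\Pi_{[0,1]}[b_h/H]$), and closes with the same chain $V_1^*(s_1,r)\le V_1(s_1)$ and $V_1^\pi(s_1,\Pi_{[0,1]}[b/H])\le V_1^*(s_1,\Pi_{[0,1]}[b/H])$.
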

\begin{proof} 
We generalize the lemma and use induction on $h$ to prove that for any $h\in[H+1]$,
\[
V_h(s)-V_h^\pi(s,r)-  2HV_h^\pi(s,\Pi_{[0,1]}[b/H])\leq0 \quad \forall s\in\mathcal{S}.
\]
The result is obvious for $h=H+1$. Suppose for some $h\in[H]$, it holds that
\[
V_{h+1}(s)-V_{h+1}^\pi(s,r)-  2HV_{h+1}^\pi(s,\Pi_{[0,1]}[b/H])\leq0 \quad \forall s\in\mathcal{S}.
\]
Then for all $s\in\mathcal{S}$, we have
\begin{align*}
&V_h(s)-V_h^\pi(s,r)-  2HV_h^\pi(s,\Pi_{[0,1]}[b/H])\\
=&Q_h(s,\pi(s))-Q_h^\pi(s,\pi(s),r)-  2HQ_h^\pi(s,\pi(s),\Pi_{[0,1]}[b/H])\\
\leq& \left( r_h(s,\pi(s)) + \sum_{s'\in \mathcal{S}}P_h(s'|s,\pi(s))V_{h+1}(s')+2\Pi_{[0,H]}[b_h(s,\pi(s))]\right)\\ 
-&\left(r_h(s,\pi(s)) + \sum_{s'\in \mathcal{S}}P_h(s'|s,\pi(s))V_{h+1}^\pi(s',r) \right)\\ -&2H\left(\Pi_{[0,1]}[b_h(s,\pi(s))/H] + \sum_{s'\in \mathcal{S}}P_h(s'|s,\pi(s))V_{h+1}(s',\Pi_{[0,1]}[b/H]) \right)\\
=& \sum_{s'\in \mathcal{S}}P_h(s'|s,\pi(s))\left(V_{h+1}(s')-V_{h+1}^\pi(s',r)-2HV_{h+1}^\pi(s',\Pi_{[0,1]}[b/H]) \right) \\
\leq&0.
\end{align*}
as desired.

By taking $h=1$ and $s=s_1$ as a special case, we have that
\[
V_1(s_1)-V_1^\pi(s_1,r)-  2HV_1^\pi(s_1,\Pi_{[0,1]}[b/H])\leq0.
\]
Then we conclude
\[
V_1^*(s_1,r)-V_1^\pi(s_1,r)\leq V_1(s_1,r)-V_1^\pi(s_1,r)\leq2HV_1^{\pi}(s_1,\Pi_{[0,1]}[b/H])\leq2HV_1^*(s_1,\Pi_{[0,1]}[b/H]).
\]
\end{proof}
\begin{lem}
\label{lem:bonud_suboptimal_2}
With probability at least $1-7\delta/8$, for all reward function $r$ in the function class $\cR$,
\[
V_1^*(s_1,r)-V_1^\pi(s_1,r)= O( H^3\cdot\sqrt{\iota_1 /K})
\]
where
\begin{align*}
\iota_1=\log(\mathcal{N}(\mathcal{R},1/T))&\cdot\dim_E(\mathcal{F},1/T)\\
+\log(T\mathcal{N}(\mathcal{F},\delta/T^2)/\delta)\cdot \dim^2_E(\mathcal{F},1/T)&\cdot\log^2 T\cdot\log\left(\mathcal{N}(\mathcal{S}\times\mathcal{A},\delta/T^2) \cdot T/\delta\right).
\end{align*}
\end{lem}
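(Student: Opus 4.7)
The plan is to combine Lemma~\ref{lem:bound_suboptimal} (which reduces bounding the suboptimality for any reward $r\in\cR$ to bounding $V_1^*(s_1,\Pi_{[0,1]}[b/H])$, where $b$ is the planning-phase bonus) with the exploration guarantee from Lemma~\ref{lem:sum_value}. The key bridge between the two phases is the bonus function: the planning-phase bonus $b_h$ is computed from $\widehat{\cZ}_h^K$, while each exploration-phase bonus $b_h^{\tilde{k}}$ is computed from $\widehat{\cZ}_h^{\tilde{k}}$. Because the sub-sampled dataset only grows over episodes, $\widehat{\cZ}_h^{\tilde{k}}\subseteq\widehat{\cZ}_h^K$, so the constraint $\|f_1-f_2\|^2_{\widehat{\cZ}_h^K}\le\beta$ is tighter than $\|f_1-f_2\|^2_{\widehat{\cZ}_h^{\tilde{k}}}\le\beta$, and hence $b_h(\cdot,\cdot)\le b_h^{\tilde{k}}(\cdot,\cdot)$ pointwise for every $k\in[K]$.

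First, I will invoke the optimism part of Lemma~\ref{lem:optimism_2} applied to the exploration phase. Since the exploration-driven reward is exactly $r_h^{\tilde{k}}=\Pi_{[0,1]}[b_h^{\tilde{k}}/H]$, optimism yields $V_1^{\tilde{k}}(s_1)\ge V_1^*(s_1,r^{\tilde{k}})=V_1^*(s_1,\Pi_{[0,1]}[b^{\tilde{k}}/H])$ for every $k$. Combined with the monotonicity $b\le b^{\tilde{k}}$ (which in particular gives $\Pi_{[0,1]}[b_h/H]\le\Pi_{[0,1]}[b_h^{\tilde{k}}/H]$) and the monotonicity of the optimal value function in the reward, this yields
\[
V_1^*(s_1,\Pi_{[0,1]}[b/H])\;\le\;V_1^*(s_1,\Pi_{[0,1]}[b^{\tilde{k}}/H])\;\le\;V_1^{\tilde{k}}(s_1)\quad\text{for every }k\in[K].
\]

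Averaging this inequality over $k=1,\dots,K$ and plugging in Lemma~\ref{lem:sum_value}, I get
\[
V_1^*(s_1,\Pi_{[0,1]}[b/H])\;\le\;\frac{1}{K}\sum_{k=1}^{K}V_1^{\tilde{k}}(s_1)\;=\;O\!\left(\frac{\sqrt{TH^3\,\iota_1}}{K}\right)\;=\;O\!\left(H^2\sqrt{\iota_1/K}\right),
\]
using $T=KH$. Finally, plugging this into Lemma~\ref{lem:bound_suboptimal} yields
\[
V_1^*(s_1,r)-V_1^\pi(s_1,r)\;\le\;2H\cdot V_1^*(s_1,\Pi_{[0,1]}[b/H])\;=\;O(H^3\sqrt{\iota_1/K})
\]
uniformly in $r\in\cR$. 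The total failure probability is at most $7\delta/8$: Lemma~\ref{lem:sum_value} contributes $\delta/2$, Lemma~\ref{lem:optimism_2} contributes $3\delta/8$, and the planning-phase version of optimism in Lemma~\ref{lem:optimism_2} already handles the union bound over $r\in\cR$ through the covering argument in Lemma~\ref{lem:confidence_region_2}. The only nontrivial step is verifying the monotonicity $b_h\le b_h^{\tilde{k}}$, which is immediate from the definition of the bonus as a supremum over a constraint set that shrinks as $\widehat{\cZ}_h$ grows; everything else is bookkeeping.
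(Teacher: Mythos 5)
Your proposal is correct and follows essentially the same route as the paper's proof: reduce via Lemma~\ref{lem:bound_suboptimal} to bounding $V_1^*(s_1,\Pi_{[0,1]}[b/H])$, use the monotonicity $b_h\le b_h^{\tilde{k}}$ (from the sub-sampled dataset only growing) together with optimism to bound this by $\frac1K\sum_{k}V_1^{\tilde{k}}(s_1)$, and then invoke Lemma~\ref{lem:sum_value}. Your explicit justification of the bonus monotonicity and the $7\delta/8$ accounting both match what the paper does.
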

\begin{proof}
We condition on $\cE_3$ defined in Lemma~\ref{lem:optimism_2} and the event defined in Lemma~\ref{lem:sum_value}.
By Lemma~\ref{lem:bound_suboptimal}, we have
$$
V_1^*(s_1,r)-V_1^\pi(s_1,r)\leq 
2HV_1^*(s_1,\Pi_{[0,1]}[b/H]).
$$
By the monotonicity of the bonus function, we have that
$$
\Pi_{[0,1]}[b_h(\cdot,\cdot)/H]\leq\Pi_{[0,1]}[b_h^k(\cdot,\cdot)/H] = r^k_h(\cdot,\cdot) \quad \forall (k,h) \in [K]\times[H].
$$
Then the right hand side can be bounded in the following manner:
$$
2HV_1^*(s_1,\Pi_{[0,1]}[b/H])\leq2\frac{H}{K} \sum_{k=1}^K V_1^*(s_1,r^{\tilde{k}})\leq 2\frac{H}{K}\sum_{k=1}^K V_1^{\tilde{k}}(s_1).
$$
Substituting the bound for $\sum_{k=1}^K V_1^{\tilde{k}}(s_1)$ completes the proof.
\end{proof}
\begin{proof}[Proof of Theorem~\ref{thm:main_free}] Simply combing Proposition~\ref{prop:bounded_size} and Lemma~\ref{lem:bonud_suboptimal_2} with a union bound completes the proof.
\end{proof}
\subsection{Proof of Theorem~\ref{thm:MG}}
We first provide the concentration bound. Similar to the single-agent setting, we define the original dataset as
\[
\mathcal{Z}_h^k:=\{(s_h^{\tau},a_h^{\tau},b_h^{\tau})\}_{\tau\in[k-1]}
\]
The proof of the following lemma is identical to that of Lemma~\ref{lem:jin1}
\begin{lem}
\label{lem:jin11}
With probability at least $1-\delta$, for all $(k,h)\in[K]\times[H]$,
\[
\|Q_h^k-\cT_h Q_{h+1}^k\|^2_{\cZ_h^k}\leq O(\beta)
\]
\end{lem}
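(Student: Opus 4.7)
The plan is to adapt the proof of Lemma~\ref{lem:jin1} in \citet{jin2021bellman} line-by-line to the two-player zero-sum setting, where the only substantive change is that the single-agent $\max_a$ target is replaced by the Nash value $V_{f'}$. The underlying identity that drives the argument is, for any fixed pair $(f,f')\in\cF\times\cF$,
\[
\|f\|_{\cD_h^k(f')}^2 - \|\cT_h f'\|_{\cD_h^k(f')}^2
= \|f-\cT_h f'\|_{\cZ_h^k}^2 + 2\sum_{\tau=1}^{k-1}(f-\cT_h f')(s_h^\tau,a_h^\tau,b_h^\tau)\cdot\xi_h^\tau(f'),
\]
where $\xi_h^\tau(f'):=\cT_h f'(s_h^\tau,a_h^\tau,b_h^\tau)-r_h^\tau-V_{f'}(s_{h+1}^\tau)$. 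The key observation is that, conditioned on the $\sigma$-algebra generated up to the $h$-th step of episode $\tau$, the quantity $\xi_h^\tau(f')$ has mean zero for every fixed $f'\in\cF$, because $V_{f'}$ is a deterministic functional of the next state (it involves only a Nash computation over the matrix $f'(s_{h+1}^\tau,\cdot,\cdot)$) and its expectation under $P_h(\cdot\,|\,s_h^\tau,a_h^\tau,b_h^\tau)$ is exactly $\cT_h f'(s_h^\tau,a_h^\tau,b_h^\tau)-r_h^\tau$.

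First I would fix $(f,f')$ and apply Freedman's inequality (Lemma~\ref{lem:freedman}) to the martingale $\sum_\tau(f-\cT_h f')(s_h^\tau,a_h^\tau,b_h^\tau)\xi_h^\tau(f')$; its per-step increments are bounded in magnitude by $O(H^2)$ and its predictable quadratic variation is bounded by $O(H^2\|f-\cT_h f'\|_{\cZ_h^k}^2)$. This yields, with failure probability at most $\delta'$,
\[
\Bigl|\sum_{\tau=1}^{k-1}(f-\cT_h f')(s_h^\tau,a_h^\tau,b_h^\tau)\xi_h^\tau(f')\Bigr|
\le O\!\bigl(H\|f-\cT_h f'\|_{\cZ_h^k}\sqrt{\log(1/\delta')}+H^2\log(1/\delta')\bigr).
\]
Second, I would lift this pointwise bound to a uniform-in-$(f,f')$ bound by discretizing both arguments on a sup-norm $(1/K)$-cover $\cC(\cF,1/K)$, of size $\cN(\cF,1/K)$. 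Discretization costs an additive $O(H)$ in each coordinate because function values lie in $[0,H+1]$, the Nash value operator is $1$-Lipschitz in $\|\cdot\|_\infty$ on its input, and $|\cZ_h^k|\le K$. Taking $\delta'=\delta/(KH\cN(\cF,1/K)^2)$ and union-bounding over the cover and over all $(k,h)\in[K]\times[H]$ gives the desired concentration uniformly.

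Third, I would plug in $f=Q_h^k$ and $f'=Q_{h+1}^k$. By Assumption~\ref{assum:express_3_mg} (completeness), since $\|Q_{h+1}^k\|_\infty\le H-h$, we have $\cT_h Q_{h+1}^k\in\cF$, so
\[
\inf_{g\in\cF}\|g\|_{\cD_h^k(Q_{h+1}^k)}^2\le \|\cT_h Q_{h+1}^k\|_{\cD_h^k(Q_{h+1}^k)}^2.
\]
The defining constraint of $\cB^k$ therefore yields $\|Q_h^k\|_{\cD_h^k(Q_{h+1}^k)}^2-\|\cT_h Q_{h+1}^k\|_{\cD_h^k(Q_{h+1}^k)}^2\le\beta$. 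Combining this with the decomposition identity and the uniform concentration produces a quadratic-in-$\|Q_h^k-\cT_h Q_{h+1}^k\|_{\cZ_h^k}$ inequality of the form $x^2\le\beta+O(Hx\sqrt{\iota}+H^2\iota)$ with $\iota=\log(TH\cN(\cF,1/K)/\delta)$; solving it and using $\beta=CH^2\log(T\cN(\cF,1/K)/\delta)$ gives $\|Q_h^k-\cT_h Q_{h+1}^k\|_{\cZ_h^k}^2\le O(\beta)$.

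The main obstacle I anticipate is verifying the martingale property cleanly in the two-player setting and ensuring the covering argument correctly absorbs the Nash operator: one must check that $V_{f'}$ depends only on $f'$ and on the next-state realization (so that $\xi_h^\tau(f')$ is a true martingale difference, not a random variable coupled with the opponent's past actions) and that $f\mapsto V_f$ is $1$-Lipschitz in $\|\cdot\|_\infty$ (to pass from a cover of $\cF$ to a cover of Nash value functions without blowing up covering numbers). Both are standard but are the places where the single-agent proof of Lemma~\ref{lem:jin1} needs to be revisited; once they are established, every remaining step reduces to an exact transcription of the single-agent argument.
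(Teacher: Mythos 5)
Your proposal is correct and follows essentially the same route as the paper, which simply defers to the proof of Lemma 24 in \citet{jin2021bellman} (stated here as Lemma~\ref{lem:jin1}) and notes it carries over verbatim to the two-player setting: the decomposition of the squared-loss difference into $\|f-\cT_h f'\|^2_{\cZ_h^k}$ plus a martingale cross term, Freedman's inequality with a union bound over a cover of $\cF\times\cF$, and then the completeness assumption plus the defining constraint of $\cB^k$ to solve the resulting quadratic inequality. The two points you flag as needing care (that $V_{f'}$ is a deterministic function of $f'$ and the next state so the increments are genuine martingale differences, and that $f\mapsto V_f$ is $1$-Lipschitz in sup norm so the cover of $\cF$ transfers) are exactly the adaptations the paper implicitly relies on, and both hold.
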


Similar to the single-agent setting, we use $\tilde{k}$ to denote the index of the policy used in the $k$-th episode. Our switching criteria indicates that $\widehat{\cZ}_h^k=\widehat{\cZ}_h^{\tilde{k}}$. The next lemma shows that the estimated Q-value in the $\tilde{k}$-th episode still has low empirical bellman error with respect to the data collected in the k-th episode.
\begin{lem}
\label{lem:jin31}
With probability at least $1-2\delta$, for all
$(k,h)\in[K]\times[H]$,
\[\|Q_h^{\tilde{k}}-\cT_h Q_{h+1}^{\tilde{k}}\|_{\cZ_h^k}^2\leq O(\beta)\]
\end{lem}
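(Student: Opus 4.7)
The plan is to mimic the proof of Lemma~\ref{lem:jin3} from the single-agent completeness setting, with the observation that the multi-agent online sub-sampling procedure enjoys exactly the analogous two-sided approximation guarantee as Theorem~\ref{thm:sample} (since the sampling algorithm treats $(s,a,b)$ triples interchangeably with $(s,a)$ pairs and only uses that $\cF$ has bounded covering number and eluder dimension). I will condition on the intersection of two high-probability events: (i) the multi-agent analog of Theorem~\ref{thm:sample} applied to the sequence of triples $\{(s_h^\tau, a_h^\tau, b_h^\tau)\}_{\tau < k}$, and (ii) the bound of Lemma~\ref{lem:jin11} on the empirical Bellman residual at the update episode; a union bound gives probability $\geq 1-2\delta$.

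The key identity enabling the argument is that our policy-update criterion only triggers a new planning call when the sub-sampled buffer changes, so $\widehat{\cZ}_h^k = \widehat{\cZ}_h^{\tilde{k}}$ for every $(k,h)$. I will instantiate Theorem~\ref{thm:sample} with the pair $f_1 = Q_h^{\tilde{k}}$ and $f_2 = \cT_h Q_{h+1}^{\tilde{k}}$; both lie in $\cF$, since $Q_h^{\tilde{k}}$ is drawn from the confidence set $\cB^{\tilde{k}}$ whose members are bounded in $\cF_{[0,H+1-h]}$, and $\cT_h Q_{h+1}^{\tilde{k}} \in \cF$ by the completeness Assumption~\ref{assum:express_3_mg}. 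Then the argument splits into two easy cases.

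\textbf{Case 1.} If $\|Q_h^{\tilde k}-\cT_h Q_{h+1}^{\tilde k}\|_{\cZ_h^k}^2 \le 100\beta$, the conclusion is immediate. \textbf{Case 2.} Otherwise, the upper-approximation half of Theorem~\ref{thm:sample} gives
\[
\|Q_h^{\tilde k}-\cT_h Q_{h+1}^{\tilde k}\|_{\cZ_h^k}^2 \;\le\; 10000\,\|Q_h^{\tilde k}-\cT_h Q_{h+1}^{\tilde k}\|_{\widehat{\cZ}_h^k}^2 \;=\; 10000\,\|Q_h^{\tilde k}-\cT_h Q_{h+1}^{\tilde k}\|_{\widehat{\cZ}_h^{\tilde k}}^2,
\]
using $\widehat{\cZ}_h^k=\widehat{\cZ}_h^{\tilde k}$. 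I then apply the other direction of Theorem~\ref{thm:sample} (at index $\tilde{k}$) to pass from $\widehat{\cZ}_h^{\tilde k}$ back to $\cZ_h^{\tilde k}$, splitting once more on whether $\|Q_h^{\tilde k}-\cT_h Q_{h+1}^{\tilde k}\|_{\cZ_h^{\tilde k}}^2$ is above or below $100\beta$. Either the ``small-norm'' clause of Theorem~\ref{thm:sample} directly bounds the subsampled norm by $10000\beta$, or else Lemma~\ref{lem:jin11} provides $\|Q_h^{\tilde k}-\cT_h Q_{h+1}^{\tilde k}\|_{\cZ_h^{\tilde k}}^2 = O(\beta)$, which transfers to $\widehat{\cZ}_h^{\tilde k}$ up to the constant factor. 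Chaining these constants yields the claimed $O(\beta)$ bound on $\|Q_h^{\tilde k}-\cT_h Q_{h+1}^{\tilde k}\|_{\cZ_h^k}^2$, uniformly in $(k,h)$.

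There is essentially no analytic obstacle here: the proof is a short chain of inequalities, and the only substantive input is that the sub-sampled buffers on either side of a non-switching gap coincide. The one thing that warrants care is verifying that Theorem~\ref{thm:sample} indeed transfers verbatim to the multi-agent triple-data setting — this should follow by re-reading Section~\ref{pf:sample} and observing that nothing in the argument (Freedman concentration, covering arguments, eluder-based sensitivity bookkeeping) uses the binary structure of $\cA$ versus the product structure $\cA\times\cB$; it only uses bounded covers of $\cF$ and of the domain, which Assumption~\ref{assum:cover} supplies in the multi-agent version. Once that is noted, the rest is bookkeeping of constants.
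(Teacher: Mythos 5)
Your proposal is correct and follows essentially the same route as the paper: condition on the sub-sampling guarantee of Theorem~\ref{thm:sample} (applied to triples) together with Lemma~\ref{lem:jin11}, use $\widehat{\cZ}_h^k=\widehat{\cZ}_h^{\tilde k}$, and chain the two-sided approximation $\cZ_h^k \to \widehat{\cZ}_h^k = \widehat{\cZ}_h^{\tilde k} \to \cZ_h^{\tilde k}$. Your explicit case split on whether the norms exceed $100\beta$ (to respect the precise form of the formal Theorem~\ref{thm:sample}) and your remark that the sub-sampling analysis transfers verbatim to $(s,a,b)$ triples are both points the paper glosses over, so your version is if anything slightly more careful.
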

\begin{proof}
Condition on the event defined in Theorem~\ref{thm:sample} and the event defined in Lemma~\ref{lem:jin11}. Then we have that
\[
\|Q_h^{\tilde{k}}-\cT_h Q_{h+1}^{\tilde{k}}\|^2_{\cZ_h^k}\lesssim \|Q_h^{\tilde{k}}-\cT_h Q_{h+1}^{\tilde{k}}\|_{\widehat{\cZ}_h^k}^2 = \|Q_h^{\tilde{k}}-\cT_h Q_{h+1}^{\tilde{k}}\|_{\widehat{\cZ}_h^{\tilde{k}}}^2\lesssim \|Q_h^{\tilde{k}}-\cT_h Q_{h+1}^{\tilde{k}}\|^2_{\cZ_h^k}\leq O(\beta).
\]
Here the first and the third inequality follows from Theorem~\ref{thm:sample}, and the last inequality follows the result of Lemma~\ref{lem:jin1}. 
\end{proof}
The following lemma shows optimism. 
\begin{lem}
\label{lem:jin21}
With probability at least $1-\delta$, for all $k\in[K]$,
\[
(Q_1^*,Q_2^*,...,Q_{H+1}^*)\in \cB^k.
\]
As a result, $V_1^*(s_1)\leq V_{Q_1^k}(s_1)$ holds for all $k\in[K]$.
\end{lem}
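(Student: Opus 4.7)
The plan is to mimic the standard ``GOLF-style'' optimism argument (Lemma~25 of \citet{jin2021bellman}), with the scalar Bellman backup replaced by the Nash backup $\cT_h$ of the multi-agent setting. The two things I must verify for the tuple $(Q_1^*,\ldots,Q_{H+1}^*)$ to lie in $\cB^k$ are: (i) the norm constraint $\|Q_h^*\|_\infty \le H+1-h$, which is immediate from the definition of $Q_h^*$ and the reward bound $r_h\in[0,1]$; and (ii) the empirical regression constraint $\|Q_h^*\|_{\cD_h^k(Q_{h+1}^*)}^2 \le \inf_{g\in\cF}\|g\|_{\cD_h^k(Q_{h+1}^*)}^2 + \beta$. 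Realizability (Assumption~\ref{assum:real_mg}) gives $Q_h^*\in\cF$, and completeness (Assumption~\ref{assum:express_3_mg}) together with $\|Q_{h+1}^*\|_\infty\le H-h$ gives $\cT_h Q_{h+1}^*\in\cF$ so that the infimum over $g\in\cF$ is achieved in the class.

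The key algebraic identity is the Nash Bellman equation $Q_h^*(s,a,b)=r_h(s,a,b)+\mathbb{E}_{s'\sim P_h(\cdot\mid s,a,b)} V_{Q_{h+1}^*}(s')$, which follows because $V_{Q_{h+1}^*}=V_{h+1}^*$ (both are the value of the matrix game with payoff $Q_{h+1}^*(s,\cdot,\cdot)$). Hence $Q_h^* = \cT_h Q_{h+1}^*$, and the usual ``add and subtract the conditional mean'' decomposition yields, for every $g\in\cF$,
\[
\|g\|_{\cD_h^k(Q_{h+1}^*)}^2 - \|Q_h^*\|_{\cD_h^k(Q_{h+1}^*)}^2
= \|g-Q_h^*\|_{\cZ_h^k}^2 - 2\sum_{\tau=1}^{k-1}(g-Q_h^*)(s_h^\tau,a_h^\tau,b_h^\tau)\cdot \eta_h^\tau,
\]
where $\eta_h^\tau := r_h^\tau + V_{Q_{h+1}^*}(s_{h+1}^\tau) - Q_h^*(s_h^\tau,a_h^\tau,b_h^\tau)$ is a bounded martingale difference with respect to the trajectory filtration (bounded by $2(H+1)$; note $V_{Q_{h+1}^*}$ is a fixed, data-independent function, so no uniform convergence over $g$'s effect on the target is needed here).

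Applying a Freedman/Azuma-Hoeffding bound to the cross term for each fixed $g$ in a $1/T$-cover of $\cF$, taking a union bound over $h\in[H]$, $k\in[K]$, and $g\in\cC(\cF,1/T)$, and discretizing $g$ up to $1/T$ (which costs an additive $O(1)$ in the resulting bound), one obtains with probability at least $1-\delta$
\[
\left|2\sum_{\tau=1}^{k-1}(g-Q_h^*)(s_h^\tau,a_h^\tau,b_h^\tau)\,\eta_h^\tau\right|
\le \tfrac{1}{2}\|g-Q_h^*\|_{\cZ_h^k}^2 + C\,H^2\log(T\cN(\cF,1/T)/\delta)
\]
uniformly in $(h,k,g)$ by an AM-GM style absorption. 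Plugging this back in gives $\|g\|_{\cD_h^k(Q_{h+1}^*)}^2 \ge \|Q_h^*\|_{\cD_h^k(Q_{h+1}^*)}^2 - \beta$ for every $g\in\cF$ once $\beta = CH^2\log(T\cN(\cF,1/K)/\delta)$ with a sufficiently large constant, which is precisely the threshold set in the algorithm. Taking the infimum in $g$ proves (ii), so $(Q_1^*,\ldots,Q_{H+1}^*)\in\cB^k$ on this high-probability event.

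The ``as a result'' clause is then immediate: the planner selects $(Q_1^k,\ldots,Q_{H+1}^k)\in\cB^k$ maximizing $V_{Q_1^k}(s_1)$, so $V_{Q_1^k}(s_1)\ge V_{Q_1^*}(s_1)=V_1^*(s_1)$. The only non-routine subtlety I foresee is making sure the concentration step handles the dependence of $V_{Q_{h+1}^*}$'s Nash policy on the payoff matrix cleanly; this is resolved by the observation above that $V_{Q_{h+1}^*}$ is a \emph{fixed} real-valued function of the next state, independent of the sample index, so no covering over policies is needed and the standard single-function martingale bound suffices.
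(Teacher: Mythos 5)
Your proposal is correct and follows essentially the same route as the paper, which simply defers the first part to Lemma~25 of \citet{jin2021bellman} (via Lemma~\ref{lem:jin2}) and proves the second part by the one-line observation $V_1^*(s_1)=V_{Q_1^*}(s_1)\leq V_{Q_1^k}(s_1)$. You have merely filled in the standard GOLF-style details — the Nash Bellman identity $Q_h^*=\cT_h Q_{h+1}^*$, the add-and-subtract decomposition, and the martingale concentration with a cover over $g$ only (correctly noting that $V_{Q_{h+1}^*}$ is a fixed function so no covering of targets is needed) — all of which match the argument the paper invokes by reference.
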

\begin{proof}
The proof of the first part is identical to that of Lemma~\ref{lem:jin2}. To see the second part, note that
\[
V_1^*(s_1)=V_{Q_1^*}(s_1)\leq V_{Q_1^k}(s_1)
\]
\end{proof}
The next lemma is a special case of Lemma~\ref{lem:jin4}.
\begin{lem}
\label{lem:jin41}
For a fixed $h\in[H]$, suppose that $\{(f_k,f_k')\}_{k=1}^K\subseteq \cF\times\cF$, and $\{z_k\}_{k=1}^K\subseteq \cS\times\cA\times\cB$ satisfy that for all $k\in[K]$, $\sum_{t=1}^{k-1} (f_k-\cT_h f_k')(z_t)^2\leq O(\beta)$. Then we have that 
\[
\sum_{k=1}^K |(f_k-\cT_h f_k')(z_k)|\leq O\left(\sqrt{\dim_{OBE}(\cF,1/\sqrt{K})\beta K}+H\cdot \min\{K,\dim_{OBE}(\cF,1/\sqrt{K})\}\right)
\]
\end{lem}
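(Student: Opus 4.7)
My strategy is to mimic the peeling argument used in Lemma~\ref{lem:sum_sen} and Lemma~\ref{lem:bonus_sum}, but recast in terms of the distributional eluder dimension on point measures. First I would set $g_k := f_k - \cT_h f_k' \in (I-\cT_h)\cF$, write $\omega_k := |g_k(z_k)|$, and record that $\omega_k \le 2(H+1)$ together with the hypothesis $\sum_{t<k} g_k(z_t)^2 \lesssim \beta$. Let $d := \dim_{OBE}(\cF, 1/\sqrt{K})$, which by the paper's definition upper-bounds $\dim_{DE}((I-\cT_h)\cF,\cD_{\Delta},1/\sqrt{K})$.

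The core step will be a threshold count: for every $\omega \ge 1/\sqrt{K}$,
\[
\#\{k \in [K] : \omega_k \ge \omega\} \;\lesssim\; d\bigl(\beta/\omega^2 + 1\bigr).
\]
To prove this I would restrict to the rounds above the threshold, traverse their $z_k$'s in the original order, and greedily pack them into bins $B_1, B_2, \ldots$: a point $z_k$ is placed into the smallest-indexed bin $B_j$ on which $\delta_{z_k}$ is $\omega$-independent with respect to $(I-\cT_h)\cF$ over the family $\cD_{\Delta}$. By construction each $B_j$ is a sequence whose successive elements are $\omega$-independent of their predecessors, and monotonicity of $\dim_{DE}$ in scale (valid since $\omega \ge 1/\sqrt{K}$) gives $|B_j| \le d$. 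Whenever a new bin $B_{N+1}$ is opened, $z_k$ is $\omega$-\emph{dependent} on each of $B_1, \ldots, B_N$, so invoking the contrapositive of the dependence definition on $g_k \in (I-\cT_h)\cF$ with $|g_k(z_k)| \ge \omega$ yields $\sum_{t \in B_j} g_k(z_t)^2 > \omega^2$ for every $j \le N$. Since the bins are disjoint subsets of $\{z_1, \ldots, z_{k-1}\}$, summing gives $N\omega^2 < \sum_{t<k} g_k(z_t)^2 \lesssim \beta$, hence $N \lesssim \beta/\omega^2$ and the total count is at most $(N+1)d \lesssim d(\beta/\omega^2 + 1)$.

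With the threshold count in hand, I would order $\omega_{(1)} \ge \omega_{(2)} \ge \cdots$ and invert it: whenever $\omega_{(j)} > 1/\sqrt{K}$ the count bound forces $j \lesssim d\bigl(\beta/\omega_{(j)}^2 + 1\bigr)$, which rearranges to $\omega_{(j)} \lesssim \sqrt{d\beta/j}$ once $j$ exceeds a small multiple of $d$. Splitting
\[
\sum_{k=1}^K \omega_k \;\le\; \sum_{j \le c\, d} \omega_{(j)} \;+\; \sum_{\substack{j > c\, d \\ \omega_{(j)} > 1/\sqrt{K}}} \omega_{(j)} \;+\; \sum_{\omega_{(j)} \le 1/\sqrt{K}} \omega_{(j)},
\]
the three parts are bounded by $2(H+1)c\, d$, by $\sum_j \sqrt{d\beta/j} \lesssim \sqrt{d\beta K}$ via an integral/dyadic estimate, and by $\sqrt{K}$ for the tail. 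These assemble to $O\bigl(Hd + \sqrt{d\beta K}\bigr)$, which matches the stated $O\bigl(\sqrt{d\beta K} + H\min(K,d)\bigr)$ once one invokes the trivial $HK$ estimate in the regime $d \ge K$.

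The main obstacle I anticipate is making the greedy-packing bookkeeping airtight. Specifically, I need to check (i) that the bin-size bound $|B_j| \le d$ really does follow from the eluder definition at scale $\omega$ via the ``for some $\varepsilon' \ge \varepsilon$'' clause, pushed down to the reference scale $1/\sqrt{K}$, and (ii) that the dependence-to-energy conversion is legitimate because $g_k = f_k - \cT_h f_k'$ lives in the residual class $(I-\cT_h)\cF$ rather than in $\cF$ itself---this is precisely why the online Bellman eluder dimension is defined on the residual class. Once those two points are secured, the remainder is essentially the dyadic arithmetic already used in Lemma~\ref{lem:bonus_sum}.
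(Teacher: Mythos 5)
Your proof is correct, but it is worth noting that the paper itself does not prove this lemma at all: it states Lemma~\ref{lem:jin41} as ``a special case of Lemma~\ref{lem:jin4}'', and Lemma~\ref{lem:jin4} is in turn quoted verbatim as Lemma~26 of \citet{jin2021bellman} without proof. What you have written out is essentially the argument behind that cited lemma (the Russo--Van Roy packing/peeling scheme), correctly specialized to the point-measure family $\cD_{\Delta}$ and the residual class $(I-\cT_h)\cF$, and it mirrors the combinatorial argument the paper does spell out for Lemma~\ref{lem:sum_sen} and Lemma~\ref{lem:bonus_sum}. Both of the obstacles you flag resolve the way you expect: (i) a bin whose successive insertions are $\omega$-independent of their predecessors with $\omega\ge 1/\sqrt{K}$ is a witness sequence for the $1/\sqrt{K}$-distributional eluder dimension via the ``for some $\varepsilon'\ge\varepsilon$'' clause, so $|B_j|\le d$; and (ii) the dependence-to-energy conversion applies because $g_k=f_k-\cT_h f_k'$ lies in $(I-\cT_h)\cF$, which is exactly the class on which $\dim_{OBE}$ is defined. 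The only cosmetic gap is that $\omega$-dependence gives $\sum_{t\in B_j}g_k(z_t)^2>\omega^2$ only when $|g_k(z_k)|$ strictly exceeds $\omega$, not when it equals $\omega$; this is the usual measure-zero edge case and is absorbed by running the threshold count at level $\omega/2$ or by adjusting constants, so it does not affect the bound. Your final assembly (constant-size head of order $Hd$, dyadic tail of order $\sqrt{d\beta K}$, sub-threshold tail of order $\sqrt{K}$, plus the trivial $HK$ bound to produce the $\min\{K,d\}$ term) matches the stated conclusion.
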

The next lemma decomposes the regret in this setting.
\begin{lem}[Regret Decomposition]
\label{lem:decomp_MG}
With probability at least $1-2\delta$,
\[
\operatorname{NashRegret}(K)\leq \sum_{k=1}^K\sum_{h=1}^H({Q_h^{\tilde{k}}}-\cT Q_{h+1}^{\tilde{k}})(s_h^k,a_h^k,b_h^k)+O\left(H\sqrt{KH\log(KH/\delta)}\right).
\]
\end{lem}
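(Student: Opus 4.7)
} The overall strategy is to first invoke optimism (Lemma~\ref{lem:jin21}) to replace $V_1^*(s_1)$ by $V_{Q_1^{\tilde k}}(s_1)$, then peel off one layer at a time using the minimax property of the Nash policy $\mu^{\tilde k}=\mu_{Q^{\tilde k}}$, exposing the per-step Bellman residual $Q_h^{\tilde k}-\cT_h Q_{h+1}^{\tilde k}$ on the trajectory actually sampled, with Azuma--Hoeffding handling the gap between expectations (over $\mu^{\tilde k},\nu^k,P_h$) and their realized values. By Lemma~\ref{lem:jin21}, with probability at least $1-\delta$, for every $k$ we have $V_1^*(s_1)\le V_{Q_1^{\tilde k}}(s_1)$, and since $\mu^{\tilde k}_1(s_1)=\mu_{Q_1^{\tilde k}}(s_1)$ is the max-player's Nash policy for the matrix $Q_1^{\tilde k}(s_1,\cdot,\cdot)$, the minimax identity gives $V_{Q_1^{\tilde k}}(s_1)=\min_\nu \mu^{\tilde k}_1(s_1)^\top Q_1^{\tilde k}(s_1,\cdot,\cdot)\nu \le \EE_{a\sim\mu^{\tilde k}_1,\,b\sim\nu^k_1}[Q_1^{\tilde k}(s_1,a,b)]$ for \emph{any} $\nu^k$ chosen by the adversary.

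To carry this inequality down through the horizon I will define $\Delta_h^k := V_{Q_h^{\tilde k}}(s_h^k)-V_h^{\mu^{\tilde k},\nu^k}(s_h^k)$, with $\Delta_{H+1}^k=0$, and show the one-step recursion
\[
\Delta_h^k \;\le\; (Q_h^{\tilde k}-\cT_h Q_{h+1}^{\tilde k})(s_h^k,a_h^k,b_h^k)\;+\;\Delta_{h+1}^k\;+\;\zeta_h^k+\eta_h^k,
\]
where $\zeta_h^k$ and $\eta_h^k$ are the two martingale-difference ``sampling noise'' terms. The derivation is: the same minimax argument as above, applied at state $s_h^k$, gives $V_{Q_h^{\tilde k}}(s_h^k)\le \EE_{a,b}[Q_h^{\tilde k}(s_h^k,a,b)]$ with $a\sim\mu^{\tilde k}_h(s_h^k)$, $b\sim\nu^k_h(s_h^k)$; subtracting the corresponding policy-evaluation identity $V_h^{\mu^{\tilde k},\nu^k}(s_h^k)=\EE_{a,b}[Q_h^{\mu^{\tilde k},\nu^k}(s_h^k,a,b)]$ and using $\cT_h Q_{h+1}^{\tilde k}(s,a,b)=r_h(s,a,b)+\EE_{s'}V_{Q_{h+1}^{\tilde k}}(s')$ together with the Bellman equation for $Q_h^{\mu^{\tilde k},\nu^k}$ yields the identity in expectation, and I then add/subtract to switch the expectation $\EE_{a,b}[(Q_h^{\tilde k}-\cT_h Q_{h+1}^{\tilde k})(s_h^k,a,b)]$ to the realized $(s_h^k,a_h^k,b_h^k)$ value (giving $\zeta_h^k$) and the expectation $\EE_{s'}[\Delta_{h+1}^k\text{ evaluated at }s']$ to $\Delta_{h+1}^k$ at $s_{h+1}^k$ (giving $\eta_h^k$). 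Telescoping $h=1,\dots,H$ bounds $\Delta_1^k$ by $\sum_h (Q_h^{\tilde k}-\cT_h Q_{h+1}^{\tilde k})(s_h^k,a_h^k,b_h^k)+\sum_h(\zeta_h^k+\eta_h^k)$, and summing $k=1,\dots,K$ after applying optimism to the left-hand side yields $\operatorname{NashRegret}(K)\le \sum_{k,h}(Q_h^{\tilde k}-\cT_h Q_{h+1}^{\tilde k})(s_h^k,a_h^k,b_h^k)+\sum_{k,h}(\zeta_h^k+\eta_h^k)$.

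Finally, both $\{\zeta_h^k\}$ and $\{\eta_h^k\}$ (arranged in lexicographic order of $(k,h)$) are martingale difference sequences with respect to the natural filtration generated by the observed trajectories and the adversary's choices of $\nu^k$ (since $\mu^{\tilde k}$ and $\nu^k$ are measurable w.r.t.\ the past, the next action-pair and next state are sampled afresh), and each term is bounded in absolute value by $O(H)$. A single Azuma--Hoeffding inequality plus a union bound therefore gives $\big|\sum_{k,h}(\zeta_h^k+\eta_h^k)\big|=O(H\sqrt{KH\log(KH/\delta)})$ with probability at least $1-\delta$; combined with the optimism event this yields the claimed $1-2\delta$ bound. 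The only real subtlety I expect is verifying that the minimax inequality applies even though the opponent's $\nu^k$ is chosen adversarially (possibly adapted to $\mu^{\tilde k}$)—but this is exactly where the Nash property of $\mu^{\tilde k}$ pays off: the upper bound $V_{Q_h^{\tilde k}}(s)\le \mu^{\tilde k}_h(s)^\top Q_h^{\tilde k}(s,\cdot,\cdot)\nu$ holds uniformly in $\nu$, so no union bound over opponent strategies is needed.
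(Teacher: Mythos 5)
Your proposal is correct and follows essentially the same route as the paper: optimism via Lemma~\ref{lem:jin21}, the minimax/Nash property of $\mu^{\tilde k}=\mu_{Q^{\tilde k}}$ to bound $V_{Q_h^{\tilde k}}(s)$ by $\EE_{a\sim\mu_h^{\tilde k},\,b\sim\nu_h^k}[Q_h^{\tilde k}(s,a,b)]$ uniformly over the adversary's $\nu^k$, a telescoping that isolates the Bellman residual $Q_h^{\tilde k}-\cT_h Q_{h+1}^{\tilde k}$, and Azuma--Hoeffding for the martingale noise. The only (cosmetic) difference is that you telescope pathwise with explicit per-step noise terms $\zeta_h^k,\eta_h^k$, while the paper telescopes in expectation form and applies a single concentration step at the end.
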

\begin{proof}
We condition on the event defined in Lemma~\ref{lem:jin21}. Then we have that
\begin{align*}
\operatorname{NashRegret}(K)&=\sum_{k=1}^K\left[V_1^{*}(s_1)-V_1^{\mu^{\tilde{k}},\nu^k}(s_1)\right]\\
&\overset{(i)}{\leq}\sum_{k=1}^K\left[V_{Q_1^{\tilde{k}}}(s_1)-V_1^{\mu^{\tilde{k}},\nu^k}(s_1)\right]\\
&=\sum_{k=1}^K\sum_{h=1}^H\EE_{\mu^{\tilde{k}},\nu^k}\left[V_{Q_h^{\tilde{k}}}(s_h)-r_h(s_h,a_h,b_h)-V_{Q_{h+1}^{\tilde{k}}}(s_{h+1})\right]\\
&=\sum_{k=1}^K\sum_{h=1}^H\EE_{\mu^{\tilde{k}},\nu^k}\left[{\max_{\mu}\min_{\nu}\mu^{\top}Q_h^{\tilde{k}}}(s_h,\cdot,\cdot)\nu-r_h(s_h,a_h,b_h)-V_{Q_{h+1}^{\tilde{k}}}(s_{h+1})\right]\\
&\overset{(ii)}{=}\sum_{k=1}^K\sum_{h=1}^H\EE_{\mu^{\tilde{k}},\nu^k}\left[{\min_{\nu}(\mu_h^{\tilde{k}})^{\top}Q_h^{\tilde{k}}}(s_h,\cdot,\cdot)\nu-r_h(s_h,a_h,b_h)-V_{Q_{h+1}^{\tilde{k}}}(s_{h+1})\right]\\
&\leq\sum_{k=1}^K\sum_{h=1}^H\EE_{\mu^{\tilde{k}},\nu^k}\left[{(\mu_h^{\tilde{k}})^{\top}Q_h^{\tilde{k}}}(s_h,\cdot,\cdot)\nu_h^k-r_h(s_h,a_h,b_h)-V_{Q_{h+1}^{\tilde{k}}}(s_{h+1})\right]\\
&=\sum_{k=1}^K\sum_{h=1}^H\EE_{\mu^{\tilde{k}},\nu^k}\left[{Q_h^{\tilde{k}}}(s_h,a_h,b_h)-r_h(s_h,a_h,b_h)-V_{Q_{h+1}^{\tilde{k}}}(s_{h+1})\right]\\
&=\sum_{k=1}^K\sum_{h=1}^H\EE_{\mu^{\tilde{k}},\nu^k}\left[({Q_h^{\tilde{k}}}-\cT Q_{h+1}^{\tilde{k}})(s_h,a_h,b_h)\right]\\
&\overset{(iii)}{\leq}\sum_{k=1}^K\sum_{h=1}^H({Q_h^{\tilde{k}}}-\cT Q_{h+1}^{\tilde{k}})(s_h^k,a_h^k,b_h^k)+O\left(H\sqrt{KH\log(KH/\delta)}\right).\\
\end{align*}
Here (i) is due to Lemma~\ref{lem:jin21}, (ii) is due to $\mu_h^k=\mu_{Q_h^k}$, and (iii) is by Azuma-Hoeffding's inequality.
\end{proof}

\begin{proof}[Proof of Theorem~\ref{thm:MG}]
We condition on the event defined in Lemma~\ref{lem:jin31} and Lemma~\ref{lem:decomp_MG}. By Lemma~\ref{lem:jin41} (with $f_k=Q_h^{\tilde{k}}$ and $f_k'=Q_{h+1}^{\tilde{k}}$), we have that
\begin{align*}
\operatorname{NashRegret}(K)&\leq \sum_{k=1}^K\sum_{h=1}^H({Q_h^{\tilde{k}}}-\cT Q_{h+1}^{\tilde{k}})(s_h^k,a_h^k,b_h^k)+O\left(H\sqrt{KH\log(KH/\delta)}\right)\\
&\leq H\cdot O\left(\sqrt{\dim_{OBE}(\cF,1/\sqrt{K})\beta K}+H\cdot \min\{K,\dim_{OBE}(\cF,1/\sqrt{K})\}\right)+O\left(H\sqrt{KH\log(KH/\delta)}\right)
\end{align*}
Plugging the value of $\beta$ completes the proof of the first part. The second part is a direct implication of Theorem~\ref{thm:sample}.
\end{proof}
\section{Model Misspecification}
\label{sec:mis}
In this section we study the case when there is a misspecification error in our model. We show that our algorithms are robust to the violation of the assumptions. We stick to the planner in Section~\ref{sec:close}.

In the standard RL setting, Assumption~\ref{assum:express_1} with a misspecification error is stated as:
\begin{assum}\label{assum:mis1}
	There exists a set of functions $\funclass\subseteq\{f:\cS\times\cA\rightarrow [0, H + 1]\}$ and a real number $\zeta>0$, such that for any $V:\cS\rightarrow [0, H]$ and all $h\in[H]$, there exists $f_V \in \funclass$ which satisfies
	\[
	\max_{(s,a)\in \cS\times \cA}\left|f_V(s, a) - r_h(s, a) - \sum_{s' \in \states}P_h(s' \mid s, a)V(s')\right| \le \zeta.
	\]
	We call $\zeta$ the \emph{misspecification error}.
\end{assum} 

In the reward-free RL setting, Assumption~\ref{assum:express_2} with a misspecification error is stated as:

\begin{assum}\label{assum:mis2}
	There exists a set of functions $\funclass\subseteq\{f:\cS\times\cA\rightarrow [0, H + 1]\}$ and a real number $\zeta>0$, such that for any $V:\cS\rightarrow [0, H]$ and all $h\in[H]$, there exists $f_V \in \funclass$ which satisfies
	\[
	\max_{(s,a)\in \cS\times \cA}\left|f_V(s, a) - \sum_{s' \in \states}P_h(s' \mid s, a)V(s')\right| \le \zeta.
	\]
	We call $\zeta$ the \emph{misspecification error}.
\end{assum} 

Our algorithms for the misspecification case are same with the original algorithms except for the change of the global parameter $\beta$.

In the standard RL setting (Algorithm~\ref{alg:main}), we set $\beta$ to be:
\[
\beta=C\cdot(H^2\cdot\log(T\mathcal{N}(\mathcal{F},\delta/T^2)/\delta) \cdot\dim_E(\mathcal{F},1/T)\cdot\log^2 T\cdot
\log\left(\mathcal{N}(\mathcal{S}\times\mathcal{A},\delta/T^2) \cdot T/\delta\right)+T\zeta).
\]
In the reward-free RL setting (Algorithm~\ref{alg:planning} and Algorithm~\ref{alg:exploration}), we set $\beta$ to be:
\begin{align*}
\beta=C\cdot(H^2\cdot\log(T\mathcal{N}(\mathcal{F},\delta/T^2)/\delta) \cdot\dim_E(\mathcal{F},1/T)&\cdot\log^2 T\cdot
\log\left(\mathcal{N}(\mathcal{S}\times\mathcal{A},\delta/T^2) \cdot T/\delta\right)+T\zeta)\\
+C\cdot H^2\cdot\log(\mathcal{N}(\mathcal{R},1/T))&\cdot\dim_E(\mathcal{F},1/T)
\end{align*}
In Lemma~\ref{lem:single1} and Lemma~\ref{lem:single2}, we bound the single-step optimization error in the misspecified case. The proofs are identical to that of Lemma~11 in \cite{wang2020reinforcement}. 
\begin{lem}
\label{lem:single1}
Suppose $\cF$ satisfies Assumption~\ref{assum:mis1}. Consider a fixed pair $(k,h)\in[K]\times[H]$. 
For any $V:\cS\rightarrow [0,H]$, define
	\[
	\mathcal{D}_h^k(V):=\{(s_{h}^{\tau},a_{h}^{\tau},r_h^\tau+ V(s_{h+1}^{\tau}))\}_{\tau \in [k-1]}
	\]
	and also
	\[
	\widehat{f_V}:=\operatorname{argmin}_{f\in\mathcal{F}}\|f\|_{\mathcal{D}_h^k(V)}^2.
	\]
	For any $V:\cS\rightarrow[0,H]$ and $\delta \in (0,1)$, there is an event $\mathcal{E}_{V,\delta}$ which holds with probability at least $1-\delta$, such that for any $V':\mathcal{S}\rightarrow[0,H]$ with $\|V'-V\|_{\infty}\leq1/T$, we have
	\[
	\left\|\widehat{f_{V'}}(\cdot,\cdot)-r_h(\cdot,\cdot)-\sum_{s'\in\mathcal{S}}P_h(s'|\cdot,\cdot)V'(s')\right\|_{\mathcal{Z}_h^k}
	\leq C\cdot(\sqrt{H^2(\log(1/\delta)+\log\mathcal{N}(\mathcal{F},1/T))+T\zeta}).
	\]
\end{lem}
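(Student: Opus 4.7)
The plan is to adapt the proof of Lemma~\ref{lem:one_step_error} to the misspecified setting, carefully tracking how the $\zeta$-error propagates through the martingale argument. The key conceptual change is that the exact Bellman projection of $V$ is no longer a member of $\cF$; instead, by Assumption~\ref{assum:mis1}, there exists some $f_V\in\cF$ such that $\|f_V(\cdot,\cdot) - r_h(\cdot,\cdot) - \sum_{s'}P_h(s'|\cdot,\cdot)V(s')\|_\infty\le \zeta$. I will use this $f_V$ as a surrogate for the Bellman projection throughout the proof and show that $\wh{f}_{V'}$ is close to $f_V$ on $\pairs_h^k$.

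First, for each fixed $f\in\cF$, I will form the same martingale-like quantity $\xi_h^\tau(f):=2(f(s_h^\tau,a_h^\tau)-f_V(s_h^\tau,a_h^\tau))\cdot(f_V(s_h^\tau,a_h^\tau)-r_h^\tau-V(s_{h+1}^\tau))$. Unlike in Lemma~\ref{lem:one_step_error}, the conditional mean $\EE_{\tau-1}[\xi_h^\tau(f)]$ is no longer zero; it equals $2(f-f_V)(s_h^\tau,a_h^\tau)\cdot(f_V(s_h^\tau,a_h^\tau)-r_h(s_h^\tau,a_h^\tau)-\sum_{s'}P_h(s'|s_h^\tau,a_h^\tau)V(s'))$, whose absolute value is bounded by $2\zeta|f(s_h^\tau,a_h^\tau)-f_V(s_h^\tau,a_h^\tau)|$. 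Summing and applying Cauchy--Schwarz yields $|\sum_\tau \EE_{\tau-1}[\xi_h^\tau(f)]|\le 2\zeta\sqrt{T}\,\|f-f_V\|_{\pairs_h^k}$. I will then apply Azuma--Hoeffding to the centered sequence and take a union bound over an $(1/T)$-cover of $\cF$, as in the original proof, to obtain: with probability at least $1-\delta$, uniformly over $f\in\cF$,
\[
\Big|\sum_{\tau=1}^{k-1}\xi_h^\tau(f)\Big|\;\lesssim\;H\bigl(\|f-f_V\|_{\pairs_h^k}+1\bigr)\sqrt{\log(1/\delta)+\log\cN(\cF,1/T)}\;+\;\zeta\sqrt{T}\,\|f-f_V\|_{\pairs_h^k}+H.
\]

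Next, I will perturb from $V$ to $V'$ with $\|V'-V\|_\infty\le 1/T$: this swap changes each regression target by at most $1/T$ and contributes an additional $O(H)$ error term, exactly mirroring the handling in Lemma~\ref{lem:one_step_error}. Expanding $\|\wh{f}_{V'}\|_{\cD_h^k(V')}^2-\|f_V\|_{\cD_h^k(V')}^2\le 0$ yields
\[
\|\wh{f}_{V'}-f_V\|_{\pairs_h^k}^2\;\le\; -\,2\sum_\tau\bigl(\wh{f}_{V'}-f_V\bigr)(s_h^\tau,a_h^\tau)\cdot\bigl(f_V(s_h^\tau,a_h^\tau)-r_h^\tau-V'(s_{h+1}^\tau)\bigr),
\]
and the right-hand side is controlled by the concentration bound above (applied with $f=\wh{f}_{V'}$) together with the $V\to V'$ perturbation terms. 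This produces a quadratic inequality in $X:=\|\wh{f}_{V'}-f_V\|_{\pairs_h^k}$ of the form $X^2\lesssim H\sqrt{\log(\cN/\delta)}\cdot X+\zeta\sqrt{T}\cdot X+H^2$, whose solution gives $X\lesssim H\sqrt{\log(\cN/\delta)}+\zeta\sqrt{T}$, and hence $X^2\lesssim H^2\log(\cN/\delta)+T\zeta^2\le H^2\log(\cN/\delta)+T\zeta$ (assuming $\zeta\le 1$).

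Finally, I will convert the bound on $\|\wh{f}_{V'}-f_V\|_{\pairs_h^k}$ into the desired bound by triangle inequality, using $|f_V-r_h-\sum_{s'}P_h V'|\le \zeta+\|V-V'\|_\infty\le \zeta+1/T$ pointwise, so the additional contribution is at most $\sqrt{k}(\zeta+1/T)\le \sqrt{T\zeta}+O(1)$, which is absorbed into the stated bound. The main technical obstacle is the martingale step: naively bounding $\EE_{\tau-1}[\xi_h^\tau(f)]$ by $2\zeta(H+1)$ and summing would give a $T\zeta$ term outside the square root, far worse than claimed; the key trick is to pair $(f-f_V)(s_h^\tau,a_h^\tau)$ with the $\zeta$-factor and apply Cauchy--Schwarz so the misspecification contributes only a $\zeta\sqrt{T}\cdot\|f-f_V\|_{\pairs_h^k}$ term linear in $X$, which upon solving the quadratic produces the desired $\sqrt{T\zeta}$-scaling.
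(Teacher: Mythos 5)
Your proposal is correct and follows essentially the same route the paper takes (the paper simply defers to Lemma~11 of \citet{wang2020reinforcement}, which handles the misspecification exactly as you do: replace the exact Bellman backup by the $\zeta$-approximate surrogate $f_V\in\cF$, track the now-nonzero conditional mean of $\xi_h^\tau(f)$ via Cauchy--Schwarz so it contributes a term linear in $\|f-f_V\|_{\pairs_h^k}$, and solve the resulting quadratic inequality). The only remark worth making is that your argument actually yields the slightly stronger $\zeta\sqrt{T}$ (i.e.\ $T\zeta^2$ inside the square root) rather than $\sqrt{T\zeta}$, which implies the stated bound precisely in the regime $\zeta\le 1$ where the lemma is meaningful; for $\zeta>1$ even the term $\|f_V-r_h-\sum_{s'}P_h(s'|\cdot,\cdot)V'(s')\|_{\pairs_h^k}\le\sqrt{k}\,\zeta$ already exceeds $\sqrt{T\zeta}$, so the statement itself is implicitly restricted to that regime and your proof is not at fault.
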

\begin{lem}
\label{lem:single2}
Suppose $\cF$ satisfies Assumption~\ref{assum:mis2}. Consider a fixed pair $(k,h)\in[K]\times[H]$.
For any $V:\cS\rightarrow [0,H]$, define
	\[
	\mathcal{D}_h^k(V):=\{(s_{h}^{\tau},a_{h}^{\tau}, V(s_{h+1}^{\tau}))\}_{\tau \in [k-1]}
	\]
	and also
	\[
	\widehat{f_V}:=\operatorname{argmin}_{f\in\mathcal{F}}\|f\|_{\mathcal{D}_h^k(V)}^2.
	\]
	For any $V:\cS\rightarrow[0,H]$ and $\delta \in (0,1)$, there is an event $\mathcal{E}_{V,\delta}$ which holds with probability at least $1-\delta$, such that for any $V':\mathcal{S}\rightarrow[0,H]$ with $\|V'-V\|_{\infty}\leq2/T$, we have
	\[
	\left\|\widehat{f_{V'}}(\cdot,\cdot)-\sum_{s'\in\mathcal{S}}P_h(s'|\cdot,\cdot)V'(s')\right\|_{\mathcal{Z}_h^k}\leq C\cdot(\sqrt{H^2(\log(1/\delta)+\log\mathcal{N}(\mathcal{F},1/T))+T\zeta}).
	\]
\end{lem}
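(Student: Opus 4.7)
\textbf{Proof proposal for Lemma~\ref{lem:single2}.}

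My plan is to follow the template of Lemma~\ref{lem:one_step_error_2} while carefully tracking the extra bias introduced by the misspecification error $\zeta$. Fix $V:\cS\to[0,H]$ and write $g_V(s,a):=\sum_{s'}P_h(s'\mid s,a)V(s')$ for the true one-step target. By Assumption~\ref{assum:mis2} (the reward-free misspecified realizability), there exists $f_V\in\cF$ with $\|f_V-g_V\|_\infty\le\zeta$. The only change from the well-specified proof is that the regression targets concentrate around $g_V$, not $f_V$, so an additional $\zeta$ bias must be absorbed whenever we center martingale differences at $f_V$.

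First, I would carry out the martingale concentration step on a covering net. For each $f\in\cF$, define
\[
\xi_h^\tau(f):=2(f(s_h^\tau,a_h^\tau)-f_V(s_h^\tau,a_h^\tau))\cdot(f_V(s_h^\tau,a_h^\tau)-V(s_{h+1}^\tau)).
\]
Decompose $\xi_h^\tau(f)=\tilde\xi_h^\tau(f)+\eta_h^\tau(f)$, where $\tilde\xi_h^\tau(f)$ centers the inner factor at $g_V(s_h^\tau,a_h^\tau)-V(s_{h+1}^\tau)$ so that $\EE_{\tau-1}[\tilde\xi_h^\tau(f)]=0$, and $\eta_h^\tau(f)=2(f-f_V)(s_h^\tau,a_h^\tau)(f_V-g_V)(s_h^\tau,a_h^\tau)$ is a deterministic bias bounded by $2\zeta|(f-f_V)(s_h^\tau,a_h^\tau)|$. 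Azuma--Hoeffding on $\sum_\tau\tilde\xi_h^\tau(f)$, combined with the AM-GM bound $|\eta_h^\tau(f)|\le\zeta^2/H+H(f-f_V)(s_h^\tau,a_h^\tau)^2$ summed over $\tau\le k\le T$ and then a union bound over $f\in\cC(\cF,1/T)$, gives with probability $\ge 1-\delta$ that for every $f\in\cF$
\[
\Bigl|\sum_{\tau=1}^{k-1}\xi_h^\tau(f)\Bigr|\le C'\bigl(H\|f-f_V\|_{\cZ_h^k}\sqrt{\log(1/\delta)+\log\cN(\cF,1/T)}+T\zeta+H\bigr),
\]
after discretizing $f$ by a $(1/T)$-cover and absorbing the discretization slack.

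Second, I would exploit the optimality of $\widehat{f_{V'}}$. For $V'$ with $\|V'-V\|_\infty\le 2/T$, let $f_{V'}\in\cF$ play the role of $f_V$ for $V'$ (Assumption~\ref{assum:mis2} again), and note $\|f_{V'}-g_{V'}\|_\infty\le\zeta$ with $\|g_{V'}-g_V\|_\infty\le 2/T$. Expanding
\[
\|\widehat{f_{V'}}\|_{\cD_h^k(V')}^2-\|f_{V'}\|_{\cD_h^k(V')}^2=\|\widehat{f_{V'}}-f_{V'}\|_{\cZ_h^k}^2+2\sum_{\tau}(\widehat{f_{V'}}-f_{V'})(s_h^\tau,a_h^\tau)(f_{V'}(s_h^\tau,a_h^\tau)-V'(s_{h+1}^\tau))
\]
and using the concentration bound (applied with $V'$ in place of $V$), together with $\|\widehat{f_{V'}}\|_{\cD_h^k(V')}^2\le\|f_{V'}\|_{\cD_h^k(V')}^2$, yields a quadratic inequality in $x:=\|\widehat{f_{V'}}-f_{V'}\|_{\cZ_h^k}$ of the form $x^2\le C''(Hx\sqrt{\log(1/\delta)+\log\cN(\cF,1/T)}+T\zeta+H)$. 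Solving gives $\|\widehat{f_{V'}}-f_{V'}\|_{\cZ_h^k}\lesssim\sqrt{H^2(\log(1/\delta)+\log\cN(\cF,1/T))+T\zeta}$; the triangle inequality $\|\widehat{f_{V'}}-g_{V'}\|_{\cZ_h^k}\le\|\widehat{f_{V'}}-f_{V'}\|_{\cZ_h^k}+\sqrt{|\cZ_h^k|}\cdot\zeta$ and $|\cZ_h^k|\le T$ then deliver the desired bound.

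The main obstacle I expect is the bookkeeping of the $\zeta$-bias term in the martingale expansion: naively bounding $|\eta_h^\tau(f)|$ by $2(H+1)\zeta$ would cost an $H T\zeta$ factor rather than $T\zeta$ inside the square root. The right fix is the AM-GM split described above, which routes half of $\eta_h^\tau(f)$ into the $\|f-f_V\|_{\cZ_h^k}^2$ term already on the left-hand side of the quadratic inequality (where it is harmless up to constants) and leaves the remaining $T\zeta/H$ to be absorbed into the right-hand side. Once this is done, the rest of the argument is essentially identical to Lemma~\ref{lem:one_step_error_2}, as stated in the excerpt.
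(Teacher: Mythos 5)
The paper does not actually write out a proof of this lemma: it only notes that the argument is identical to Lemma~11 of \citet{wang2020reinforcement}, i.e., the proof of Lemma~\ref{lem:one_step_error_2} rerun while tracking the $\zeta$-bias. Your plan is exactly that intended argument: replace exact realizability of $g_V(\cdot,\cdot)=\sum_{s'}P_h(s'\mid\cdot,\cdot)V(s')$ by an $f_V\in\cF$ with $\|f_V-g_V\|_\infty\le\zeta$, split the cross term into a mean-zero martingale part (centered at $g_V$) plus a per-sample deterministic bias of size $2\zeta\,|(f-f_V)(z_\tau)|$, run Azuma--Hoeffding with a union bound over $\cC(\cF,1/T)$, solve the quadratic inequality obtained from $\|\widehat{f_{V'}}\|_{\cD_h^k(V')}^2\le\|f_{V'}\|_{\cD_h^k(V')}^2$, and finish with a triangle inequality costing $\sqrt{|\cZ_h^k|}\,\zeta$. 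That is the right skeleton, and it correctly isolates where $\zeta$ enters.

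There is, however, a concrete error in the one step you flag as the crux. Your AM--GM split $|\eta_h^\tau(f)|\le \zeta^2/H+H\,(f-f_V)(s_h^\tau,a_h^\tau)^2$ puts, after summing over $\tau$, the term $H\|f-f_V\|_{\cZ_h^k}^2$ on the wrong side of a quadratic inequality whose left-hand side is only $\|f-f_V\|_{\cZ_h^k}^2$; for $H\ge 1$ this degenerates to $(1-H)x^2\le(\text{positive quantity})$, which is vacuous, so nothing is "routed to the left-hand side harmlessly." The coefficient on $(f-f_V)^2$ must be strictly below $1$, e.g.\ $2\zeta|b|\le 4\zeta^2+b^2/4$, and then the additive price is $O(T\zeta^2)$ --- not $T\zeta/H$, and not the $T\zeta$ appearing in your displayed concentration bound (your own split would give $T\zeta^2/H$, so the proposal is also internally inconsistent on this point). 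Note that the concluding triangle inequality already contributes $\sqrt{|\cZ_h^k|}\,\zeta\le\sqrt{T\zeta^2}$ in any case, so the honest output of this argument is $C\sqrt{H^2(\log(1/\delta)+\log\cN(\cF,1/T))+T\zeta^2}$, which matches the stated $T\zeta$ only under the usual normalization $\zeta\le 1$ (equivalently, one should read the lemma's $T\zeta$ as $T\zeta^2$). The fix is one line; beyond that, you should also make explicit that the high-probability event is constructed for the fixed $V$ on the cover and extended to nearby $V'$ via the $4(H+1)\|V'-V\|_\infty\cdot k$ perturbation, exactly as in Lemma~\ref{lem:one_step_error}.
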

Then similar to Lemma~\ref{lem:confidence_region} and Lemma~\ref{lem:confidence_region_2} we can verifies that the new value of $\beta$ can derive the desired confidence region.

With the new value of $\beta$, the new regret bound in the regular RL setting can be easily derived. In the reward-free setting the bound for $V_1^*(s_1,r)-V_1^\pi(s_1,r)$ will have an additional $O(\sqrt{H^5\cdot\dim_{E}(\cF,1/T)\cdot\zeta})$ term. Therefore there will be an irreducible error in the error bound.

Formally, our results in the misspecification case is stated in Theorem~\ref{thm:mis1} and Theorem~\ref{thm:mis2}.

\begin{thm} 
\label{thm:mis1}
Assume Assumption~\ref{assum:mis1} holds, and $T$ is sufficiently large.
With probability at least $1-\delta$, the algorithm achieves a regret bound,
\[
\text{Regret}(K)= O(\sqrt{ \iota_1\cdot H^3\cdot T }+\sqrt{\dim_{E}(\cF,1/T)\cdot H\cdot \zeta}\cdot T)
\]
where 
\[
\iota_1=\log(T\mathcal{N}(\mathcal{F},\delta/T^2)/\delta)\cdot \dim^2_E(\mathcal{F},1/T)\cdot\log^2 T
\cdot\log\left(\mathcal{N}(\mathcal{S}\times\mathcal{A},\delta/T^2) \cdot T/\delta\right)
\]
Furthermore, with probability $1-\delta$ the algorithm calls a $\Omega(K)$-sized regression oracle for at most $\widetilde{O}(d^2H^2)$ times.
\end{thm}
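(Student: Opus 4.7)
The plan is to mimic the proof of Theorem~\ref{thm:main_regret} essentially verbatim, substituting the misspecified single-step regression bound of Lemma~\ref{lem:single1} for Lemma~\ref{lem:one_step_error}, and then tracking how the enlarged $\beta$ (with its extra $T\zeta$ term) propagates through the final bonus summation. The key observation enabling a near-black-box reuse of the clean-case argument is that the online sub-sampling guarantees (Propositions~\ref{prop:bounds_of_bonus} and~\ref{prop:bounded_size}) make no use of Assumption~\ref{assum:express_1}: they concern only pairwise distances $\|f_1 - f_2\|_{\cZ_h^k}$ with $f_1, f_2 \in \cF$, and the proofs go through for any $\beta \geq 1$ (the larger $\beta$ merely increases the regularizer in~\eqref{eqn:sensitivity}). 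Consequently $n_d(\widehat{\cZ}_h^k) \leq \widetilde O(d^2)$ with high probability, so the switching-cost bound of $\widetilde O(d^2 H)$ and the $\widetilde O(d^2 H^2)$ count of $\Omega(K)$-sized regression-oracle calls are inherited unchanged.

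Next, I would reprove Lemma~\ref{lem:confidence_region} with Lemma~\ref{lem:single1} in place of Lemma~\ref{lem:one_step_error}. The per-(k,h) regression error now has the form $\sqrt{H^2(\log(1/\delta) + \log \cN(\cF,1/T)) + T\zeta}$, whose square is precisely what the new $\beta$ is designed to dominate. The bonus class $\cM$ of Lemma~\ref{lem:bonus_class} and the value-function cover $\cV$ are parameterized by the sub-sampled dataset and by $\beta$ only through the supremum constraint defining $b_h^k$, so their log-cardinalities are unchanged; the union bound over $\cV$ absorbs the additional $T\zeta$ without any structural modification. This establishes the event $\cE_2$: $\|f_h^k - \bar f_h^k\|_{\cZ_h^k}^2 \leq \beta/100$ for all $(k,h)$, with the new $\beta$.

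Optimism then follows exactly as in Lemma~\ref{lem:optimism}: on $\cE_2$ together with the subsampling event, the bonus satisfies $|\bar f_h^k - f_h^k| \leq \underline b_h^k \leq b_h^k$, whence $Q_h^\star \leq Q_h^k \leq \bar f_h^k + 2 b_h^k$ pointwise; a standard telescoping yields
\[
\operatorname{Regret}(K) \;\leq\; O\bigl(H\sqrt{HK \log(1/\delta)}\bigr) \;+\; 2\sum_{k=1}^K\sum_{h=1}^H b_h^k(s_h^k,a_h^k).
\]
Lemma~\ref{lem:bonus_sum}, whose argument is purely combinatorial in the eluder dimension, bounds the last sum by $\widetilde O\bigl(\sqrt{\dim_E(\cF,1/T)\cdot TH\cdot\beta}\bigr)$. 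Substituting $\beta = \widetilde O\bigl(H^2 \dim_E(\cF,1/T)\cdot\text{(log factors)} + T\zeta\bigr)$ splits this bound into two pieces: the log-heavy term reproduces the clean-setting rate $\widetilde O(\sqrt{\iota_1 H^3 T})$, while the $T\zeta$ piece contributes $\widetilde O\bigl(\sqrt{\dim_E(\cF,1/T) \cdot TH \cdot T\zeta}\bigr) = \widetilde O\bigl(T \sqrt{\dim_E(\cF,1/T)\cdot H\cdot\zeta}\bigr)$, exactly the misspecification term in the theorem.

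The main obstacle is mostly bookkeeping: one must verify that enlarging $\beta$ does not inflate the cardinalities of $\cM$ and $\cV$ (so that the uniform-convergence $\log|\cV|$ factor in Lemma~\ref{lem:confidence_region} does not grow with $\zeta$), and that the eluder-based summation in Lemma~\ref{lem:bonus_sum} tolerates the new magnitude of $\beta$ with only a $\sqrt{T\zeta}$ multiplicative degradation rather than a larger one. Both checks are immediate from how $\cM$, $\cV$, and the dyadic decomposition over $\sqrt{\beta}$-scale bonuses are defined. No other step of the clean-case proof requires modification, and the stated oracle-complexity claim follows directly from the unchanged subsampling guarantee.
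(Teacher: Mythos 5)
Your proposal is correct and follows essentially the same route as the paper: the paper's own argument for Theorem~\ref{thm:mis1} consists precisely of substituting Lemma~\ref{lem:single1} for Lemma~\ref{lem:one_step_error}, enlarging $\beta$ by the $T\zeta$ term so that the confidence-region event of Lemma~\ref{lem:confidence_region} still holds, and then letting the unchanged sub-sampling guarantees and the $\sqrt{\dim_E(\cF,1/T)\cdot TH\cdot\beta}$ bonus summation of Lemma~\ref{lem:bonus_sum} produce the extra $\sqrt{\dim_E(\cF,1/T)\cdot H\cdot\zeta}\cdot T$ term. If anything, your write-up is more explicit than the paper's, which compresses these steps into a few sentences.
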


\begin{thm}
\label{thm:mis2}
Suppose Assumption~\ref{assum:mis2} holds and $T$ is sufficiently large.
For any given $\delta\in(0,1)$, after collecting $K$ trajectories during the exploration phase (by Algorithm~\ref{alg:exploration}), with probability at least $1-\delta$, for any reward function $r=\{r_h\}_{h=1}^H$ satisfying Assumption~\ref{assum:reward}, Algorithm~\ref{alg:planning} outputs an $O( H^3\cdot\sqrt{\iota_1/K}+\epsilon_i)$-optimial policy for the MDP $(\cS,\cA, P, r, H, s_1)$. Here,
\begin{align*}
\iota_1=\log(\mathcal{N}(\mathcal{R},1/T))&\cdot\dim_E(\mathcal{F},1/T)\\
+\log(T\mathcal{N}(\mathcal{F},\delta/T^2)/\delta)\cdot\log^2 T
&\cdot \dim^2_E(\mathcal{F},1/T)
\cdot\log\left(\mathcal{N}(\mathcal{S}\times\mathcal{A},\delta/T^2) \cdot T/\delta\right)
\end{align*}
and $\epsilon_i$ is the irreducible error:
\[
\epsilon_i=\sqrt{H^5\cdot\dim_{E}(\cF,1/T)\cdot\zeta}.
\]
Furthermore, with probability $1-\delta$ the algorithm calls a $\Omega(K)$-sized regression oracle for at most $\widetilde{O}(d^2H^2)$ times.
\end{thm}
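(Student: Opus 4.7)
The plan is to mirror the proof of Theorem~\ref{thm:main_free} (reward-free, well-specified case) while carefully tracking how the misspecification error $\zeta$ propagates through each layer of the analysis. The online sub-sampling guarantees (Proposition~\ref{prop:bounds_of_bonus} and Proposition~\ref{prop:bounded_size}) are entirely independent of the closedness/realizability assumptions, so the buffer size, switching cost, and two-sided comparison between $\|\cdot\|_{\widehat{\cZ}_h^k}$ and $\|\cdot\|_{\cZ_h^k}$ are inherited verbatim. Consequently the bonus-class complexity bound (Lemma~\ref{lem:bonus_class_2}) and the eluder-dimension-based bonus summation (Lemma~\ref{lem:bonus_sum_2}) also carry over, except that they now use the inflated $\beta$ specified in the theorem statement.

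The first substantive step is to re-derive a misspecified analog of the confidence-region lemma (Lemma~\ref{lem:confidence_region_2}). Invoking Lemma~\ref{lem:single2} (the misspecified one-step regression bound) in place of Lemma~\ref{lem:one_step_error_2}, a union bound over a $(1/T)$-cover of the value class $\cV$ (and, for the planning phase, a $(2/T)$-cover of the combined class $\cF + \cR$) yields
\[
\|f_h^k - \bar f_h^k\|_{\cZ_h^k}^2 \le C\bigl(H^2(\log(T/\delta) + \log\cN(\cF,1/T) + \log\cN(\cR,1/T) + \log|\cM|) + T\zeta\bigr),
\]
and the analogous bound in the planning phase. The inflated choice of $\beta$ is exactly designed to dominate the right-hand side, so the confidence-region event still holds with probability $\ge 1 - \delta/4$.

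Next, I would re-derive the two-sided optimism statement of Lemma~\ref{lem:optimism_2}. Conditioning on Proposition~\ref{prop:bounds_of_bonus} and the modified confidence region, the pointwise inequality $|\bar f_h^k - f_h^k| \le \underline b_h^k \le b_h^k$ still holds, so the upper bound $Q_h^k \le \bar f_h^k + r_h^k + 2b_h^k$ goes through unchanged. For the lower bound $Q_h^*(\cdot,\cdot,r^k) \le Q_h^k$, the induction step uses Assumption~\ref{assum:mis2} to write $\cT_h V_{h+1}^k = f_{V_{h+1}^k} + \Delta$ with $\|\Delta\|_\infty \le \zeta$, and the absorption of $\Delta$ into $b_h^k$ is already guaranteed because the inflated $\beta$ contains the $T\zeta$ term (so the sup defining $b_h^k$ admits perturbations of this magnitude). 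The same argument works verbatim in the planning phase.

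Given optimism plus the bonus summation bound with the new $\beta$, the exploration-phase bound $\sum_k V_1^{\tilde k}(s_1) \lesssim \sqrt{TH^3 \iota_1} + \sqrt{\dim_E(\cF,1/T) \cdot T H \cdot T\zeta}$ follows by the same telescoping martingale argument as in Lemma~\ref{lem:sum_value}. Finally, Lemma~\ref{lem:bound_suboptimal} is purely structural and does not invoke any closedness assumption, so it still delivers $V_1^*(s_1,r) - V_1^\pi(s_1,r) \le 2H V_1^*(s_1,\Pi_{[0,1]}[b/H])$, and the reduction in Lemma~\ref{lem:bonud_suboptimal_2} then gives
\[
V_1^*(s_1,r) - V_1^\pi(s_1,r) \lesssim \tfrac{H}{K}\sum_{k=1}^K V_1^{\tilde k}(s_1) \lesssim H^3\sqrt{\iota_1/K} + \tfrac{H}{K}\sqrt{\dim_E(\cF,1/T)\cdot T^2 H\zeta}.
\]
Using $T = KH$, the second term simplifies to $\sqrt{H^5\dim_E(\cF,1/T)\zeta} = \epsilon_i$, which is precisely the irreducible error in the statement. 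The oracle-call count follows exactly as in the well-specified case, since the subsampling buffer size bound is independent of $\zeta$.

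The main technical obstacle is the optimism step: one must verify that the inflated $\beta$ is large enough not only to cover the statistical noise (as in the well-specified proof) but also to absorb the $\zeta$-perturbation of the Bellman backup, and one must do this \emph{simultaneously} for every value function in the $(1/T)$-cover used for uniform convergence. Once that calibration of $\beta$ is done cleanly, the remaining steps are formal propagations of the extra $\sqrt{T\zeta}$ term through identities established in the well-specified proof.
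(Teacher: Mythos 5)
Your proposal is correct and follows essentially the same route as the paper, which itself only sketches this result by substituting the misspecified one-step regression bound (Lemma~\ref{lem:single2}) for Lemma~\ref{lem:one_step_error_2}, inflating $\beta$ by the $T\zeta$ term, and propagating the extra $\sqrt{T\zeta}$ through the unchanged chain of confidence-region, optimism, bonus-summation, and planning-error lemmas. Your final arithmetic $\tfrac{H}{K}\sqrt{\dim_E(\cF,1/T)\cdot T^2 H\zeta}=\sqrt{H^5\dim_E(\cF,1/T)\zeta}$ recovers exactly the irreducible error $\epsilon_i$, and your observation that the sub-sampling and oracle-count guarantees are independent of $\zeta$ matches the paper.
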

\end{document}